\def\paragraph{\@startsection{paragraph}{4}%
  \z@\z@{-\fontdimen2\font}%
  {\normalfont\bfseries}}
\newtheorem{thm}{Theorem}[section]
\newtheorem{lem}[thm]{Lemma}
\theoremstyle{definition}
\newtheorem{asp}[thm]{Assumption}
\newtheorem{defi}[thm]{Definition}
\newtheorem{remark}[thm]{Remark}
\def\eps{\varepsilon}
\def\d{\mathrm{d}}
\def\D{\mathrm{D}}
\def\P{\mathbb{P}}
\def\E{\mathbb{E}}
\def\Z{\mathbb{Z}}
\def\R{\mathbb{R}}
\def\Tr{\operatorname{Tr}}
\def\Pois{\operatorname{Poisson}}
\def\Law{\operatorname{Law}}
\def\GP{\operatorname{GP}}
\def\PP{\operatorname{PP}}
\def\Id{\mathrm{Id}}
\def\op{\mathrm{op}}
\def\vec{\operatorname{vec}}
\def\diag{\operatorname{diag}}
\def\cR{\mathcal{R}}
\def\cS{\mathcal{S}}
\def\cD{\mathcal{D}}
\def\cF{\mathcal{F}}
\def\cN{\mathcal{N}}
\def\sF{\mathrm{F}}
\def\a{\mathbf{a}}
\def\e{\mathbf{e}}
\def\v{\mathbf{v}}
\def\x{\mathbf{x}}
\def\y{\mathbf{y}}
\def\z{\mathbf{z}}
\def\A{\mathbf{A}}
\def\B{\mathbf{B}}
\def\bD{\mathbf{D}}
\def\bE{\mathbf{E}}
\def\H{\mathbf{H}}
\def\M{\mathbf{M}}
\def\bN{\mathbf{N}}
\def\bP{\mathbf{P}}
\def\U{\mathbf{U}}
\def\V{\mathbf{V}}
\def\W{\mathbf{W}}
\def\X{\mathbf{X}}
\def\Y{\mathbf{Y}}
\def\bZ{\mathbf{Z}}
\def\1{\mathbf{1}}
\def\beps{\boldsymbol{\varepsilon}}
\def\btheta{\boldsymbol{\theta}}
\def\bxi{\boldsymbol{\xi}}
\def\bnu{\boldsymbol{\nu}}
\def\bPi{\boldsymbol{\Pi}}
\def\set{\mathscr{S}_\kappa}
\def\Ct{C_\theta}
\def\Rt{R_\theta}
\def\Cf{C_f}
\def\Rf{R_f}
\def\pCt{\Phi_{\Ct}}
\def\pCf{\Phi_{\Cf}}
\def\pRt{\Phi_{\Rt}}
\def\pRf{\Phi_{\Rf}}
\def\pRfs{\Phi_{\Rf^*}}
\def\trsfrm{\mathcal{T}}
\def\trsfrmA{\mathcal{T}_{\mathcal{S}_\xi \to {\mathcal{S}}_\theta}}
\def\trsfrmB{\mathcal{T}_{{\mathcal{S}}_\theta \to \mathcal{S}_\xi}}
\def\trsfrmD{\mathcal{T}^\delta}
\def\trsfrmDA{\mathcal{T}^\delta_{\mathcal{D}_\xi \to {\mathcal{D}}_\theta}}
\def\trsfrmDB{\mathcal{T}^\delta_{{\mathcal{D}}_\theta \to \mathcal{D}_\xi}}
\def\nI{(\mathrm{I})}
\def\nII{(\mathrm{II})}
\def\nIII{(\mathrm{III})}
\def\nIV{(\mathrm{IV})}
\def\nV{(\mathrm{V})}
\def\nVI{(\mathrm{VI})}
\newcommand{\lbddst}[2]{\mathsf{dist}_{\lambda,T}(#1,#2)}
\newcommand{\flr}[1]{{\lfloor {#1} \rfloor}}
\newcommand{\cil}[1]{{\lceil {#1} \rceil}}
\newcommand{\norm}[1]{\|#1\|}
\newcommand{\normop}[1]{\|#1\|_{\mathrm{op}}}
\newcommand{\prn}[1]{\left({#1}\right)} 
\newcommand{\brk}[1]{\left[{#1}\right]} 
\newcommand{\abs}[1]{\left|{#1}\right|} 
\definecolor{lw}{RGB}{11,0,249}
\title[High-dimensional learning dynamics of multi-pass SGD]
{High-dimensional learning dynamics of multi-pass Stochastic Gradient
Descent in multi-index models}
\author{Zhou Fan}
\author{Leda Wang}
\email{zhou.fan@yale.edu, leda.wang@yale.edu}
\address{Department of Statistics and Data Science, Yale University}
\begin{document}

\begin{abstract}
We study the learning dynamics of a multi-pass, mini-batch Stochastic
Gradient Descent (SGD) procedure for empirical risk minimization in
high-dimensional multi-index models with isotropic random data. In an
asymptotic regime where the sample size $n$ and data dimension $d$ increase
proportionally, for any sub-linear batch size $\kappa \asymp n^\alpha$ where
$\alpha \in [0,1)$, and for a commensurate ``critical'' scaling of the
learning rate, we provide an asymptotically exact characterization of
the coordinate-wise dynamics of SGD. This characterization takes
the form of a system of dynamical mean-field equations, driven by a scalar
Poisson jump process that represents the asymptotic limit of SGD sampling
noise. We develop an analogous characterization of the Stochastic Modified
Equation (SME) which provides a Gaussian diffusion approximation to SGD.

Our analyses imply that the limiting dynamics for SGD are the same for
any batch size scaling $\alpha \in [0,1)$, and that under a commensurate scaling of the learning rate,
dynamics of SGD, SME, and gradient flow
are mutually distinct, with those of SGD and SME coinciding in the special 
case of a linear model. We recover a known dynamical mean-field characterization
of gradient flow in a limit of small learning rate, and of
one-pass/online SGD in a limit of increasing sample size $n/d \to \infty$.
\end{abstract}

\maketitle


\section{Introduction}

Introduced by Robbins and Monro in \cite{robbins1951stochastic}, Stochastic Gradient
Descent (SGD) and its variants have played an important role in machine
learning, and are often the optimization methods of choice in large-scale
learning systems \cite{bottou2010large,bottou2018optimization}. Existing theory for SGD explains many aspects
of its behavior, including optimization theory on convergence rates for
convex optimization problems \cite{nemirovski2009robust,moulines2011non,rakhlin2012making,bach2013non}, classical formulations of
ODE scaling limits under small learning rate asymptotics
\cite{ljung1992stochastic,kushner2003stochastic}, and
analyses of diffusion approximations, especially near critical points and
minimizers of the loss landscape
\cite{mandt2017stochastic,li2017stochastic,feng2018semigroups,hu2019diffusion,blanc2020implicit,li2022happens}.
Such results are suggestive of a behavior that is similar to (non-stochastic)
gradient descent and gradient flow,
with additional diffusivity properties near criticality.
However, in many modern examples of learning systems --- often
characterized by a high-dimensional optimization parameter, a complex and
highly non-convex loss landscape, and a combination of small batch size and
large learning rate --- it is also empirically observed that SGD may have
markedly different properties from gradient flow, possibly leading to improved
generalization \cite{keskar2017large,jastrzkebski2017three}. Our understanding
of SGD in such contexts remains incomplete.

In this work, we will study the dynamics of SGD in a prototypical
high-dimensional application of empirical risk minimization for a multi-index
model, with isotropic random data $\x \in \R^d$ and labels
\[y=\sigma^*(\x^\top \btheta^*,\eps), \qquad \btheta^* \in \R^{d \times k^*}.\]
Given $n$ such data observations $(\x_1,y_1),\ldots,(\x_n,y_n)$, we will
consider a standard mini-batch, multi-pass SGD procedure for optimizing a
(possibly non-convex) empirical risk with coordinate-separable regularizer,
\begin{equation}\label{eq:ERMintro}
\cR(\btheta)=\sum_{i=1}^n L(\sigma(\x_i^\top \btheta),y_i)+\sum_{j=1}^d
G(\theta_j), \qquad \btheta=[\theta_1,\ldots,\theta_d]^\top \in \R^{d \times
k}.
\end{equation}
Our main result, building upon the method of \cite{celentano2021high}, will
provide an asymptotically exact characterization of the
learning dynamics of SGD in this setting, with a general sub-linear batch
size $\kappa \ll n$, and under a high-dimensional asymptotic limit
as $n,d \to \infty$ proportionally.

Our analysis is motivated and inspired by three lines of related literature:

\begin{enumerate}[1.]
\item A large body of work, starting with
\cite{saad1995dynamics,saad1995exact,riegler1995line,biehl1995learning} and the mathematical
formalizations of such findings in \cite{wang2019solvable,goldt2019dynamics},
has investigated these learning dynamics in an
analogous \emph{one-pass} or \emph{online learning} setting, where a new
sample $(\x_i,y_i)$ is used in each SGD iteration.

The resulting dynamics are characterized in the high-dimensional limit by a system of ODEs for the overlap parameters
$d^{-1}(\btheta,\btheta^*)^\top (\btheta,\btheta^*) \in \R^{(k+k^*) \times
(k+k^*)}$. Diffusive behavior near critical points was further studied in
\cite{ben2022high}; escape from initial conditions and saddle points
over long time horizons in problems with symmetry has been the subject of
intensive investigation in
\cite{ge2015escaping,fang2019sharp,arous2021online,barak2022hidden,tan2023online,abbe2023sgd,damian2023smoothing,damian2025generative,joshi2025learning};
extensions to two-layer neural networks with more general dimension scalings, and connections to dynamics in the ``mean-field'' limit of infinite width
(\cite{mei2018mean,chizat2018global,rotskoff2022trainability}), were discussed
in \cite{veiga2022phase,arnaboldi2023high}; and extensions to non-isotropic
data and adaptive dynamics were obtained
in \cite{collins2024hitting,collins2024high}.

Such results have illuminated how classical ODE and SDE
approximations of SGD may pertain to high-dimensional settings, and
how problem symmetries may affect the sample complexity of learning. They
shed light also on the dynamics of early training in multi-pass SGD, where
initial iterations are akin to online learning without sample re-use.
However, the full learning dynamics of multi-pass SGD 
differs from that of online SGD in later training, and understanding these
differences may be requisite for investigating the optimization of the empirical
--- rather than population --- risk, and associated issues of overfitting
versus generalization.\\

\item Using statistical physics techniques of dynamical mean-field theory (DMFT)
\cite{sompolinsky1981dynamic,sompolinsky1982relaxational,crisanti1993spherical,cugliandolo1993analytical,arous1995large,arous1997symmetric}, a burgeoning body of literature has developed asymptotically exact
characterizations of multi-pass gradient-based algorithms in a variety
of statistical learning models
\cite{sarao2019afraid,mannelli2019passed,sarao2020complex,sarao2020marvels,sarao2021analytical,liang2023high}, with
substantial recent interest in the training dynamics for various neural network
architectures
\cite{bordelon2022self,bordelon2024dynamical,bordelon2024infinite,montanari2025dynamical,han2025precise,martin2026high}.
Closest to our current analyses is the work of \cite{mignacco2021dynamical,mignacco2021stochasticity,mignacco2022effective} which characterized the
high-dimensional limit of discrete-time SGD dynamics with a linear  batch
size $\kappa \asymp n$ per iteration, as well as a ``persistent SGD'' dynamics
in continuous time that maintains this batch size while replacing samples
individually. Investigations of differences between such dynamics and gradient
flow in a model of phase retrieval were
carried out in \cite{mignacco2021stochasticity}. Differences between the sample complexities of
one-pass and multi-pass methods for learning with
symmetries were also highlighted in \cite{lee2024neural,dandi2024benefits}.

The pioneering work of \cite{celentano2021high} developed a new approach
to mathematically formalize such DMFT characterizations, which previously
were often derived
using non-rigorous techniques. This work \cite{celentano2021high} established on rigorous
grounds the high-dimensional limiting dynamics of gradient flow for a 
class of multi-index models. Several extensions to other
discrete-time and continuous-time first-order optimization and sampling methods have since been developed
\cite{gerbelot2024rigorous,han2025entrywise,fan2025dynamical1,fan2025dynamical2,han2025long,chen2025learning,celentano2025state,dandi2025sequential}, including analyses
of discrete-time SGD with linear batch size $\kappa \asymp n$ in \cite{gerbelot2024rigorous}
and of Glauber dynamics in the
Sherrington-Kirkpatrick model for batch updates of $\kappa \asymp n$ spins in \cite{dandi2025sequential}

We remark that a batch size of $\kappa \asymp n$ implies limited stochasticity
in the gradient approximation. Motivated by a common belief that the
stochasticity of small-batch dynamics may underlie important differences between
SGD and gradient flow \cite{keskar2017large,jastrzkebski2017three}, we will extend the
results of \cite{celentano2021high,gerbelot2024rigorous} to (rigorously) characterize the dynamics of SGD for any batch size
\[\kappa \asymp n^\alpha, \qquad \alpha \in [0,1),\]
under a commensurate scaling of the learning rate.
Our results will elucidate how the stochasticity of the gradient approximation
manifests in the limiting dynamics.\\

\item An interesting line of work
\cite{paquette2021dynamics,paquette2021sgd,lee2022trajectory,paquette2022implicit,marshall2024clip,paquette2025homogenization} obtained high-dimensional
asymptotic characterizations of single-sample (i.e.\ $\alpha=0$ and $\kappa=1$) multi-pass SGD
and several of its variants, in the setting of a linear regression
model with ridge regularization, and possibly non-isotropic data. One finding of
this work, highlighted in \cite{paquette2022implicit,paquette2025homogenization}, is that the asymptotic dynamics of
quadratic observables of $\btheta$ coincide
with those of a simplified SDE approximation (dubbed ``Homogenized
SGD''). Whereas previous analyses establish the accuracy of such
diffusion approximations in the asymptotics of vanishing learning rate
$\eta \to 0$ for fixed dimensions $d$ \cite{li2019stochastic,hu2019diffusion}, the results of \cite{paquette2025homogenization}
showed that such a diffusion approximation remains asymptotically
exact under a commonly studied (large) learning rate scaling for a
high-dimensional linear model, raising a tantalizing question of whether this
holds true in more general high-dimensional learning problems.

Restricting to the setting of isotropic data $\x \in \R^d$, we provide a
negative answer to this question, by developing an analogous characterization
of the limiting dynamics of the Stochastic Modified Equation (SME) that was introduced and studied in \cite{li2017stochastic,li2019stochastic}
as a diffusion approximation for the SGD process.
Our results illustrate that the asymptotic dynamics of SME and SGD are, in
general, different for multi-index models under commonly studied learning
rate scalings in high-dimensional settings, and they coincide only in the case of linear regression.
\end{enumerate}

\subsection{Summary of results}

Our main result provides a mathematical description of the
learning dynamics of a SGD procedure for minimizing the
empirical risk \eqref{eq:ERMintro} over $O(1)$ training epochs, in the
high-dimensional limit as $n,d \to \infty$ proportionally. This will
take the form of a characterization for the exact limiting value of any
coordinate-separable observables,
\[\lim_{n,d \to \infty} \frac{1}{d}\sum_{j=1}^d
\psi(\theta_j^{t_1},\ldots,\theta_j^{t_m},\theta_j^*),
\qquad \lim_{n,d \to \infty} \frac{1}{n}\sum_{i=1}^n
\psi(\x_i^\top \btheta^{t_1},\ldots,\x_i^\top \btheta^{t_m},\x_i^\top
\btheta^*),\]
depending on the latent parameter $\btheta^*=[\theta_1^*,\ldots,\theta_d^*]^\top
\in \R^{d \times k^*}$ and SGD iterates
$\btheta^t=[\theta_1^t,\ldots,\theta_d^t]^\top \in \R^{d \times k}$
at fixed time points $t_1,\ldots,t_m$ in units of training epochs. Such
observables can encompass, for example, the overlap parameters
$d^{-1}(\btheta,\btheta^*)^\top(\btheta,\btheta^*)$ and evaluations of the
training and test losses.

As this characterization involves dynamical mean-field theory and
is somewhat complex, we summarize here a few qualitative aspects/implications
of our results:

\begin{enumerate}[1.]
\item For any sub-linear choice of batch size
\[\kappa \asymp n^\alpha\]
with $\alpha \in [0,1)$, and for a commensurate scaling of the learning rate
$\eta \asymp n^\alpha$,\footnote{Our scaling conventions for the model are $\|\x_i\|_2 \asymp
1$ and $\|\btheta\|_\sF \asymp \sqrt{d}$. Under a common alternative scaling of
$\|\x_i\|_2 \asymp \sqrt{d}$ and $\|\btheta\|_\sF \asymp 1$, the equivalent
learning rate is $\tilde \eta=\eta/d \asymp n^{-1+\alpha}$.} there is a well-defined scaling limit for
the SGD iterates under a time rescaling by $n^{1-\alpha}$,
i.e.\ to time units of training epochs.

Furthermore, this limit is the same for any choice of $\alpha \in [0,1)$ (but
different from the setting of linear batch sizes $\alpha=1$ and $\kappa
\asymp n$), depending only
on $\bar\kappa=\lim_{n,d \to \infty} \kappa/n^\alpha$ and
$\bar \eta=\lim_{n,d \to \infty} \eta/n^\alpha$. This agrees with previous
empirical observations in more complex models
\cite{jastrzkebski2017three,he2019control} that the learning
dynamics of SGD seem to depend moreso on the ratio of learning rate to batch
size than the absolute size of the batch.\\

\item The random sampling of mini-batches in SGD leads to a dynamical
mean-field characterization in which a scalar process $\{\xi^t\}_{t \geq 0}$
that tracks the distributional dynamics of $\{\{\x_i^\top \btheta^t\}_{i \in
[n]}\}_{t \geq 0}$ is
driven by a univariate Poisson jump process $\{z^t\}_{t \geq 0}$.
The presence of this additional Poisson process constitutes the primary
distinction between the asymptotic dynamics of SGD and gradient flow.

For the SME diffusion approximation, the dynamics of $\{\xi^t\}_{t \geq 0}$ are
analogously driven by a univariate Brownian diffusion with mean and covariance 
matching those of $\{z^t\}_{t \geq 0}$.\\

\item In a further scaling limit of small learning rate $\bar \eta \to 0$ (after
taking $n,d \to \infty$ with $\eta \approx n^\alpha \bar \eta$) and
rescaling of time, the dynamics of both
SGD and SME reduce to those of gradient flow. This agrees with the classical
theory on convergence of SGD dynamics to an ODE under small learning rate
asymptotics \cite{kushner2003stochastic}, and clarifies that in this high-dimensional context,
the notion of a ``small'' learning rate under which such an ODE approximation
is accurate may be understood as $\eta \ll 
n^\alpha$. For larger learning rates represented by any
fixed value of $\bar \eta=\lim_{n,d \to \infty} \eta/n^\alpha>0$, the asymptotic dynamics of SGD, SME, and
gradient flow are, in general, all distinct.\\

\item In the special case of linear regression, the
dynamical mean-field equations simplify to deterministic integro-differential
equations for the correlation and response processes, which, in particular,
depend only on the first and second moment statistics of the Poisson
process/Brownian diffusion $\{z^t\}_{t \geq 0}$. Thus, in this case, the
dynamics of quadratic observables of SME and SGD coincide even the setting of
fixed $\bar\eta>0$, in agreement with the results of
\cite{paquette2025homogenization}.\\

\item Setting $\gamma=\lim_{n,d \to \infty} n/d$, in a further scaling limit $\gamma \to \infty$ (after taking $n,d \to \infty$ with $n \approx \gamma d$), the dynamical mean-field equations also simplify,
reducing to a Markov diffusion for a scalar process $\{\theta^t\}_{t \geq
0}$ that tracks the distributional dynamics of $\{\{\theta_j^t\}_{j \in [d]}\}_{t
\geq 0}$. From this simplification, one may recover the known ODE
characterization of the overlap parameters
for one-pass/online SGD \cite{saad1995dynamics,goldt2019dynamics}.\\

\item Since the driving Poisson process $\{z^t\}_{t \geq 0}$ in the
dynamical mean-field limit for SGD is discrete, with $O_\P(1)$ jumps over finite
time horizons (in units of training epochs), this enables rapid numerical simulation of
the preceding process $\{\xi^t\}_{t \geq 0}$. In applications
where $G(\cdot)$ is also a ridge regularizer so that the statistics of the
process $\{\theta^t\}_{t \geq 0}$ are computable analytically, this leads to a
dynamical mean-field limit for SGD that is more amenable to numerical simulation than its
counterparts for gradient flow or SME.\\
\end{enumerate}

We will elaborate upon these discussions further in Section \ref{sec:results},
after formally stating our main results.

\subsection*{Concurrent work}
Concurrent and independent work of \cite{nishiyama2026high} also establishes a DMFT characterization of the high-dimensional limit dynamics for SME, referred to therein as Stochastic Gradient Flow (SGF). Their DMFT equations are equivalent to ours for SME, although formulated in a different manner. \cite{nishiyama2026high} obtains convergence in probability under a weaker assumption of a pseudo-Lipschitz loss, over times $t \in [0,T^*]$ for some $(L,G,\sigma)$-dependent time horizon $T^*<\infty$. Our results establish a stronger notion of a.s.\ convergence over any bounded time horizon $[0,T]$ and pertain also to the SGD dynamics in addition to SME, but we impose a stronger Lipschitz continuity condition for the loss.

\subsection*{Notational conventions} For vectors $\x \in \R^k$ and $\y \in \R^l$,
we write $\x \otimes \y \in \R^{k \times l}$ for the outer product. For a
function $g:\R^k \to \R^k$ and input $\btheta \in \R^{d \times k}$, we 
write $g(\btheta) \in \R^{d \times k}$ for the application of $g(\cdot)$
to each row. We write $\frac{1}{n}\sum_{i=1}^n \delta_{\v_i}$ for the empirical
distribution on $\R^k$ of the rows of
$\V=[\v_1,\ldots,\v_n]^\top \in \R^{n \times k}$.

The notation
$\|\cdot\|$ without subscript will refer to the Euclidean norm $\|\cdot\|_2$ for
vectors and Frobenius norm $\|\cdot\|_\sF$ for matrices. We write
$\|\cdot\|_\op$ for the matrix $\ell_2$-to-$\ell_2$ operator norm, and
$\|\cdot\|_\infty$ for the element-wise $\ell_\infty$ norm. $\Id_k$ is the $k
\times k$ identity matrix, and we omit the subscript when the dimension is
clear. We write $x \lesssim
y$ if $x \leq Cy$ for a constant $C>0$, which does not on the dimensions $n,d$ but will usually depend on the fixed
time horizon $T>0$. We write $x \asymp y$ if
both $x \lesssim y$ and $y \lesssim x$.

\section{Main results}\label{sec:results}

\subsection{Model and dynamics}\label{sec:model}

We consider a standard setting of supervised learning, where
\[(\x_1,y_1),\ldots,(\x_n,y_n) \in \R^d \times \R\]
are $n$ independent training observations. It will be assumed that the data
$\x_i \in \R^d$ has independent coordinates with isotropic covariance, and
that the label $y_i \in \R$ follows a multi-index model, possibly with label
noise: For some parameter $\btheta^* \in \R^{d \times k^*}$ where $k^* \geq 1$,
noise variables
\[\eps_1,\ldots,\eps_n \in \R,\]
and label map $\sigma^*:\R^{k^*} \times \R \to \R$, each label $y_i$
takes the form
\begin{equation}\label{eq:multiindex}
y_i=\sigma^*(\x_i^\top \btheta^*,\eps_i).
\end{equation}

Our main result will characterize the training dynamics of a multi-pass
Stochastic Gradient Descent (SGD) algorithm and its Stochastic Modified Equation
(SME) approximant for minimizing a regularized
empirical risk over $\btheta \in \R^{d \times k}$,
\begin{equation}\label{eq:ERM}
\cR(\btheta)=\sum_{i=1}^n L\big(\sigma(\x_i^\top \btheta),y_i\big)
+\sum_{j=1}^d G(\theta_j).
\end{equation}
Here $\sigma:\R^k \to \R$ is a non-linear activation for some $k \geq 1$,
and $L:\R \times \R \to \R$ and $G:\R^k \to \R$ are a smooth loss function and regularizer. The gradient of $\cR(\btheta)$ takes a form
\begin{equation}\label{eq:riskgradient}
\nabla \cR(\btheta)=\sum_{i=1}^n
\x_i \otimes f(\x_i^\top \btheta,\x_i^\top \btheta^*,\eps_i)
+g(\btheta) \in \R^{d \times k},
\end{equation}
where the functions $f:\R^k \times \R^{k^*} \times \R \to \R^k$ and
$g:\R^k \to \R^k$ are defined by
\begin{equation}\label{eq:fgdef}
f(\xi,w^*,\eps)=L'(\sigma(\xi),\sigma^*(w^*,\eps))
\,\nabla \sigma(\xi),
\qquad g(\theta)=\nabla G(\theta),
\end{equation}
$L'(\cdot,\cdot)$ is the derivative of $L(\cdot,\cdot)$ in its first argument,
and $g(\cdot)$ is applied row-wise to $\btheta \in \R^{d \times k}$.\\

\paragraph{SGD} The SGD dynamics, initialized at
\[\bar\btheta^0 \equiv \btheta^0\in \R^{d \times k},\]
are given by
\begin{equation}\label{eq:SGD}
\bar\btheta^{t+1}=\bar\btheta^t-\eta^t \underbrace{\left(\frac{1}{\kappa}
\sum_{i \in S^t} \x_i \otimes
f(\x_i^\top \bar\btheta^t,\x_i^\top \btheta^*,\eps_i)
+\frac{1}{n}\,g(\bar\btheta^t)\right)}_{:=\v^t},
\qquad S^t \sim \operatorname{Unif}(\set),
\end{equation}
where
\[\set=\{S \subset [n]:|S|=\kappa\}\]
denotes the collection of all subsets of $\{1,\ldots,n\}$ of size $\kappa$,
and $\kappa \geq 1$ is the batch size.
In each iteration, $\eta^t>0$ is a learning rate, and
$S^t \in \set$ is a batch of $\kappa$ samples chosen independently and
uniformly at random. The update $\v^t$ in
\eqref{eq:SGD} has expectation $n^{-1}\nabla_{\btheta} \cR(\bar\btheta^t)$
over this randomness of $S^t$, and constitutes a stochastic approximation for 
this rescaled gradient.

We will study a regime of general sub-linear batch size
\[\kappa \asymp n^\alpha, \qquad \alpha \in [0,1),\]
in a high-dimensional limit where $n,d \to \infty$. (The setting of $\alpha=0$
encompasses a constant batch size independent of $n$, including
single-sample SGD with $\kappa=1$.) Then
$\frac{n}{\kappa} \asymp n^{1-\alpha}$ iterations of the SGD dynamics
\eqref{eq:SGD} constitute one ``epoch'' of training in which, on expectation,
each training sample is used once. We define a continuous-time embedding
$\{\btheta^t\}_{t \geq 0}$ of the SGD dynamics \eqref{eq:SGD}, with time
re-indexed on the scale of epochs, by
\begin{equation}\label{eq:SGDrescaled}
\btheta^t=\bar\btheta^{\lfloor tn^{1-\alpha} \rfloor}.
\end{equation}
Our results will characterize the training dynamics of \eqref{eq:SGDrescaled}
over a fixed number of epochs not depending on $n,d$.

Under our model scaling conventions (to be specified in Assumption
\ref{asp:model}) where
$\|\bar\btheta^t\|_\sF \asymp \sqrt{d}$ and $\|\x_i\|_2 \asymp 1$,
we consider a learning rate that scales also as
\[\eta^t \asymp n^\alpha.\]
One may check that this is the scaling under which both the
mean and variance per coordinate of $\eta^t\v^t$ accumulated over one training
epoch has $O(1)$ size, and thus non-trivial learning may occur over $O(1)$
epochs (in problems without symmetry at initialization). For single-sample SGD where $\alpha=0$ and
$\kappa=1$, this coincides also with learning rate scalings studied previously
in e.g.\ \cite{goldt2019dynamics,tan2023online,paquette2025homogenization}
(in both online and multi-pass settings).

We formalize the above conditions as the following assumption.

\begin{asp}[Batch size and learning rate]\label{asp:SGD}
There exist constants $\alpha \in [0,1)$, $\bar \kappa>0$, and a
Lipschitz-continuous learning rate schedule $\{\bar\eta^t\}_{t \geq 0}$
not depending on $n,d$, such that over each fixed time horizon $T>0$,
\[\lim_{n,d \to \infty} |\kappa/n^\alpha-\bar \kappa|=0, \qquad
\lim_{n,d \to \infty} \sup_{t \in [0,T]}
|\eta^{\lfloor tn^{1-\alpha}\rfloor}/n^\alpha-\bar \eta^t|=0.\]
\end{asp}

\paragraph{SME}

Under Assumption \ref{asp:SGD}, we define the SME approximation
\cite{li2017stochastic} to the above SGD algorithm as the continuous-time
diffusion process
\begin{equation}\label{eq:SME}
\d\btheta^t={-}\bar\eta^t\left(\sum_{i=1}^n \x_i \otimes
f(\x_i^\top \btheta^t,\x_i^\top \btheta^*,\eps_i)
+g(\btheta^t)\right)\d t
+\frac{\bar\eta^t}{\sqrt{\bar \kappa}} \sum_{i=1}^n \Big(\x_i \otimes
f(\x_i^\top \btheta^t,\x_i^\top \btheta^*,\eps_i)\Big) \d b_i^t
\end{equation}
where $\{b_i^t\}_{t \geq 0}$ are independent standard Brownian motions for
$i=1,\ldots,n$. More precisely, this may be understood as a diffusion
approximation to the time-rescaled SGD process $\{\btheta^t\}_{t \geq 0}$ in
\eqref{eq:SGDrescaled}, where the drift
and anisotropic diffusion terms of \eqref{eq:SME} are defined to match the mean and
covariance of the stochastic gradient updates defining \eqref{eq:SGDrescaled}.

(In \cite{paquette2025homogenization}, the authors study a further ``Homogenized
SGD'' approximation where each factor 
$f(\x_i^\top \btheta^t,\x_i^\top \btheta^*,\eps_i)$ in the diffusion term is
further replaced by
$n^{-1}\sum_{i=1}^n f(\x_i^\top \btheta^t,\x_i^\top \btheta^*,\eps_i)$. In this
work, we will study instead the diffusion \eqref{eq:SME} which provides a closer
approximation to SGD.)

\subsection{Model assumptions}

In addition to Assumption \ref{asp:SGD} on the SGD process, we impose the
following two assumptions on the data model and empirical risk.

\begin{asp}[Data model and asymptotic scaling]\label{asp:model}
\phantom{ }
\begin{enumerate}[(a)]
\item As $n,d \to \infty$, $n/d \to \gamma$ for a constant $\gamma>0$, and
$k,k^* \geq 1$ are constants not depending on $n,d$.
\item $(\x_1,y_1),\ldots,(\x_n,y_n) \in \R^d$ are independent samples.
Each data vector $\x_i=(x_{i1},\ldots,x_{id}) \in \R^d$ has independent entries
with
\[\E[x_{ij}]=0, \quad \E[x_{ij}^2]=d^{-1}, \quad \E[|x_{ij}|^p] \leq
C_pd^{-p/2} \text{ for each $p \geq 3$ and a constant $C_p>0$},\]
and $y_i$ is given by the multi-index model \eqref{eq:multiindex}.
\item The initialization $\btheta^0=[\theta_1^0,\ldots,\theta_d^0]^\top
\in \R^{d \times k}$, parameter
$\btheta^*=[\theta_1^*,\ldots,\theta_d^*]^\top \in \R^{d \times k^*}$,
and noise variables $\beps=(\eps_1,\ldots,\eps_n)$ defining
$\y=(y_1,\ldots,y_n)$ via \eqref{eq:multiindex} are non-random and satisfy,
weakly and in Wasserstein-$p$ for each fixed order $p \geq 1$ as $n,d \to
\infty$,
\[\frac{1}{d}\sum_{j=1}^d \delta_{(\theta_j^0,\theta_j^*)} \Rightarrow
\Law(\theta^0,\theta^*),
\qquad \frac{1}{n}\sum_{i=1}^n \delta_{\eps_i} \Rightarrow \Law(\eps).\]
Here $\Law(\theta^0,\theta^*)$ and $\Law(\eps)$ are any probability
distributions on $\R^k \times \R^{k^*}$ and $\R$, respectively, that have
finite moment generating functions in a neighborhood of 0.
\end{enumerate}
\end{asp}

Assumption \ref{asp:model}(c) allows a correlation between $\theta^0$ and
$\theta^*$ in the limiting law, and thus encompasses settings where the SGD/SME
initialization $\btheta^0$ has a non-zero initial overlap with $\btheta^*$.

The assumption that $\btheta^0,\btheta^*,\beps$ are non-random is only for
convenience of analysis, so that randomness of the data arises entirely through
the data matrix
\[\X=[\x_1,\ldots,\x_n]^\top \in \R^{n \times d}.\]
We note that our results will then apply equally in settings where
$(\btheta^0,\btheta^*)$ and/or $\beps$ are random and independent of $\X$, upon
applying these results conditioned on $\btheta^0,\btheta^*,\beps$.

\begin{asp}[Loss and regularizer]\label{asp:lipschitz}
Let $f:\R^k \times \R^{k^*} \times \R \to \R^k$ and $g:\R^k \to \R^k$ be the
functions defined by \eqref{eq:fgdef}. For a constant $C>0$, these satisfy
\[\|f(\xi,w^*,\eps)\|_2 \leq C,
\qquad \|g(\theta)\|_2 \leq C(1+\|\theta\|_2).\]
Furthermore, $f$ and $g$ are twice continuously-differentiable,
with all first and second order partial derivatives uniformly bounded by $C$.
\end{asp}

These conditions hold, for example, when the activation
$\sigma(\cdot)$ and loss $L(\cdot,\cdot)$ defining the empirical risk
\eqref{eq:ERM} are both Lipschitz with
bounded derivatives up to order 3, and the regularizer $G(\cdot)$ is
pseudo-Lipschitz with bounded derivatives of orders 2 and 3
(e.g.\ $G(\theta)=\frac{\lambda}{2}\|\theta\|_2^2$).

We expect that our results may be extendable also to pseudo-Lipschitz loss
functions including the squared loss $L(\hat y,y)=\frac{1}{2}(\hat y-y)^2$,
under a suitable bound for the learning rate $\{\bar\eta^t\}_{t \geq 0}$
and additional technical arguments. To keep the technicalities simpler,
we will not pursue this extension in our current work.

\subsection{Definitions of the DMFT limit processes}\label{sec:DMFTlimits}

The high-dimensional limit of the dynamics $\{\btheta^t\}_{t \geq 0}$ in
\eqref{eq:SGDrescaled} or \eqref{eq:SME} is described via a
$\R^k$-valued process $\{\theta^t\}_{t \geq 0}$ (not depending on $n,d$),
constructed in a probability space of random variables $(\theta^0,\theta^*)$
that are distributed according to the limit law in Assumption
\ref{asp:model}(c). Our results will show a convergence of the empirical
distribution of coordinates of any finite-time marginals of
$(\{\btheta^t\}_{t \geq 0},\btheta^*)$ to the corresponding marginals of the
limit process
$(\{\theta^t\}_{t \geq 0},\theta^*)$,
\[\frac{1}{d}\sum_{j=1}^d
\delta_{(\theta_j^{t_1},\ldots,\theta_j^{t_m},\theta_j^*)}
\Rightarrow \Law(\theta^{t_1},\ldots,\theta^{t_m},\theta^*).\]
Likewise, we will show an analogous convergence of the marginals of $(\{\X\btheta^t\}_{t \geq 0},\X\btheta^*,\beps)$ to those of a limit process $(\{\xi^t\}_{t \geq 0},w^*,\eps)$.

In this section, we define these limit processes $\{\theta^t,\xi^t\}_{t \geq 0}$ for both
SGD and SME, which may be understood as an analogue of the processes described
in \cite{celentano2021high} for gradient flow.

\subsubsection{Limit processes for SGD}\label{sec:DMFT-SGD}
Fix a time horizon $T>0$. We define the high-dimensional limit
for the time-rescaled SGD process $\{\btheta^t\}_{t \in [0,T]}$
of (\ref{eq:SGDrescaled}) via a fixed-point relation
for a system of deterministic correlation and response kernels/operators
$(C_\theta,R_\theta,C_f,R_f,R_f^*,\Gamma)$:
\begin{itemize}
\item $C_\theta$ is the joint covariance kernel of a $\R^k$-valued Gaussian
process $\{w^t\}_{t \in [0,T]}$ and a Gaussian vector $w^* \in \R^{k^*}$.
We denote $C_\theta^{t,s}=\E[w^t \otimes w^s] \in \R^{k \times k}$
for $t,s \in [0,T]$,
$C_\theta^{t,*}=\E[w^t \otimes w^*] \in \R^{k \times k^*}$,
and $C_\theta^{*,*}=\E[w^* \otimes w^*] \in \R^{k^* \times k^*}$.
\item $C_f$ is the covariance kernel of a $\R^k$-valued Gaussian process
$\{u^t\}_{t \in [0,T]}$. We denote likewise
$C_f^{t,s}=\E[u^t \otimes u^s] \in \R^{k \times k}$.
\item $R_\theta \equiv \{R_\theta^{t,s}\}_{t,s \in [0,T]}$ is a $\R^{k \times
k}$-valued process on $[0,T] \times [0,T]$,
where $R_\theta^{t,s}=0$ if $t<s$.
\item $R_f \equiv \{R_f^t\}_{t \in [0,T]}$ is a process of linear operators,
where $R_f^t:L^4([0,t],\R^k) \to \R^k$ for each $t \in [0,T]$.
Given such an operator $R_f^t$ and a matrix-valued process
$x \in L^4([0,t],\R^{k \times k})$, we will write also
$R_f^t(x) \in \R^{k \times k}$ for $R_f^t$ applied column-wise, i.e.\
\begin{equation}\label{eq:matrixRf}
R_f^t(x)=[R_f^t(x_{:,1}),\ldots,R_f^t(x_{:,k})]
\end{equation}
where $x_{:,1},\ldots,x_{:,k} \in L^4([0,t],\R^k)$ are the columns of $x$.
\item $R_f^* \equiv \{R_f^{t,*}\}_{t \in [0,T]}$ is a $\R^{k \times k^*}$-valued
process on $[0,T]$.
\item $\Gamma \equiv \{\Gamma^t\}_{t \in [0,T]}$ is a $\R^{k \times k}$-valued
process on $[0,T]$.
\end{itemize}
We defer a specification of further technical conditions for
$(C_\theta,R_\theta,C_f,R_f,R_f^*,\Gamma)$ to Section
\ref{sec:existenceuniqueness}.

Given the above objects $(C_\theta,R_\theta,C_f,R_f,R_f^*,\Gamma)$, let
\begin{equation}\label{eq:thetaeps}
(\theta^0,\theta^*) \in \R^k \times \R^{k^*}, \qquad \eps \in \R
\end{equation}
be distributed according to the limit laws of Assumption \ref{asp:model}(c),
let
\begin{equation}\label{eq:GPs}
\{u^t\}_{t \in [0,T]} \sim \GP(0,C_f),
\quad (\{w^t\}_{t \in [0,T]},w^*) \sim \GP(0,C_\theta)
\end{equation}
be two centered Gaussian processes with covariance kernels $C_f,C_\theta$, and let
\begin{equation}\label{eq:poisson-z}
\{z^t\}_{t \in [0,T]} \sim \PP(\bar\kappa)
\end{equation}
be a $\{0,1,2,\ldots\}$-valued homogeneous Poisson jump process
with rate $\bar\kappa$. (Thus $z^t \sim \Pois(\bar\kappa t)$ marginally for each
$t \geq 0$.) We take
$(\theta^0,\theta^*)$, $\eps$, $\{u^t\}$, $(\{w^t\},w^*)$,
and $\{z^t\}$ to be mutually independent, and
constructed on a filtered probability space $(\Omega,\cF,\{\cF_t\}_{t \in
[0,T]},\P)$ where $\{\cF_t\}_{t \in [0,T]}$ is a complete and right-continuous
filtration, $\theta^0,\theta^*,\eps,w^*$ are $\cF_0$-measurable, and
$\{u^t\},\{w^t\},\{z^t\}$ are $\cF_t$-adapted and c\`adl\`ag (right-continuous
with left limits).

Recalling the limit
$\gamma=\lim_{n,d \to \infty} n/d$ of Assumption \ref{asp:model}(a),
we define the primary DMFT processes $\{\theta^t\}_{t \geq 0}$
and $\{\xi^t\}_{t \geq 0}$ on $\R^k$ as
\begin{align}
\theta^t&=\theta^0-\int_0^t \bar\eta^r \Big({\gamma}\Gamma^r
\theta^r+g(\theta^r)+{\gamma}R_f^r(\theta^{[r]})+
{\gamma}R_f^{r,*}\theta^*\Big)\d r+\sqrt{\gamma}\,u^t,
\label{eq:def-cont-theta}\\
\xi^t&={-}\int_0^t \frac{\bar\eta^r}{\bar\kappa} R_\theta^{t,r}
f(\xi^{r-},w^*,\eps)\d z^r+w^t,\label{eq:def-cont-xi}
\end{align}
where we write $x^{[s]}=\{x^t\}_{t \in [0,s]}$ for the restriction of a process
$\{x^t\}$ to times $t \in [0,s]$, and
$x^{t-}=\lim_{s \uparrow t} x^s$ for any c\`adl\`ag process $\{x^t\}$.
We define auxiliary response
processes $\{r_\theta^{t,s}\}_{t,s \geq 0}$ on $\R^{k \times k}$
and $\{r_f^{t,*}\}_{t \geq 0}$ on $\R^{k \times k^*}$ by
\begin{align}
r_\theta^{t,s}&=\Id_k-
\int_s^t \bar\eta^r\bigg[\Big({\gamma}\Gamma^r+\D
g(\theta^r)\Big)r_\theta^{r,s}+{\gamma}
R_f^r(r_\theta^{[r],s})\bigg]\d r \text{ for } t \geq s,
\label{eq:def-cont-deriv-t}\\
&\hspace{1in}r_\theta^{t,s}=0 \text{ for } t<s,\notag\\
r_f^{t,*}&={-}\D_\xi f(\xi^t,w^*,\eps) \int_0^t
\frac{\bar\eta^r}{\bar\kappa}
R_\theta^{t,r}r_f^{r-,*}\d z^r
+\D_{w^*} f(\xi^t,w^*,\eps),\label{eq:def-cont-deriv-xi-star}
\end{align}
and a process of random linear operators $\{r_f^t\}_{t \geq 0}$
with $r_f^t:L^4([0,t],\R^k) \to \R^k$ by
\begin{align}
r_f^t(x^{[t]})&={-}\D_\xi
f(\xi^t,w^*,\eps) \int_0^t \frac{\bar\eta^r}{\bar\kappa}
R_\theta^{t,r}\bigg(r_f^{r-}(x^{[r]})+\D_\xi
f(\xi^{r-},w^*,\eps)x^r\bigg)\d z^r.\label{eq:def-cont-resp-xi}
\end{align}
Here, $\D g(\theta)$ denotes the derivative (i.e.\ Jacobian matrix in $\R^{k
\times k}$) of $g(\cdot)$, and similarly
$\D_\xi f(\xi,w^*,\eps)$ and $\D_{w^*} f(\xi,w^*,\eps)$
denote the derivatives of $f(\cdot)$ in $\xi$ and $w^*$. In
(\ref{eq:def-cont-deriv-t}--\ref{eq:def-cont-resp-xi}),
$r_\theta^{[r],s}=\{r_\theta^{t,s}\}_{t \in [0,r]}$,
$R_f^r(r_\theta^{[r],s})$ is understood via its application to matrix-valued
processes in \eqref{eq:matrixRf}, and $r_f^{t-,*}=\lim_{s \uparrow t} r_f^{s,*}$
and $r_f^{t-}(x^{[t]})=\lim_{s \uparrow t} r_f^s(x^{[s]})$.

The fixed-point relations for
$(C_\theta,R_\theta,C_f,R_f,R_f^*,\Gamma)$ are then given by
\begin{align}
C_\theta^{t,s}&=\E[\theta^t \otimes \theta^s] \text{ for } t,s \in [0,\infty)
\cup \{*\}
\label{eq:def-cont-C-t}\\
R_\theta^{t,s}&=\E[r_\theta^{t,s}]
\label{eq:def-cont-R-t}\\
C_f^{t,s}&=\E\Big[\int_0^t \frac{\bar\eta^r}{\bar\kappa}f(\xi^{r-},w^*,\eps)\d z^r
\otimes \int_0^s \frac{\bar\eta^r}{\bar\kappa}f(\xi^{r-},w^*,\eps)\d z^r\Big]\label{eq:def-cont-C-h}\\
R_f^t(x^{[t]})&=\E\big[r_f^t(x^{[t]})\big]\label{eq:def-cont-R-h}\\
R_f^{t,*}&=\E[r_f^{t,*}]
\label{eq:def-cont-R-h-star}\\
\Gamma^t&=\E[\D_\xi f(\xi^t,w^*,\eps)]\label{eq:def-cont-Gamma}
\end{align}
We clarify that in (\ref{eq:def-cont-R-h}), the input process $x^{[t]}=\{x^s\}_{s \in [0,t]}$ is
understood as deterministic, and the expectation is taken with respect to
$\{\xi^t\}_{t \geq 0}$, $\{z^t\}_{t \geq 0}$, $w^*$, and $\eps$ defining
(\ref{eq:def-cont-resp-xi}). This yields a deterministic linear operator
$R_f^t:L^4([0,t],\R^k)\to \R^k$, which is then applied to the stochastic
inputs in (\ref{eq:def-cont-theta}) and (\ref{eq:def-cont-deriv-t}).

\subsubsection{Limit processes for SME}

For the SME diffusion process $\{\btheta^t\}_{t \geq 0}$
of (\ref{eq:SME}), the description of its high-dimensional limit is identical
to the above
equations (\ref{eq:thetaeps}--\ref{eq:def-cont-Gamma}) for SGD, except
with the Poisson jump process $\{z^t\}_{t \geq 0}$ in (\ref{eq:poisson-z})
replaced by the scalar Gaussian diffusion process
\begin{equation}\label{eq:gaussian-z}
z^t=\int_0^t \bar\kappa\,\d s+\int_0^t \sqrt{\bar\kappa}\,\d b^s,
\end{equation}
where $\{b^t\}_{t \geq 0}$ is a $\cF_t$-adapted standard univariate Brownian
motion. I.e., the Poisson process integrals in (\ref{eq:def-cont-xi}),
(\ref{eq:def-cont-deriv-xi-star}), (\ref{eq:def-cont-resp-xi}), and
(\ref{eq:def-cont-C-h}) with respect to $\d z^r$
are replaced by Wiener integrals with respect to
$\bar\kappa\,\d r+\sqrt{\bar\kappa}\,\d b^r$. Note
that the two processes $\{z^t\}_{t \geq 0}$ defined by \eqref{eq:poisson-z}
and \eqref{eq:gaussian-z} have the same mean and covariance, but differ
otherwise in law.

\begin{remark}[Forms of the response processes]\label{rmk:response}
In the above DMFT systems for both SGD and SME,
one may understand $r_f^{t,*}$ as a linear response
\begin{equation}\label{eq:rstarinterp}
r_f^{t,*}=\frac{\partial f(\xi^t,w^*,\eps)}{\partial w^*},
\end{equation}
where (\ref{eq:def-cont-deriv-xi-star}) is computed by 
substituting \eqref{eq:def-cont-xi} in \eqref{eq:rstarinterp} and
differentiating in $w^*$.

If $C_f^{t,s}$ were twice continuously-differentiable in $(t,s)$, then
setting $\bar C_f^{t,s}=\gamma \partial_t \partial_s C_f^{t,s}$, we may write
(\ref{eq:def-cont-theta}) as
\begin{equation}\label{eq:def-cont-theta-smooth}
\theta^t=\theta^0+\int_0^t \Big[{-}\bar\eta^r\Big({\gamma}
\Gamma^r\theta^r+g(\theta^r)+{\gamma}R_f^r(\theta^{[r]})
+{\gamma}R_f^{r,*}\theta^*\Big)+\bar u^r\Big]\d r
\end{equation}
where $\sqrt{\gamma}\,u^t=\int_0^t \bar u^r \d r$ and $\{\bar u^t\}_{t \geq 0}
\sim \GP(0,\bar C_f)$. Then $r_\theta^{t,s}$ may likewise be understood as the
linear response
\begin{equation}\label{eq:Rthetainterp}
r_\theta^{t,s}=\frac{\partial \theta^t}{\partial \bar u^s}
\end{equation}
computed by formally differentiating (\ref{eq:def-cont-theta-smooth})
in $\bar u^s$, and the equations \eqref{eq:def-cont-theta-smooth} and
\eqref{eq:Rthetainterp} would be analogous to those describing gradient flow in
\cite{celentano2021high}. We note, however, that in our
settings of interest where $\{z^t\}_{t \geq 0}$ is a Poisson process
\eqref{eq:poisson-z} or Gaussian diffusion process \eqref{eq:gaussian-z},
$C_f^{t,s}$ as defined by (\ref{eq:def-cont-C-h})
is not twice differentiable at the diagonal $s=t$, and $\{u^t\}_{t \geq 0}
\sim \GP(0,C_f)$ does not admit a representation $\sqrt{\gamma}\,u^t=\int_0^t
\bar u^r \d r$ in the sense of a usual Lebesgue-Stieltjes integral, as it
does not have differentiable sample paths. Thus these
representations \eqref{eq:def-cont-theta-smooth} and \eqref{eq:Rthetainterp} 
should be understood only in a formal sense.

One may also understand the linear operator $r_f^t(x^{[t]})$
in \eqref{eq:def-cont-resp-xi} as an integrated linear response
\begin{equation}\label{eq:responseinterp}
r_f^t(x^{[t]})
=\int_0^t \frac{\partial f(\xi^t,w^*,\eps)}{\partial w^q}\,x^q \d q
\end{equation}
against the test process $x^{[t]}=\{x^s\}_{s \in [0,t]}$.
The evolution equation (\ref{eq:def-cont-resp-xi}) is computed by formally
differentiating $f(\xi^t,w^*,\eps)$ in $w^q$ using (\ref{eq:def-cont-xi}),
\[\frac{\partial f(\xi^t,w^*,\eps)}{\partial w^q}
={-}\D_\xi f(\xi^t,w^*,\eps) \int_q^t \frac{\bar\eta^r}{\bar\kappa}
R_\theta^{t,r}\bigg(\frac{\partial f(\xi^{r-},w^*,\eps)}{\partial
w^q}+\D_\xi f(\xi^{q-},w^*,\eps)\1_{r=q}\bigg)\d z^r.\]
To give a simple and mathematically rigorous meaning to the quantity
$\int_q^t \1_{r=q} \d z^r$, we further integrate against $x^q\,\d q$ and
exchange the orders of integration in $r$ and $q$, yielding
\begin{align*}
&\int_0^t \frac{\partial f(\xi^t,w^*,\eps)}{\partial w^q}\,x^q\d q\\
&={-}\D_\xi f(\xi^t,w^*,\eps)\int_0^t\frac{\bar\eta^r}{\bar\kappa}
R_\theta^{t,r}\Bigg[\int_0^r \bigg(\frac{\partial f(\xi^{r-},w^*,\eps)}{\partial
w^q}+\D_\xi f(\xi^{q-},w^*,\eps)\1_{r=q}\bigg)x^q\d q\Bigg]\d z^r\\
&={-}\D_\xi f(\xi^t,w^*,\eps)\int_0^t\frac{\bar\eta^r}{\bar\kappa}
R_\theta^{t,r}\Bigg[\int_0^r \frac{\partial f(\xi^r,w^*,\eps)}{\partial
w^q}x^q\d q+\D_\xi f(\xi^{r-},w^*,\eps)x^r\Bigg]\d z^r.
\end{align*}
This gives the equation (\ref{eq:def-cont-resp-xi}) under the identification
(\ref{eq:responseinterp}).

We provide these remarks only to clarify a more intuitive interpretation of
these response processes; our mathematical proofs will operate directly on the
definitions (\ref{eq:def-cont-theta}--\ref{eq:def-cont-Gamma}),
rather than these formal interpretations.
\end{remark}

\subsubsection{Existence and uniqueness of the DMFT fixed point}

The following theorem establishes existence and
uniqueness of the above fixed points $(C_\theta,R_\theta,C_f,R_f,R_f^*,\Gamma)$ 
in a suitable domain $\cS^\text{cont} \equiv \cS^\text{cont}(T,C_0)$.
We defer a precise definition of this
domain to Section \ref{sec:existenceuniqueness}.

\begin{thm}\label{thm:uniqueness-existence}
Fix any large enough constant $C_0 \equiv C_0(T)>0$, and let
$\cS \equiv \cS(T,C_0)$ and
$\cS^\text{cont} \equiv \cS^\text{cont}(T,C_0)$ be the spaces for
$(C_\theta,R_\theta,C_f,R_f,R_f^*,\Gamma)$ given in
Definition \ref{def:S}. In both settings of
$\{z^t\}_{t \geq 0}$ in (\ref{eq:poisson-z}) for SGD and 
in (\ref{eq:gaussian-z}) for SME, the following hold:
\begin{enumerate}[(a)]
\item Given any $(C_\theta,R_\theta,C_f,R_f,R_f^*,\Gamma) \in \cS$, and given
the filtered probability space $(\Omega,\cF,\{\cF_t\}_{t \geq 0},\P)$
containing $(\theta^0,\theta^*),\eps,\{u^t\}_{t \geq 0},(\{w^t\}_{t \geq
0},w^*),\{z^t\}_{t \geq 0}$ defined by \eqref{eq:thetaeps}, \eqref{eq:GPs},
and \eqref{eq:poisson-z}/\eqref{eq:gaussian-z},
there exist unique $\cF_t$-adapted and c\`adl\`ag solutions to
\eqref{eq:def-cont-theta} and \eqref{eq:def-cont-deriv-t} for each fixed $s \in [0,T]$,
and unique $\cF_t$-adapted and c\`adl\`ag solutions to
\eqref{eq:def-cont-xi}, \eqref{eq:def-cont-deriv-xi-star}, and
\eqref{eq:def-cont-resp-xi} for each fixed deterministic input process
$x \in L^4([0,T],\R^k)$.
\item There exists a unique point $(C_\theta,R_\theta,C_f,R_f,R_f^*,\Gamma) \in
\cS^\text{cont} \subset \cS$ that satisfies the fixed-point conditions
(\ref{eq:def-cont-C-t}--\ref{eq:def-cont-Gamma}) defined by the solutions of
part (a).
\end{enumerate}
\end{thm}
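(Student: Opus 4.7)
The plan is a two-stage Picard/Banach fixed-point argument. For part (a), I fix the kernels $(C_\theta,R_\theta,C_f,R_f,R_f^*,\Gamma)\in\cS$ and construct pathwise solutions to the five stochastic equations. For part (b), I view (\ref{eq:def-cont-C-t}--\ref{eq:def-cont-Gamma}) as defining a self-map $\Psi$ on $\cS^\text{cont}$ and show that $\Psi$ admits a unique fixed point in a suitable metric.

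For part (a), the ODEs (\ref{eq:def-cont-theta}) and (\ref{eq:def-cont-deriv-t}) for $\theta^t$ and $r_\theta^{t,s}$ have drifts that are globally Lipschitz in the unknown: the term $\gamma\Gamma^r\theta^r$ is linear, $g(\theta^r)$ and $\D g(\theta^r)$ are Lipschitz by Assumption \ref{asp:lipschitz}, and $R_f^r$ acts as a bounded linear operator on $L^4([0,r],\R^k)$ whose norm is uniformly bounded over $\cS$. Standard Picard iteration in $C([0,T],L^p(\Omega,\R^k))$ for large $p$ yields unique $\cF_t$-adapted solutions, with continuous-in-$t$ sample paths inherited from $u^t$. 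The jump integral equations (\ref{eq:def-cont-xi}), (\ref{eq:def-cont-deriv-xi-star}), and (\ref{eq:def-cont-resp-xi}) are constructed by conditioning on the Poisson process $\{z^t\}$, which almost surely has finitely many jumps $0<\tau_1<\cdots<\tau_N\leq T$; between successive jumps the $\d z^r$ integrals contribute nothing, so $\xi^t$ depends on $t$ only through $R_\theta^{t,r}$ and $w^t$, and one inductively computes $\xi^{\tau_j-}$ from previously determined values and updates at $\tau_j$, giving the required c\`adl\`ag $\cF_t$-adapted paths. Analogous inductive constructions handle $r_f^{t,*}$ and $r_f^t$. For the SME driver \eqref{eq:gaussian-z}, these become standard Lipschitz SDEs with continuous coefficients, for which classical existence-uniqueness applies.

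For part (b), I first verify $\Psi(\cS^\text{cont})\subset\cS^\text{cont}$: boundedness of $f$, pseudo-Lipschitzness of $g$, and the regularity hypotheses built into $\cS$ give, via Gronwall, uniform $L^p$-bounds for $\theta^t,\xi^t,r_\theta^{t,s},r_f^{t,*}$ and operator-norm bounds for $r_f^t$ over $[0,T]$. Taking expectations and applying dominated convergence then produces the joint continuity in $(t,s)$ of $C_\theta, R_\theta, R_f^*, \Gamma$ (and of $C_f$ away from the diagonal, with the prescribed jump behavior along $s=t$ coming from $\d z^r$) as required by $\cS^\text{cont}$. Next, given two inputs $\Sigma=(C_\theta,\ldots)$ and $\tilde\Sigma=(\tilde C_\theta,\ldots)$ in $\cS^\text{cont}$, I couple the constructions using the same $(\theta^0,\theta^*,\eps,u^t,w^t,w^*,z^t)$ and apply Lipschitz bounds on $f,g$ and on the kernels to derive a Gronwall-type estimate
\begin{equation*}
\|\Psi(\Sigma)-\Psi(\tilde\Sigma)\|_{[0,t]}\leq C\int_0^t\|\Sigma-\tilde\Sigma\|_{[0,s]}\,\d s
\end{equation*}
in a suitable sup-in-$(t,s)$ norm on $\cS^\text{cont}$. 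Gronwall forces equality of any two fixed points; existence follows by Banach's theorem applied on a sufficiently short sub-interval and then iterated to cover $[0,T]$ (equivalently, one may use an exponentially-weighted norm $e^{-Mt}\|\cdot\|_{[0,t]}$ with $M$ large to get contraction directly on $[0,T]$).

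The principal obstacle is controlling the Poisson jump integrals in \eqref{eq:def-cont-xi}, \eqref{eq:def-cont-deriv-xi-star}, and \eqref{eq:def-cont-resp-xi}, where the integrand is c\`adl\`ag and, in \eqref{eq:def-cont-resp-xi}, depends on the entire past of $\xi$ through the operator $R_f^t$ whose action mixes the trajectory $\theta^{[r]}$ via a further time-integral. The needed $L^p$ moment bounds follow from the Burkholder-Davis-Gundy inequality applied to the compensated pure-jump martingale $\d z^r-\bar\kappa\,\d r$ together with a drift estimate for the compensator, but one must track carefully how these estimates propagate through $R_f^t$ when iterating. A secondary subtlety is that $\cS^\text{cont}$ must simultaneously accommodate the SGD case (where $C_f$ and $R_\theta$ may be discontinuous along the diagonal) and the SME case (where all kernels can be taken continuous); this is resolved by the conditions in Definition \ref{def:S}, which I expect to state asymmetrically in $(t,s)$ so as to permit the diagonal jumps inherent in the Poisson driver.
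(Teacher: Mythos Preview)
Your high-level plan (Picard iteration for part (a), Banach fixed point with an exponentially weighted norm for part (b)) matches the paper's. However, two of your steps contain genuine gaps, and one of your closing speculations is off.

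\medskip
\textbf{The Volterra structure in the SME case.} For the Poisson driver your pathwise jump-by-jump construction of $\xi^t,r_f^{t,*},r_f^t$ is valid and is in fact a somewhat different route from the paper, which treats both drivers uniformly via an $L^{2+\iota}$ Picard iteration (Lemma~\ref{lem:exist-unique-poisson-volterra}). But for the Gaussian driver you write that ``these become standard Lipschitz SDEs with continuous coefficients, for which classical existence-uniqueness applies.'' They do not: equation \eqref{eq:def-cont-xi} reads
\[
\xi^t=-\int_0^t \frac{\bar\eta^r}{\bar\kappa}R_\theta^{t,r}f(\xi^{r-},w^*,\eps)\,\d z^r+w^t,
\]
and the kernel $R_\theta^{t,r}$ depends on \emph{both} endpoints, so this is a stochastic Volterra equation, not a standard It\^o SDE. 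Classical SDE theory does not give you a c\`adl\`ag adapted solution here; you need a Picard scheme combined with BDG for the Volterra stochastic integral and a Kolmogorov continuity argument to obtain a continuous modification (exploiting the $|t-s|^{1/2}$-H\"older regularity of $t\mapsto R_\theta^{t,r}$ built into Definition~\ref{def:S}). This is exactly what the paper carries out.

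\medskip
\textbf{The coupling in the contraction step.} You propose to compare $\Psi(\Sigma)$ and $\Psi(\tilde\Sigma)$ by ``coupling the constructions using the same $(\theta^0,\theta^*,\eps,u^t,w^t,w^*,z^t)$.'' This cannot be done literally: if $C_f\neq\tilde C_f$ then $u\sim\GP(0,C_f)$ and $\tilde u\sim\GP(0,\tilde C_f)$ have different laws and there is no ``same $u^t$''. The paper resolves this by defining the distance between covariance kernels as an infimum over couplings of the associated Gaussian processes (Section~\ref{sec:contractivity}), and then runs the contraction estimate under a near-optimal coupling. Without this Wasserstein-type metric on the $C_\theta,C_f$ components, your Gronwall inequality for $\|\Psi(\Sigma)-\Psi(\tilde\Sigma)\|$ has no meaning on those coordinates. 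The paper also factors $\Psi=\trsfrmA\circ\trsfrmB$ and tracks which components pick up a $1/\lambda$ or $1/\sqrt{\lambda}$ gain; contraction comes from the $C_f$ coordinate gaining $1/\sqrt{\lambda}$ through the time integral in \eqref{eq:def-cont-C-h}, while the $R_f,R_f^*,\Gamma$ coordinates are only bounded and must be absorbed by the $1/\lambda$ factor in the $C_\theta,R_\theta$ estimates.

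\medskip
\textbf{On your diagonal-discontinuity remark.} The fixed point genuinely lies in $\cS^{\text{cont}}$ in both the SGD and SME cases: $C_f^{t,s}$ is the covariance of a stochastic integral and satisfies $\Tr[C_f^{t,t}-2C_f^{t,s}+C_f^{s,s}]\lesssim|t-s|$ globally (Lemma~\ref{lem:solution-in-space-2}). What fails at the diagonal is the \emph{second} mixed derivative $\partial_t\partial_s C_f^{t,s}$ (Remark~\ref{rmk:response}), not the continuity condition \eqref{eq:Cfcontinuity} in Definition~\ref{def:S}. The set $D$ of discontinuities in $\cS\setminus\cS^{\text{cont}}$ is there to accommodate the time-discretized DMFT system of Section~\ref{sec:deltaDMFTconvergence}, not the Poisson driver.
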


This theorem guarantees, in particular, that the processes
$\{\theta^t\}_{t \in [0,T]}$ and $\{\xi^t\}_{t \in [0,T]}$ which
will characterize the high-dimensional limits of SGD and SME are well-defined.

\subsection{Convergence to the limit processes}

The following theorem is the main result of this paper. Recalling the data
matrix
\[\X=[\x_1,\ldots,\x_n]^\top \in \R^{n \times d},\]
we show that over fixed time horizons $[0,T]$, finite-time marginals
of the empirical
distributions of coordinates of $(\{\btheta^t\}_{t \in [0,T]},\btheta^*)$
and $(\{\X\btheta^t\}_{t \in [0,T]},\X\btheta^*,\beps)$ for both the rescaled SGD
process \eqref{eq:SGDrescaled} and the SME process \eqref{eq:SME}
converge in law to the preceding DMFT processes $\{\theta^t\}_{t \geq 0}$ and
$\{\xi^t\}_{t \geq 0}$,
defined via the fixed point $(C_\theta,R_\theta,C_f,R_f,R_f^*,\Gamma) \in
\cS^\text{cont} \equiv \cS^\text{cont}(T,C_0)$.

\begin{thm}\label{thm:DMFT-limit}
Fix a time horizon $T>0$ and $0 \leq t_1 \leq \ldots \leq t_m \leq T$
not depending on $n,d$. For any sufficiently large constant $C_0>0$, let
$\cS^\text{cont} \equiv \cS^\text{cont}(T,C_0)$ be the domain of Theorem
\ref{thm:uniqueness-existence}.
\begin{enumerate}[(a)]
\item Let $\{\btheta^t\}_{t \in [0,T]}$
be the time-rescaled SGD process (\ref{eq:SGDrescaled})
where $\btheta^t=[\theta_1^t,\ldots,\theta_d^t]^\top \in \R^{d \times k}$,
and let $\Law(\cdot)$ denote the joint law of
$(\{\theta^t\}_{t \in [0,T]},\theta^*)$ or
$(\{\xi^t\}_{t \in [0,T]},w^*,\eps)$ in the DMFT
system (\ref{eq:thetaeps}--\ref{eq:def-cont-Gamma}) when $\{z^t\}_{t \geq 0}$ is the Poisson process
in (\ref{eq:poisson-z}),
defined by the unique fixed point $(C_\theta,R_\theta,C_f,R_f,R_f^*,\Gamma) \in
\cS^\text{cont}$. Then almost surely as $n,d \to \infty$,
weakly and in Wasserstein-2 over $(\R^k)^m \times \R^{k^*}$ and $(\R^k)^m \times \R^{k^*} \times \R$,
\begin{align}
\frac{1}{d}\sum_{j=1}^d
\delta_{(\theta_j^{t_1},\ldots,\theta_j^{t_m},\theta_j^*)}
&\Rightarrow
\Law(\theta^{t_1},\ldots,\theta^{t_m},\theta^*),\label{eq:DMFT-limit-theta}\\
\frac{1}{n}\sum_{i=1}^n \delta_{(\x_i^\top \btheta^{t_1},
\ldots,\x_i^\top \btheta^{t_m},\x_i^\top \btheta^*,\eps_i)}
&\Rightarrow \Law(\xi^{t_1},\ldots,\xi^{t_m},w^*,\eps).\label{eq:DMFT-limit-xi}
\end{align}
\item Let $\{\btheta^t\}_{t \in [0,T]}$ be the SME process
(\ref{eq:SME}). Then the same result of part (a) holds, upon replacing
the DMFT system (\ref{eq:thetaeps}--\ref{eq:def-cont-Gamma}) by that
in the Gaussian setting of $\{z^t\}_{t \geq 0}$ given in (\ref{eq:gaussian-z}).
\end{enumerate}
\end{thm}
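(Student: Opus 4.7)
The plan is to extend the AMP-based (long-memory, Gaussian-conditioning) approach of \cite{celentano2021high,gerbelot2024rigorous} to the sub-linear batch regime $\alpha \in [0,1)$, in which the number of SGD steps per epoch grows as $n^{1-\alpha}$ and a Poisson/Brownian driving process emerges in the limit. First I would rewrite the update \eqref{eq:SGD} in matrix form as
\[\bar\btheta^{k+1}=\bar\btheta^k-\frac{\eta^k}{\kappa}\X^\top \bD^k f(\X\bar\btheta^k,\X\btheta^*,\beps)-\frac{\eta^k}{n} g(\bar\btheta^k),\]
where $\bD^k=\diag(\1_{S^k}) \in \{0,1\}^{n \times n}$ is an $\cF_k$-adapted diagonal indicator matrix that is independent of $\X$. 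The SME iterates admit an analogous continuous-time form with $\bD^k$ replaced by the Itô differential $\bar\kappa\,\d t+\sqrt{\bar\kappa}\,\d \B^t$ for a diagonal matrix-valued Brownian motion. In either case, the iteration has the canonical AMP shape of alternating multiplication by $\X$ and $\X^\top$ sandwiched with Lipschitz nonlinearities that carry additional external randomness.

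\textbf{Discrete AMP state evolution on a grid.} For a fixed grid $0=t_0<t_1<\cdots<t_M=T$ of step size $\delta>0$, coarse-grain the dynamics by tracking $(\bar\btheta^{k_m},\X\bar\btheta^{k_m})$ at indices $k_m=\lfloor t_mn^{1-\alpha}\rfloor$. Embedding the coarse-grained iterates into a long-memory AMP recursion with appropriate Onsager corrections, one can invoke a state-evolution theorem in the spirit of \cite{gerbelot2024rigorous} to obtain joint Gaussian limits for the rows of $\X\bar\btheta^{0:M}$ and limits for the empirical distributions of rows of $\bar\btheta^{0:M}$. The resulting discrete-in-time covariance and response recursions mirror \eqref{eq:def-cont-C-t}--\eqref{eq:def-cont-Gamma} but with integrals replaced by Riemann sums on the grid. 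Since $\{\bD^k\}$ is independent of $\X$, conditioning on the batch record and applying a standard moment bound allows $\bD^k$ to be absorbed into the nonlinearity for the purpose of state evolution.

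\textbf{Passing to the DMFT limit.} The next step is to show that, as $n\to\infty$ followed by $\delta\to 0$, the discrete state-evolution recursions converge to the continuous DMFT fixed point of Section \ref{sec:DMFTlimits}. The Poisson driving process $\{z^t\}$ arises because, within any window $[t_m,t_{m+1}]$, the number of times any fixed sample is selected is $\operatorname{Binomial}(\lfloor \delta n^{1-\alpha}\rfloor,\kappa/n)$, which converges in law to $\Pois(\bar\kappa\delta)$; concatenating across windows gives convergence of the per-sample counting process to $\PP(\bar\kappa)$ in the Skorokhod topology. For SME, the analogous continuous-time central-limit aggregation directly produces the Gaussian diffusion \eqref{eq:gaussian-z}. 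The $\X^\top$-factors yield the Gaussian noise $\sqrt{\gamma}\,u^t$ with covariance $C_f$ via the standard AMP Gaussian-conditioning argument. Uniqueness of the fixed point (Theorem \ref{thm:uniqueness-existence}) then pins the limit down to the DMFT system and yields \eqref{eq:DMFT-limit-theta}--\eqref{eq:DMFT-limit-xi}.

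\textbf{Main obstacle.} The principal technical difficulty will be passing from the fixed-$M$ AMP result to the continuous limit, i.e.\ controlling accumulated approximation errors as the step count $n^{1-\alpha}$ per epoch diverges. Two issues demand care: (i) in the Poisson regime the limit process $\{\xi^t\}$ is càdlàg rather than continuous, so the Lipschitz-in-time stability arguments used for gradient flow in \cite{celentano2021high} must be adapted to jump discontinuities, and the response operators $R_f^r$ act on full paths in $L^4([0,T],\R^k)$ rather than pointwise values; (ii) the Onsager corrections involve time-summed quantities that must concentrate uniformly over all grid points, which requires martingale/Freedman-type bounds for sums of $\bD^k$-weighted random matrices together with a Grönwall-type estimate to propagate the discrepancy between the native SGD dynamics and their AMP surrogates. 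Establishing these uniform estimates, and matching the discretized state-evolution fixed point to the continuous DMFT fixed point as $\delta \to 0$, will be the technical core of the proof.
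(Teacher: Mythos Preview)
Your overall strategy---discretize on a $\delta$-mesh, apply AMP state evolution to obtain a discrete DMFT limit as $n,d\to\infty$, then send $\delta\to 0$ and invoke the uniqueness of Theorem~\ref{thm:uniqueness-existence}---matches the paper, and your identification of the obstacles (uniform-in-time error control, c\`adl\`ag limits, path-valued response operators) is on target.

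There is, however, one missing idea in the plan as written. You propose to sample the actual SGD iterates at grid indices $k_m=\lfloor t_m n^{1-\alpha}\rfloor$ and embed them directly into an AMP recursion. This does not work: between $k_m$ and $k_{m+1}$ there are $\asymp \delta n^{1-\alpha}$ SGD steps, each evaluating $f(\X\bar\btheta^k,\ldots)$ at an intermediate $\bar\btheta^k$ that already depends on $\X$ through earlier steps, so the one-window map $\bar\btheta^{k_m}\mapsto\bar\btheta^{k_{m+1}}$ is not of the AMP form ``nonlinearity independent of $\X$, then $\X^\top$.'' The paper resolves this by introducing a \emph{separate} discretized process $\{\btheta_\delta^m\}$ (its equation~\eqref{eq:discretedynamics}) in which the evaluation point for $f$ is \emph{frozen} at the start of each $\delta$-window; the one-window update is then
\[\btheta_\delta^{m+1}=\btheta_\delta^m-\frac{\bar\eta^{m\delta}}{\bar\kappa}\,\X^\top\big(f(\X\btheta_\delta^m,\X\btheta^*,\beps)\odot\z_\delta^m\big)-\delta\bar\eta^{m\delta}g(\btheta_\delta^m),\]
where $\z_\delta^m\in\R^n$ is the accumulated per-sample batch count over the window (entrywise Poisson for SGD, Gaussian for SME) and is independent of $\X$. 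This is genuinely AMP-tractable, and the discrete DMFT (Lemma~\ref{lem:finite_dim_converge}) follows from a standard state-evolution theorem. The Gr\"onwall/martingale argument you allude to is then used as a separate step (Section~\ref{sec:disc-sgd-sme}) to show $d^{-1}\sup_{t\in[0,T]}\|\btheta^t-\btheta_\delta^{\lfloor t/\delta\rfloor}\|^2\lesssim\delta$; for SGD the paper additionally Poissonizes the sampling schedule (replacing the iteration counter by a rate-$n^{1-\alpha}$ Poisson clock) to put the native and discretized processes on a common continuous-time footing before comparing. Your last paragraph hints at ``AMP surrogates'' distinct from native SGD, so you may already have this in mind, but it is the key device and should be made explicit rather than folded into ``coarse-graining.''
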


\begin{remark}
Under Assumption \ref{asp:SGD}, the characterization of the limiting law for SGD
in Section \ref{sec:DMFT-SGD} depends only on
$(\bar\kappa,\{\bar\eta^t\}_{t \geq 0})$ and not on $\alpha$. Thus the
high-dimensional limit of the SGD dynamics in Theorem \ref{thm:DMFT-limit}(a)
is the same for all $\alpha \in [0,1)$.
\end{remark}

\begin{remark}
Theorem \ref{thm:DMFT-limit} implies, as a consequence, that for any
pseudo-Lipschitz test functions
$\psi:(\R^k)^m \times \R^{k^*} \to \R$ 
and $\phi:(\R^k)^m \times \R^{k^*} \times \R \to \R$
(i.e.\ satisfying
$|\psi(x)-\psi(y)| \leq C\|x-y\|_2(1+\|x\|_2+\|y\|_2)$ for all inputs $x,y \in (\R^k)^m \times \R^{k^*}$), almost surely
\begin{align*}
\lim_{n,d \to \infty} \frac{1}{d}\sum_{j=1}^d
\psi(\theta_j^{t_1},\ldots,\theta_j^{t_m},\theta_j^*)
&=\E\,\psi(\theta^{t_1},\ldots,\theta^{t_m},\theta^*),\\
\lim_{n,d \to \infty} \frac{1}{n}\sum_{i=1}^n
\phi(\x_i^\top \btheta^{t_1},\ldots,\x_i^\top \btheta^{t_m},\x_i^\top
\btheta^*,\eps_i)
&=\E\,\phi(\xi^{t_1},\ldots,\xi^{t_m},w^*,\eps),
\end{align*}
where the expectations on the right side are evaluated over the joint laws of
the DMFT variables.

For example, for any fixed training time $t>0$ (in units of training epochs for
SGD, c.f.\ the time rescaling \eqref{eq:SGDrescaled}), almost surely as $n,d \to \infty$:
\begin{enumerate}
\item The pairwise overlaps between columns of the learned parameter
$\btheta^t \in \R^{d \times k}$ and true parameter
$\btheta^* \in \R^{d \times k^*}$ satisfy
\[\frac{1}{d}\,\btheta^{t\top}\btheta^*
\to \E[\theta^t \otimes \theta^*] \in \R^{k \times k^*}.\]
\item The average training loss satisfies
\[\frac{1}{n}\sum_{i=1}^n L\big(\sigma(\x_i^\top \btheta^t,y_i)\big)
\to \E L\big(\sigma(\xi^t),\sigma^*(w^*,\eps))\big).\]
\item Suppose further that the data $\x_i \sim \cN(0,\frac{1}{d}\,\Id)$ is Gaussian. Then for an
independent test
sample $\x_\mathrm{test} \sim \cN(0,\frac{1}{d}\,\Id)$ equal in law to $\x_1,\ldots,\x_n$, conditional on $(\btheta^t,\btheta^*)$ we have
\[(\x_\text{test}^\top\btheta^t,\x_\text{test}^\top\btheta^*)
\sim \cN\left(0,\frac{1}{d}(\btheta^t,\btheta^*)^\top(\btheta^t,\btheta^*)\right).\]
Thus for any test loss $L_\text{test}:\R^k \times \R^{k^*} \to \R$, the expected
test error admits a representation
\[\E_{\x_\mathrm{test}} L_\text{test}(\x_{\text{test}}^\top \btheta^t,\x_{\text{test}}^\top \btheta^*)
=h\left(\frac{1}{d}(\btheta^t,\btheta^*)^\top(\btheta^t,\btheta^*)\right)\]
for some smooth function $h:\R^{(k+k^*) \times (k+k^*)} \to \R$. Then also
\[\E_{\x_\mathrm{test}} L_\text{test}(\x_{\text{test}}^\top \btheta^t,\x_{\text{test}}^\top \btheta^*)
\to h\big(\E[(\theta,\theta^*) \otimes (\theta,\theta^*)]\big).\]
\end{enumerate}
\end{remark}

\subsection{Discussion}

In this section, to provide further interpretation of our results and some
connections to related literature, we give a (heuristic) discussion of some
simplifications of the preceding DMFT
equations in the limit of small learning rate, limit of large sample size,
and specialization to a setting of quadratic optimization for squared loss and
ridge regularizer.

\subsubsection{Small learning rate limit and gradient flow}\label{sec:comp-gf}

Consider, for simplicity, a time-independent learning rate
$\bar\eta^r=\bar\eta$.
We check that in the small learning rate limit $\bar\eta \to 0$,
the DMFT system (\ref{eq:def-cont-theta}--\ref{eq:def-cont-Gamma}) converges to
that obtained for gradient flow in \cite{celentano2021high}.

Introduce the rescaled time $\tau=\bar\eta t$,
and define the time-rescaled processes
\begin{equation}\label{eq:rescaled1}
\begin{gathered}
\tilde\theta^\tau=\theta^{\tau/\bar\eta},\quad
\tilde \xi^\tau=\xi^{\tau/\bar\eta},\quad
\tilde u^\tau = u^{\tau/\bar\eta},\quad
\tilde w^\tau=w^{\tau/\bar\eta},\quad
\tilde z^\tau=z^{\tau/\bar\eta},\\
\tilde r_\theta^{\tau,\sigma}=r_\theta^{\tau/\bar\eta,\sigma/\bar\eta},\quad
\tilde r_f^{\tau,*}=r_f^{\tau/\bar\eta,*},\quad
\tilde r_f^\tau(\tilde x^{[\tau]})=r_f^{\tau/\bar\eta}(x^{[\tau/\bar\eta]})
\text{ where } \tilde x^\tau=x^{\tau/\bar\eta},\end{gathered}
\end{equation}
together with the corresponding deterministic response and covariance processes
\begin{equation}\label{eq:rescaled2}
\begin{gathered}
\tilde C_\theta^{\tau,\sigma}=C_\theta^{\tau/\bar\eta,\sigma/\bar\eta}, \quad
\tilde R_\theta^{\tau,\sigma}=R_\theta^{\tau/\bar\eta,\sigma/\bar\eta}, \quad
\tilde C_f^{\tau,\sigma}=C_f^{\tau/\bar\eta,\sigma/\bar\eta},\\
\tilde R_f^{\tau}(x^{[\tau]})=R_f^{\tau/\bar\eta}(x^{[\tau/\bar\eta]}), \quad
\tilde R_f^{\tau,*}=R_f^{\tau/\bar\eta,*}, \quad
\tilde \Gamma^\tau=\Gamma^{\tau/\bar\eta}.\end{gathered}
\end{equation}
Note that $\{\tilde u^\tau\} \sim \GP(0,\tilde C_f)$,
$(\{\tilde w^\tau\},w^*) \sim \GP(0,\tilde C_\theta)$, and $\{\tilde z^\tau\}$
is a Poisson process with rate $\bar\kappa/\bar\eta$ in the SGD setting or a
diffusion process $\tilde z^\tau=\int_0^\tau (\bar\kappa/\bar\eta)\d \rho
+\int_0^\tau \sqrt{\bar\kappa/\bar\eta}\,\d b^\rho$ in the SME setting.

Applying this change of variables to
(\ref{eq:def-cont-theta}--\ref{eq:def-cont-resp-xi}), we have
\begin{align}
\tilde\theta^{\tau}&= \theta^0 - \int_0^{\tau} \Big({\gamma}\tilde\Gamma^\rho
\tilde\theta^\rho+g(\tilde\theta^\rho)+{\gamma}\tilde R_f^\rho(\tilde\theta^{[\rho]})+
{\gamma}\tilde R_f^{\rho,*}\theta^*\Big)\d \rho+\sqrt{\gamma}\,\tilde
u^{\tau},\label{eq:thetaGF}\\
\tilde r_\theta^{\tau,\sigma}&=\Id_k-\int_\sigma^\tau
\Big[\Big({\gamma}\tilde\Gamma^\rho+\D g(\tilde
\theta^\rho)\Big)\tilde r_\theta^{\rho,\sigma}
+\gamma\tilde R_f^\rho(\tilde
r_\theta^{[\rho],\sigma})\Big]\d \rho \text{ if } \tau \geq \sigma,
\quad \tilde r_\theta^{\tau,\sigma}=0 \text{ if }
\tau<\sigma,\label{eq:rthetaGF}\\
\tilde\xi^\tau&={-}\frac{\bar\eta}{\bar\kappa}
\int_0^\tau \tilde R_\theta^{\tau,\rho}
f(\tilde\xi^{\rho-},w^*,\eps)\d \tilde z^\rho+\tilde w^\tau,\\
\tilde r_f^{\tau,*}&={-}\D_\xi f(\tilde \xi^\tau,w^*,\eps)
\frac{\bar\eta}{\bar\kappa}
\int_0^\tau \tilde R_\theta^{\tau,\rho}\tilde r_f^{\rho-,*}\d \tilde z^\rho
+\D_{w^*} f(\tilde \xi^\tau,w^*,\eps),\\
\tilde r_f^\tau(\tilde x^{[\tau]})
&={-}\D_\xi f(\tilde \xi^\tau,w^*,\eps)
\frac{\bar\eta}{\bar\kappa}\int_0^\tau \tilde R_\theta^{\tau,\rho}
\Big(\tilde r_f^{\rho-}(\tilde x^{[\rho]})
+\D_\xi f(\tilde \xi^{\rho-},w^*,\eps)\tilde x^\rho\Big)\d \tilde z^\rho.
\end{align}
From this, it may be verified that $\tilde R_\theta,\tilde R_f,\tilde R_f^*$
remain uniformly bounded over finite intervals of rescaled time
$\tau,\sigma \in [0,\tilde T]$, as $\bar\eta \to 0$. Then, decomposing
the evolution of $\tilde \xi^\tau$ as
\[\tilde\xi^\tau= {-}\int_0^\tau \tilde R_\theta^{\tau,\rho}
f(\tilde\xi^\rho,w^*,\eps)\d \rho - \underbrace{\frac{\bar\eta}{\bar\kappa}
\int_0^\tau \tilde R_\theta^{\tau,\rho}
f(\tilde\xi^{\rho-},w^*,\eps)(\d \tilde
z^\rho-(\bar\kappa/\bar\eta)\d\rho)}_{:=M^\tau}+\,\tilde w^\tau,\]
we note that the martingale term $M^\tau$ satisfies
$\E\|M^\tau\|^2 \lesssim (\bar\eta/\bar\kappa)^2
\cdot (\bar\kappa/\bar\eta)=O(\bar\eta)$
over finite time horizons $\tau \in [0,\tilde T]$,
by the It\^o isometry. Then a
high-moment bound may be applied to show
$\lim_{\bar\eta \to 0} \sup_{\tau \in [0,\tilde T]} \|M^\tau\|=0$. The
same argument establishes that the martingale terms for the evolutions of
$\tilde r_f^{\tau,*}$ and $\tilde r_f^\tau(\tilde x^{[\tau]})$ vanish in the
limit $\bar\eta \to 0$, leading to the simplified equations
\begin{align}
\tilde\xi^\tau&={-}
\int_0^\tau \tilde R_\theta^{\tau,\rho}
f(\tilde\xi^{\rho},w^*,\eps)\d \rho+\tilde w^\tau,\label{eq:xiGF}\\
\tilde r_f^{\tau,*}&={-}\D_\xi f(\tilde \xi^\tau,w^*,\eps)
\int_0^\tau \tilde R_\theta^{\tau,\rho}\tilde r_f^{\rho,*}\d \rho
+\D_{w^*} f(\tilde \xi^\tau,w^*,\eps),\label{eq:rfstarGF}\\
\tilde r_f^\tau(\tilde x^{[\tau]})
&={-}\D_\xi f(\tilde \xi^\tau,w^*,\eps)
\int_0^\tau \tilde R_\theta^{\tau,\rho}
\Big(\tilde r_f^\rho(\tilde x^{[\rho]})
+\D_\xi f(\tilde \xi^\rho,w^*,\eps)\tilde x^\rho\Big)\d\rho.
\end{align}
Positing the form
\[\tilde r_f^\tau (\tilde x^{[\tau]})
=\int_0^\tau \frac{\partial f(\tilde\xi^\tau,w^*,\eps)}{\partial \tilde w^\sigma}
\,\tilde x^\sigma \d \sigma\]
as in Remark \ref{rmk:response}, this response process 
$\frac{\partial f(\tilde\xi^\tau,w^*,\eps)}{\partial \tilde w^\sigma}$ must then
satisfy
\begin{equation}
\frac{\partial f(\tilde\xi^\tau,w^*,\eps)}{\partial \tilde w^\sigma}
={-}\D_\xi f(\tilde\xi^\tau,w^*,\eps) \left(\int_\sigma^\tau
\tilde R_\theta^{\tau,\rho}\frac{\partial f(\tilde\xi^\rho,w^*,\eps)}{\partial
\tilde w^\sigma} \d \rho +
\tilde R_\theta^{\tau,\sigma}\D_\xi f(\tilde \xi^{\sigma},w^*,\eps)\right).
\label{eq:rfGF}
\end{equation}
These equations \eqref{eq:thetaGF}, \eqref{eq:rthetaGF}, \eqref{eq:xiGF},
\eqref{eq:rfstarGF}, and \eqref{eq:rfGF} (specialized to the setting of a
ridge-type regularizer $G(\theta)=\frac{1}{2}\theta^\top \Lambda \theta$)
are equivalent to the DMFT system established in \cite{celentano2021high}
for gradient flow.

\subsubsection{Large-sample limit and one-pass SGD}\label{sec:comp-onepass}

Recalling the parameter $\gamma=\lim_{n,d \to \infty} n/d$, we next consider a
data-rich regime where $\gamma \to \infty$. For simplicity, 
consider a constant learning rate $\eta^t=\bar\eta^t=\bar \eta$
and single-sample batch size $\kappa=\bar\kappa=1$.
To keep the stochastic gradient dynamics \eqref{eq:SGD} the same as $n$ grows
(i.e.\ depending on $d$ rather than $n$), we define
\[\tilde g(\theta)=\gamma^{-1} g(\theta)\]
and fix $\tilde g(\cdot)$ as $\gamma \to \infty$.
Rescaling time as $\tau=\gamma t$, let us show that over any fixed time horizon
$\tau \in [0,\tilde T]$ (corresponding to a number of SGD iterations $n(\tilde
T/\gamma) \approx d\tilde T$ that 
does not depend on $n$), the preceding DMFT equations
reduce to a simple diffusion process that characterizes the one-pass or online
SGD procedure
\[\bar\btheta^{k+1}=\bar\btheta^k-\bar \eta
\Big(\x^k \otimes f(\x^{k\top} \bar\btheta^k,\x^{k\top}\btheta^*,\eps^k)
+\frac{1}{d}\,\tilde g(\bar\btheta^k)\Big)\]
with a fresh sample $(\x^k,y^k)=(\x^k,\sigma^*(\x^{k\top}\btheta^*,\eps^k))$
per iteration. We note that this is the standard one-pass SGD procedure for 
minimizing the population risk
\[\E[L\big(\sigma(\x^\top \btheta),y\big)]
+\frac{1}{d}\sum_{j=1}^d \tilde G(\theta_j), \qquad \tilde G(\theta)
=\gamma^{-1} G(\theta).\]

We define the time-rescaled processes $\tilde\theta^\tau=\theta^{\tau/\gamma}$,
$\tilde \xi^\tau=\xi^{\tau/\gamma}$, $\tilde z^\tau=z^{\tau/\gamma}$,
etc.\ analogously to
(\ref{eq:rescaled1}--\ref{eq:rescaled2}). Then applying this change of variables
to (\ref{eq:def-cont-theta}--\ref{eq:def-cont-resp-xi}),
\begin{align*}
\tilde\theta^{\tau}&=\theta^0 - \bar\eta \int_0^{\tau} \Big(\tilde\Gamma^\rho
\tilde\theta^\rho+\tilde g(\tilde\theta^\rho)+\tilde R_f^\rho(\tilde\theta^{[\rho]})+\tilde R_f^{\rho,*}\theta^*\Big)\d \rho+\sqrt{\gamma}\,\tilde
u^{\tau},\\
\tilde r_\theta^{\tau,\sigma}&=\Id_k-\bar\eta\int_\sigma^\tau
\Big[\Big(\tilde\Gamma^\rho+\D g(\tilde \theta^\rho)\Big)
\tilde r_\theta^{\rho,\sigma}+\tilde R_f^\rho(\tilde
r_\theta^{[\rho],\sigma})\Big]\d \rho \text{ if } \tau \geq \sigma,
\quad \tilde r_\theta^{\tau,\sigma}=0 \text{ if }
\tau<\sigma,\\
\tilde\xi^\tau&={-}\bar\eta
\int_0^\tau \tilde R_\theta^{\tau,\rho}
f(\tilde\xi^{\rho-},w^*,\eps)\d \tilde z^\rho+\tilde w^\tau,\\
\tilde r_f^{\tau,*}&={-}\D_\xi f(\tilde \xi^\tau,w^*,\eps)
\cdot \bar\eta
\int_0^\tau \tilde R_\theta^{\tau,\rho}\tilde r_f^{\rho-,*}\d \tilde z^\rho
+\D_{w^*} f(\tilde \xi^\tau,w^*,\eps),\\
\tilde r_f^\tau(\tilde x^{[\tau]})
&={-}\D_\xi f(\tilde \xi^\tau,w^*,\eps)
\cdot \bar\eta \int_0^\tau \tilde R_\theta^{\tau,\rho}
\Big(\tilde r_f^{\rho-}(\tilde x^{[\rho]})
+\D_\xi f(\tilde \xi^{\rho-},w^*,\eps)\tilde x^\rho\Big)\d \tilde z^\rho.
\end{align*}
Here, $\{\tilde z^\tau\}_{\tau \geq 0}$ is a Poisson process with rate
$1/\gamma$ in the setting of SGD, or the diffusion process $\tilde
z^\tau=\int_0^\tau (1/\gamma)\d \rho+\int_0^\tau (1/\sqrt{\gamma})\d b^\rho$
in the setting of SME. Then one may again verify that $\tilde
R_\theta,\tilde R_f,\tilde R_f^*$ remain uniformly bounded as $\gamma \to
\infty$, and that in this limit, all stochastic integrals against $\d \tilde
z^\rho$ vanish to give the simplified equations
\begin{equation}\label{eq:theta-timeres-1}
\tilde\theta^\tau=\theta^0-\bar\eta \int_0^\tau \Big(\tilde \Gamma^\rho
\tilde\theta^\rho+\tilde g(\tilde \theta^\rho)+ \tilde
R_f^{\rho,*}\theta^*\Big)\d \rho+\sqrt{\gamma}\,\tilde u^\tau,
\end{equation}
\[\tilde \xi^\tau=\tilde w^\tau,
\quad \tilde r_f^{\tau,*}=\D_{w^*} f(\tilde \xi^\tau,w^*,\eps)
=\D_{w^*} f(\tilde w^\tau,w^*,\eps),
\quad \tilde r_f^\tau \equiv 0.\]
The covariance kernel of the rescaled Gaussian process $\{\sqrt{\gamma}\,\tilde
u^\tau\}_{\tau \geq 0}$ is given by
\begin{align*}
\gamma\,\tilde C_f^{\tau,\sigma} &= \gamma\,\E\Big[\bar\eta \int_0^\tau
f(\tilde\xi^{\rho-},w^*,\eps)\d \tilde z^\rho
\otimes \bar\eta\int_0^\sigma f(\tilde\xi^{\rho-},w^*,\eps)\d \tilde
z^\rho\Big].
\end{align*}
Applying the It\^o isometry and above simplification
$\tilde \xi^\tau=\tilde w^\tau$ in the $\gamma \to \infty$ limit,
we then obtain
\begin{align*}
\lim_{\gamma \to \infty} \gamma\,\tilde C_f^{\tau,\sigma}
&=\lim_{\gamma \to \infty}\left(O\Big(\frac{1}{\sqrt{\gamma}}\Big) +
\gamma\bar\eta^2\,\E\Big[\Big(\int_0^{\tau\wedge\sigma} f(\tilde
\xi^{\rho-},w^*,\eps)(\d \tilde z^r - (1/\gamma)\d r)\Big)^{\otimes
2}\Big]\right)\\
&=\bar \eta^2\int_0^{\tau\wedge\sigma} \E\Big[
f(\tilde w^\rho,w^*,\eps)^{\otimes 2}\Big]\d \rho.
\end{align*}
This is precisely the covariance kernel of a Brownian diffusion
\[\d \tilde u^\tau=\bar\eta \Sigma^\tau\d B^\tau,
\qquad \Sigma^\tau=\E[f(\tilde w^\tau,w^*,\eps)^{\otimes 2}]^{1/2},\]
where $\{B^\tau\}_{\tau \geq 0}$ is a standard Brownian motion on $\R^k$.

Thus, to summarize, in the limit $\gamma \to \infty$,
the law of $\{\tilde \theta^\tau\}_{\tau \geq 0}$ in \eqref{eq:theta-timeres-1}
is given by a distribution-dependent SDE
\begin{equation}\label{eq:onlineSDE}
\d \tilde\theta^\tau={-}\bar\eta \Big(\tilde \Gamma^\tau
\tilde\theta^\tau+\tilde g(\tilde \theta^\tau)+\tilde R_f^{\tau,*}
\theta^*\Big)\d \tau+\bar \eta\,\Sigma^\tau \d B^\tau,
\end{equation}
where the deterministic drift and diffusion
coefficients $\tilde \Gamma^\tau,\tilde R_f^{\tau,*},\Sigma^\tau$ are
defined self-consistently from the law of $(\tilde\theta^\tau,\theta^*)$ by
\begin{equation}\label{eq:SDEcoefs}
\begin{gathered}
\tilde C_\theta=\E[(\tilde \theta^\tau,\theta^*) \otimes (\tilde
\theta^\tau,\theta^*)] \in \R^{(k+k^*) \times (k+k^*)},
\quad (\tilde w^\tau,w^*) \sim \cN(0,\tilde C_\theta),\\
\tilde \Gamma^\tau=\E[\D_\xi f(\tilde w^\tau,w^*,\eps)],
\quad \tilde R_f^{\tau,*}=\E[\D_{w^*} f(\tilde w^\tau,w^*,\eps)],
\quad \Sigma^\tau=\E[f(\tilde w^\tau,w^*,\eps)^{\otimes 2}]^{1/2}.
\end{gathered}
\end{equation}
This coincides with the high-dimensional limit of the
learning dynamics of one-pass SGD. For example, specializing to the case of a
ridge penalty $\tilde G(\theta)=\frac{\lambda}{2}\|\theta\|^2$ and
$\tilde g(\theta)=\lambda \theta$, we may apply It\^o's formula to
\eqref{eq:onlineSDE} to obtain a closed system of ODEs for the overlap
parameters $\tilde C_\theta=\E[(\tilde \theta^\tau,\theta^*) \otimes (\tilde
\theta^\tau,\theta^*)]$, which take the form
\begin{align*}
\frac{\d}{\d \tau} \E [\tilde\theta^\tau \otimes \theta^*] &= -\bar\eta
\left((\tilde \Gamma^\tau+\lambda)\E[\tilde\theta^\tau \otimes \theta^*] +
\tilde R_f^{\tau,*}\E[\theta^* \otimes \theta^*] \right),\\
\frac{\d}{\d \tau} \mathbb{E} [\tilde\theta^\tau \otimes \tilde\theta^\tau] &= -\bar\eta \bigg((\tilde \Gamma^\tau+\lambda)
\E[\tilde\theta^\tau \otimes \tilde\theta^\tau]  + 
\tilde R_f^{\tau,*}\E[\theta^* \otimes \tilde\theta^\tau]\\
&\hspace{1in}+\E[\tilde\theta^\tau \otimes \tilde\theta^\tau]
(\tilde \Gamma^\tau+\lambda)^\top + \E[\tilde\theta^\tau \otimes \theta^*]
\tilde R_f^{\tau,*\top} \bigg) + \bar\eta^2(\Sigma^\tau)^2
\end{align*}
and $\tilde \Gamma^\tau,\tilde R_f^{\tau,*},\Sigma^\tau$ are the functions of
$\tilde C_\theta$ defined by \eqref{eq:SDEcoefs}. This recovers the known ODE
for overlap parameters in one-pass SGD (in a setting of isotropic data), see
e.g.\ \cite[Corollary 1.1]{collins2024hitting}.

\subsubsection{Deterministic integro-differential equations for squared loss and
ridge regularizer}\label{sec:linear}

We next consider a special setting of quadratic optimization, given by a
squared loss, linear activation, and ridge regularizer.\footnote{The result we establish does not
technically encompass this setting, as Assumption \ref{asp:lipschitz}
requires both $L(\cdot,\cdot)$ and $\sigma(\cdot)$ to be Lipschitz. However, we
include this discussion here, as we believe it is illuminating to understand
the simplifications that arise in this linear example.} In this setting,
the DMFT equations become linear, and the relevant mean and covariance
statistics of
$\{\theta^t,\xi^t,r_\theta^{t,s},r_f^t,r_f^{t,*}\}$ may be calculated
analytically to yield closed integro-differential equations for the two mappings
\[(C_\theta,R_\theta) \mapsto (C_f, R_f, R_f^*,\Gamma),
\qquad (C_f, R_f, R_f^*,\Gamma) \mapsto (C_\theta,R_\theta).\]
A corollary of this calculation is that, in this linear setting, the fixed point
$(C_\theta,R_\theta,C_f, R_f, R_f^*,\Gamma)$ depends only on the mean and
covariance of $\{z^t\}_{t \geq 0}$, and hence coincides for SGD and SME. (More generally, this fixed point coincides for the squared loss, linear activation, and any regularizer, as the mapping $(C_\theta,R_\theta) \mapsto (C_f, R_f, R_f^*,\Gamma)$ is always the same for both SGD and SME.)

Specifically, let $k=k^*$ and consider the setting
\[L(\hat y,y) = \frac{1}{2} \norm{\hat y - y}^2, \qquad \sigma(\xi) = \xi,
\qquad f(\xi, w^*, \eps) = \xi - \sigma^*(w^*, \eps).\]
Let us denote
\[\bar \xi^t=\xi^t-\sigma^*(w^*,\eps),
\qquad \bar w^t=w^t-\sigma^*(w^*,\eps),
\qquad \bar z^t=z^t-\bar\kappa\,t.\]
Then \eqref{eq:def-cont-xi}, \eqref{eq:def-cont-deriv-xi-star}, and
\eqref{eq:def-cont-resp-xi} simplify to
\begin{align*}
\bar \xi^t&={-}\int_0^t \frac{\bar\eta^r}{\bar\kappa} R_\theta^{t,r} \bar
\xi^{r-}\d z^r+\bar w^t
={-}\int_0^t {\bar\eta^r} R_\theta^{t,r} \bar \xi^r \d r
\underbrace{-\int_0^t \frac{\bar\eta^r}{\bar\kappa} R_\theta^{t,r}
\bar \xi^{r-}\d \bar z^r}_{:=M^t} +\,\bar w^t,\\
r_f^{t,*}&=-\int_0^t \frac{\bar\eta^r}{\bar\kappa}
R_\theta^{t,r}r_f^{r-,*}\d z^r-\D_{w^*}\sigma^*(w^*,\eps),\\
r_f^t(x^{[t]})&={-} \int_0^t \frac{\bar\eta^r}{\bar\kappa}
R_\theta^{t,r}\big(r_f^{r-}(x^{[r]})+x^r\big)\d z^r.
\end{align*}
Noting that $r_f^{r-}(x^{[r]})$ and $r_f^{r-,*}$ are $\cF_t$-predictable,
taking expectations of the last two equations yields
\begin{align*}
R_f^{t,*}&={-}\int_0^t \bar \eta^r R_\theta^{t,r}
R_f^{r,*} \d r-\E[\D_{w^*} \sigma^*(w^*,\eps)],\\
R_f^t(x^{[t]})&={-} \int_0^t {\bar\eta^r}
R_\theta^{t,r}\big(R_f^{r}(x^{[r]})+x^{r}\big)\d r.
\end{align*}
These are Volterra integral equations of the second kind, and explicitly
solvable: Let $K_\theta^{t,s} \in \R^{k \times k}$
denote the unique resolvent kernel (see e.g.~\cite[Chap 2]{brunner2004collocation}) satisfying
\begin{equation}\label{eq:Volterra-Rt-resolvent}
K_\theta^{t,s} = -{\bar\eta^s} R_\theta^{t,s} - \int_s^t {\bar\eta^r} R_\theta^{t,r} K_\theta^{r,s} \d r
={-}\bar\eta^s R_\theta^{t,s}-\int_s^t \bar\eta^s K_\theta^{t,r}
R_\theta^{r,s}\d r.
\end{equation}
Then the explicit solutions for $R_f,R_f^*,\Gamma$ are
\begin{equation}\label{eq:Rf-linear}
R_f^t(x^{[t]}) = \int_0^t K_\theta^{t,s} x^s \d s,
\quad
R_f^{t,*}=-\bigg(1 + \int_0^t K_\theta^{t,s} \d
s\bigg)\E[\D_{w^*}\sigma^*(w^*,\eps)],
\quad \Gamma^t=\Id_k.
\end{equation}
To derive the form of $C_f$, note that the above Volterra integral equation
for $\bar\xi^t$ may also be solved to yield
\begin{equation}\label{eq:xiexplicit}
\bar\xi^t = M^t + \bar w^t + \int_0^t K_\theta^{t,s} (M^s + \bar w^s) \d s.
\end{equation}
We observe that since $\{\bar z^t\}_{t \geq 0}$ is a mean-zero process
independent of $(\{w^t\}_{t \geq 0},w^*,\eps)$, for any $t,s \geq 0$ this
implies
$\E[M^t \otimes \bar w^s] = \E[\E[M^t\mid \{w^t\}_{t\in [0,T]}, w^*, \eps]
\otimes \bar w^s ] = 0$. Thus
\begin{align*}
\bar C_\xi^{t,s}:=\E[\bar\xi^t \otimes \bar\xi^s]
&=\E\left[\left(M^t+\int_0^t K_\theta^{t,r}M^r \d r\right)
\otimes \left(M^s+\int_0^s K_\theta^{s,r}M^r \d r\right)\right]\\
&\hspace{1in}
+\E\left[\left(\bar w^t+\int_0^t K_\theta^{t,r}\bar w^r \d r\right)
\otimes \left(\bar w^s+\int_0^s K_\theta^{s,r}\bar w^r \d r\right)\right].
\end{align*}
Denote
\begin{equation}\label{eq:barCtheta}
\bar C_\theta^{t,s}=\E[\bar w^t \otimes \bar w^s]=\E[
(w^t-\sigma^*(w^*,\eps)) \otimes (w^s-\sigma^*(w^*,\eps))],
\quad (\{w^t\}_{t \geq 0},w^*) \sim \GP(0,C_\theta).
\end{equation}
Then the contribution from $\{\bar w^t\}$ is
\begin{align*}
&\E\brk{\prn{\bar w^t + \int_0^t K_\theta^{t,r}\bar w^r \d r} \otimes \prn{\bar w^s + \int_0^s K_\theta^{s,r'}\bar w^{r'} \d r'}} \\
&= \bar C_\theta^{t,s} + \int_0^t K_\theta^{t,r} \bar C_\theta^{r,s}\d r +
\int_0^s \bar C_\theta^{t,r'} K_\theta^{s,r'\top} \d r' + \int_0^t \int_0^s
K_\theta^{t,r}\bar C_\theta^{r,r'}K_\theta^{s,r'\top} \d r\,\d r'.
\end{align*}
For the contribution from $\{M^t\}$, observe that
\begin{align}
M^t + \int_0^t K_\theta^{t,r}M^r \d r &=  - \int_0^t
\frac{\bar\eta^s}{\bar\kappa} R_\theta^{t,s} \bar \xi^{s-}\d \bar z^s
- \int_0^t K_\theta^{t,s}\int_0^s \frac{\bar\eta^r}{\bar\kappa}
R_\theta^{s,r} \bar \xi^{r-}\d \bar z^r \d s\notag\\
&=  - \int_0^t  \prn{\frac{\bar\eta^r}{\bar\kappa} R_\theta^{t,r} + \int_r^t \frac{\bar\eta^r}{\bar\kappa} K_\theta^{t,s}   R_\theta^{s,r}\d s} \bar \xi^{r-}
\d \bar z^r
=\frac{1}{\bar\kappa}\int_0^t K_\theta^{t,r} \bar \xi^{r-}
\d \bar z^r.\label{eq:Midentity}
\end{align}
Then by the It\^o isometry,
\begin{align*}
\E\brk{\prn{M^t + \int_0^t K_\theta^{t,r}M^r \d r} \otimes \prn{M^s + \int_0^s K_\theta^{s,r}M^r \d r}}
&=\frac{1}{\bar\kappa}\int_0^{t\wedge s} K_\theta^{t,r} \E[\bar
\xi^r \otimes \bar \xi^r] K_\theta^{s,r\top} \d r.
\end{align*}
Combining these arguments, the covariance of $\{\bar \xi^t\}$ satisfies
\begin{equation}\label{eq:cov-barxi}
\begin{aligned}
\bar C_\xi^{t,s}&= \frac{1}{\bar\kappa}\int_0^{t\wedge s}
K_\theta^{t,r}\bar C_\xi^{r,r}
K_\theta^{s,r\top} \d r\\
&\hspace{0.2in}+\bar C_\theta^{t,s} + \int_0^t K_\theta^{t,r} \bar C_\theta^{r,s}\d r +
\int_0^s \bar C_\theta^{t,r'} K_\theta^{s,r'\top} \d r' + \int_0^t \int_0^s
K_\theta^{t,r}\bar C_\theta^{r,r'}K_\theta^{s,r'\top} \d r\,\d r'.
\end{aligned}
\end{equation}
This may be understood as a closed linear integral equation for the diagonal
$\{\bar C_\xi^{t,t}\}_{t \in [0,T]}$, whose solution may then be
substituted back to determine $\{\bar C_\xi^{t,s}\}_{s \neq t}$.
Finally, by definition
\begin{align*}
C_f^{t,s} &= \E\brk{\prn{\int_0^t {\bar\eta^r} \bar \xi^{r} \d r + \int_0^t
\frac{\bar\eta^r}{\bar\kappa} \bar\xi^{r-} \d \bar z^r}
\otimes \prn{\int_0^s {\bar\eta^r} \bar \xi^r \d r + \int_0^s
\frac{\bar\eta^r}{\bar\kappa} \bar\xi^{r-} \d \bar z^r}}\nonumber\\
&=(\mathrm{I}) + (\mathrm{II}) + (\mathrm{III}) + (\mathrm{IV}),
\end{align*}
where the terms are given by
\begin{align*}
(\mathrm{I}) & = 
\int_0^t \int_0^s \bar\eta^r\bar\eta^{r'}\E[\bar\xi^r \otimes \bar\xi^{r'}]\d r
\d r'=\int_0^t \int_0^s \bar\eta^r\bar\eta^{r'}\bar C_\xi^{r,r'}\d r
\d r',\\
(\mathrm{II}) & = \E\brk{\int_0^t \frac{\bar\eta^r}{\bar\kappa} \bar\xi^{r-} \d \bar
z^r \otimes \int_0^s \frac{\bar\eta^{r'}}{\bar\kappa} \bar\xi^{r'-} \d \bar
z^{r'}} = \int_0^{t\wedge s} \frac{(\bar\eta^{r})^2}{\bar\kappa}\bar
C_\xi^{r,r}\d r,\\
(\mathrm{III}) &= \int_0^t {\bar\eta^r}\E\brk{\bar \xi^r  \otimes \int_0^s
\frac{\bar\eta^{r'}}{\bar\kappa} \bar \xi^{r'-} \d \bar z^{r'}}\d r\\
&\overset{(*)}{=}\int_0^t {\bar\eta^r}\E\brk{  \prn{M^r+\int_0^r K_\theta^{r,r'}
M^{r'} \d r'}  \otimes \int_0^s \frac{\bar\eta^{r'}}{\bar\kappa} \bar \xi^{r'-} \d \bar z^{r'}}\d r\\
&\overset{(**)}{=}\int_0^t \frac{\bar\eta^r}{\bar\kappa}\E\brk{\int_0^r K_\theta^{r,r'}\bar
\xi^{r'-} \d \bar z^{r'}  \otimes \int_0^s \frac{\bar\eta^{r'}}{\bar\kappa}
\bar\xi^{r'-} \d \bar z^{r'}}\d r = \int_0^t \int_0^{r\wedge s}
\frac{\bar\eta^r\bar\eta^{r'}}{\bar\kappa}
K_\theta^{r,r'}\bar C_\xi^{r',r'} \d {r'}\d r,\\
(\mathrm{IV})&=\int_0^s \bar\eta^{r'} \E\brk{\int_0^t
\frac{\bar\eta^r}{\bar\kappa} \bar\xi^{r-}\d \bar z^r
\otimes \bar \xi^{r'}} \d s
=\int_0^s \int_0^{t \wedge r} 
\frac{\bar\eta^r\bar \eta^{r'}}{\bar\kappa} 
\bar C_\xi^{r',r'}K_\theta^{r,r'\top}\d r'\d r.
\end{align*}
In $(*)$, we have applied the form of $\{x^t\}_{t \geq 0}$ from
\eqref{eq:xiexplicit}, and the independence of $(\{\bar w^t\}_{t \geq
0},w^*,\eps)$ and $\{\bar z^t\}_{t \geq 0}$ as above. In $(**)$, we have applied
again the identity \eqref{eq:Midentity}, and the form for $(\mathrm{IV})$
follows from symmetric arguments as $(\mathrm{III})$. Thus, to summarize
\begin{equation}\label{eq:Cf-linear}
\begin{aligned}
C_f^{t,s}&=\int_0^t \int_0^s \bar\eta^r\bar\eta^{r'}\bar C_\xi^{r,r'}\d r \d r'
+\int_0^{t\wedge s} \frac{(\bar\eta^{r})^2}{\bar\kappa}\bar C_\xi^{r,r}\d r\\
&\hspace{1in}
+\int_0^t \int_0^{r\wedge s} \frac{\bar\eta^r\bar\eta^{r'}}{\bar\kappa}
K_\theta^{r,r'}\bar C_\xi^{r',r'} \d {r'}\d r
+\int_0^s \int_0^{t \wedge r} \frac{\bar\eta^r\bar \eta^{r'}}{\bar\kappa} 
\bar C_\xi^{r',r'}K_\theta^{r,r'\top}\d r'\d r.
\end{aligned}
\end{equation}
The mapping $(C_\theta,R_\theta) \mapsto (C_f, R_f, R_f^*, \Gamma)$ is thus
given by \eqref{eq:Volterra-Rt-resolvent},
\eqref{eq:Rf-linear}, \eqref{eq:barCtheta}, \eqref{eq:cov-barxi},
and \eqref{eq:Cf-linear}, and this mapping is the same for both SGD and SME.

Consider next the setting of a ridge regularizer,
\[G(\theta)=\frac{\lambda}{2}\|\theta\|^2,
\qquad g(\theta)=\lambda \theta,\]
and suppose that $R_f$ takes the form
$R_f^t(\theta^{[t]})=\int_0^t R_f^{t,s} \theta^s ds$.
Then \eqref{eq:def-cont-theta} and \eqref{eq:def-cont-deriv-t} also simplify
to linear Volterra integral equations,
\[\theta^t=\int_0^t A_f^{t,r} \theta^r \d r + b_f^t,
\qquad r_\theta^{t,s}=\int_s^t A_f^{t,r} r_\theta^{r,s} \d r + \Id_k,\]
where
\begin{align*}
        A_f^{t,r} &= -\bar{\eta}^r \left({\gamma} \Gamma^r + \lambda\,\Id_k\right) - \gamma \int_r^t {\bar{\eta}^s} R_f^{r,s} \d s, \qquad
    b_f^t = \theta^0 - \gamma \left(\int_0^t
{\bar{\eta}^r}R_f^{r,*} \d r\right)  \theta^* + \sqrt{\gamma}\,u^t.
\end{align*}
Let $K_f^{t,s} \in \R^{k \times k}$ be the unique resolvent kernel satisfying
\begin{equation}\label{eq:Volterra-Rf-resolvent}
K_f^{t,s}=A_f^{t,s} + \int_s^t A_f^{t,r} K_f^{r,s} \d r
=A_f^{t,s}+\int_s^t K_f^{t,r}A_f^{r,s}\d s.
\end{equation}
Then
\begin{equation}\label{eq:Rt-ridge}
R_\theta^{t,s}=\E r_\theta^{t,s}=\left(1+\int_s^t K_f^{t,r}\d r \right) \Id_k,
\end{equation} 
and
\[\theta^t = b_f^t + \int_0^t K_f^{t,s} b_f^s \,\d s\]
which implies that $C_\theta^{t,s}=\E[\theta^t \otimes \theta^s]$ is given by
\begin{equation}\label{eq:Ct-ridge}
\begin{aligned}
C_\theta^{t,s}&=C_b^{t,s} 
+ \int_0^t K_f^{t,r} C_b^{r,s} \,\mathrm{d}r 
+ \int_0^s C_b^{t,r} (K_f^{s,r})^\top \,\mathrm{d}r 
+ \int_0^t \int_0^s K_f^{t,r} C_b^{r,r'} (K_f^{s,r'})^\top \,\mathrm{d}r'
\,\mathrm{d}r,\\
C_\theta^{t,*}&=C_b^{t,*}+\int_0^t K_f^{t,s}
C_b^{s,*} \d s,\\
C_b^{t,s}&=\E[b_f^t\otimes b_f^s]=\E\left[\left(\theta^0 
- \gamma \left(\int_0^t
{\bar{\eta}^r}R_f^{r,*} \d r\right)\theta^*\right)\otimes
\left(\theta^0 - \gamma \left(\int_0^s
{\bar{\eta}^r}R_f^{r,*} \d r\right)\theta^*\right)\right] + \gamma\,C_f^{t,s},\\
C_b^{t,*}&=\E[b_f^t\otimes \theta^*]=\E\left[\left(\theta^0 
- \gamma \left(\int_0^t
{\bar{\eta}^r}R_f^{r,*} \d r\right)\theta^*\right) \otimes \theta^* \right].
\end{aligned}
\end{equation}
The mapping $(C_f, R_f, R^*_f, \Gamma) \mapsto (C_\theta, R_\theta)$ is given
by \eqref{eq:Volterra-Rf-resolvent}, \eqref{eq:Rt-ridge}, and
\eqref{eq:Ct-ridge}.

\subsubsection{Numerical simulation of the DMFT system}

\begin{figure}[b]
    \centering
    \includegraphics[width=0.75\linewidth]{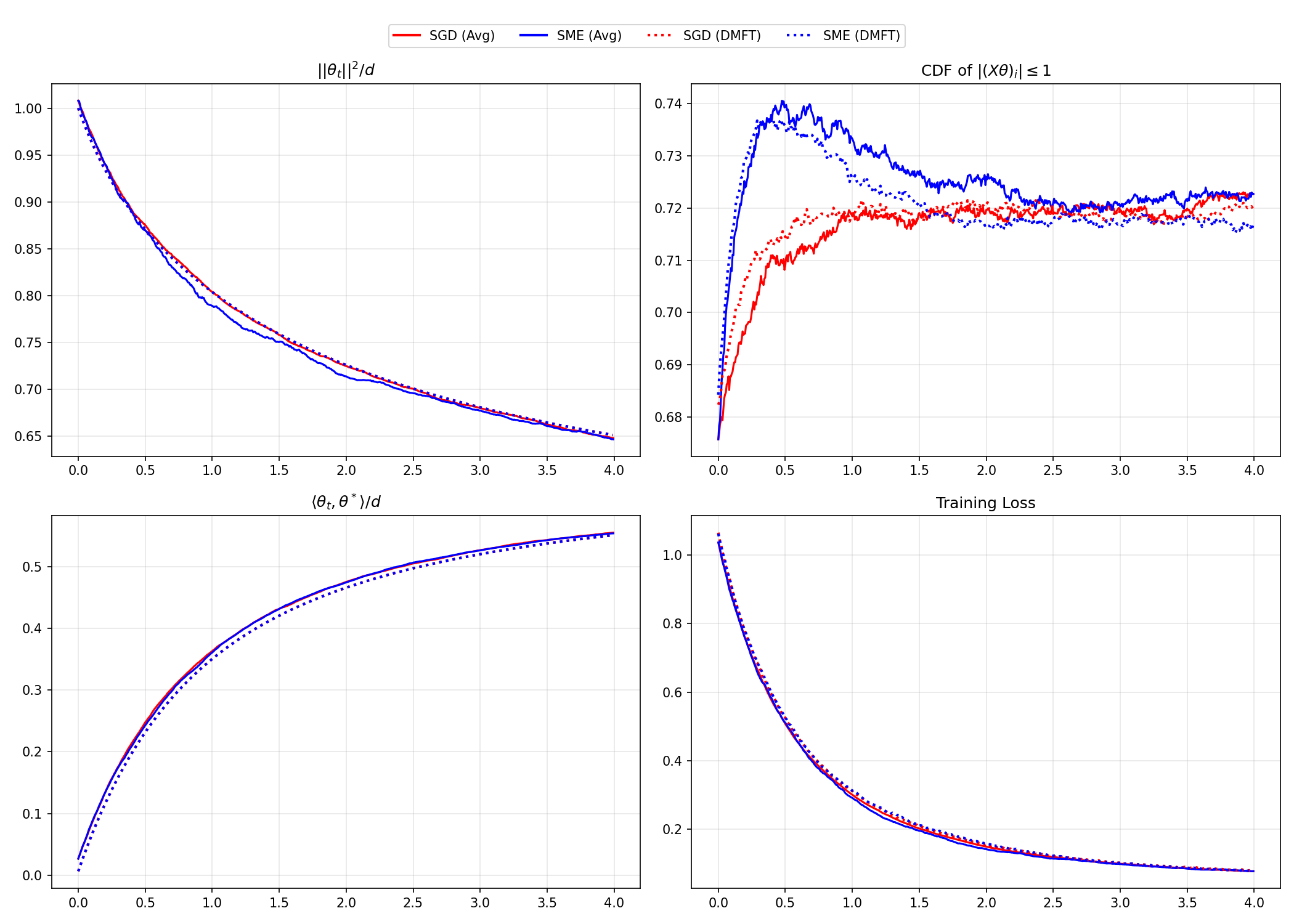}
    \caption{Comparison of SGD and SME dynamics in a linear model.
Parameters: $n=8000$, $d=10000$, $\eta=0.8$, $\lambda=0.1$. The dynamics of the
squared norm $\|\btheta_t\|^2/d$, overlap $\langle \btheta_t, \btheta^*
\rangle/d$, and training loss agree closely for SGD and SME, whereas those of
the empirical CDF $n^{-1}\sum_{i=1}^n \1\{|\x_i^\top \btheta| \le 1\}$ show a
discrepancy.}\label{fig:linear}
    \end{figure}

    \begin{figure}[t]
    \centering
    \includegraphics[width=0.75\linewidth]{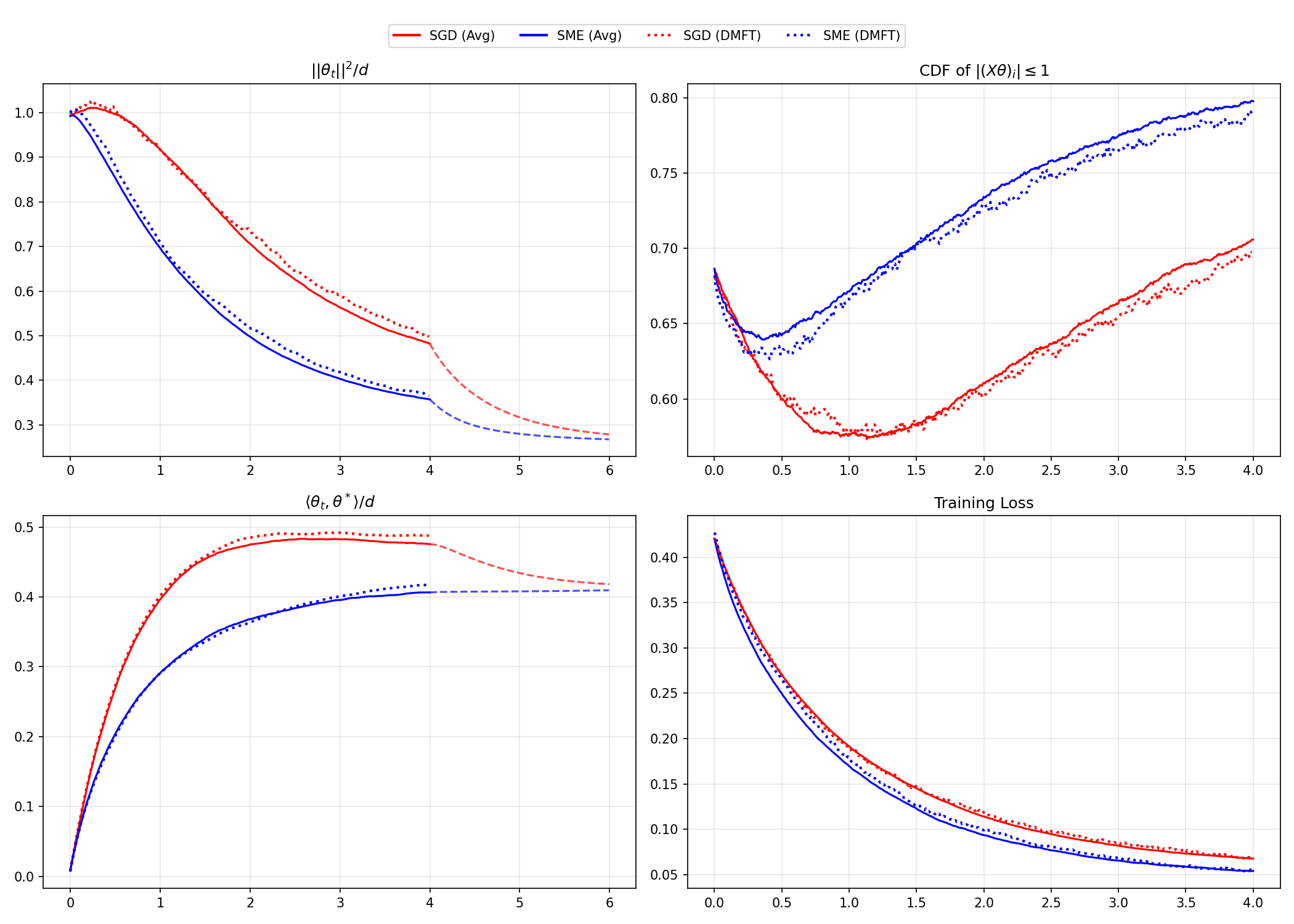}
    \caption{Divergence of SGD and SME dynamics in a model with Huber loss and
$\tanh$ activation.
Parameters: $n=8000$, $d=10000$, $\eta=3.0$, $\lambda=0.1$. Solid/dotted curves
depict the dynamics of SGD/SME and their DMFT predictions over 4
training epochs, exhibiting a divergence that is predicted by DMFT.
Dashed curves for training times $t>4$ depict the subsequent dynamics of
gradient flow initialized from the last SGD/SME iterate, in which the norm
and overlap statistics re-converge.}
    \label{fig:tanh}
\end{figure}

\begin{figure}[t]
    \centering
    \includegraphics[width=0.75\linewidth]{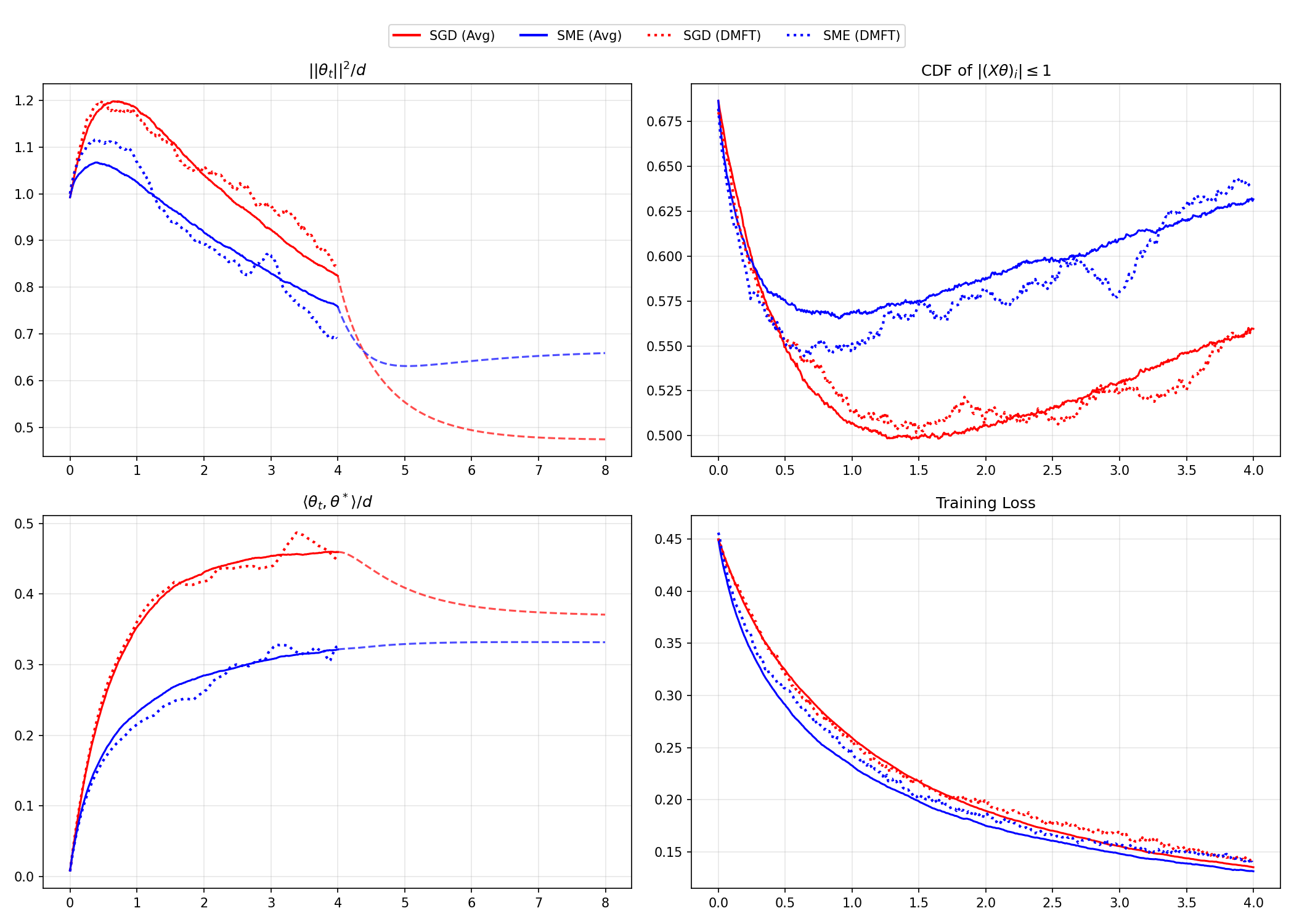}
    \caption{Divergence of SGD and SME dynamics in a model with Huber loss and
$\sin$ activation. Parameters are the same as Figure \ref{fig:tanh}. In this
example, the difference between SGD and SME persists through the later gradient
flow training, suggesting that the dynamics reach local optima
with distinct statistical properties.}
    \label{fig:sine}
    \end{figure}

We simulate the solution to the DMFT
system (\ref{eq:thetaeps}--\ref{eq:def-cont-Gamma}) by iterating the fixed-point
mappings $(C_\theta,R_\theta) \mapsto (C_f,R_f,R^*_f,\Gamma)$
and $(C_f,R_f,R^*_f,\Gamma) \mapsto (C_\theta,R_\theta)$ until convergence, and
then simulating the processes $\{\theta^t,\xi^t\}_{t \geq 0}$ from the
parameters of the fixed point. More concretely:
\begin{itemize}
\item In the setting of a ridge regularizer, the mapping
$(C_f,R_f,R^*_f,\Gamma) \mapsto (C_\theta,R_\theta)$ may be evaluated
analytically via \eqref{eq:Volterra-Rf-resolvent}, \eqref{eq:Rt-ridge}, and
\eqref{eq:Ct-ridge}.
\item In the setting of squared loss and linear activation, the mapping
$(C_\theta,R_\theta) \mapsto (C_f,R_f,R^*_f,\Gamma)$ may likewise be evaluated
analytically via \eqref{eq:Volterra-Rt-resolvent},
\eqref{eq:Rf-linear}, \eqref{eq:barCtheta}, \eqref{eq:cov-barxi},
and \eqref{eq:Cf-linear}.
\item Outside of these settings, the expectations
(\ref{eq:def-cont-C-t}--\ref{eq:def-cont-Gamma}) may be approximated
via Monte Carlo simulation of the processes
(\ref{eq:thetaeps}--\ref{eq:def-cont-resp-xi}).
\end{itemize}
All time integrals are discretized in increments of $\delta$, as described in
Section \ref{sec:discreteDMFT}. We remark that simulation of the processes
$\{\xi^t,r_f^{t,*},r_f^{t,s}\}$ for SGD is substantially faster
than for gradient flow or SME, as integrals against the Poisson
process $\{z^t\}_{t \geq 0}$ are rapidly computable due to the sparsity of its
jumps. A software implementation of these methods is available on the authors'
web page.

We close this section by presenting some numerical results that corroborate our
preceding discussion/findings. All simulations use a single-index model
$k=k^*=1$, single-sample batch size $\kappa=1$, time discretization $\delta=0.05$, 
and ridge regularizer $G(\theta)=\frac{\lambda}{2}\theta^2$ with
analytical computation of the DMFT mapping
$(C_f,R_f,R^*_f,\Gamma) \mapsto (C_\theta,R_\theta)$. The reverse DMFT mapping $(C_f,R_f,R^*_f,\Gamma) \mapsto (C_\theta,R_\theta)$ is
computed analytically for the linear model, and via 10,000 Monte Carlo samples
otherwise. DMFT predictions for $d^{-1}\|\btheta^t\|^2$ and $d^{-1}\btheta^{t\top}\btheta^*$ are contained in $C_\theta$, and the predictions for other observables are also computed via 10,000 Monte Carlo samples. Statistics of the high-dimensional SGD and SME processes
$\{\btheta^t\}_{t \geq 0}$ are averaged over 10 independent trials.\\

\paragraph{Coincidence of quadratic observables of SGD and SME in a linear model.}

For a linear model $y=\x^\top \btheta^*$ with loss $L(\hat
y,y)=\frac{1}{2}(\hat y-y)^2$ and activation $\sigma(\xi)=\xi$,
Figure~\ref{fig:linear} verifies the claim of Section \ref{sec:linear}
that the dynamics of the squared norm
$d^{-1}\|\btheta^t\|^2$, overlap $d^{-1}\btheta^{t\top}\btheta^*$,
and training loss $(2n)^{-1}\sum_{i=1}^n (\x_i^\top \btheta^t-\x_i^\top
\btheta^*)^2$ coincide for SGD and SME. The empirical distribution of $\{\x_i^\top
\btheta^t\}_{i \in [n]}$ is predicted (via the law of $\xi^t$) to be different
for SGD and SME, and we indeed observe a discrepancy at the tested
learning rate that is also predicted by the DMFT theory.\\

\paragraph{Difference between SGD and SME dynamics in non-convex landscapes.}

Figures \ref{fig:tanh} and \ref{fig:sine} display two different non-linear
and non-convex settings, in which the dynamics of SGD and SME
with relatively large learning rate $\eta=3.0$ are markedly different:

\begin{enumerate}
    \item Figure \ref{fig:tanh} uses a (Lipschitz-continuous) Huber loss
$L(\hat y,y)=\begin{cases} \frac{1}{2}(\hat y - y)^2 & \text{if } |\hat y - y| < 1 \\ (|\hat y - y| - \frac{1}{2}) & \text{otherwise} \end{cases}$ and activation $\sigma(x)=\tanh(x)$, with noisy true labels $y=\tanh(\x^\top \btheta^*+\eps)$ where $\eps\sim\mathcal{N}(0,0.1)$.
\item Figure \ref{fig:sine} uses the same Huber loss, activation
$\sigma(x)=\sin(x)$, and noisy true labels $y=\sin(\x^\top \btheta^*)+\eps$ where $\eps \sim\mathcal{N}(0,0.1)$.
\end{enumerate}

We highlight that the mechanism driving this difference may be different in
these two examples: Both examples train SGD/SME for $T=4$ epochs, and
then continue to apply gradient flow from the last SGD/SME iterate until
convergence. The dynamics over the first $T=4$ epochs in Figure \ref{fig:tanh}
diverge between SGD and SME, but converge again upon
the subsequent application of gradient flow. This suggests that SGD and SME
reach basins of attraction of (local) minimizers of the empirical risk with
similar statistical properties, and that their differences may be driven by
different oscillatory behaviors around these minimizers.
In contrast, the difference between SGD and SME in
Figure \ref{fig:sine} persists even through the later gradient flow training,
with SGD achieving a higher overlap $d^{-1}\btheta^{t\top}\btheta^*$
and smaller squared norm $d^{-1}\|\btheta^t\|^2$. This suggests that SGD and
SME reach local minimizers of the empirical risk that have different
generalization properties. This difference is reflected (over
the first 4 training epochs) by the DMFT theory.\\

\paragraph{The small learning rate limit.}

Figure \ref{fig:comp-lr} compares the overlap dynamics of SGD and SME in units
of rescaled time $\tau=\eta t$ across various learning rates $\eta$ for the Tanh
activation setting of Figure \ref{fig:tanh}, verifying the claim in Section
\ref{sec:comp-gf} that these dynamics
converge to a common (gradient flow) limit as $\eta \to 0$.\\


\begin{figure}[t]
    \centering
    \includegraphics[width=1\linewidth]{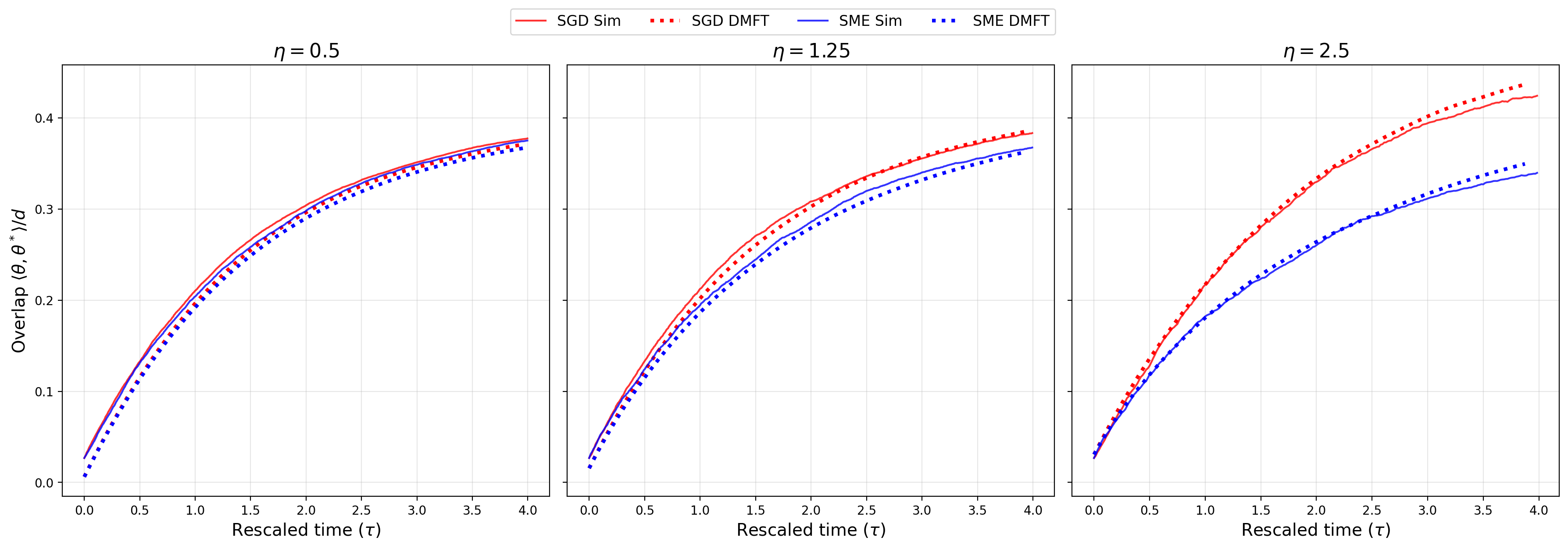}
    \caption{Effect of learning rate on SGD vs.\ SME divergence.
Overlap dynamics are plotted in units of rescaled time $\tau=\eta t$, for
various learning rates $\eta \in \{0.5, 1.25, 2.5\}$. At small learning rates
($\eta=0.5$), the SME diffusion provides an accurate approximation of SGD, and
their gap widens as $\eta$ increases.}
    \label{fig:comp-lr}
\end{figure}

\begin{figure}[t]
    \centering
    \includegraphics[width=0.5\linewidth]{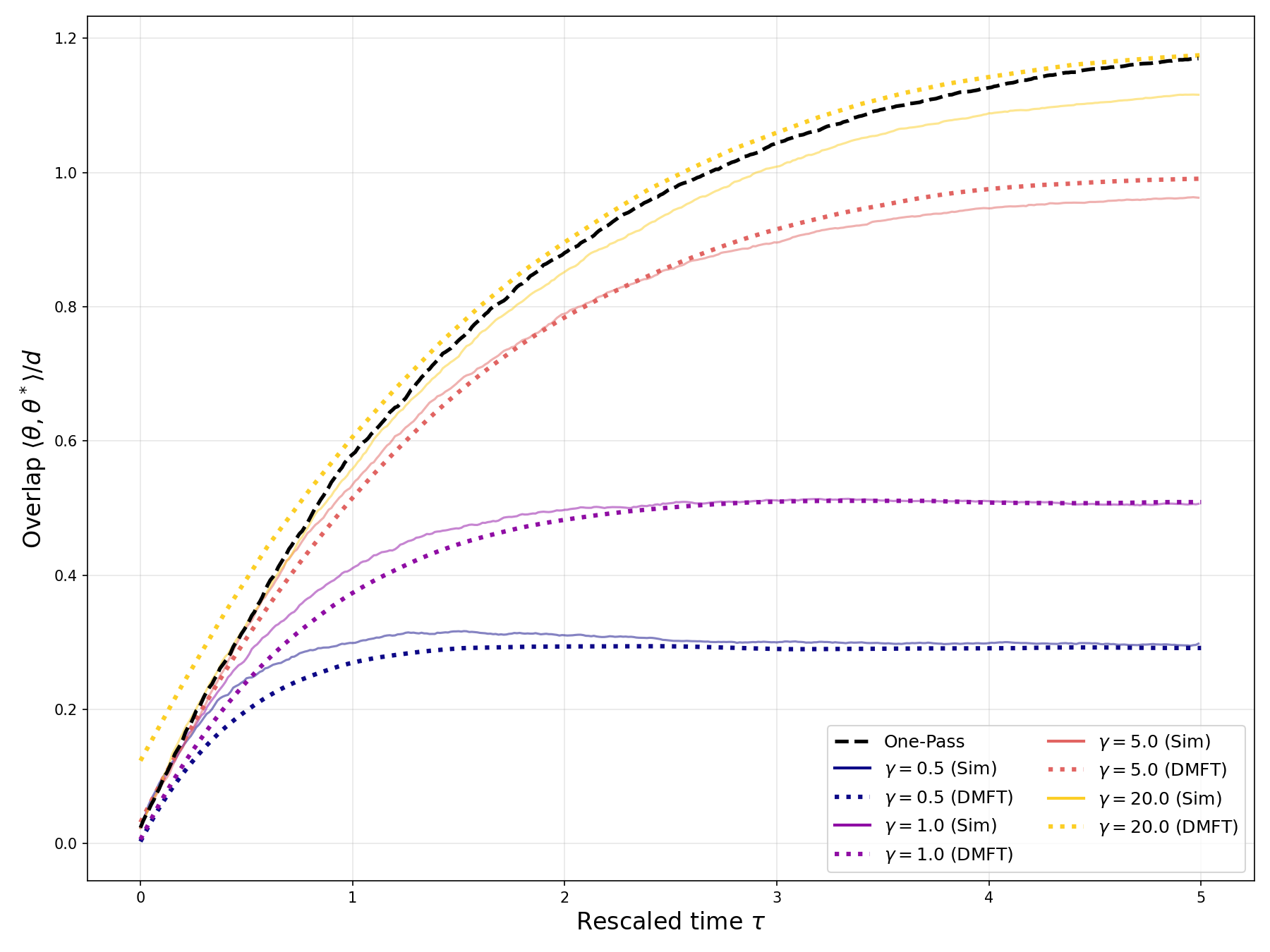}
    \caption{Convergence to one-pass SGD in the large sample limit.
Overlap dynamics are plotted in units of rescaled time $\tau=\gamma t$, fixing
$d=4000$ and varying $\gamma=n/d \in \{0.5, 1.0, 5.0, 20.0\}$.
As the training sample size $\gamma$ increases, multi-pass SGD dynamics (solid
lines) and their DMFT predictions (dotted lines) both converge to the simulated
overlap statistics of one-pass SGD (black dashed line).}
    \label{fig:comp-onepass}
\end{figure}

\paragraph{Transition to one-pass SGD dynamics.}

Figure \ref{fig:comp-onepass} compares the overlap dynamics of SGD in units of
rescaled time $\tau=\gamma t$ for various dimension ratios $\gamma=\lim_{n,d \to
\infty} n/d$, verifying the claim in Section \ref{sec:comp-onepass}
that as $\gamma \to \infty$, these overlap curves converge to a limit that
represents one-pass SGD, and the training time at which these curves plateau
is delayed.

\section{Existence and uniqueness of the DMFT fixed
point}\label{sec:existenceuniqueness}

In the remainder of the paper, we prove Theorems \ref{thm:uniqueness-existence}
and \ref{thm:DMFT-limit}.

This section defines the spaces $\cS \equiv \cS(T,C_0)$ and $\cS^\text{cont}
\equiv \cS^\text{cont}(T,C_0)$, and proves \Cref{thm:uniqueness-existence}. The
proof is an adaptation of \cite{celentano2021high} for gradient flow (see also \cite{fan2025dynamical1} in a
setting of Langevin dynamics), and will consist of:
\begin{enumerate}
    \item Defining spaces $\mathcal{S}_\xi$
    for $(\Cf,\Rf,\Rf^*,\Gamma)$ and ${\mathcal{S}}_\theta$ for $(\Ct,\Rt)$,
and showing that on these spaces, the primary processes
$\theta^t,\xi^t$ and auxiliary response processes $r_\theta^{t,s},r_f^{t,*},r_f^t(x^{[t]})$ are well-defined.
    \item Showing that the transformations which represent the DMFT fixed point
equations for $(\Ct,\Rt,\Cf,\Rf,\Rf^*,\Gamma)$ are contractive in a suitable
metric.
\end{enumerate}

The main differences between our argument and those of \cite{celentano2021high} involve the
treatment of the stochastic Volterra-type integrals in \eqref{eq:def-cont-xi},
\eqref{eq:def-cont-deriv-xi-star}, and \eqref{eq:def-cont-resp-xi} that arise
due to the stochastic gradient noise in both SGD and SME; this necessitates 
modified definitions and treatments of the response processes, as discussed in
Remark \ref{rmk:response}.

\subsection{Definition of the space $\cS$}

We remind the reader that we abbreviate $\norm{\cdot}$ for the $\ell_2$-norm of
a vector and Frobenius norm of a matrix, and write $\normop{\cdot}$ for the
$\ell_2$-to-$\ell_2$ matrix operator norm.

For any Euclidean space $E$, we denote by
$L^4([0,t],E)$ the space of (Borel-measurable) functions
$s \in [0,t] \mapsto x^s \in E$ for which
\[\|x\|_{L^4}:=\left(\int_0^t \|x^s\|_2^4 \d s\right)^{1/4}<\infty.\]
For a linear operator $R:L^4([0,t],\R^k) \to \R^k$, we define also
its operator norm
\[\normop{R}=\sup_{x \in L^4([0,t],\R^k):\|x\|_{L^4} \leq 1} \norm{R(x)}_2.\]
We say that $R$ is a bounded linear operator if $\normop{R}<\infty$.

Fix any $T>0$. Under Assumptions \ref{asp:SGD} and \ref{asp:lipschitz}, there
exist constants $M_f,M_g,M_\eta>0$ such that
\begin{equation}\label{eq:defMbounds}
\begin{gathered}
\sup_{(\xi,w^*,\eps) \in \R^k \times \R^{k^*} \times \R}
\{\|f(\xi,w^*,\eps)\|,\|\D_\xi f(\xi,w^*,\eps)\|_\op,
\|\D_{w^*} f(\xi,w^*,\eps)\|_\op\} \leq M_f,\\
\sup_{\theta \in \R^k} \|\D g(\theta)\|_\op \leq M_g,
\qquad \sup_{t \in [0,T]} \bar \eta^t \cdot \max\{1,1/\sqrt{\bar\kappa}\}
\leq M_\eta.
\end{gathered}
\end{equation}
For a constant $C_0>0$ (which we will take large enough, depending on $T$ and
the above constants $M_f,M_g,M_\eta$),
we define envelope functions $\pRt,\pRf,\pRfs:[0,T] \to [0,\infty)$ by
\begin{align}
\frac{\d}{\d t}\pRt(t)&=C_0\prn{\pRt(t)+\int_0^t \pRf(t-r) \pRt(r) \d r},
\quad \text{ with } \pRt(0)=C_0,\label{eq:def-pRt}\\
\pRf(t) &= C_0\prn{\int_0^t \pRt(t-r) \pRf(r) \d r +
\pRt(t)},\label{eq:def-pRf}\\
\pRfs(t)&=C_0\left(1+\int_0^t \pRt(t-r)\pRfs(r)\d r\right).\label{eq:def-pRfs}
\end{align}
We then define an envelope function $\pCt:[0,T] \to [0,\infty)$ by
\begin{align}
\frac{\d}{\d t}\pCt(t)&=C_0\left(\pCt(t)
+\int_0^t \Phi_{R_f}(t-r)\pCt(r)\d r+\Phi_{R_f^*}(t)\right),
\text{ with } \pCt(0)=C_0(1+6\gamma).
\label{eq:def-pCt}
\end{align}
One may check using a Laplace transform argument that
these systems admit a unique solution $\pRt,\pRf,\pRfs,\pCt:[0,T] \to
[0,\infty)$, where all four envelope functions are nonnegative and increasing;
we refer to \cite[Lemma 3.1]{fan2025dynamical1} for details of such an argument.

\begin{defi}\label{def:S}
Fix $T>0$ and a constant $C_0>0$ (which also defines $\pRt,\pRf,\pRfs,\pCt$).
Given a finite subset $D=\{d_1,\ldots,d_m\} \subset [0,T]$, we call
\[[0,d_1),[d_1,d_2),\ldots,[d_{m-1},d_m),[d_m,T]\]
the maximal intervals of $[0,T] \setminus D$.

Let $\mathcal{S}_\xi \equiv \mathcal{S}_\xi(T,C_0)$ be the space of all
tuples $(C_f,R_f,R_f^*,\Gamma)$ such that there exists a finite set
$D \subset [0,T]$ for which:
    \begin{enumerate}
        \item $\Cf \equiv \{\Cf^{t,s}\}_{t,s \in [0,T]}$ is a symmetric covariance kernel
on $[0,T] \otimes \R^k$, i.e.\ $\Cf^{t,s}=(\Cf^{s,t})^\top
\in \R^{k \times k}$ for all $t,s \in [0,T]$,
and for any finite subset $T_0 \subset [0,T]$,
the matrix $(\Cf^{t,s})_{t,s \in T_0} \in \R^{k|T_0| \times k|T_0|}$ is
symmetric positive-semidefinite. Furthermore, we have,
\[\Cf^{0,0}=0, \qquad
\Tr \Cf^{t,t} \leq C_0 \text{ for all } 0\leq t\leq T,\]
and for each maximal interval $I$ of $[0,T]\setminus D$,
        \begin{equation}\label{eq:Cfcontinuity}
\Tr[\Cf^{t,t}-\Cf^{t,s}-\Cf^{s,t}+\Cf^{s,s}] \leq C_0|t-s|
\text{ for all } t,s \in I.
        \end{equation}
        \item $\Rf \equiv \{\Rf^t\}_{t \in [0,T]}$ is a family of bounded linear operators
$\Rf^t:L^4([0,t],\R^k) \to \R^k$.
Fixing any process $x \in L^4([0,T],\R^k)$, the map
$t \mapsto R_f^t(x^{[t]})$ is Borel-measurable, and
        \begin{equation}\label{eq:pRf-volterra}
\norm{\Rf^{t}(x^{[t]})}^2 \leq \int_0^t \pRf(t-r) \norm{x^r}^2\d r \text{ for all }
t\in [0,T].
\end{equation}
\item $\Rf^* \equiv \{\Rf^{t,*}\}_{t \in [0,T]}$ is a Borel-measurable
process of matrices in $\R^{k \times k^*}$, where
\[\|\Rf^{t,*}\|^2 \leq \pRfs(t) \text{ for all } t \in [0,T].\]
\item $\Gamma \equiv \{\Gamma^t\}_{t \in [0,T]}$ is a Borel-measurable process
of matrices in $\R^{k \times k}$, where
\[\normop{\Gamma_t} \leq M_f \text{ for all } t \in [0,T].\]
    \end{enumerate}

Let ${\mathcal{S}}_\theta \equiv {\mathcal{S}}_\theta(T,C_0)$ be the space of
all tuples $(C_\theta,R_\theta)$ such that there exists a finite set $D \subset [0,T]$
for which:
    \begin{enumerate}
        \item $\Ct \equiv \{\Ct^{t,s}\}_{t,s \in [0,T] \cup \{*\}}$ is a symmetric
covariance kernel on $([0,T] \otimes \R^k) \times \R^{k^*}$, i.e.\
$\Ct^{t,s}=(\Ct^{s,t})^\top \in \R^{k \times k}$,
$\Ct^{t,*}=(\Ct^{*,t})^\top \in \R^{k \times k^*}$, and
$\Ct^{*,*} \in \R^{k^* \times k^*}$ for all $t,s \in [0,T]$, and
for any finite subset $T_0 \subset [0,T]$,
the matrix $(\Ct^{t,s})_{t,s \in T_0 \cup \{*\}} \in \R^{(k|T_0|+k^*) \times
(k|T_0|+k^*)}$ is symmetric positive-semidefinite. Furthermore,
we have
\[\Ct^{*,*}=\E[\theta^*\theta^{*\top}],
\quad \Ct^{0,0}=\E[\theta^0\theta^{0\top}],
\quad \Ct^{0,*}=\E[\theta^0\theta^{*\top}],\]
\[\Tr \Ct^{t,t} \leq \pCt(t) \text{ for all } 0\leq t\leq T,\]
and for each maximal interval $I$ of $[0,T]\setminus D$,
        \begin{equation}\label{eq:Cthetacontinuity}
            \Tr[\Ct^{t,t}-\Ct^{t,s}-\Ct^{s,t}+\Ct^{s,s}] \leq 
(\Phi_{C_\theta}(T)+C_0^2)|t-s|
\text{ for all } t,s \in I.
        \end{equation}
        \item $\Rt \equiv \{\Rt^{t,s}\}_{t,s \in [0,T]}$ is a Borel-measurable
process of matrices in $\R^{k \times k}$, satisfying
\[\Rt^{t,s}=0 \text{ for all $s>t$},
\qquad
\|\Rt^{t,s}\|^2 \leq \pRt(t-s) \text{ for all } s \leq t,\]
and for each maximal interval $I$ of $[0,T] \setminus D$,
\[\|\Rt^{t',s}-\Rt^{t,s}\|^2 \leq \Phi_{R_\theta}(T)|t'-t|
\text{ for all } s \in [0,T] \text{ and }
t,t' \in I \text{ with } t,t' \geq s.\]
\end{enumerate}
We set
\[\cS \equiv \cS(T,C_0)=\{(C_\theta,R_\theta,C_f,R_f,R_f^*,\Gamma):
(C_\theta,R_\theta) \in \cS_\theta,\,(C_f,R_f,R_f^*,\Gamma) \in \cS_\xi\}.\]
We denote by
$\cS^{\mathrm{cont}},\cS_\xi^{\mathrm{cont}},\cS_\theta^{\mathrm{cont}}$
these spaces where the above conditions hold with $D=\emptyset$,
i.e.\ where the continuity conditions for
$C_f,C_\theta,R_\theta$ hold on the whole interval $[0,T]$. 
\end{defi}

Note that the continuity conditions \eqref{eq:Cfcontinuity} and
\eqref{eq:Cthetacontinuity} for $C_f$ and $C_\theta$ 
ensure (via Kolmogorov's continuity theorem) 
the existence of Gaussian processes $\{u^t\}_{t \in [0,T]} \sim \GP(0,C_f)$ 
and $(\{w^t\}_{t \in [0,T]},w^*) \sim \GP(0,C_\theta)$ with c\`adl\`ag sample
paths (having discontinuities only in $D$),
as claimed in the constructions of Section \ref{sec:DMFTlimits}.

\subsection{Existence and uniqueness of stochastic processes}
We first establish \Cref{thm:uniqueness-existence}(a), showing
the existence and uniqueness of solutions to the
stochastic integro-differential equations defined in
(\ref{eq:def-cont-theta}--\ref{eq:def-cont-resp-xi}), given any
$(C_\theta,R_\theta,C_f,R_f,R_f^*,\Gamma) \in \cS$. (Here
and throughout, a solution $\{Y^t\}_{t \in [0,T]}$ refers to a process that
satisfies the desired equation a.s.\ at each $t \in [0,T]$.
The solution is unique if any other solution $\widetilde Y$ is a modification of $Y$, i.e.\ $Y^t=\widetilde Y^t$ a.s.\ for each $t \in [0,T]$, implying in the case where $Y$ and $\widetilde Y$ are both c\`adl\`ag that they have equal sample paths a.s.)

The following general lemma shows the well-posedness of a class of
Volterra-type SDEs, which we will use to show existence and uniqueness
of the specific processes $\xi^t$, $r_f^{t,*}$, and $r_f^t(x^{[t]})$. 

\begin{lem}\label{lem:exist-unique-poisson-volterra}
Let $(\Omega,\cF,\{\cF\}_{t \geq 0},\P)$ be the filtered probability space
of Theorem \ref{thm:uniqueness-existence}(a), where
$\{z^t\}_{t \in [0,T]}$ is either the Poisson process
\eqref{eq:poisson-z} or the Gaussian diffusion process \eqref{eq:gaussian-z}.
Fix any $m \geq 1$, let $\{F^t\}_{t \in [0,T]}$ and $\{G^t\}_{t \in
[0,T]}$ be c\`adl\`ag $\cF_t$-adapted
processes in $\R^{m \times k}$ and $\R^m$, and let 
$H:\R^k \times \R^{k^*} \times \R \to \R^k$ be a function that is
uniformly Lipschitz-continuous in its first argument. Suppose,
for some deterministic constants $M>0$ and $\iota>0$ that
\[\sup_{t \in [0,T]} \norm{F^t} \leq M \text{ a.s.}, \qquad \sup_{t \in [0,T]} \E \norm{G^t}^{2+\iota}<\infty,
\qquad \E\|H(0,w^*,\eps)\|^{2+\iota}<\infty.\]
Then for any $\Rt \equiv \{\Rt^{t,s}\}_{t,s \in [0,T]}$
satisfying the conditions of Definition \ref{def:S},
there exists a unique $\cF_t$-adapted and c\`adl\`ag solution
$\{Y^t\}_{t \in [0,T]}$ to
\begin{equation}\label{eq:Picardconstruction}
Y^t=F^t \int_0^t \frac{\bar\eta^r}{\bar\kappa} R_\theta^{t,r}
H(Y^{r-},w^*,\eps)\d z^r+G^t.
\end{equation}
\end{lem}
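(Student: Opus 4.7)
The plan is to construct the solution via Picard iteration in the Banach space $\mathcal{X}$ of $\cF_t$-adapted c\`adl\`ag processes $Y:[0,T]\to \R^k$ equipped with the norm $\|Y\|_{\mathcal{X}}:=\sup_{t\in[0,T]}(\E\|Y^t\|^2)^{1/2}$. The argument should apply uniformly to both drivers \eqref{eq:poisson-z} and \eqref{eq:gaussian-z}, since both admit a common second-moment bound of the form
\[
\E\Big\|\int_0^t g^r\,\d z^r\Big\|^2 \leq C(\bar\kappa,T)\int_0^t \E\|g^r\|^2\,\d r
\]
for c\`adl\`ag $\cF_r$-predictable integrands $g$ --- obtained via the Poisson second-moment formula together with Cauchy--Schwarz on the mean part in the first case, and via It\^o isometry plus Cauchy--Schwarz on the drift $\bar\kappa\,\d r$ in the second.

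Setting $Y^{(0),t}:=G^t$ and iterating
\[
Y^{(n+1),t}:=F^t\int_0^t \frac{\bar\eta^r}{\bar\kappa}\,R_\theta^{t,r}\,H(Y^{(n),r-},w^*,\eps)\,\d z^r+G^t,
\]
I would first verify that each iterate lies in $\mathcal{X}$: for fixed $t$ the integrand is $\cF_r$-predictable (via the left limit $Y^{(n),r-}$), so the $r$-integral is well-defined and $\cF_t$-adapted. Its dependence on the upper limit $t$ is controlled by the $t$-continuity bound $\|R_\theta^{t',s}-R_\theta^{t,s}\|^2\leq \Phi_{R_\theta}(T)|t'-t|$ from Definition~\ref{def:S} (outside the exceptional set $D$), the c\`adl\`ag regularity of $F^t$ and $G^t$, and the finite-jump structure of the Poisson driver or the sample-path continuity of the Gaussian one. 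Combining the boundedness $\|F^t\|\leq M$, the displayed second-moment bound, the envelope $\|R_\theta^{t,r}\|^2\leq \pRt(t-r)$, and the uniform Lipschitz property of $H$ in its first argument yields the key recursive estimate
\[
\E\|Y^{(n+1),t}-Y^{(n),t}\|^2 \leq K\int_0^t \pRt(t-r)\,\E\|Y^{(n),r}-Y^{(n-1),r}\|^2\,\d r,
\]
for a constant $K$ depending only on $M,M_\eta,\bar\kappa,T$, and the Lipschitz constant of $H$.

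Iterating this bound $n$ times produces an $n$-fold time-convolution of $\pRt$, which decays superexponentially in $n$ on the fixed interval $[0,T]$. Together with the base-case bound $\sup_t\E\|Y^{(1),t}-G^t\|^2<\infty$ --- finite by the $(2+\iota)$-moment hypotheses on $G^t$ and $H(0,w^*,\eps)$ and the Lipschitz-induced sublinear growth of $H$ --- this shows $\{Y^{(n)}\}$ is Cauchy in $\mathcal{X}$. The limit $Y$ is $\cF_t$-adapted, and a standard subsequence argument extracts almost-sure uniform convergence in $t$, transferring the c\`adl\`ag regularity to $Y$ and permitting passage to the limit in \eqref{eq:Picardconstruction}. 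Uniqueness follows from the same recursion applied to the difference of any two solutions: iterating yields $\E\|Y^t-\widetilde Y^t\|^2=0$ pointwise in $t$, and c\`adl\`ag regularity then upgrades this to indistinguishability.

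The principal technical obstacle is the Volterra dependence of the integrand on the upper limit $t$ through $R_\theta^{t,r}$, which precludes any direct appeal to standard Markovian SDE existence theory and instead forces one to estimate $Y^{t'}-Y^t$ by hand, combining the $t$-continuity bound on $R_\theta$ with uniform integrability of the driving quantities. The extra $\iota>0$ moment margin on $G^t$ and $H(0,w^*,\eps)$ is precisely what supplies this uniform integrability, enabling one to exchange limits across the varying integration interval $[0,t]$ and to lift the c\`adl\`ag property from the Picard iterates to the limit.
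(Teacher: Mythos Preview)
Your Picard scheme and contraction estimate are correct and match the paper's approach. The genuine gap is in how you pass c\`adl\`ag regularity to the limit. You work in the norm $\sup_t(\E\|Y^t\|^2)^{1/2}$ and then assert that ``a standard subsequence argument extracts almost-sure uniform convergence in $t$''. This is not standard: being Cauchy in $\sup_t \E\|\cdot\|^2$ does not imply being Cauchy in $\E\sup_t\|\cdot\|^2$, and the Volterra dependence of $R_\theta^{t,r}$ on $t$ means the difference $Y^{(n+1),t}-Y^{(n),t}$ is not a martingale in $t$, so no maximal inequality is directly available. Without a.s.\ uniform convergence, you cannot transfer the c\`adl\`ag property from the iterates to the limit.

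This is exactly where the $(2+\iota)$-moment hypothesis enters, and not merely as ``uniform integrability'' as you suggest in your last paragraph. The paper runs the entire Picard argument in $L^{2+\iota,\infty}$, obtains the limit $Y$ there, and then---rather than appealing to uniform convergence---shows directly that the Volterra integral $I^t=\int_0^t \frac{\bar\eta^r}{\bar\kappa}R_\theta^{t,r}H(Y^{r-},w^*,\eps)\,\d z^r$ has a c\`adl\`ag modification via Kolmogorov's continuity theorem. The key estimate (in the Gaussian case) is $\E\|I^t-I^s\|^{2+\iota}\lesssim |t-s|^{1+\iota/2}$, obtained from BDG and H\"older; the exponent $1+\iota/2>1$ is what Kolmogorov requires. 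With only second moments you would get $\E\|I^t-I^s\|^2\lesssim |t-s|$, which is the borderline case where Kolmogorov fails. In the Poisson case c\`adl\`ag regularity is elementary pathwise, so your argument would go through there; the Gaussian driver is where the $\iota$ is essential.

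A secondary gap: for uniqueness you apply the contraction estimate to two solutions, but an arbitrary c\`adl\`ag solution $\widetilde Y$ is not a~priori known to have finite $L^2$ (or $L^{2+\iota}$) moments. The paper closes this with a stopping-time localization to first establish $\widetilde Y\in L^{2+\iota,\infty}$.
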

\begin{proof}
Let $L^{2+\iota,\infty}(\Omega \times [0,T],\R^k)$ be the
space of jointly measurable and $\cF_t$-adapted processes $\{Y^t\}_{t \in [0,T]}$ (identified up to modifications)
for which
$\|Y\|_{L^{2+\iota,\infty}}:=\sup_{t \in [0,T]} (\E \|Y^t\|^{2+\iota})^{\frac{1}{2+\iota}}<\infty$.
Initialize $Y^{(0)}=G \in L^{2+\iota}(\Omega \times [0,T],\R^k)$, and
suppose inductively we have constructed a c\`adl\`ag process
$Y^{(n)} \in L^{2+\iota}(\Omega \times [0,T],\R^k)$.
Consider the Volterra-type It\^o integral
\[I^{(n),t}=\int_0^t \frac{\bar \eta^r}{\bar\kappa}
R_\theta^{t,r}H(Y^{(n),r-},w^*,\eps) \d z^r,\]
which defines a jointly measurable and adapted process $\{I^{(n),t}\}_{t \in [0,T]}$
by \cite[Lemma 2.A]{berger1980volterra} in the setting of Gaussian $\{z^t\}$ or
by a pathwise construction for each $\omega \in \Omega$ in the setting of
Poisson $\{z^t\}$. Recall the maximal intervals of $[0,T] \setminus D$ 
in Definition \ref{def:S} on which $t \in [r,T] \mapsto R_\theta^{t,r}$
is uniformly continuous. For Poisson
$\{z^t\}_{t \in [0,T]}$, it is clear that this continuity implies $I^{(n),t}$
is c\`adl\`ag over each such maximal interval, and hence also over
$[0,T]$, with discontinuities at the jumps of $\{z^t\}_{t \in [0,T]}$ and the
points of $D$. For Gaussian $\{z^t\}$, note that
for any $s<t$ belonging to the same maximal interval of $[0,T] \setminus D$,
we have by the Burkholder-Davis-Gundy inequality
(c.f.\ Lemma \ref{lem:burkholder}) and H\"older's inequality,
\begin{align*}
&\E\|I^{(n),t}-I^{(n),s}\|^{2+\iota}\\
&\lesssim \E\left\|\int_0^s \frac{\bar \eta^r}{\bar\kappa}
(R_\theta^{t,r}-R_\theta^{s,r})H(Y^{(n),r-},w^*,\eps) \d z^r\right\|^{2+\iota}
+\E\left\|\int_s^t \frac{\bar \eta^r}{\bar\kappa}
R_\theta^{t,r}H(Y^{(n),r-},w^*,\eps) \d z^r\right\|^{2+\iota}\\
&\lesssim \E\left(\int_0^s \|(R_\theta^{t,r}
-R_\theta^{s,r})H(Y^{(n),r},w^*,\eps)\|^2 \d r\right)^{1+\iota/2}
+\E\left(\int_s^t\|R_\theta^{t,r}H(Y^{(n),r},w^*,\eps)\|^2 \d
r\right)^{1+\iota/2}\\
&\lesssim \int_0^s \E \|(R_\theta^{t,r}
-R_\theta^{s,r})H(Y^{(n),r},w^*,\eps)\|^{2+\iota} \d r
+|t-s|^{\iota/2}\int_s^t \E\|R_\theta^{t,r}H(Y^{(n),r},w^*,\eps)\|^{2+\iota}
\d r.
\end{align*}
Then, by the boundedness and continuity conditions for $R_\theta$ in Definition
\ref{def:S} and the Lipschitz continuity of $H(\cdot)$,
\begin{align}
\E\|I^{(n),t}-I^{(n),s}\|^{2+\iota} &\lesssim
|t-s|^{2+\iota}\left(\E\|H(0,w^*,\eps)\|^{2+\iota}+\int_0^T
\E \|Y^{(n),r}\|^{2+\iota} \d r\right)\notag\\
&\qquad+|t-s|^{\iota/2}\left(|t-s|\,\E\|H(0,w^*,\eps)\|^{2+\iota}
+\int_s^t \|Y^{(n),r}\|^{2+\iota} \d r\right)\notag\\
&\lesssim |t-s|^{2+\iota}+|t-s|^{1+\iota/2}.\label{eq:Icontinuous}
\end{align}
Kolmogorov's continuity theorem then implies that $\{I^{(n),t}\}_{t \in [0,T]}$ 
has a modification which is uniformly continuous on each maximal interval of
$[0,T] \setminus D$ and hence c\`adl\`ag on $[0,T]$.

Then defining
\begin{equation}\label{eq:PicardStepXi}
Y^{(n+1),t}=F^tI^{(n),t}+G^t
=F^t\int_0^t \frac{\bar\eta^r}{\bar\kappa} R_\theta^{t,r}
H(Y^{(n),r-},w^*,\eps)\d z^r+G^t,
\end{equation}
$Y^{(n+1)}$ is adapted and c\`adl\`ag.
Applying the boundedness of $\{F^t\}$, the Burkholder-Davis-Gundy type
inequality of Lemma \ref{lem:burkholder} in both the Poisson and Gaussian
settings, and H\"older's inequality as above,
\begin{align*}
\E \norm{Y^{(n+1),t}}^{2+\iota}
&\lesssim \E \left\|\int_0^t \frac{\bar\eta^r}{\bar\kappa} R_\theta^{t,r}
H(Y^{(n),r-},w^*,\eps)\d z^r\right\|^{2+\iota}
+\E \|G^t\|^{2+\iota}\\
&\lesssim \E\|H(0,w^*,\eps)\|^{2+\iota}
+\int_0^T \E\|Y^{(n),r}\|^{2+\iota}\d r
+\E\|G^t\|^{2+\iota}.
\end{align*}
Thus $Y^{(n+1)} \in L^{2+\iota,\infty}(\Omega \times [0,T],\R^k)$, so
$Y^{(n+1)}$ is inductively well-defined for each $n \geq 0$.

Now consider the difference $Y^{(n),t}-Y^{(n-1),t}$,
and set $\phi_n(t)=\E\|Y^{(n),t}-Y^{(n-1),t}\|^{2+\iota}$.
Then again by Lemma \ref{lem:burkholder} and similar arguments as above,
\begin{align}
\phi_1(t)&\lesssim \E \left\|\int_0^t \frac{\bar\eta^r}{\bar\kappa}
R_\theta^{t,r}H(Y^{(0),r-},w^*,\eps) \d z^r\right\|^{2+\iota}
\lesssim \E\|H(0,w^*,\eps)\|^{2+\iota} +\int_0^T \E \|G^r\|^{2+\iota} \d
r,\notag\\
\phi_{n+1}(t) &\lesssim
\E \left\|\int_0^t \frac{\bar\eta^r}{\bar\kappa}R_{\theta}^{t,r}
\Big(H(Y^{(n),r-},w^*,\eps) - H(Y^{(n-1),r-},w^*,\eps)\Big)
\d z^r\right\|^{2+\iota}\notag\\
&\lesssim\int_0^t \E \|Y^{(n),r}-Y^{(n-1),r}\|^{2+\iota}
\d r=\int_0^t \phi_n(r)\d r.\label{eq:Picardbound}
\end{align}
I.e., there exists a constant $C>0$ (depending on $M,T,\pRt(T),M_\eta,H$,
and the law of $(w^*,\eps)$) for which
$\phi_1(t) \leq C$ and $\phi_{n+1}(t)\leq C\int_0^t \phi_n(r)\d r$.
Then $\phi_{n}(t)\leq C^nt^{n-1}/(n-1)!$, implying that $\{Y^{(n)}\}_{n \geq 0}$
is Cauchy in $L^{2+\iota,\infty}(\Omega \times [0,T],\R^k)$ and thus has a
limit $Y \in L^{2+\iota,\infty}(\Omega \times [0,T],\R^k)$ such that
$\lim_{n \to \infty} \sup_{t \in [0,T]} \E\|Y^{(n),t}-Y^t\|^{2+\iota}=0$. Defining the process
\[I^t=\int_0^t \frac{\bar\eta^r}{\bar\kappa}R_\theta^{t,r}
H(Y^{r-},w^*,\eps)\d z^r\]
as above, we then have $\E\|Y^t-(F^tI^t+G^t)\|^{2+\iota}
=\lim_{n \to \infty} \E\|Y^{(n+1),t}-(F^tI^{(n),t}+G^t)\|^{2+\iota}=0$ for each $t \in [0,T]$,
so $\{Y^t\}_{t \in [0,T]}$ solves \eqref{eq:Picardconstruction}. The
same argument as \eqref{eq:Icontinuous} implies that 
$\{I^t\}_{t \in [0,T]}$ and $\{Y^t\}_{t \in [0,T]}$ have c\`adl\`ag
modifications.

For uniqueness, let $\{\widetilde Y^t\}_{t \in [0,T]}$ be any other adapted
and c\`adl\`ag solution
to \eqref{eq:Picardconstruction}. Fix any $K>0$, define
the stopping time $\tau=\inf\{t \in [0,T]:\widetilde Y^t \geq K\}$,
and consider the stopped process 
$(\widetilde Y^\tau)^t=\widetilde Y^{\tau \wedge t}$. Then
by Lemma \ref{lem:burkholder} and a similar argument as above,
\begin{align*}
\E\|(\widetilde Y^\tau)^t\|^{2+\iota}
&=\E\left\|F^{t \wedge \tau}
\int_0^t \1\{r \leq \tau\}\frac{\bar \eta^r}{\bar\kappa}
R_\theta^{t,r}H(\widetilde Y^{r-},w^*,\eps)\d z^r
+G^{t \wedge \tau}\right\|^{2+\iota}\\
&\lesssim 1+\int_0^t
\E \|\1\{r \leq \tau\}\widetilde Y^{r-}\|^{2+\iota}\d r
\lesssim 1+\int_0^t \E\|(\widetilde Y^\tau)^r\|^{2+\iota}\d r.
\end{align*}
Note that the constant underlying $\lesssim$ is independent of $K$, so Gr\"onwall's
inequality implies that $\sup_{t \in [0,T]} \E\|(\widetilde Y^\tau)^t\|^{2+\iota}$ is uniformly
bounded independently of $K$. Taking $K \to \infty$
shows that $\widetilde Y \in L^{2+\iota,\infty}(\Omega \times [0,T],\R^k)$.
Then defining $\phi(t)=\E \|Y^t-\widetilde Y^t\|^{2+\iota}$,
the same argument as \eqref{eq:Picardbound} shows
$\phi(t) \lesssim \int_0^t \phi(s)\d s$, so $\sup_{t \in [0,T]} \phi(t)=0$ by Gr\"onwall's inequality, and $\widetilde Y$ is a modification of $Y$.
\end{proof}

\begin{proof}[Proof of Theorem \ref{thm:uniqueness-existence}(a)]
Let $(C_\theta,R_\theta,C_f,R_f,R_f^*,\Gamma) \in \cS$ be given, and let
$(\theta,\theta^*)$, $\eps$, $\{u^t\}_{t \in [0,T]}$, $(\{w^t\}_{t \geq
[0,T]},w^*)$, and $\{z^t\}_{t \in [0,T]}$ in the filtered space
$(\Omega,\cF,\{\cF\}_{t \in [0,T]},\P)$ also be given. The bound for
$C_f$ in Definition \ref{def:S} implies that $\sup_{t \in [0,T]} \E\|w^t\|^{2+\iota}<\infty$ for any $\iota>0$.
Then there exists a unique $\cF_t$-adapted and c\`adl\`ag
solution $\{\xi^t\}_{t \in [0,T]}$ to \eqref{eq:def-cont-xi} by
\Cref{lem:exist-unique-poisson-volterra}. Given this solution and any
deterministic process $x \in L^4([0,T],\R^k)$, define next
\[G^t={-}\D_\xi
f(\xi^t,w^*,\eps)\int_0^t \frac{\bar\eta^r}{\bar\kappa}R_\theta^{t,r}
\D_\xi f(\xi^{r-},w^*,\eps)x^r \d z^r.\]
This process $\{G^t\}_{t \in [0,T]}$ is c\`adl\`ag and satisfies $\sup_{t \in [0,T]}\|G^t\|^{2+\iota}<\infty$ for
sufficiently small $\iota>0$, by
the same arguments as in \Cref{lem:exist-unique-poisson-volterra}.
Then there exist unique $\cF_t$-adapted and c\`adl\`ag solutions
$\{r_f^{t,*}\}_{t \in [0,T]}$ and
$\{r_f^t(x^{[t]})\}_{t \in [0,T]}$ to
(\ref{eq:def-cont-deriv-xi-star}--\ref{eq:def-cont-resp-xi})
also by \Cref{lem:exist-unique-poisson-volterra}.

For \eqref{eq:def-cont-theta}, set $\tilde
\theta^t=\theta^t-\sqrt{\gamma}\,u^t$. Note that since $C_f^{0,0}=0$, we have
$u^0=0$ and $\tilde \theta^0=\theta^0$. Then
$\{\theta^t\}_{t \in [0,T]}$ solves
\eqref{eq:def-cont-theta} if and only if 
$\{\tilde \theta^t\}_{t \in [0,T]}$ solves the differential equation
\begin{align}
\frac{\d}{\d t}\tilde\theta^t&=\underbrace{-\bar\eta^t \Big({\gamma}\Gamma^t
(\tilde\theta^t + \sqrt{\gamma}\,u^t)+g(\tilde\theta^t + \sqrt{\gamma}\,
u^t)+{\gamma}R_f^t(\tilde\theta^{[t]})+ {\gamma^{3/2}}
R_f^t(u^{[t]})+{\gamma}R_f^{t,*}\theta^*\Big)}_{:=F(t,\tilde\theta^{[t]})},
\quad \tilde\theta^0=\theta^0.\label{eq:cont-theta-rewrite}
\end{align}
The bound \eqref{eq:pRf-volterra} implies that
$R_f^t$ restricts to an operator on $C([0,t],\R^k) \subset L^4([0,t],\R^k)$ that
is Lipschitz in the sup-norm $\|x\|_\infty=\sup_{s \in [0,t]} \|x^s\|_2$,
uniformly over $t \in [0,T]$. Then the function $(t,x) \mapsto F(t,x^{[t]})$
on the right side above is measurable
in $t$ and uniformly Lipschitz in $x \in C([0,T],\R^k)$, implying
(c.f.\ \cite[Theorems 2.1, 2.3]{hale2013introduction})
the existence of a unique continuous solution to \eqref{eq:cont-theta-rewrite} 
for any realization of $(\theta^0,\theta^*)$ and
$\{u^t\}_{t \in [0,T]}$. This defines, pathwise for each $\omega \in \Omega$,
a unique $\cF_t$-adapted and c\`adl\`ag solution
$\theta^t=\tilde \theta^t+\sqrt{\gamma}\,u^t$ to \eqref{eq:def-cont-theta}, with discontinuities at the jumps of $\{u^t\}$.
Given this solution $\{\theta^t\}_{t \in [0,T]}$, the same argument shows that
for each fixed $s \in [0,T]$, there is a unique continuous solution
$\{r_\theta^{t,s}\}_{t \in [s,T]}$ to
\[\frac{\d}{\d t}r_\theta^{t,s}
={-}\bar\eta^t[(\gamma \Gamma^t+\D g(\theta^t)r_\theta^{t,s}
+\gamma R_f^t(r_\theta^{[t],s})],
\quad r_\theta^{s,s}=\Id_k,\]
which is equivalent to \eqref{eq:def-cont-deriv-t}.
\end{proof}

\subsection{Closure of the mappings}

Given $(\Ct,\Rt) \in \cS_\theta$, consider the processes \eqref{eq:def-cont-xi},
\eqref{eq:def-cont-deriv-xi-star}, and \eqref{eq:def-cont-resp-xi}
defined on $(\Omega,\cF,\{\cF_t\}_{t \geq 0},\P)$ and let
$\trsfrmB(\Ct,\Rt)=(\Cf,\Rf,\Rf^*,\Gamma)$ be the mapping defined by
(\ref{eq:def-cont-C-h}--\ref{eq:def-cont-Gamma}). Similarly, given
$(\Cf,\Rf,\Rf^*,\Gamma) \in \cS_\xi$, consider the processes
\eqref{eq:def-cont-theta} and \eqref{eq:def-cont-deriv-t} and let
$\trsfrmA(\Cf,\Rf,\Rf^*,\Gamma)=(\Ct,\Rt)$ be the mapping defined by
(\ref{eq:def-cont-C-t}--\ref{eq:def-cont-R-t}).

In this section, we check that for a sufficiently large choice of constant
$C_0>0$ defining $\cS_\theta$ and $\cS_\xi$,
$\trsfrmB$ maps ${\mathcal{S}}_\theta$ into
$\mathcal{S}^{\mathrm{cont}}_\xi \subset \cS_\xi$, and 
$\trsfrmA$ maps $\mathcal{S}_\xi$ into ${\mathcal{S}}_\theta$.

\begin{lem} \label{lem:solution-in-space}
Fix any $T>0$. Then for any constant $C_0>0$ large enough (depending on $T$,
$M_f$, $M_g$, $M_\eta$, $\E\|\theta^0\|^2$, $\E\|\theta^*\|^2$, and
$\|g(0)\|^2$),
$\trsfrmA$ maps $\mathcal{S}_\xi(T,C_0)$ into ${\mathcal{S}}_\theta(T,C_0)$, and
$\mathcal{S}^{\mathrm{cont}}_\xi(T,C_0)$ into
${\mathcal{S}}^{\mathrm{cont}}_\theta(T,C_0)$.
\end{lem}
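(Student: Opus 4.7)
The plan is to verify, in order, the five defining conditions of $\cS_\theta(T,C_0)$ for $(C_\theta,R_\theta)=\trsfrmA(C_f,R_f,R_f^*,\Gamma)$, using only the existence/uniqueness statement of \Cref{thm:uniqueness-existence}(a) together with Gr\"onwall-type comparisons driven by the envelope ODEs \eqref{eq:def-pRt}--\eqref{eq:def-pCt}. The overall strategy parallels the analysis of \cite{celentano2021high}, with the simplification that neither $\{z^t\}$ nor the stochastic integrals against it appear on this side of the map.

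First, I would control the response. Taking Frobenius norms in \eqref{eq:def-cont-deriv-t} and using Assumption \ref{asp:lipschitz} (bounds $M_f,M_g$ on $\|\Gamma^r\|_\op$ and $\|\D g(\theta^r)\|_\op$), the scaling $\bar\eta^r\leq M_\eta$, and the envelope \eqref{eq:pRf-volterra} extended to matrix inputs via \eqref{eq:matrixRf}, one derives
\[\E\|r_\theta^{t,s}\|^2 \lesssim 1 + \int_s^t \E\|r_\theta^{r,s}\|^2 \d r + \int_s^t \int_0^r \pRf(r-\rho)\,\E\|r_\theta^{\rho,s}\|^2 \d \rho \d r\]
(where $r_\theta^{\rho,s}=0$ for $\rho<s$, so the inner integral effectively starts at $s$). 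Comparing to the integral form of \eqref{eq:def-pRt} via Gr\"onwall (as in \cite[Lemma~3.1]{fan2025dynamical1}) yields $\E\|r_\theta^{t,s}\|^2 \leq \pRt(t-s)$ once $C_0$ is taken larger than the constants just accumulated, and Jensen then gives $\|R_\theta^{t,s}\|^2\leq \pRt(t-s)$, which is condition (2). An analogous application to \eqref{eq:def-cont-theta} uses $\|g(\theta)\|\leq \|g(0)\|+M_g\|\theta\|$, the envelope bound $\|R_f^{r,*}\|\leq\sqrt{\pRfs(r)}$, and $\Tr C_f^{t,t}\leq C_0$ to give
\[\E\|\theta^t\|^2 \lesssim 1+\gamma + \int_0^t \E\|\theta^r\|^2 \d r + \int_0^t \pRfs(r)\d r + \int_0^t\int_0^r \pRf(r-\rho)\E\|\theta^\rho\|^2 \d \rho \d r,\]
whose right-hand side is dominated by the integral form of \eqref{eq:def-pCt} (the initial datum $C_0(1+6\gamma)$ absorbs the constant term); another Gr\"onwall comparison yields $\Tr C_\theta^{t,t}=\E\|\theta^t\|^2\leq \pCt(t)$. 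Symmetric positive-semidefiniteness and the three boundary values $C_\theta^{*,*},C_\theta^{0,0},C_\theta^{0,*}$ are immediate from $C_\theta^{t,s}=\E[\theta^t\otimes \theta^s]$ together with $u^0=0$ (since $C_f^{0,0}=0$).

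For the two continuity/modulus conditions \eqref{eq:Cthetacontinuity} and (5), I would write $\theta^t-\theta^s$ and $r_\theta^{t',s}-r_\theta^{t,s}$ as integrals over $[s,t]$ resp.\ $[t,t']$ plus (in the $\theta$ case) the Gaussian increment $\sqrt{\gamma}(u^t-u^s)$. Using
\[\Tr[C_\theta^{t,t}-C_\theta^{t,s}-C_\theta^{s,t}+C_\theta^{s,s}]=\E\|\theta^t-\theta^s\|^2,\]
the Gaussian increment contributes $\gamma\,C_0|t-s|$ by the maximal-interval continuity \eqref{eq:Cfcontinuity} of $C_f$, while Cauchy--Schwarz on the integral, combined with the already-established bound $\E\|\theta^r\|^2\leq \pCt(T)$, contributes $O(|t-s|^2)$; since $|t-s|\leq T$, this is $\leq (\pCt(T)+C_0^2)|t-s|$ provided $C_0$ is large enough. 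The same Cauchy--Schwarz estimate applied to \eqref{eq:def-cont-deriv-t} gives $\E\|r_\theta^{t',s}-r_\theta^{t,s}\|^2\lesssim T|t'-t|\cdot \pRt(T)$, which implies condition (5) after again enlarging $C_0$. Finally, when $D=\emptyset$, Kolmogorov's theorem gives continuous sample paths for $\{u^t\}$, so $\theta^t$ and $r_\theta^{t,s}$ are pathwise continuous in $t$; this transfers to $C_\theta$ and $R_\theta$, establishing $\trsfrmA(\cS_\xi^{\mathrm{cont}})\subset \cS_\theta^{\mathrm{cont}}$.

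The only real obstacle is bookkeeping: one must fix a single $C_0\equiv C_0(T,M_f,M_g,M_\eta,\E\|\theta^0\|^2,\E\|\theta^*\|^2,\|g(0)\|^2)$ that simultaneously dominates the prefactors appearing in \emph{all} of the envelope comparisons above and in the $\trsfrmB$ analysis to follow. No new analytic ingredient is required on this side of the map -- the stochasticity arising from the Poisson/Gaussian driver $\{z^t\}$ is confined to the $\trsfrmB$ direction, handled by \Cref{lem:exist-unique-poisson-volterra}.
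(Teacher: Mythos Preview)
Your overall strategy matches the paper's: Gr\"onwall-type comparison against the envelope ODEs \eqref{eq:def-pRt}--\eqref{eq:def-pCt} for the diagonal bounds $\Tr C_\theta^{t,t}\leq\pCt(t)$ and $\|R_\theta^{t,s}\|^2\leq\pRt(t-s)$, followed by Cauchy--Schwarz plus the Gaussian increment bound \eqref{eq:Cfcontinuity} for the modulus conditions. The order in which you treat $R_\theta$ and $C_\theta$ is immaterial.

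There is, however, a subtle circularity in your handling of the modulus conditions. For \eqref{eq:Cthetacontinuity} you bound the deterministic integral in $\theta^t-\theta^s$ by $O(|t-s|^2)$ and then write ``since $|t-s|\leq T$, this is $\leq (\pCt(T)+C_0^2)|t-s|$ provided $C_0$ is large enough.'' But the constant hidden in your $O(|t-s|^2)$ already depends on $C_0$ --- through $\pRf(T)$ and $\pRfs(T)$, which enter via the $R_f^r(\theta^{[r]})$ and $R_f^{r,*}\theta^*$ terms --- and these envelopes grow with $C_0$. So ``enlarging $C_0$'' does not close the bound. The same issue appears in your $R_\theta$ modulus estimate $\lesssim T|t'-t|\,\pRt(T)$.

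The paper avoids this by not collapsing to a sup bound: Cauchy--Schwarz yields a factor $C_0|t-s|$ multiplying
\[1+\int_s^t\E\|\theta^r\|^2\,\d r+\int_s^t\!\int_0^r\pRf(r-r')\E\|\theta^{r'}\|^2\,\d r'\,\d r+\int_s^t\pRfs(r)\,\d r,\]
and after substituting $\E\|\theta^r\|^2\leq\pCt(r)$ and enlarging the integration domain to $[0,T]$, this bracketed quantity is \emph{exactly} bounded by $\pCt(T)/C_0$ via the integral form \eqref{eq:PhiCthetaexplicit}. The $C_0$ in front is thus the same $C_0$ already built into the envelope, and no further enlargement is needed; the $R_\theta$ modulus is handled the same way via \eqref{eq:PhiRthetaexplicit}. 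The fix requires no new idea, only retaining the integral over $[s,t]$ rather than passing to $O(|t-s|^2)$.
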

\begin{proof}
Assume $(C_f,R_f,R_f^*,\Gamma) \in S_\xi(T,C_0)$, and denote
$(C_\theta,R_\theta)=\trsfrmA(C_f,R_f,R_f^*,\Gamma)$.

We note that by its definition in \eqref{eq:def-cont-C-t},
the kernel
$\{C_\theta^{t,s}\}_{t,s \in [0,T] \cup \{*\}}$ is positive-semidefinite 
with $C_\theta^{*,*},C_\theta^{0,*},C_\theta^{0,0}$ as specified in Definition
\ref{def:S}. Recall from \eqref{eq:def-cont-theta} that
\begin{align}\label{eq:bound-theta-second-moment-1}
    \theta^t&=\theta^0-\int_0^t \bar\eta^r \Big({\gamma}\Gamma^r
\theta^r+g(\theta^r)+{\gamma}R_f^{r}(\theta^{[r]})+
{\gamma}R_f^{r,*}\theta^*\Big)\d r+\sqrt{\gamma}\,u^t.
\end{align}
Applying Cauchy-Schwarz and the bound $(\int_0^t f(s)\d s)^2 \leq t\int_0^t
f(s)^2 \d s \leq T\int_0^t f(s)^2 \d s$, we have $\E\|\theta^t\|^2 \leq
6(\nI+\nII+\nIII+\nIV+\nV+\nVI)$ where
\begin{align*}
\nI&=\E\|\theta^0\|^2\\
\nII&=\E\left\|\int_0^t
\bar\eta^r\gamma\Gamma^r\theta^r \d r\right\|^2
\leq \gamma^2 M_f^2 M_\eta^2 T\int_0^t \E\norm{\theta^r}^2 \d r,\\
\nIII&=\E\left\|\int_0^t \bar\eta^r g(\theta^r) \d
r\right\|^2\leq M_\eta^2 T\int_0^t \E\|g(\theta^r)\|^2 \d r
    \leq 2M_g^2 M_\eta^2 T\int_0^t \E\norm{\theta^r}^2\d r
+2M_\eta^2 T^2\norm{g(0)}^2,\,\\
\nIV&=\E\left\|\int_0^t \bar\eta^r\gamma
R_f^{r}(\theta^{[r]}) \d r\right\|^2 \leq \gamma^2 M_\eta^2 T\int_0^t \E\norm{R_f^{r}(\theta^{[r]})}^2 \d r\\
    &\overset{(*)}{\leq} \gamma^2 M_\eta^2 T\int_0^t
\int_0^r \pRf(r-r')\E \|\theta^{r'}\|^2\d r'\,\d r,\\
\nV&=\E\left\|\int_0^t \bar\eta^r \gamma R_f^{r,*}\theta^*
\d r\right\|^2
    \leq \gamma^2 M_\eta^2 T\,\E\norm{\theta^*}^2
\int_0^t \Phi_{R_f^*}(r)\d r,\\
\nVI&=\gamma\,\E\norm{u^t}^2=\gamma \Tr \Cf^{t,t} \leq \gamma C_0.
\end{align*}
Here, we have used the property \eqref{eq:pRf-volterra} of $R_f^t(\cdot)$
to bound $(*)$.
Collecting these bounds shows, for any constant $C_0>0$ large enough,
\begin{align*}
\E\|\theta^t\|^2&\leq C_0\left(1+6\gamma+\int_0^t \left(\E\|\theta^r\|^2
+\int_0^r \Phi_{R_f}(r-r')\E\|\theta^{r'}\|^2\d r'+\Phi_{R_f^*}(r) \right)\d
r\right).
\end{align*}
The definition of $\pCt$ in \eqref{eq:def-pCt} implies
\begin{equation}\label{eq:PhiCthetaexplicit}
\pCt(t)=C_0\left(1+6\gamma+\int_0^t \left(\pCt(r)
+\int_0^r \Phi_{R_f}(r-r')\pCt(r')\d r'+\Phi_{R_f^*}(r) \right)\d r\right),
\end{equation}
so comparing with the above shows
\[\Tr C_\theta^{t,t}=\E\|\theta^t\|^2 \leq \Phi_{\Ct}(t).\]
Next, let $D$ be the discontinuity set of $(\Cf,\Rf,\Rf^*,\Gamma) \in
\cS_\xi(T,C_0)$.
For each maximal interval $I$ of $[0,T] \setminus D$
and any $s,t\in I$, by an analogous application of Cauchy-Schwarz
and this bound $\E\|\theta^t\|^2 \leq \Phi_{C_\theta}(t)$, for any constant
$C_0>0$ large enough,
\begin{align}
&\Tr[\Ct^{t,t} - \Ct^{t,s} - \Ct^{s,t} + \Ct^{s,s}]
=\E \norm{\theta^t-\theta^s}^2\notag\\
&=\E\left\|\int_s^t \bar\eta^r \Big({\gamma}\Gamma^r
\theta^r+g(\theta^r)+{\gamma}R_f^{r}(\theta^{[r]})+
{\gamma}R_f^{r,*}\theta^*\Big)\d r+\sqrt{\gamma}(u^t-u^s)\right\|^2\notag\\
&\leq C_0|t-s|\left(1+\int_s^t \E\|\theta^r\|^2\d r
+\int_s^t\int_0^r \Phi_{R_f}(r-r')\E\|\theta^{r'}\|^2\d r'\d r
+\int_s^t\Phi_{R_f^*}(r)\d r\right)+C_0\E\|u^t-u^s\|^2\notag\\
&\leq C_0|t-s|\left(1+\int_0^T \Phi_{C_\theta}(r)\d r
+\int_0^T \int_0^r \Phi_{R_f}(r-r')\Phi_{C_\theta}(r')\d r' \d r
+\int_0^T \Phi_{R_f^*}(r)\d r\right)
+C_0\E\|u^t-u^s\|^2\notag\\
&\leq (\Phi_{C_\theta}(T)+C_0^2)|t-s|\label{eq:bound-cont-Ct}
\end{align}
where the last inequality applies
$\E\norm{u^t-u^s}^2=\Tr[\Cf^{t,t}-\Cf^{t,s}-\Cf^{s,t}+\Cf^{s,s}]
\leq C_0|t-s|$ by \eqref{eq:Cfcontinuity}, and the form of
$\Phi_{\Ct}(T)$ in \eqref{eq:PhiCthetaexplicit}.
This checks all conditions of Definition \ref{def:S} for $C_\theta$.

For $R_\theta$, recall from \eqref{eq:def-cont-deriv-t} that
\begin{align}\label{eq:rthetarestate}
r_\theta^{t,s}=\Id-\int_s^t \bar\eta^r\bigg[\Big({\gamma}\Gamma^r+\D
g(\theta^r)\Big)r_\theta^{r,s}+{\gamma}
R_f^r\Big(r_\theta^{[r],s}\Big)\bigg]\d r \text{ for } t \geq s.
\end{align}
Then $\|r_\theta^{t,s}\|^2 \leq 4(\nI+\nII+\nIII+\nIV)$ where
\begin{align*}
\nI&=\|\Id\|^2=k\\
\nII&=\left\|\int_s^t \bar\eta^r\gamma\Gamma^r r_\theta^{r,s}\d
r\right\|^2
\leq \gamma^2M_f^2M_\eta^2T\int_s^t \|r_\theta^{r,s}\|^2 \d r,\\
\nIII&=\left\|\int_s^t \bar\eta^r \D g(\theta^r)r_\theta^{r,s}\d
r\right\|^2
\leq M_g^2 M_\eta^2 T\int_s^t \|r_\theta^{r,s}\|^2 \d r,\\
\nIV&=\left\|\int_s^t \bar\eta^r \gamma R_f^r\Big(r_\theta^{[r],s}\Big)\d
r\right\|^2
\leq \gamma^2 M_\eta^2 T\int_s^t
\Big\|R_f^r\Big(r_\theta^{[r],s}\Big)\Big\|^2 \d r\\
&\leq \gamma^2 M_\eta^2 T\int_s^t \int_s^r \Phi_{R_f}(r-r')
\|r_\theta^{r',s}\|^2 \d r'\d r,\\
\end{align*}
the last inequality again using \eqref{eq:pRf-volterra} and the fact that
$r_\theta^{r',s}=0$ for $r'<s$. Collecting these bounds shows
\[\|r_\theta^{t,s}\|^2
\leq C_0\left(1+\int_s^t \left(\norm{r_\theta^{r,s}}^2
+\int_s^r \Phi_{R_f}(r-r')\|r_\theta^{r',s}\|^2\d r'\right) \d r\right)\]
for any constant $C_0>0$ large enough.
The definition of $\Phi_{R_\theta}(t)$ in \eqref{eq:def-pRt} implies
\begin{equation}\label{eq:PhiRthetaexplicit}
\Phi_{R_\theta}(t-s)=C_0\left(1+\int_0^{t-s}\left(
\Phi_{R_\theta}(r)+\int_0^r \Phi_{R_f}(r-r')\Phi_{R_\theta}(r')\d
r'\right)\d r\right),
\end{equation}
so comparing with the above shows
\[\|R_\theta^{t,s}\|^2=\|\E r_\theta^{t,s}\|^2\leq \E\|r_\theta^{t,s}\|^2
\leq \Phi_{R_\theta}(t-s).\]
Then, applying this bound
$\E\|r_\theta^{t,s}\|^2 \leq \Phi_{R_\theta}(t-s)$,
we have similarly for any $C_0>0$ large enough and any $t \geq t' \geq s$ that
\begin{align}
\|R_\theta^{t,s}-R_\theta^{t',s}\|^2
&\leq \E\|r_\theta^{t,s}-r_\theta^{t',s}\|^2\notag\\
&\leq C_0|t-t'|\left(\int_{t'}^t \E\|r_\theta^{r,s}\|^2 \d r
+\int_{t'}^t \int_s^r \Phi_{R_f}(r-r')\E \|r_\theta^{r',s}\|^2 \d r' \d
r\right)\notag\\
&\leq C_0|t-t'|\left(\int_s^t \pCt(r-s)\d r
+\int_s^t \int_s^r \Phi_{R_f}(r-r')\pCt(r'-s)\d r' \d
r\right)\notag\\
&\leq |t-t'| \cdot \pRt(t-s) \leq |t-t'| \cdot \pRt(T),\label{eq:Rcontbound}
\end{align}
where the last two inequalities apply \eqref{eq:PhiRthetaexplicit} and
monotonicity of $\pRt$.
This checks all conditions of Definition \ref{def:S} for $R_\theta$, so
$(\Ct,\Rt) \in {\mathcal{S}}_\theta(T,C_0)$.

We note that \eqref{eq:Rcontbound} holds for all $t,t',s \in [0,T]$ with
$t \geq t' \geq s$, irrespective of the discontinuity set
$D$ for $(\Cf,\Rf,\Rf^*,\Gamma) \in \cS_\xi(T,C_0)$.
If $(\Cf,\Rf,\Rf^*,\Gamma)\in \mathcal{S}^{\mathrm{cont}}_\xi(T,C_0)$, then
\eqref{eq:bound-cont-Ct} also holds for all $t,s \in [0,T]$, implying by the
above arguments that the conditions of Definition \ref{def:S} for
$(C_\theta,R_\theta)$ hold with $D=\emptyset$.
Thus also $(\Ct,\Rt) \in {\mathcal{S}}_\theta^\mathrm{cont}(T,C_0)$.
\end{proof}

\begin{lem} \label{lem:solution-in-space-2}
Fix any $T>0$. Then for any constant $C_0>0$ large enough (depending on $T$,
$M_f$, and $M_\eta$), $\trsfrmB$ maps ${\mathcal{S}}_\theta(T,C_0)$ into
$\mathcal{S}^{\mathrm{cont}}_\xi(T,C_0)$.
\end{lem}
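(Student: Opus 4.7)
The plan is to verify in turn the four defining properties of $\mathcal{S}_\xi^{\mathrm{cont}}(T,C_0)$ in \Cref{def:S} for the output $(C_f,R_f,R_f^*,\Gamma)=\trsfrmB(C_\theta,R_\theta)$, using the boundedness of $f$ and its derivatives from \eqref{eq:defMbounds}, the envelope bound $\|R_\theta^{t,s}\|^2 \leq \pRt(t-s)$ inherited from $(C_\theta,R_\theta) \in \mathcal{S}_\theta$, and the unified Burkholder--Davis--Gundy (BDG) inequality of \Cref{lem:burkholder} that applies in both the Poisson and Gaussian cases for $\{z^t\}_{t \in [0,T]}$. The operator-norm bound $\normop{\Gamma^t} \leq M_f$ is immediate from \eqref{eq:def-cont-Gamma} and Jensen. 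Borel-measurability of $t \mapsto \Gamma^t, R_f^{t,*}, R_f^t(x^{[t]})$ comes for free from the $\cF_t$-adaptedness and c\`adl\`ag regularity of $\xi^t$, $r_f^{t,*}$, $r_f^t(x^{[t]})$ supplied by \Cref{thm:uniqueness-existence}(a).

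For the response-type objects, the standard trick is to decompose $\d z^r=\bar\kappa\,\d r+\d \bar z^r$ into its drift and mean-zero martingale parts, apply Cauchy--Schwarz to the drift integral, and apply BDG to the martingale integral. Applied to \eqref{eq:def-cont-deriv-xi-star}, together with $\bar\eta^r \leq M_\eta$ and $\bar\eta^r/\sqrt{\bar\kappa}\leq M_\eta$, this produces an inequality of the form
\[\E\|r_f^{t,*}\|^2 \leq C\Big(1+\int_0^t \pRt(t-r)\,\E\|r_f^{r,*}\|^2\,\d r\Big),\]
whose Volterra--Gr\"onwall comparison with the defining equation \eqref{eq:def-pRfs} of $\pRfs$ yields $\E\|r_f^{t,*}\|^2 \leq \pRfs(t)$ once $C_0 \geq C$, and Jensen then gives $\|R_f^{t,*}\|^2 \leq \pRfs(t)$. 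The same approach applied to \eqref{eq:def-cont-resp-xi}---whose linearity in $x^{[t]}$ automatically makes $R_f^t=\E r_f^t$ a bounded linear operator---produces
\[\E\|r_f^t(x^{[t]})\|^2 \leq C\int_0^t \pRt(t-r)\Big(\E\|r_f^r(x^{[r]})\|^2+\|x^r\|^2\Big)\d r,\]
and substituting the target bound on the right, swapping orders of integration, and matching with the defining integral equation \eqref{eq:def-pRf} for $\pRf$ yields $\E\|r_f^t(x^{[t]})\|^2 \leq \int_0^t \pRf(t-r)\|x^r\|^2\,\d r$, which by Jensen establishes \eqref{eq:pRf-volterra} for $R_f^t$.

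For $C_f^{t,s}$, symmetry and positive-semidefiniteness are automatic since $C_f^{t,s}=\E[U^t\otimes U^s]$ where $U^t:=\int_0^t (\bar\eta^r/\bar\kappa)\, f(\xi^{r-},w^*,\eps)\,\d z^r$, and $C_f^{0,0}=0$ follows since $U^0=0$. The main task is to establish the continuity condition \eqref{eq:Cfcontinuity} uniformly on all of $[0,T]$ (i.e.\ with $D=\emptyset$). Starting from the identity
\[\Tr[C_f^{t,t}-C_f^{t,s}-C_f^{s,t}+C_f^{s,s}]=\E\left\|\int_s^t \frac{\bar\eta^r}{\bar\kappa}f(\xi^{r-},w^*,\eps)\,\d z^r\right\|^2,\]
the same drift-plus-martingale decomposition and BDG together with $\|f\|\leq M_f$ give a bound of the shape $C(|t-s|+|t-s|^2) \leq C_0|t-s|$ for $C_0$ sufficiently large in terms of $T,M_f,M_\eta$. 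Taking $s=0$ yields the pointwise trace bound $\Tr C_f^{t,t}\leq C_0$, and the same estimate for general $s<t$ establishes \eqref{eq:Cfcontinuity} with $D=\emptyset$. The key subtlety---and the reason the output lies in $\mathcal{S}_\xi^{\mathrm{cont}}$ rather than only $\mathcal{S}_\xi$---is precisely this last step: although $\xi^r$ has genuine jumps in the SGD/Poisson setting and the integrand $f(\xi^{r-},\cdot,\cdot)$ is only c\`adl\`ag, the expectation on the right-hand side converts those jumps into an absolutely continuous contribution of size $O(|t-s|)$, producing a Lipschitz-in-time bound on $C_f$ over the entire interval $[0,T]$.
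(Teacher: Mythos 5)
Your proposal is correct and tracks the paper's own proof almost step for step: the $\Gamma$ bound via Jensen, the drift-plus-martingale split of $\d z^r$ together with Cauchy--Schwarz and a BDG/It\^o-isometry estimate (the paper packages this as the preliminary bound \eqref{eq:basicdzrbound}), the Volterra comparison of $\E\|r_f^{t,*}\|^2$ and $\E\|r_f^t(x^{[t]})\|^2$ against the envelope equations \eqref{eq:def-pRfs} and \eqref{eq:def-pRf}, and the direct estimate $\Tr[C_f^{t,t}-C_f^{t,s}-C_f^{s,t}+C_f^{s,s}]\leq C_0|t-s|$ giving the $\cS_\xi^{\mathrm{cont}}$ membership. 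Your closing remark correctly identifies the reason the output lands in $\cS_\xi^{\mathrm{cont}}$ with $D=\emptyset$ even though $\xi^t$ itself jumps; this is implicit in the paper's argument but worth making explicit.
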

\begin{proof}
In both the Poisson and Gaussian settings,
$\{z^t-\bar \kappa t\}_{t \in [0,T]}$ is a
martingale whose predictable quadratic variation is given by
$\langle z^t-\bar \kappa t
\rangle=\bar \kappa t$. Then for any $\cF_t$-predictable square-integrable
process $\{X^t\}_{t \in [0,T]}$,
\begin{align}
\E\left\|\int_0^t X^r \d z^r\right\|_2^2
&\leq 2\,\E\left\|\int_0^t X^r \bar \kappa\,\d r\right\|_2^2
+2\,\E\left\|\int_0^t X^r (\d z^r-\bar \kappa\,\d r)\right\|_2^2\notag\\
&=2\,\E\left\|\int_0^t X^r \bar \kappa\,\d r\right\|_2^2
+2\int_0^t \E\|X^r\|_2^2\,\bar\kappa\,\d r
\leq 2(t+1)\int_0^t \E\|X^r\|_2^2\,\bar\kappa\,\d r.\label{eq:basicdzrbound}
\end{align}

Assume $(\Ct,\Rt) \in {\mathcal{S}}_\theta(T,C_0)$, and denote
$(C_f,R_f,R_f^*,\Gamma)=\trsfrmB(C_\theta,R_\theta)$.
By its definition in \eqref{eq:def-cont-C-h},
$C_f$ is a positive-semidefinite kernel with $C_f^{0,0}=0$.
Applying \eqref{eq:basicdzrbound}, for any $t \in [0,T]$,
\[\Tr \Cf^{t,t}=\E\left\|\int_0^t
\frac{\bar\eta^r}{\bar\kappa}f(\xi^{r-},w^*,\eps)\d z^r\right\|^2
\leq 2(t+1)\int_0^t \E\left\|\frac{\bar \eta^r}{\bar \kappa}
f(\xi^r,w^*,\eps)\right\|^2\bar \kappa\,\d r
\leq C_0\]
for a large enough constant $C_0>0$. Similarly, for all $t,s \in [0,T]$,
\begin{align*}
\Tr[\Cf^{t,t} - \Cf^{t,s} - \Cf^{s,t} + \Cf^{s,s}] &=\E\left\|\int_s^t
\frac{\bar\eta^r}{\bar\kappa}f(\xi^{r-},w^*,\eps)\d z^r\right\|^2
\leq C_0|t-s|
\end{align*}
for large enough $C_0>0$.
This checks all conditions of Definition \ref{def:S} for $C_f$.

For $R_f$, given any $x \in L^4([0,T],\R^k)$, 
it is clear from the definition of $r_f^t(x^{[t]})$
in \eqref{eq:def-cont-resp-xi} that
$t \mapsto R_f^t(x^{[t]})=\E r_f^t(x^{[t]})$ is Borel-measurable.
By \eqref{eq:def-cont-resp-xi} and \eqref{eq:basicdzrbound},
we have
\begin{align}
\E\|r_f^t(x^{[t]})\|^2 &\leq M_f^2
\E\left\|\int_0^t \frac{\bar\eta^r}{\bar\kappa}
R_\theta^{t,r}\Big(r_f^{r-}(x^{[r]})+\D_\xi f(\xi^{r-},w^*,\eps)x^{r-}\Big)\d
z^r\right\|^2\notag\\
&\leq C_0\int_0^t \Phi_{R_\theta}(t-r)
\Big(\E \|r_f^r(x^{[r]})\|^2+\|x^r\|^2\Big)\d r\label{eq:bound-Acal-moment}
\end{align}
for large enough $C_0>0$. From the definition of $\pRf$ in \eqref{eq:def-pRf}, we have
\begin{equation}\label{eq:def-pRf-lem}
\pRf(t-r)=C_0\left(\int_r^t \pRt(t-r')\pRf(r'-r)\d r'
+\Phi_{R_\theta}(t-r)\right).
\end{equation}
Integrating both sides of the equation with respect to ${\norm{x^r}^2}$, we get
\begin{align*}
\int_0^t \pRf(t-r) \norm{x^r}^2\d r
&=C_0\int_0^t \prn{\int_r^t \pRt(t-r') \pRf(r'-r) \d r' + \pRt(t-r)} \norm{x^r}^2\d r\nonumber\\
    &=C_0\int_0^t \pRt(t-r)\prn{\int_0^r \pRf(r-r')\norm{x^{r'}}^2\d {r'}+
\norm{x^r}^2}\d r.
\end{align*}
Comparing with \eqref{eq:bound-Acal-moment}, this shows that
\begin{equation}\label{eq:Erfsquaredbound}
\|R_f^t(x^{[t]})\|^2
=\|\E r_f^{t}(x^{[t]})\|^2
\leq \E\norm{r_f^{t}(x^{[t]})}^2 \leq \int_0^t \pRf(t-r) \norm{x^r}^2\d r.
\end{equation}
Further bounding
$(\int_0^t \pRf(t-r) \norm{x^r}^2\d r)^2
\leq t\pRf(t)^2\int_0^t \norm{x^r}^4\d r$, this verifies that
$R_f^t$ is a bounded linear operator from $L^4([0,t],\R^k)$ to $\R^k$. This
checks all conditions of Definition \ref{def:S} for $R_f$.

For $R_f^*$, we have similarly
\begin{align*}
    \E \norm{r_f^{t,*}}^2
&\leq 2\E\left\|\D_\xi f(\xi^t,w^*,\eps) \int_0^t
\frac{\bar\eta^r}{\bar\kappa}
\Rt^{t,r}r_f^{r-,*}\d z^r\right\|^2
+2\E\|\D_{w^*} f(\xi^t,w^*,\eps)\|^2\\
&\leq C_0\left(1+\int_0^t \pRt(t-r)\E\|r_f^{r,*}\|^2\d r\right)
\end{align*}
for large enough $C_0>0$. Comparing with the definition of $\Phi_{R_f^*}$ in
\eqref{eq:def-pRfs} shows
$\|R_f^{t,*}\|^2 \leq \E\|r_f^{t,*}\|^2 \leq \pRfs(t)$. Finally, 
the measurability of $t \mapsto \Gamma^t$ and bound
$\|\Gamma_t\|_\op \leq M_f$ follow directly from the definition of $\Gamma^t$ 
in \eqref{eq:def-cont-Gamma}, and \eqref{eq:defMbounds}.
Thus $(C_f,R_f,R_f^*,\Gamma) \in \cS_\xi^{\mathrm{cont}}(T,C_0)$.
\end{proof}

\subsection{Contractivity of the mappings}\label{sec:contractivity}

We now endow $\mathcal{S}_\xi \equiv \cS_\xi(T,C_0)$ and ${\mathcal{S}}_\theta
\equiv \cS_\theta(T,C_0)$ with metrics, under which we will show that
$\trsfrm \equiv \trsfrmA \circ \trsfrmB$ is contractive.

Fix a constant $\lambda>0$ (which we will also take large enough, depending
on $T,C_0$). For covariance kernels $C_{f,1},C_{f,2}$ on $[0,T] \otimes
\R^k$ and $C_{\theta,1},C_{\theta,2}$ on $([0,T] \otimes \R^k) \times \R^{k^*}$
satisfying the conditions of Definition \ref{def:S}, define
\begin{align*}
\lbddst{C_{f,1}}{C_{f,2}}=\inf_{(u_1,u_2) \sim \gamma \in
\Gamma(C_{f,1},C_{f,2})} \sup_{t \in [0,T]} e^{-\lambda t} \big(\E {\norm{u^t_1 -
u^t_2}^4}\big)^{1/4},\\
\lbddst{C_{\theta,1}}{C_{\theta,2}}=\inf_{(w_1,w_2) \sim \gamma \in
\Gamma(C_{\theta,1},C_{\theta,2})} \sup_{t \in [0, T]} e^{-\lambda t} \big(\E
\norm{w^t_1 - w^t_2}^4\big)^{1/4},
\end{align*}
where $\Gamma(C_{f,1},C_{f,2})$ denotes the set of all couplings between the
Gaussian process laws $\GP(0,C_{f,1})$ and $\GP(0,C_{f,2})$, and similarly for
$\Gamma(C_{\theta,1},C_{\theta,2})$. Note that under the constraints and
continuity conditions for $C_f,C_\theta$ in Definition \ref{def:S}, if 
$\lbddst{C_{\theta,1}}{C_{\theta,2}}=0$, then
$C_{\theta,1}^{t,s}=C_{\theta,2}^{t,s}$ pointwise for all $t,s \in [0,T]
\cup \{*\}$, and similarly for $C_f$.

For pairs of response kernels/operators satisfying Definition \ref{def:S}, we
define also
\begin{align*}
    \lbddst{R_{f,1}}{R_{f,2}} &= \sup_{0 \leq t \leq T} e^{-\lambda t}
\normop{R_{f,1}^t - R_{f,2}^t},\\
    \lbddst{R_{f,1}^*}{R_{f,2}^*} &= \sup_{0 \leq t \leq T} e^{-\lambda t}
\norm{R_{f,1}^{t,*} - R_{f,2}^{t,*}},\\
    \lbddst{\Gamma_1}{\Gamma_2} &= \sup_{0 \leq t \leq T} e^{-\lambda t}
\norm{\Gamma_1 - \Gamma_2},\\
    \lbddst{R_{\theta,1}} {R_{\theta,2}} &= \sup_{0 \leq s\leq t \leq T}
e^{-\lambda t} \norm{R_{\theta,1}^{t,s} - R_{\theta,2}^{t,s}},
\end{align*}
where we recall that $\normop{R_f^t}$ is the operator norm as a linear map
from $L^4([0,t],\R^k)$ to $\R^k$.
Finally, for any $X_i=(C_{f,i},R_{f,i},R_{f,i}^*,\Gamma_i) \in \mathcal{S}_\xi$
and $Y_i=(C_{\theta,i},R_{\theta,i}) \in {\mathcal{S}}_\theta$, we define 
\begin{subequations}
\begin{align*}
	\lbddst{X_1}{X_2} & = \lbddst{C_{f,1}}{C_{f,2}} +
\lbddst{R_{f,1}}{R_{f,2}} + \lbddst{R_{f,1}^*}{R_{f,2}^*}
+ \lbddst{\Gamma_1}{\Gamma_2},\\
	\lbddst{Y_1}{Y_2} & = \lbddst{C_{\theta,1}}{C_{\theta,2}}
+\lbddst{R_{\theta,1}}{R_{\theta,2}}.
\end{align*}
\end{subequations}

Throughout these arguments, we will fix $T,C_0>0$, and write
$\lesssim$ to mean inequality up to a constant depending on $T,C_0$.

We first study the contractivity properties for $\trsfrmA$.

\begin{lem}\label{lem:exist-unique-theta}
Fix any $T,C_0>0$, suppose $X=(C_f,R_f,R_f^*,\Gamma) \in
\cS_\xi(T,C_0)$, and let $\{\theta^t\}_{t \in [0,T]}$ and
$\{r_\theta^{t,s}\}_{t,s \in [0,T]}$ be defined
by \eqref{eq:def-cont-theta} and \eqref{eq:def-cont-deriv-t}.
Then there is a constant $C \equiv C(T,C_0)>0$ such that for all $t,s \in [0,T]$,
\[\E\|\theta^t\|^4 \leq C,
\qquad \E\|r_\theta^{t,s}\|^4 \leq C.\]
\end{lem}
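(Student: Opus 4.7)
The plan is to extend the $L^2$ estimates already obtained inside the proof of \Cref{lem:solution-in-space} to the $L^4$ level by essentially the same Grönwall-type Volterra argument applied to the fourth moment. I would start with $\theta^t$ and use the elementary inequality $(a_0+\cdots+a_5)^4 \leq 6^3(a_0^4+\cdots+a_5^4)$ together with Jensen's inequality to bound $\E\|\theta^t\|^4$ by a sum of six terms corresponding to $\theta^0$, the $\gamma\Gamma^r\theta^r$ drift, the $g(\theta^r)$ drift, the operator drift $\gamma R_f^r(\theta^{[r]})$, the cross term $\gamma R_f^{r,*}\theta^*$, and the Gaussian term $\sqrt{\gamma}\,u^t$. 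The first is bounded using \Cref{asp:model}(c), which gives $\E\|\theta^0\|^p \lesssim 1$ for all $p$ because $\Law(\theta^0,\theta^*)$ has a finite moment generating function near the origin; the Gaussian term is bounded by $3\gamma^2 (\Tr C_f^{t,t})^2 \leq 3\gamma^2 C_0^2$ via Wick's formula; the $\Gamma$ and $g$ drifts are controlled by Jensen and the Lipschitz bounds $\|\Gamma^r\|_\op \leq M_f$ and $\|g(\theta)\| \leq C(1+\|\theta\|)$ to give contributions of the form $\lesssim 1+\int_0^t \E\|\theta^r\|^4\,\d r$.

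The only genuinely new point is the fourth-moment treatment of the operator term $R_f^r(\theta^{[r]})$. Here I would apply the envelope bound \eqref{eq:pRf-volterra} and then use Cauchy--Schwarz with the weight $\Phi_{R_f}(r-r')$ to obtain
\[\|R_f^r(\theta^{[r]})\|^4 \leq \left(\int_0^r \Phi_{R_f}(r-r')\,\d r'\right)\left(\int_0^r \Phi_{R_f}(r-r')\,\|\theta^{r'}\|^4\,\d r'\right),\]
so that after applying Jensen in the outer time integral, this contribution is controlled by $\lesssim \int_0^t\int_0^r \Phi_{R_f}(r-r')\,\E\|\theta^{r'}\|^4\,\d r'\,\d r$. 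Combining all six estimates yields a Volterra-type inequality
\[\E\|\theta^t\|^4 \leq C + C\int_0^t \E\|\theta^r\|^4\,\d r + C\int_0^t \int_0^r \Phi_{R_f}(r-r')\,\E\|\theta^{r'}\|^4\,\d r'\,\d r\]
for some $C$ depending on $T,C_0$, and since $\Phi_{R_f}$ is bounded on $[0,T]$ by its definition \eqref{eq:def-pRf}, a standard Grönwall argument closes this inequality uniformly on $[0,T]$.

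The bound on $r_\theta^{t,s}$ follows by exactly the same template applied to \eqref{eq:def-cont-deriv-t}. Decompose the right-hand side into the identity initialization $\Id_k$ (contributing $k^2$ deterministically), the $(\gamma\Gamma^r + \D g(\theta^r))r_\theta^{r,s}$ drift (bounded using $\|\Gamma^r\|_\op\leq M_f$ and $\|\D g\|_\op \leq M_g$ from \eqref{eq:defMbounds}), and the operator drift $\gamma R_f^r(r_\theta^{[r],s})$. Applying the same weighted Cauchy--Schwarz trick to the operator term and using $r_\theta^{r',s}=0$ for $r'<s$, one arrives at the analogous Volterra inequality
\[\E\|r_\theta^{t,s}\|^4 \leq C + C\int_s^t \E\|r_\theta^{r,s}\|^4\,\d r + C\int_s^t \int_s^r \Phi_{R_f}(r-r')\,\E\|r_\theta^{r',s}\|^4\,\d r'\,\d r,\]
which Grönwall then closes with a constant $C(T,C_0)$ independent of $s \in [0,T]$. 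The only real obstacle is formulating the $L^4$ bound for the operator term cleanly, and the weighted Cauchy--Schwarz step resolves this, reducing the problem to a Volterra inequality of exactly the same shape as the one implicit in the $L^2$ estimates of \Cref{lem:solution-in-space}.
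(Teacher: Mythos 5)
Your proposal is correct and follows essentially the same route as the paper's proof: bound the operator term via \eqref{eq:pRf-volterra} plus a Cauchy--Schwarz step to pass from squared to fourth powers (the paper uses a flat Cauchy--Schwarz, you use the weighted variant, but since $\Phi_{R_f}$ is bounded on $[0,T]$ both collapse to the same estimate), bound the Gaussian term via Wick/hypercontractivity and the constraint $\Tr C_f^{t,t}\le C_0$, and close the resulting Volterra inequality by Gr\"onwall, repeating verbatim for $r_\theta^{t,s}$. No gaps.
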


\begin{proof}
By \eqref{eq:pRf-volterra},
\[\norm{R_f^{r}(\theta^{[r]})}^4 \leq \prn{\int_0^r \pRf(r-s)\norm{\theta^s}^2\d
s }^2\lesssim \int_0^r \norm{\theta^s}^4\d s.\]
Then from the definition \eqref{eq:def-cont-theta}, we have
\begin{align*}
    \norm{\theta^t}^4&\lesssim
\norm{\theta^0}^4+\int_0^t\Big(1+\norm{\theta^r}^4+\norm{R_f^{r}(\theta^{[r]})}^4+\norm{\theta^*}^4\Big)\d r+\norm{u^t}^4\\
    &\lesssim \norm{\theta^0}^4+\int_0^t
\Big(1+\norm{\theta^r}^4+\norm{\theta^*}^4\Big)\d r+\norm{u^t}^4.
\end{align*}
Since $\E\norm{u^t}^4 \lesssim \prn{\E\norm{u^t}^2}^2 \lesssim \pCf(T)^2 \lesssim 1$,
taking expectations and applying Gr\"onwall's inequality shows
the bound for $\E\|\theta^t\|^4$. A similar argument shows the bound
for $\E\|r_\theta^{t,s}\|^4$.
\end{proof}

\begin{lem}\label{lem:contract-Ct}
Fix any $T,C_0>0$. For $i=1,2$, 
suppose $X_i=(C_{f,i},R_{f,i},R_{f,i}^*,\Gamma_i) \in \mathcal{S}_\xi(T,C_0)$,
and let $(C_{\theta,i},R_{\theta,i})=\trsfrmA(X_i)$.
Then there exist constants $C \equiv C(T,C_0)>0$ and $\lambda_0 \equiv
\lambda_0(T,C_0)>0$ such that for any $\lambda>\lambda_0$,
\begin{align}
\lbddst{C_{\theta,1}}{C_{\theta,2}} & \leq C\,\lbddst
{C_{f,1}}{C_{f,2}}\notag\\
&\hspace{0.2in}+\frac{C}{\lambda}\prn{\lbddst{\Gamma_1} {\Gamma_2} +
\lbddst{R_{f,1}}{R_{f,2}} + \lbddst{R_{f,1}^*}{R_{f,2}^*}},
\label{eq:contractCtheta}\\
    \lbddst{R_{\theta,1}}{R_{\theta,2}} &\leq
\frac{C}{\lambda}\lbddst{X_1}{X_2}.\label{eq:contractRtheta}
\end{align}
\end{lem}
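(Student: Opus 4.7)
The plan is to adapt the exponential weighting strategy of \cite{celentano2021high}, now with $L^4$-type Wasserstein-like metrics. Because each integro-differential identity in \eqref{eq:def-cont-theta}--\eqref{eq:def-cont-deriv-t} has Volterra structure in time, multiplying by $e^{-\lambda t}$ and taking a supremum in $t$ converts the ``self'' contribution of the trajectory difference into a factor of order $1/\lambda$, while the forcing contributions coming from $\lbddst{C_{f,1}}{C_{f,2}}$, $\lbddst{R_{f,1}}{R_{f,2}}$, $\lbddst{R_{f,1}^*}{R_{f,2}^*}$, and $\lbddst{\Gamma_1}{\Gamma_2}$ are preserved up to constants depending only on $T$ and $C_0$; choosing $\lambda$ large enough then absorbs the self terms. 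For the coupling I would pick a near-optimal $(u_1^t,u_2^t)$ of $\GP(0,C_{f,i})$ for $\lbddst{C_{f,1}}{C_{f,2}}$ and use the same $(\theta^0,\theta^*)$ on both sides to generate $(\theta_1^t,\theta_2^t)$ via \eqref{eq:def-cont-theta}. The joint law of $(\theta_1^t,\theta_2^t)$ is not Gaussian, but its centered Gaussian image has marginal covariances $(C_{\theta,1},C_{\theta,2})$ and hence lies in $\Gamma(C_{\theta,1},C_{\theta,2})$; the Gaussian bound $\E\|w\|^4\le 3(\E\|w\|^2)^2$ then yields
$$\lbddst{C_{\theta,1}}{C_{\theta,2}}\;\le\;3^{1/4}\sup_{t\in[0,T]}e^{-\lambda t}\bigl(\E\|\theta_1^t-\theta_2^t\|^2\bigr)^{1/2}\;\le\;3^{1/4}\sup_{t\in[0,T]}e^{-\lambda t}\bigl(\E\|\theta_1^t-\theta_2^t\|^4\bigr)^{1/4}.$$

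Next, I would subtract the two copies of \eqref{eq:def-cont-theta} and split each bilinear term so that only one factor is differenced,
$$\Gamma_1^r\theta_1^r-\Gamma_2^r\theta_2^r=(\Gamma_1^r-\Gamma_2^r)\theta_1^r+\Gamma_2^r(\theta_1^r-\theta_2^r),$$
$$R_{f,1}^r(\theta_1^{[r]})-R_{f,2}^r(\theta_2^{[r]})=(R_{f,1}^r-R_{f,2}^r)(\theta_1^{[r]})+R_{f,2}^r(\theta_1^{[r]}-\theta_2^{[r]}),$$
and analogously for $R_f^{r,*}\theta^*$ and for $g(\theta_1^r)-g(\theta_2^r)$ (the latter bounded by $M_g\|\theta_1^r-\theta_2^r\|$ via Assumption~\ref{asp:lipschitz}), plus the free term $\sqrt{\gamma}(u_1^t-u_2^t)$. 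Taking $L^4$ norms by Minkowski, controlling $\|\theta_1^{[r]}\|_{L^4}$ by the uniform fourth-moment bound of Lemma~\ref{lem:exist-unique-theta}, estimating the $R_{f,2}^r$ term via \eqref{eq:pRf-volterra}, multiplying by $e^{-\lambda t}$, and using
$$\int_0^t e^{-\lambda(t-r)}\,\d r\le \tfrac{1}{\lambda},\qquad \int_0^t\!\int_0^r\pRf(r-r')\,e^{-\lambda(t-r')}\,\d r'\,\d r\lesssim \tfrac{1}{\lambda}\int_0^T\pRf(s)\,\d s,$$
I obtain for $\Phi^*:=\sup_{t\in[0,T]} e^{-\lambda t}(\E\|\theta_1^t-\theta_2^t\|^4)^{1/4}$ the estimate
$$\Phi^*\;\le\;C\,\lbddst{C_{f,1}}{C_{f,2}}+\tfrac{C}{\lambda}\bigl(\lbddst{\Gamma_1}{\Gamma_2}+\lbddst{R_{f,1}}{R_{f,2}}+\lbddst{R_{f,1}^*}{R_{f,2}^*}\bigr)+\tfrac{C'}{\lambda}\Phi^*.$$
For $\lambda>\lambda_0:=2C'$ the last term is absorbed, which combined with the coupling bound above gives \eqref{eq:contractCtheta}.

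For \eqref{eq:contractRtheta} I would repeat the same decomposition on \eqref{eq:def-cont-deriv-t}. Two features differ from the previous step: the initial condition $r_{\theta,i}^{s,s}=\Id_k$ is shared, so there is no analogue of the forcing term $\sqrt{\gamma}(u_1-u_2)$; and a new term $\D g(\theta_1^r)-\D g(\theta_2^r)$ appears, which by Lipschitzness of $\D g$ (Assumption~\ref{asp:lipschitz}) is $\lesssim\|\theta_1^r-\theta_2^r\|$ and hence contributes $\lesssim\Phi^*\lesssim\lbddst{X_1}{X_2}$ through the already-proved \eqref{eq:contractCtheta}. Since $R_{\theta,1}^{t,s}-R_{\theta,2}^{t,s}=\E[r_{\theta,1}^{t,s}-r_{\theta,2}^{t,s}]$, controlling the $L^4$ norm of the integrand difference suffices; exponential weighting with large $\lambda$ absorbs the $R_{\theta,1}-R_{\theta,2}$ self-contributions and yields \eqref{eq:contractRtheta}. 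The main technical obstacle throughout is the non-local Volterra operator $R_{f,2}^r(\theta_1^{[r]}-\theta_2^{[r]})$: via \eqref{eq:pRf-volterra} its $L^4$ norm involves a double integral against $\pRf$, and ensuring that this iterated integral can be absorbed uniformly in $t\in[0,T]$ into the prefactor $1/\lambda$ relies on the monotonicity and finiteness of $\int_0^T\pRf(s)\,\d s$, which is where the largeness of $C_0$ in Definition~\ref{def:S} is used.
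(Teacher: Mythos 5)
Your proposal is correct and takes essentially the same approach as the paper: a near-optimal $L^4$ coupling of the driving Gaussian processes, a split of each bilinear term into a "self" difference (absorbed via the $1/\lambda$ gain from the exponential weight on the Volterra integral) and a "forcing" difference (bounded via Lemma~\ref{lem:exist-unique-theta} and the kernel metrics), and the pushforward to a jointly Gaussian coupling with matching covariance to bound $\lbddst{C_{\theta,1}}{C_{\theta,2}}$. The only cosmetic differences are that the paper's split assigns the undifferenced kernel $(\Gamma_1,R_{f,1})$ to the trajectory difference and multiplies the kernel difference by $\theta_2$ (you do the symmetric thing), and the paper controls $r_{\theta,1}-r_{\theta,2}$ in $L^2$ with a Cauchy--Schwarz step against the $L^4$ bound on $\theta_1-\theta_2$ rather than keeping everything at the $L^4$ level; both routes work identically.
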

\begin{proof}
By definition of $\lbddst{\cdot}{\cdot}$,
there exists a coupling $\{u_1^t,u_2^t\}_{t \in [0,T]}$ of $\GP(0,C_{f,1})$ and
$\GP(0,C_{f,2})$ such that
\begin{align*}
\sup_{t \in [0, T]} e^{-4\lambda t} \big(\mathbb{E}
\norm{u_1^t-u_2^t}^4\big)^{1/4}
\leq 2\,\lbddst{C_{f,1}}{C_{f,2}}^4.
\end{align*}
For $i=1,2$, define $\theta_i^t$ via \eqref{eq:def-cont-theta} using
$X_i,\{u_i^t\}_{t \in [0,T]}$. Then
$\E \|\theta^t_1 - \theta^t_2\|^4 \lesssim
\int_0^t((\mathrm{I})_r+(\mathrm{II})_r)\d r +(\mathrm{III})$ where 
\begin{align*}
    (\mathrm{I})_r &=\E\left\|g(\theta_1^r)-g(\theta_2^r)+\gamma
\Gamma_1^r(\theta_1^r-\theta_2^r)+\gamma
R_{f,1}^r(\theta_1^{[r]}-\theta_2^{[r]})\right\|^4\\
    &\lesssim \E\norm{\theta_1^r-\theta_2^r}^4 + \left(\int_0^r \Phi_{\Rf}(r-r')
\E\norm{\theta^{r'}_1 - \theta^{r'}_2}^2 \d r'\right)^2\\
    &\lesssim e^{4\lambda r}e^{-4\lambda r}\E\norm{\theta_1^r-\theta_2^r}^4 +\int_0^r
e^{4\lambda r'}e^{-4\lambda r'}\E\norm{\theta^{r'}_1 - \theta^{r'}_2}^4 \d r'\\
    &\lesssim e^{4\lambda r} \sup_{t\in[0,T]}e^{-4\lambda t}
\E\norm{\theta_1^t-\theta_2^t}^4,\\
    (\mathrm{II})_r&=\E\left\|(\Gamma_1^r-\Gamma_2^r)\theta_2^r
+(R_{f,1}^r-R_{f,2}^r)\theta_2^{[r]}+
(R_{f,1}^{r,*}-R_{f,2}^{r,*})\theta^*\right\|^4\\
    &\lesssim
\norm{\Gamma_1^r-\Gamma_2^r}^4\E\|\theta_2^r\|^4+\normop{{R_{f,1}^r-R_{f,2}^r}}^4\E\|\theta_2^{[r]}\|_{L^4}^4 +
\norm{R_{f,1}^{r,*}-R_{f,2}^{r,*}}^4\E\|\theta^*\|^4\\
&\overset{(*)}\lesssim e^{4\lambda r}\Big(\lbddst{\Gamma_1}{\Gamma_2}^4
+\lbddst{R_{f,1}}{R_{f,2}}^4+\lbddst{R_{f,1}^*}{R_{f,2}^*}^4\Big),\\
(\mathrm{III})&=\E \norm{u_1^t-u_2^t}^4
\leq e^{4\lambda t} \cdot 2\lbddst{C_{f,1}}{C_{f,2}}^4.
\end{align*}
For $(*)$, we have used \Cref{lem:exist-unique-theta} to bound
$\E\|\theta^t_2\|^4,\E\|\theta_2^{[r]}\|_{L^4}^4 \lesssim 1$, and applied the
definitions of $\lbddst{\cdot}{\cdot}$.
Combining these bounds, multiplying by $e^{-4\lambda t}$, and taking the
supremum of over $t \in [0,T]$, we get
\begin{align*}
    &\sup_{t\in[0,T]}e^{-4\lambda t}\E\norm{\theta_1^t-\theta_2^t}^4\\
    &\lesssim \sup_{t\in[0,T]}e^{-4\lambda t}\E\norm{\theta_1^t-\theta_2^t}^4\cdot
\sup_{t\in[0,T]}e^{-4\lambda t} \int_0^t e^{4\lambda r}\d r\\
    &\hspace{0.5in}+
\Big(\lbddst{\Gamma_1}{\Gamma_2}^4
+\lbddst{R_{f,1}}{R_{f,2}}^4+\lbddst{R_{f,1}^*}{R_{f,2}^*}^4\Big)
\sup_{t\in[0,T]}e^{-4\lambda t}\int_0^t e^{4\lambda r} \d r\\
&\hspace{0.5in}+ \lbddst{C_{f,1}}{C_{f,2}}^4\\
&\lesssim \frac{1}{\lambda}\left(
\sup_{t\in[0,T]}e^{-4\lambda t}\E\norm{\theta_1^t-\theta_2^t}^4
+\lbddst{\Gamma_1}{\Gamma_2}^4
+\lbddst{R_{f,1}}{R_{f,2}}^4+\lbddst{R_{f,1}^*}{R_{f,2}^*}^4\right)\\
&\hspace{0.5in}+\lbddst{C_{f,1}}{C_{f,2}}^4.
\end{align*}
Thus for any $\lambda>\lambda_0(T,C_0)$ large enough, we may rearrange to get
\begin{align}
\sup_{t\in[0,T]}e^{-4\lambda t}\E\norm{\theta_1^t-\theta_2^t}^4 &\lesssim
\lbddst{C_{f,1}}{C_{f,2}}^4\label{eq:theta4diffbound}\\
&\hspace{0.2in}+\frac{1}{\lambda}\prn{\lbddst{\Gamma_1}{\Gamma_2}^4 +
\lbddst{R_{f,1}}{R_{f,2}}^4 + \lbddst{R_{f,1}^*}{R_{f,2}^*}^4}.\notag
\end{align}
Letting $\{w_1^t,w_2^t\}_{t \in [0,T] \cup \{*\}}$ be a jointly Gaussian process
with the same covariance as $\{\theta_1^t,\theta_2^t\}_{t \in [0,T] \cup
\{*\}}$, this gives a coupling of $C_{\theta,1},C_{\theta,2}$, so
\begin{align*}
\lbddst{C_{\theta,1}}{C_{\theta,2}}^4
&\leq \sup_{t \in [0, T]} e^{-4\lambda t} 
\E {\norm{w^t_1 - w^t_2}^4}
\lesssim \sup_{t \in [0, T]} e^{-4\lambda t} 
(\E {\norm{w^t_1 - w^t_2}^2})^2\\
&= \sup_{t \in [0, T]} e^{-4\lambda t} 
(\E{\norm{\theta^t_1 - \theta^t_2}^2})^2
\lesssim \sup_{t \in [0, T]} e^{-4\lambda t} 
\E{\norm{\theta^t_1 - \theta^t_2}^4}.
\end{align*}
Combining the above shows \eqref{eq:contractCtheta}.

For the analysis of $R_\theta$, we have similarly
from \eqref{eq:def-cont-deriv-t} that
$\E\|r_{\theta,1}^{t,s}-r_{\theta,2}^{t,s}\|^2
\lesssim \int_s^t ((\mathrm{I})_r+(\mathrm{II})_r)\d r$ where
\begin{align*}
(\mathrm{I})_r&=\E\Big\|(\gamma \Gamma_1^r+\D g(\theta_1^r))
(r_{\theta,1}^{r,s}-r_{\theta,2}^{r,s})
+\gamma R_f^r(r_{\theta,1}^{[r],s}-r_{\theta,2}^{[r],s})\Big\|^2\\
&\lesssim \E\|r_{\theta,1}^{r,s}-r_{\theta,2}^{r,s}\|^2
+\int_s^r \Phi_{R_f}(r-r')\E\|r_{\theta,1}^{r',s}-r_{\theta,2}^{r',s}\|^2\d
r'\\
&\lesssim e^{2\lambda r} \sup_{t \in [s,T]}
e^{-2\lambda t}\E\|r_{\theta,1}^{t,s}-r_{\theta,2}^{t,s}\|^2,\\
(\mathrm{II})_r&=\E\Big\|\gamma(\Gamma^t_1 - \Gamma^t_2)r_{\theta,2}^{r,s}
+(\D g(\theta^r_1)-\D g(\theta^r_2))r_{\theta,2}^{r,s}
+ \gamma (R_{f,1}^r-R_{f,2}^r)(r_{\theta,2}^{[r],s})\Big\|^2\\
&\lesssim \|\Gamma_1^r-\Gamma_2^r\|_\op^2 \E\|r_{\theta,2}^{r,s}\|^2
+\E[\|\D g(\theta_1)-\D g(\theta_2)\|_\op^2\|r_{\theta,2}^{r,s}\|^2]
+\normop{R_{f,1}^r-R_{f,2}^r}^2\E\|r_{\theta,2}^{[r],s}\|_{L^4}^2\\
&\lesssim \|\Gamma_1^r-\Gamma_2^r\|_\op^2 \E\|r_{\theta,2}^{r,s}\|^2
+(\E\|\theta_1^r-\theta_2^r\|^4)^{1/2}(\E\|r_{\theta,2}^{r,s}\|^4)^{1/2}
+\normop{R_{f,1}^r-R_{f,2}^r}^2\E\|r_{\theta,2}^{[r],s}\|_{L^4}^2\\
&\lesssim e^{2\lambda r}\lbddst{X_1}{X_2}^2.
\end{align*}
Here, $\|r_\theta^{[r],s}\|_{L^4}$ denotes
$(\int_0^r \|r_\theta^{r',s}\|^4 \d r')^{1/4}$,
and the last step using again \Cref{lem:exist-unique-theta} 
to bound $\E\|r_{\theta,2}^{[r],s}\|_{L^4}^4 \lesssim 1$, together with the
bound $\sup_{t \in [0,T]} e^{-2\lambda t}(\E\|\theta_1^t-\theta_2^t\|^4)^{1/2}
\lesssim \lbddst{X_1}{X_2}^2$ implied by \eqref{eq:theta4diffbound}.
Multiplying by $e^{-\lambda t}$ and taking the supremum over $0 \leq s \leq t
\leq T$,
\begin{align*}
&\sup_{0 \leq s \leq t \leq T} e^{-\lambda t}
\E\|r_{\theta,1}^{t,s}-r_{\theta,2}^{t,s}\|^2\\
&\lesssim \left(\sup_{0 \leq s \leq t \leq T} e^{-\lambda t}
\E\|r_{\theta,1}^{t,s}-r_{\theta,2}^{t,s}\|^2
+\lbddst{X_1}{X_2}^2\right)
\sup_{0 \leq s \leq t \leq T} e^{-\lambda t} \int_s^t e^{\lambda r}\d r\\
&\lesssim \frac{1}{\lambda}\left(\sup_{0 \leq s \leq t \leq T} e^{-\lambda t}
\E\|r_{\theta,1}^{t,s}-r_{\theta,2}^{t,s}\|^2
+\lbddst{X_1}{X_2}^2\right),
\end{align*}
and rearranging shows \eqref{eq:contractRtheta}.
\end{proof}

We next derive the contraction properties for the transformation $\trsfrmB$.

\begin{lem} \label{lem:contract-xi}
Fix any $T,C_0>0$.
Let $(\Omega,\cF,\{\cF\}_{t \in [0,T]},\P)$ be the filtered probability
space of Theorem \ref{thm:uniqueness-existence}, and
let $\{z^t\}_{t \in [0,T]}$ be either the Poisson process \eqref{eq:poisson-z}
or Gaussian process \eqref{eq:gaussian-z}.

For $i=1,2$, suppose $Y_i=(C_{\theta,i},R_{\theta,i}) \in
{\mathcal{S}}_\theta(T,C_0)$, and let $\{w_1^t,w_2^t\}_{t \in [0,T] \cup \{*\}}$
be any coupling of $w_1 \sim \GP(0,C_{\theta,1})$
and $w_2 \sim \GP(0,C_{\theta,2})$. Define $\xi_i^t$, 
$r_{f,i}^{t,*}$, and $r_{f,i}^t(x^{[t]})$,
via \eqref{eq:def-cont-xi}, \eqref{eq:def-cont-deriv-xi-star}, and
\eqref{eq:def-cont-resp-xi} using
$Y_i,\{w_i^t\}_{t \in [0,T]}$.
Then there exists a constant $C \equiv C(T,C_0)>0$ such that for all $t \in
[0,T]$ and any process $x \in L^4([0,T],\R^k)$,
\begin{align}
\E\left\|\xi^t_1 - \xi^t_2\right\|^4 &\leq C
\left(\sup_{r \in [0,t]} \|R_{\theta,1}^{t,r}-R_{\theta,2}^{t,r}\|^4
+\E\|w_1^t-w_2^t\|^4\right),\label{eq:contract-xi}\\
\E\|r_{f,1}^{t}(x^{[t]})-r_{f,2}^{t}(x^{[t]})\|^2 &\leq C
\left(\sup_{r \in [0,t]} \|R_{\theta,1}^{t,r}-R_{\theta,2}^{t,r}\|^2
+(\E\|w_1^t-w_2^t\|^4)^{1/2}
+\Big(\int_0^t \E\|w_1^r-w_2^r\|^4 \d r\Big)^{1/2}\right)\notag\\
&\hspace{1in}\times \left(\int_0^t \|x^r\|^4 \d r\right)^{1/2},\label{eq:contract-rfxi}\\
\E\norm{r_{f,1}^{t,*}-r_{f,2}^{t,*}}^2
&\leq C\left(\sup_{r \in [0,t]} \|R_{\theta,1}^{t,r}-R_{\theta,2}^{t,r}\|^2
+(\E\|w_1^t-w_2^t\|^4)^{1/2}\right).\label{eq:contract-rfxi-star}
\end{align}
\end{lem}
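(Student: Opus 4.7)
The plan is to prove \eqref{eq:contract-xi}, \eqref{eq:contract-rfxi-star}, and \eqref{eq:contract-rfxi} in sequence, mirroring the strategy of \Cref{lem:contract-Ct}. For each bound, I would subtract the defining equations \eqref{eq:def-cont-xi}, \eqref{eq:def-cont-deriv-xi-star}, and \eqref{eq:def-cont-resp-xi} for the two instances $i=1,2$, then decompose the resulting difference into a ``Lipschitz stability'' piece (in which the difference of the unknown appears inside the integrand against $R_{\theta,1}$), a ``kernel perturbation'' piece driven by $R_{\theta,1}-R_{\theta,2}$ with bounded integrands, and, for the two response processes, an additional outer ``coefficient difference'' piece of the form $\D_\xi f(\xi_1^t,w^*,\eps)-\D_\xi f(\xi_2^t,w^*,\eps)$, which I would bound using Lipschitz continuity of $\D_\xi f$ together with the $\xi$ estimate from the previous step. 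A Burkholder--Davis--Gundy type inequality for integrals against $\d z^r$, valid in both the Poisson \eqref{eq:poisson-z} and Gaussian \eqref{eq:gaussian-z} settings (\Cref{lem:burkholder}), followed by Gronwall's inequality in $t$, would then close each bound. Throughout I rely on the uniform bounds $\|R_{\theta,i}^{t,r}\|^2 \leq \pRt(T)$ and $\E\|r_{f,i}^{t,*}\|^2 \leq \pRfs(T)$ from \Cref{def:S} and \Cref{lem:solution-in-space-2}, together with the uniform boundedness and Lipschitz continuity (all by $M_f$) of $f$, $\D_\xi f$, and $\D_{w^*} f$ from \Cref{asp:lipschitz}.

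Concretely, for \eqref{eq:contract-xi} I would write $\xi_1^t-\xi_2^t=(w_1^t-w_2^t)+A^t+B^t$ with
\[A^t=-\int_0^t \frac{\bar\eta^r}{\bar\kappa}\,R_{\theta,1}^{t,r}\bigl[f(\xi_1^{r-},w^*,\eps)-f(\xi_2^{r-},w^*,\eps)\bigr]\d z^r,\]
\[B^t=-\int_0^t \frac{\bar\eta^r}{\bar\kappa}\bigl[R_{\theta,1}^{t,r}-R_{\theta,2}^{t,r}\bigr] f(\xi_2^{r-},w^*,\eps)\d z^r.\]
BDG combined with Cauchy--Schwarz and Lipschitz continuity of $f$ would yield $\E\|A^t\|^4 \lesssim \int_0^t \E\|\xi_1^r-\xi_2^r\|^4\d r$, while $\|f\|\leq M_f$ gives $\E\|B^t\|^4 \lesssim \sup_{r\in[0,t]}\|R_{\theta,1}^{t,r}-R_{\theta,2}^{t,r}\|^4$; Gronwall in $t$ then produces \eqref{eq:contract-xi}. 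The same scheme, applied to \eqref{eq:def-cont-deriv-xi-star} and \eqref{eq:def-cont-resp-xi}, would prove \eqref{eq:contract-rfxi-star} and \eqref{eq:contract-rfxi}, with the outer coefficient difference absorbed via Lipschitz continuity of $\D_\xi f,\D_{w^*} f$ and the already-established \eqref{eq:contract-xi}. For \eqref{eq:contract-rfxi}, a further contribution from the term $\D_\xi f(\xi_i^{r-},w^*,\eps)\,x^r$ inside the integrand of \eqref{eq:def-cont-resp-xi} produces the $(\int_0^t \|x^r\|^4 \d r)^{1/2}$ factor on the right-hand side via H\"older's inequality.

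The main technical obstacle is the bookkeeping for the outer coefficient differences in \eqref{eq:contract-rfxi-star} and \eqref{eq:contract-rfxi}. Because the factor $\D_\xi f(\xi_1^t,w^*,\eps)-\D_\xi f(\xi_2^t,w^*,\eps)$ multiplies a stochastic integral that is itself of order one in $L^4$, a naive Cauchy--Schwarz would produce $(\E\|\xi_1^t-\xi_2^t\|^4)^{1/2}$ rather than an $L^2$ norm of the $\xi$ difference, which is precisely why \eqref{eq:contract-xi} is stated in $L^4$ while the two response bounds \eqref{eq:contract-rfxi-star} and \eqref{eq:contract-rfxi} are stated in $L^2$. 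Matching these exponents, and simultaneously handling the self-referential structure of \eqref{eq:def-cont-resp-xi}, in which $r_f^{r-}(x^{[r]})$ reappears inside the very stochastic integral that defines $r_f^t(x^{[t]})$, is the delicate point, to be resolved by applying Gronwall to $t\mapsto \E\|r_{f,1}^t(x^{[t]})-r_{f,2}^t(x^{[t]})\|^2$ for each fixed deterministic test process $x$.
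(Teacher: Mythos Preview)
Your proposal is correct and follows essentially the same approach as the paper: the paper also decomposes $\xi_1^t-\xi_2^t$ into the $w$-difference plus a Lipschitz-stability piece and a kernel-perturbation piece, applies the BDG-type inequality of \Cref{lem:burkholder} and Gr\"onwall to close \eqref{eq:contract-xi}, and then handles the two response processes by the analogous three-term decomposition (inner Lipschitz/kernel pieces plus the outer $\D_\xi f(\xi_1^t,\cdot)-\D_\xi f(\xi_2^t,\cdot)$ factor), using Cauchy--Schwarz to split the outer factor at exactly the $L^4/L^2$ balance you identify. The only ingredient you might make more explicit is the auxiliary fourth-moment bound $\E\|r_{f,2}^{t}(x^{[t]})\|^4 \lesssim \int_0^t \|x^r\|^4\,\d r$ (obtained by a separate BDG-plus-Gr\"onwall pass on \eqref{eq:def-cont-resp-xi}), which the paper invokes when Cauchy--Schwarzing the outer coefficient term in \eqref{eq:contract-rfxi}.
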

\begin{proof}
Applying the form of $\xi_i^t$, Lemma \ref{lem:burkholder}, and
the boundedness and Lipschitz properties of $f$,
\begin{align*}
\E\left\|\xi^t_1 - \xi^t_2\right\|^4 &\lesssim \E\left\|\int_0^t
\frac{\bar\eta^r}{\bar\kappa}
\prn{R_{\theta,1}^{t,r}f(\xi_1^{r-},w^*,\eps)-R_{\theta,2}^{t,r}f(\xi_2^{r-},w^*,\eps)}\d z^r\right\|^4 + \E\left\|w^t_1 - w^t_2\right\|^4\nonumber\\
  &\lesssim \int_0^t
\E\left\|R_{\theta,1}^{t,r}f(\xi_1^{r},w^*,\eps)-R_{\theta,2}^{t,r}f(\xi_2^r,w^*,\eps)\right\|^4 \d r+\E\left\|w^t_1 - w^t_2\right\|^4\\
&\lesssim \sup_{r \in [0,t]} \|R_{\theta,1}^{t,r}-R_{\theta,2}^{t,r}\|^4
+\int_0^t \E\|\xi_1^r-\xi_2^r\|^4\d r+\E\left\|w^t_1 - w^t_2\right\|^4.
\end{align*}
Then Gr\"onwall's inequality shows \eqref{eq:contract-xi}.

For \eqref{eq:contract-rfxi}, applying the form of $r_{f,i}^t(x^{[t]})$
and the Lipschitz property of $f$, we have
$\E\|r_{f,1}^{t}(x^{[t]}) - r_{f,2}^{t}(x^{[t]})\|^2 \lesssim \nI+\nII+\nIII$,
where
\begin{align*}
\nI&=\E\left\|\int_{0}^{t}
\frac{\bar\eta^r}{\bar\kappa}
  \prn{R_{\theta,1}^{t,r}(r_{f,1}^{r-}(x^{[r]})-r_{f,2}^{r-}(x^{[r]}))+\prn{R_{\theta,1}^{t,r}-R_{\theta,2}^{t,r}}r_{f,2}^{r-}(x^{[r]})}
\d z^r\right\|^2\nonumber\\
\nII&=\E\left\|\int_{0}^{t}
\frac{\bar\eta^r}{\bar\kappa} \prn{R_{\theta,1}^{t,r}(\D_\xi f(\xi^{r-}_1,w^*,\eps)-\D_\xi f(\xi^{r-}_2,w^*,\eps))
+ \prn{R_{\theta,1}^{t,r}-R_{\theta,2}^{t,r}}\D_\xi f(\xi^{r-}_2,w^*,\eps)}
x^{r-} \d z^r\right\|^2\nonumber\\
\nIII&=\E\left\|\prn{\D_\xi f(\xi^t_1,w^*,\eps) - \D_\xi f(\xi^t_2,w^*,\eps)}
\int_{0}^{t}
  \frac{\bar\eta^r}{\bar\kappa} R_{\theta,2}^{t,r}\bigl(r_{f,2}^{r-}(x^{[r]})+\D_\xi f(\xi^{r-}_2,w^*,\eps)x^{r-}\bigr) \d z^r\right\|^2.
\end{align*}
Using \eqref{eq:basicdzrbound} and \eqref{eq:Erfsquaredbound},
the first term can be bounded as
\begin{align*}
\nI &\lesssim
\int_0^t\prn{\pRt(t-r)^2\E\big\|r_{f,1}^{r}(x^{[r]})-r_{f,2}^{r}(x^{[r]})\big\|^2
+ \norm{R_{\theta,1}^{t,r}-R_{\theta,2}^{t,r}}^2\E\norm{r_{f,2}^{r}(x^{[r]})}^2 }\d r\nonumber\\
&\lesssim \int_0^t \E\big\|r_{f,1}^{r}(x^{[r]})-r_{f,2}^{r}(x^{[r]})\big\|^2\d r
+\int_0^t \norm{R_{\theta,1}^{t,r}-R_{\theta,2}^{t,r}}^2 \int_0^r
\pRf(r-r')^2\norm{x^{r'}}^2\d r'\d r\nonumber\\
&\lesssim \int_0^t \E\big\|r_{f,1}^{r}(x^{[r]})-r_{f,2}^{r}(x^{[r]})\big\|^2\d
r+\sup_{r\in[0,t]}\norm{R_{\theta,1}^{t,r}-R_{\theta,2}^{t,r}}^2\Big(\int_0^t
\norm{x^r}^4\d r\Big)^{1/2}.
\end{align*}
Similarly, the second term is bounded as
\begin{align*}
\nII &\lesssim \int_0^t\E\norm{\xi_1^{r}-\xi_2^{r}}^2 \cdot \norm{x^r}^2\d r +
\sup_{r\in[0,t]}\norm{R_{\theta,1}^{t,r}-R_{\theta,2}^{t,r}}^2\int_0^t
\norm{x^r}^2\d r\\
&\lesssim 
\Big(\int_0^t \E\norm{\xi_1^{r}-\xi_2^{r}}^4 \d r\Big)^{1/2}
\Big(\int_0^t \norm{x^r}^4\d r\Big)^{1/2} +
\sup_{r\in[0,t]}\norm{R_{\theta,1}^{t,r}-R_{\theta,2}^{t,r}}^2
\Big(\int_0^t \norm{x^r}^4\d r\Big)^{1/2}
\end{align*}
The third term is bounded as
\begin{align*}
\nIII &\lesssim \E\brk{\norm{\xi_1^{t} - \xi_2^{t}}^2 \cdot \left\|
\int_{0}^{t} \frac{\bar\eta^r}{\bar\kappa} R_{\theta,2}^{t,r}\bigl(r_{f,2}^{r-}(x^{[r]})+\D_\xi
f(\xi^{r-}_2,w^*,\eps)x^{r-}\bigr)
\d z^r\right\|^2}\nonumber\\
&\leq (\E \norm{\xi_1^{t} - \xi_2^{t}}^4)^{1/2}\left(\E\left\|
\int_{0}^{t} \frac{\bar\eta^r}{\bar\kappa} R_{\theta,2}^{t,r}\bigl(r_{f,2}^{r-}(x^{[r]})+\D_\xi
f(\xi^{r-}_2,w^*,\eps)x^{r-}\bigr)
\d z^r\right\|^4\right)^{1/2}.
\end{align*}
Applying Lemma \ref{lem:burkholder} to bound the second factor, we have
\[\nIII \lesssim
(\E \norm{\xi_1^{t} - \xi_2^{t}}^4)^{1/2}\cdot \left(\int_0^t
\prn{\E\norm{r_{f,2}^{r}(x^{[r]})}^4 + \norm{x^{r}}^4} \d
r\right)^{1/2}.\]
Applying Lemma \ref{lem:burkholder} again to the definition of
$r_f^t(\cdot)$ shows
\[\E\norm{r_f^{t}(x^{[t]})}^4 \lesssim \int_0^t
\left(\E\norm{r_f^{r}(x^{[r]})}^4+\norm{x^r}^4\right)\d r,\]
so Gr\"onwall's inequality implies
$\E\norm{r_f^{t}(x^{[t]})}^4 \lesssim \int_{0}^{t}\norm{x^r}^4\d r$.
Therefore
\begin{align*}
\nIII &\leq  (\E\norm{\xi_1^{t} - \xi_2^{t}}^4)^{1/2}\left(\int_0^t
\norm{x^{r}}^4 \d r\right)^{1/2}.
\end{align*}
Combining the above bounds, we have
\begin{align*}
&\E\big\|r_{f,1}^{t}(x^{[t]}) - r_{f,2}^{t}(x^{[t]})\big\|^2
\lesssim
\int_0^t \E\big\|r_{f,1}^{r}(x^{[r]}) - r_{f,2}^{r}(x^{[r]})\|^2\d r \\
&\hspace{0.2in}+\left(\sup_{r\in[0,t]}\norm{R_{\theta,1}^{t,r}-R_{\theta,2}^{t,r}}^2+(\E\norm{\xi_1^t-\xi_2^t}^4)^{1/2}
+\Big(\int_0^t \E\norm{\xi_1^r-\xi_2^r}^4\d r\Big)^{1/2}\right)
\left(\int_{0}^{t}\norm{x^{r}}^{4}\d r \right)^{1/2}.
\end{align*}
Applying again Gr\"onwall's inequality and \eqref{eq:contract-xi} to bound
$\E\norm{\xi_1^r-\xi_2^r}^4$, we obtain \eqref{eq:contract-rfxi}.

Finally, from the form of $r_{f,i}^{t,*}$, we have
$\E\norm{r_{f,1}^{t,*}-r_{f,2}^{t,*}}^2 \lesssim \nI+\nII+\nIII$,
where
\begin{align*}
\nI&=\E\left\|\int_0^t \frac{\bar\eta^r}{\bar\kappa}
\prn{R_{\theta,1}^{t,r}r_{f,1}^{r-,*}-R_{\theta,2}^{t,r}r_{f,2}^{r-,*}}\d z^r\right\|^2, \\
\nII&= \E\left\|\Big(\D_\xi f(\xi_1^t,w^*,\eps)-\D_\xi f(\xi_2^t,w^*,\eps)\Big)
\int_0^t \frac{\bar\eta^r}{\bar\kappa}
R_{\theta,1}^{t,r}r_{f,1}^{r-,*}\d z^r\right\|^2,\\
\nIII&=\E\norm{\D_{w^*} f(\xi_1^t,w^*,\eps)-\D_{w^*} f(\xi_2^t,w^*,\eps)}^2.
\end{align*}
By similar arguments as above, we may bound each term to obtain
\[\E\norm{r_{f,1}^{t,*}-r_{f,2}^{t,*}}^2
\lesssim \int_0^t \E\norm{r_{f,1}^{r,*}-r_{f,2}^{r,*}}^2 \d r
+\sup_{r \in [0,t]}\|R_{\theta,1}^{t,r}-R_{\theta,2}^{t,r}\|^2
+(\E\|\xi_1^t-\xi_2^t\|^4)^{1/2}.\]
Applying Gr\"onwall's inequality and \eqref{eq:contract-xi} to bound
$\E\norm{\xi_1^t-\xi_2^t}^4$ shows \eqref{eq:contract-rfxi-star}.
\end{proof}

\begin{lem}\label{lem:contract-fsys}
Fix any $T,C_0>0$. For $i=1,2$, suppose
$Y_i=(C_{\theta,i},R_{\theta,i}) \in {\mathcal{S}}_\theta(T,C_0)$, and
let $(C_{f,i},R_{f,i},R_{f,i}^*,\Gamma_i)=\trsfrmB(Y_i)$. Then there exist
constants $C \equiv C(T,C_0)>0$ and $\lambda_0 \equiv \lambda_0(T,C_0)>0$
such that for any $\lambda>\lambda_0$,
\[\begin{gathered}
\lbddst{C_{f,1}}{C_{f,2}} \leq \frac{C}{\sqrt{\lambda}}\,\lbddst{Y_1}{Y_2}, \\
\lbddst{R_{f,1}}{R_{f,2}},\lbddst{R_{f,1}^*}{R_{f,2}^*},
\lbddst{\Gamma_1}{\Gamma_2} \leq C\,\lbddst{Y_1}{Y_2}.
\end{gathered}\]
\end{lem}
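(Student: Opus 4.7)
The plan is to fix a nearly optimal coupling of the Gaussian processes $w_1 \sim \GP(0,C_{\theta,1})$ and $w_2 \sim \GP(0,C_{\theta,2})$, use it to define the dependent processes $\{\xi_i^t\}, \{r_{f,i}^{t,*}\}, \{r_{f,i}^t(x^{[t]})\}$ via \eqref{eq:def-cont-xi}--\eqref{eq:def-cont-resp-xi}, and then push the contraction bounds of Lemma \ref{lem:contract-xi} through the four defining expectations \eqref{eq:def-cont-C-h}--\eqref{eq:def-cont-Gamma}. Concretely, choose a coupling of $w_1,w_2$ achieving, up to a factor $2$, the infimum in $\lbddst{C_{\theta,1}}{C_{\theta,2}}$, so that $\sup_{t \in [0,T]} e^{-\lambda t}(\E\|w_1^t-w_2^t\|^4)^{1/4} \leq 2\,\lbddst{C_{\theta,1}}{C_{\theta,2}} \leq 2\,\lbddst{Y_1}{Y_2}$. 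Combined with the bound $\sup_{r \leq t} \|R_{\theta,1}^{t,r}-R_{\theta,2}^{t,r}\| \leq e^{\lambda t}\,\lbddst{R_{\theta,1}}{R_{\theta,2}} \leq e^{\lambda t}\,\lbddst{Y_1}{Y_2}$, Lemma \ref{lem:contract-xi} then yields $(\E\|\xi_1^t-\xi_2^t\|^4)^{1/4} \lesssim e^{\lambda t}\,\lbddst{Y_1}{Y_2}$, and similar control on $r_{f,i}^{t,*}$ and $r_{f,i}^t(x^{[t]})$.

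For the bounds on $\Gamma$, $R_f^*$, and $R_f$, I will apply Jensen's inequality $\|\E X\|^2 \leq \E\|X\|^2$ together with uniform Lipschitzness of $\D_\xi f$ and $f$. For $\Gamma$, this gives $\|\Gamma_1^t-\Gamma_2^t\| \leq M_f\,\E\|\xi_1^t-\xi_2^t\| \leq M_f(\E\|\xi_1^t-\xi_2^t\|^4)^{1/4}$, and multiplying by $e^{-\lambda t}$ and taking the sup delivers the claim. For $R_f^{t,*}$, $\|R_{f,1}^{t,*}-R_{f,2}^{t,*}\|^2 \leq \E\|r_{f,1}^{t,*}-r_{f,2}^{t,*}\|^2$ and \eqref{eq:contract-rfxi-star} produce the bound directly. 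For $R_f$, I bound $\|R_{f,1}^t(x^{[t]})-R_{f,2}^t(x^{[t]})\|^2 \leq \E\|r_{f,1}^t(x^{[t]})-r_{f,2}^t(x^{[t]})\|^2$, apply \eqref{eq:contract-rfxi}, and then note that the right-hand side factor $(\int_0^t \|x^r\|^4\d r)^{1/2}$ is bounded by $\|x\|_{L^4}^2 \leq 1$; taking the sup over $\|x\|_{L^4}\leq 1$ and then $e^{-\lambda t}$ over $t$ yields $\normop{R_{f,1}^t-R_{f,2}^t} \leq C e^{\lambda t}\,\lbddst{Y_1}{Y_2}$.

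The main obstacle, and the step that produces the improved $C/\sqrt{\lambda}$ factor, is the bound on $\lbddst{C_{f,1}}{C_{f,2}}$. Here the natural coupling is to set $(u_1^t,u_2^t)$ to be the centered jointly Gaussian process with covariance matching that of the stochastic integrals $I_i^t := \int_0^t (\bar\eta^r/\bar\kappa)f(\xi_i^{r-},w^*,\eps)\d z^r$ built from the coupled $(\xi_1,\xi_2)$ above; the marginals are $\GP(0,C_{f,i})$ by definition \eqref{eq:def-cont-C-h}. Then $\E\|u_1^t-u_2^t\|^2 = \Tr\operatorname{Cov}(u_1^t-u_2^t) = \E\|I_1^t-I_2^t\|^2$, and by \eqref{eq:basicdzrbound} and Lipschitzness of $f$,
\[
\E\|I_1^t-I_2^t\|^2 \lesssim \int_0^t \E\|\xi_1^r-\xi_2^r\|^2 \d r \lesssim \lbddst{Y_1}{Y_2}^2 \int_0^t e^{2\lambda r}\d r \lesssim \frac{e^{2\lambda t}}{\lambda}\,\lbddst{Y_1}{Y_2}^2.
\]
Finally, using that $u_1^t-u_2^t$ is Gaussian in $\R^k$ so that fourth and second moments are equivalent, $(\E\|u_1^t-u_2^t\|^4)^{1/4} \lesssim (\E\|u_1^t-u_2^t\|^2)^{1/2} \lesssim (e^{\lambda t}/\sqrt{\lambda})\lbddst{Y_1}{Y_2}$, and multiplying by $e^{-\lambda t}$ and taking the sup yields $\lbddst{C_{f,1}}{C_{f,2}} \leq (C/\sqrt{\lambda})\,\lbddst{Y_1}{Y_2}$. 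The essential mechanism is that the time-integrated smoothing of the Gaussian isometry is what gives the extra $1/\sqrt{\lambda}$ gain over the direct pointwise bounds obtained for $\Gamma$, $R_f^*$, and $R_f$, and this gain will be needed later to make the composition $\trsfrm = \trsfrmA \circ \trsfrmB$ a strict contraction.
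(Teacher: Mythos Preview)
Your proposal is correct and follows essentially the same approach as the paper's proof: the same near-optimal coupling of $(w_1,w_2)$, the same invocation of Lemma~\ref{lem:contract-xi} to control $\xi_i$, $r_{f,i}^{t,*}$, and $r_{f,i}^t(x^{[t]})$, and the same Gaussian coupling of $(u_1,u_2)$ via the joint covariance of the stochastic integrals $I_i^t$ together with the second-to-fourth moment equivalence to extract the $1/\sqrt{\lambda}$ gain for $C_f$. The treatments of $\Gamma$, $R_f^*$, and $R_f$ via Jensen and the bounds \eqref{eq:contract-rfxi}--\eqref{eq:contract-rfxi-star} also match the paper exactly.
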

\begin{proof}
Let $\{w^t_1,w^t_2\}_{t \in [0,T] \cup \{*\}}$ be a coupling
of $\GP(0,C_{\theta,1})$ and $\GP(0,C_{\theta,2})$ that satisfies
\begin{equation}\label{eq:wcoupling}
\sup_{t \in [0, T]} e^{-\lambda t} (\mathbb{E}\norm{w^t_1 -
w^t_2}^4)^{1/4}  \leq 2\,\lbddst{C_{\theta,1}}{C_{\theta,2}}.
\end{equation}
Then by \eqref{eq:contract-xi} and the definition of $\lbddst{\cdot}{\cdot}$,
\[e^{-4\lambda t}
\E\|\xi_1^t-\xi_2^t\|^4 \lesssim 
\lbddst{R_{\theta,1}}{R_{\theta,2}}^4+\lbddst{C_{\theta,1}}{C_{\theta,2}}^4
\lesssim \lbddst{Y_1}{Y_2}^4.\]
For $i=1,2$, define
\[h_i^t=\int_0^t \frac{\bar\eta^r}{\bar\kappa}f(\xi_i^{r-},w^*,\eps) \d z^r.\]
Then by the Lipschitz property of $f$ and \eqref{eq:basicdzrbound},
\[\E\|h_1^t-h_2^t\|^2
\lesssim \int_0^t \E\|\xi_1^r-\xi_2^r\|^2 \d r
\leq \sup_{r \in [0,t]}
e^{-2\lambda r}\,\E\|\xi_1^r-\xi_2^r\|^2
\cdot \int_0^t e^{2\lambda r} \d r
\lesssim \lbddst{Y_1}{Y_2}^2 \cdot \int_0^t e^{2\lambda r} \d r,\]
so
\[\sup_{t \in [0,T]} e^{-2\lambda t}
\E\|h_1^t-h_2^t\|^2 \lesssim \frac{1}{\lambda}\,\lbddst{Y_1}{Y_2}^2.\]
Letting $\{u_1^t,u_2^t\}_{t \in [0,T]}$ be a jointly Gaussian process with the
same covariance as $\{h_1^t,h_2^t\}_{t \in [0,T]}$, this gives a coupling of
$\GP(0,C_{f,1})$ and $\GP(0,C_{f,2})$, so
\begin{align*}
\lbddst{C_{f,1}}{C_{f,2}}^4
&\leq \sup_{t \in [0, T]} e^{-4\lambda t} 
\E {\norm{u^t_1 - u^t_2}^4}
\lesssim \sup_{t \in [0, T]} e^{-4\lambda t} 
(\E {\norm{u^t_1 - u^t_2}^2})^2\\
&=\sup_{t \in [0, T]} e^{-4\lambda t} 
(\E{\norm{h^t_1 - h^t_2}^2})^2
\lesssim \frac{1}{\lambda^2}\,\lbddst{Y_1}{Y_2}^4.
\end{align*}

For $R_f$, note that by \eqref{eq:contract-rfxi},
\begin{align*}
&\normop{R_{f,1}^t-R_{f,2}^t}^2
=\sup_{x:\|x\|_{L^4} \leq 1}
\|\E r_{f,1}^t(x^{[t]})-\E r_{f,2}^t(x^{[t]})\|^2\\
&\lesssim \sup_{r \in [0,t]}
\|R_{\theta,1}^{t,r}-R_{\theta,2}^{t,r}\|^2
+(\E\|w_1^t-w_2^t\|^4)^{1/2}
+\Big(\int_0^t \E\|w_1^r-w_2^r\|^4 \d r\Big)^{1/2}\\
&\lesssim \sup_{r \in [0,t]}
\|R_{\theta,1}^{t,r}-R_{\theta,2}^{t,r}\|^2
+\sup_{r \in [0,t]} 
(e^{-4\lambda r}\E \|w_1^r-w_2^r\|^4)^{1/2}
\left(e^{2\lambda t}+\left(\int_0^t e^{4\lambda r}\d r\right)^{1/2}\right).
\end{align*}
Multiplying by $e^{-2\lambda t}$ and applying \eqref{eq:wcoupling} shows
\[\lbddst{R_{f,1}}{R_{f,2}}^2
\lesssim
\lbddst{R_{\theta,1}}{R_{\theta,2}}^2+\lbddst{C_{\theta,1}}{C_{\theta,2}}^2
\lesssim \lbddst{Y_1}{Y_2}^2.\]
A similar argument using \eqref{eq:contract-rfxi-star} 
and $\|\Gamma_1^t-\Gamma_2^t\| \lesssim \E\|\xi_1^t-\xi_2^t\|$ shows
$\lbddst{R_{f,1}^*}{R_{f,2}^*} \lesssim \lbddst{Y_1}{Y_2}$
and $\lbddst{\Gamma_1}{\Gamma_2} \lesssim \lbddst{Y_1}{Y_2}$.
\end{proof}

We now conclude the proof of \Cref{thm:uniqueness-existence}.

\begin{proof}[Proof of \Cref{thm:uniqueness-existence}(b)]
Fix $T>0$, and any constant $C_0>0$ large enough.
By Lemmas \ref{lem:solution-in-space} and \ref{lem:solution-in-space-2},
\[\trsfrm:=\trsfrmA \circ \trsfrmB\]
defines a mapping from $\cS_\theta^\text{cont} \equiv
\cS_\theta^\text{cont}(T,C_0)$ to itself. For $i=1,2$, consider any
$Y_i=(C_{\theta,i},R_{\theta,i}) \in {\mathcal{S}}_\theta^\text{cont}$,
let $X_i=(C_{f,i},R_{f,i},R_{f,i}^*,\Gamma_i)=\trsfrmB(Y_i)$.
Then by Lemmas \ref{lem:contract-Ct} and \ref{lem:contract-fsys},
for any large enough $\lambda>0$,
\begin{align}\label{eq:T-contraction}
    &\lbddst{\trsfrm(Y_1)}{\trsfrm(Y_2)}
\lesssim 
\lbddst{C_{f,1}}{C_{f,2}}+\frac{1}{\lambda}\lbddst{X_1}{X_2}
\lesssim \frac{1}{\sqrt{\lambda}}\lbddst{Y_1}{Y_2}.
\end{align}
Thus for large enough $\lambda>0$, we have
$\lbddst{\trsfrm(Y_1)}{\trsfrm(Y_2)} \leq \frac{1}{2}\,\lbddst{Y_1}{Y_2}$,
implying that $\trsfrm:\cS_\theta^\text{cont} \to \cS_\theta^\text{cont}$
is contractive in the metric $\lbddst{\cdot}{\cdot}$.

It is clear that $\lbddst{R_{\theta,1}}{R_{\theta,2}}$ defines a complete metric
on the space of Borel-measurable $\R^{k \times k}$-valued
processes $\{R_\theta^{t,s}\}_{t,s \in [0,T]}$, and the conditions in
Definition \ref{def:S} for
$\{R_\theta^{t,s}\}_{t,s \in [0,T]}$ to belong to $\cS_\theta^{\text{cont}}$ are
closed under $\lbddst{R_{\theta,1}}{R_{\theta,2}}$. For the metric
$\lbddst{C_{\theta,1}}{C_{\theta,2}}$, define also
\[\|C_{\theta,1}-C_{\theta,2}\|_\infty
=\sup_{t,s \in [0,T] \cup \{*\}} |C_{\theta,1}^{t,s}-C_{\theta,2}^{t,s}|.\]
We have by Cauchy-Schwarz that
for any $C_{\theta,1},C_{\theta,2}$ satisfying the conditions of
$\cS_\theta^{\text{cont}}$,
\begin{align*}
\|C_{\theta,1}-C_{\theta,2}\|_\infty &\leq
\inf_{w_1,w_2 \sim \gamma \in \Gamma(C_{\theta,1},C_{\theta,2})}
\sqrt{\E \brk{\norm{w_1^t}^2} \cdot \E \brk{\norm{w_1^s - w_2^s}^2}} + \sqrt{\E
\brk{\norm{w_2^s}^2} \cdot \E \brk{\norm{w_1^t - w_2^t}^2}} \nonumber \\
	& \leq 2\sqrt{\pCt(T)} e^{\lambda T} \cdot
\lbddst{R_{\theta,1}}{R_{\theta,2}}.
\end{align*}
Conversely, by the condition \eqref{eq:Cthetacontinuity} and
\cite[Lemma D.1]{fan2025dynamical2}, there exists a jointly
Gaussian coupling $(w_1,w_2) \sim \gamma \in \Gamma(C_{\theta,1},C_{\theta,2})$
for which
\begin{align*}
\lbddst{C_{\theta,1}}{C_{\theta,2}}
&\leq \sup_{t \in [0,T]} e^{-\lambda t}(\E\|w_1^t-w_2^t\|^4)^{1/4}
\leq C\sup_{t \in [0,T]} e^{-\lambda t}(\E\|w_1^t-w_2^t\|^2)^{1/2}\\
&\leq C'\Big(\sqrt{\|C_{\theta,1}-C_{\theta,2}\|_\infty}
+\|C_{\theta,1}-C_{\theta,2}\|_\infty\Big)^{1/2},
\end{align*}
where $C,C'>0$ are some constants depending on $T,C_0,k$. Thus, the metrics
$\lbddst{C_{\theta,1}}{C_{\theta,2}}$ and
$\|C_{\theta,1}-C_{\theta,2}\|_\infty$ are uniformly equivalent over the space
of covariance kernels $\{C_\theta^{t,s}\}_{t,s \in [0,T] \cup \{*\}}$
satisfying the conditions of $\cS_\theta^{\text{cont}}$. Since this space is
complete under $\|C_{\theta,1}-C_{\theta,2}\|_\infty$, it is also complete under
$\lbddst{C_{\theta,1}}{C_{\theta,2}}$.

Then $\lbddst{\cdot}{\cdot}$ defines a complete metric on
$\cS_\theta^{\text{cont}}$, so by the Banach fixed-point theorem, there exists a
unique fixed point $Y \in \mathcal{S}^{\mathrm{cont}}_\theta$ such that
$\mathcal{T}(Y)=Y$. Since the image of $\mathcal{T}:\cS_\theta \to \cS_\theta$
is contained in $\cS_\theta^\text{cont}$, this fixed point is also unique in
$\cS_\theta$. Finally, this implies that for any
$C_0>0$ large enough, there is a unique fixed point
$(C_\theta,R_\theta,C_f,R_f,R_f^*,\Gamma)
\in \cS \equiv \cS(T,C_0)$ satisfying
(\ref{eq:def-cont-C-t}--\ref{eq:def-cont-Gamma}).
\end{proof}

\section{DMFT approximation in discrete time}

In this section, we introduce and analyze
a time-discretized version of the SGD and SME
processes: Fixing a discretization size $\delta>0$, let $\{\z_\delta^t\}_{t
\geq 0}$ be i.i.d.\ random vectors in $\R^n$ encapsulating the data sampling
noise, with law
\[\begin{cases}
\z_\delta^t=\bPi \bP_\delta^t & \text{for SGD},\\
\z_\delta^t \sim
\cN(\delta \bar\kappa1_n,\delta\bar\kappa\,\Id_{n\times n}) & \text{for SME}.
\end{cases}\]
In the SGD setting, $\bP_\delta^t \in \R^{{n\choose \kappa}}$ is a random vector
with i.i.d.\ $\Pois(\delta n^{1-\alpha} / {n\choose\kappa})$ entries,
and $\bPi\in\R^{n\times {n\choose \kappa}}$ is an incidence matrix having
entries
\[\Pi_{i,S}=\1_{i \in S} \text{ for all $i \in [n]$ and $S \subset \cS$}.\]
Note that marginally, each entry of $\bPi\bP_\delta^t$ has law
$\Pois(\binom{n-1}{\kappa-1} \cdot \delta n^{1-\alpha} /
{n\choose\kappa})=\Pois(\delta \kappa/n^\alpha)$,
where $\delta \kappa/n^\alpha \approx \delta\bar\kappa$
for large $n$ under Assumption \ref{asp:SGD}.

For both SGD and SME, we consider the discrete-time dynamics
\begin{align}
{\btheta}_\delta^{t+1}&={\btheta}_\delta^{t}
-\frac{\bar\eta^{t\delta}}{\bar\kappa}
\sum_{i=1}^n \x_i \otimes \Big(f(\x_i^\top \btheta_\delta^t,\x_i^\top
\btheta^*,\eps_i) \cdot z_{\delta,i}^t\Big)
-\delta\bar\eta^{t\delta} g(\btheta_\delta^t)\notag\\
&={\btheta}_\delta^{t}
-\frac{\bar\eta^{t\delta}}{\bar\kappa} \X^\top \big(f(\X \btheta_\delta^t,
\X\btheta^*,\beps) \odot \z_\delta^t\big)
- \delta\bar\eta^{t\delta} g({\btheta}_\delta^{t}).\label{eq:discretedynamics}
\end{align}
Here and throughout, $\X=[\x_1,\ldots,\x_n]^\top \in \R^{n \times d}$,
$f:\R^{n \times (k+k^*+1)} \to \R^{n \times k}$
and $g:\R^{d \times k} \to \R^{d \times k}$ denote the applications of
$f(\cdot)$ and $g(\cdot)$ row-wise, and $\odot$ denotes the row-wise
scalar product. We will write also
\[{\bxi}_\delta^{t}=\X {\btheta}_\delta^{t},
\qquad \bxi^*=\X \btheta^*.\]
We remark that these discrete-time dynamics \eqref{eq:discretedynamics}
may be understood as a variant of SGD/SME where the stochastic
gradient or drift/diffusion coefficients are only updated in (rescaled) time
increments of $\delta$, i.e.\ they are computed in \eqref{eq:SGDrescaled}
and \eqref{eq:SME} using $\btheta^{k\delta}$ instead of $\btheta^t$ if $t
\in [k\delta,(k+1)\delta)$.

We show in Section \ref{sec:discreteDMFT} that
these discrete-time dynamics are characterized by a system of discrete-time
DMFT equations as $n,d \to \infty$, and in Section
\ref{sec:deltaDMFTconvergence} that this discrete-time DMFT system converges
to the continuous-time system defined in Section \ref{sec:DMFTlimits} as
$\delta \to 0$. This proof strategy was introduced in \cite{celentano2021high}, and our
arguments will follow closely its implementation in \cite{fan2025dynamical1}.

\subsection{Discrete DMFT approximation via AMP}\label{sec:discreteDMFT}

Given the same variables $\theta^0,\theta^*,\eps$ as in
\eqref{eq:thetaeps}, the discrete DMFT system is defined by the primary
processes
\begin{align}
\theta^t_\delta&=
\theta^0-\sum_{r=0}^{t-1} \delta\bar\eta^{r\delta}\prn{{\gamma}\Gamma_\delta^r \theta^r_\delta+g(\theta^r_\delta)+\gamma\sum_{q=0}^{r-1}R_{f,\delta}^{r,q}
\theta^q_\delta
+ {\gamma} R_{f,\delta}^{r,*}\theta^*}+\sqrt{\gamma}u_\delta^t,
\label{eq:def-dis-theta}\\
\xi^t_\delta&={-}\sum_{r=0}^{t-1}\frac{\bar\eta^{r\delta}}{\bar\kappa}
R_{\theta,\delta}^{t,r}f(\xi_\delta^r,w^*,\eps)z^r_\delta+w^t_\delta,\label{eq:def-dis-xi}
\end{align}
and the response processes
\begin{align}
r_{\theta,\delta}^{t,s}&=\Id_k
-\sum_{r=s}^{t-1} \delta\bar\eta^{r\delta}\bigg[\left({\gamma}\Gamma_\delta^r+
\D g(\theta_\delta^r)\right)\cdot r_{\theta,\delta}^{r,s}
+{\gamma} \sum_{q=s+1}^{r-1}  R_{f,\delta}^{r,q}
r_{\theta, \delta}^{q,s}\bigg] \text{ for } s<t,\notag\\
&\hspace{1in} r_{\theta,\delta}^{t,s}=0 \text{ for } s \geq t,\label{eq:def-dis-deriv-t}\\
r_{f,\delta}^{t,*}&={-}\D_\xi f(\xi^t_\delta,w^*,\eps) \sum_{r=0}^{t-1}
\frac{\bar\eta^{r\delta}}{\bar\kappa}
R_{\theta,\delta}^{t,r}r_{f,\delta}^{r,*}z^r_\delta
+\D_{w^*} f(\xi^t_\delta,w^*,\eps),\label{eq:def-dis-deriv-xi-star}\\
    r_{f,\delta}^{t,s}&=-\D_\xi
f(\xi_\delta^{t},w^*,\eps)\prn{\sum_{r=s+1}^{t-1}\frac{\bar\eta^{r\delta}}{\bar\kappa}
R_{\theta,\delta}^{t,r} r_{f,\delta}^{r,s}z^r_\delta +
\frac{\bar\eta^{s\delta}}{\bar\kappa} R_{\theta,\delta}^{t,s}\cdot\D_\xi
f(\xi_\delta^{s},w^*,\eps)z^s_\delta} \text{ for } s<t,\notag\\
&\hspace{1in}r_{f,\delta}^{t,s}=0 \text{ for } s \geq t,\label{eq:def-dis-deriv-xi}
\end{align}
where
\[\{u_\delta^t\}_{t \geq 0} \sim \GP(0,C_{\theta,\delta}),
\qquad (\{w_\delta^t\}_{t \geq 0},w^*) \sim \GP(0,C_{f,\delta})\]
are discrete-time Gaussian processes,
\[\{z_\delta^t\}_{t \geq 0} \overset{iid}{\sim} \begin{cases}
\Pois(\delta\bar\kappa) & \text{ for SGD}, \\
\cN(\delta\bar\kappa,\delta\bar\kappa) & \text{ for SME},
\end{cases}\]
and these are independent of each other and of $\theta^0,\theta^*,\eps$.
The deterministic correlation and response kernels are defined as
expectations with respect to the above laws,
\begin{align}
C_{\theta,\delta}^{t,s}&=\E[\theta^t_\delta \otimes \theta^s_\delta] \text{ for } t,s \in [0,\infty)
\cup \{*\}\label{eq:def-dis-C-t}\\
C_{f,\delta}^{t,s}&=\E\brk{\sum_{r=0}^t \frac{\bar\eta^{r\delta}}{\bar\kappa}
f(\xi_\delta^r,w^*,\eps)z^r_\delta
\otimes \sum_{r=0}^s \frac{\bar\eta^{s\delta}}{\bar\kappa}
f(\xi_\delta^s,w^*,\eps)z^s_\delta}\label{eq:def-dis-C-h}\\
 R_{\theta,\delta}^{t,s}&=\E[ r_{\theta,\delta}^{t,s}]
\label{eq:def-dis-R-t}\\
R_{f,\delta}^{t,s}&=\E[r_{f,\delta}^{t,s}]\label{eq:def-dis-R-h}\\
R_{f,\delta}^{t,*}&=\E[ r_{f,\delta}^{t,*}]
\label{eq:def-dis-R-h-star}\\
\Gamma^t_\delta&=\E[\D_\xi f(\xi^t_\delta,w^*,\eps)]\label{eq:def-dis-Gamma}
\end{align}
We clarify that in contrast to the continuous-time setting, it is immediate to
see that these processes/kernels are uniquely defined recursively in time via
\[\begin{gathered}
\{\theta_\delta^t,r_{\theta,\delta}^{t,s}\}_{s,t=0}
\mapsto \{C_{\theta,\delta}^{t,s},R_{\theta,\delta}^{t,s}\}_{s,t=0}
\mapsto \{\xi_\delta^t,r_{f,\delta}^{t,s},r_{f,\delta}^{t,*}\}_{s,t=0}
\mapsto
\{C_{f,\delta}^{t,s},R_{f,\delta}^{t,s},R_{f,\delta}^{t,*},\Gamma_\delta^t\}_{s,t=0}
\mapsto\\
\{\theta_\delta^t,r_{\theta,\delta}^{t,s}\}_{s,t \leq 1}
\mapsto \{C_{\theta,\delta}^{t,s},R_{\theta,\delta}^{t,s}\}_{s,t \leq 1}
\mapsto \{\xi_\delta^t,r_{f,\delta}^{t,s},r_{f,\delta}^{t,*}\}_{s,t \leq 1}
\mapsto
\{C_{f,\delta}^{t,s},R_{f,\delta}^{t,s},R_{f,\delta}^{t,*},\Gamma_\delta^t\}_{s,t \leq 1} \mapsto \ldots
\end{gathered}\]

The main result of this section is the following lemma, which shows that
this discrete-time DMFT system approximates the dynamics
\eqref{eq:discretedynamics} in the large $n,d$ limit.

\begin{lem}\label{lem:finite_dim_converge}
Let $\{\theta_{\delta,j}^t,\theta_j^*\}$ be the rows of
$\btheta_\delta^t,\btheta^*$, and let $\{\xi_{\delta,i}^t,\xi_i^*,\eps_i\}$ be 
those of $\bxi_\delta^t,\bxi^*,\beps$.
Then for any fixed integer $T \geq 0$, almost surely as $n,d\rightarrow\infty$,
\begin{align*}
\frac{1}{d}\sum_{j=1}^d \delta_{(\theta^0_{\delta,j},\ldots,
\theta^T_{\delta,j},\theta^\ast_j)}
\Rightarrow \Law\big(\theta^0_\delta,\ldots, \theta^T_\delta,\theta^\ast\big),
\quad \frac{1}{n}\sum_{i=1}^n \delta_{(\xi^0_{\delta,i}, \ldots,
\xi^T_{\delta,i},\xi_i^*,\eps_i)}
\Rightarrow \Law\big(\xi^0_\delta,\ldots,\xi^T_\delta,w^*,\eps\big)
\end{align*}
weakly and in Wasserstein-2, where the right side denotes the joint law of these
variables defined via the discrete-time DMFT recursions
(\ref{eq:def-dis-theta}--\ref{eq:def-dis-Gamma}).
\end{lem}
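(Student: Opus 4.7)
The overall strategy is to reformulate the discrete-time dynamics \eqref{eq:discretedynamics} as a matrix Approximate Message Passing (AMP) iteration with long memory and auxiliary independent inputs, and then invoke a state evolution theorem following the template of \cite{celentano2021high,fan2025dynamical1}. The sampling-noise vectors $\{\z_\delta^t\}_{t \geq 0}$ are independent of $\X$ and enter \eqref{eq:discretedynamics} only through a row-wise product, so they may be treated as extra random inputs to a time-dependent nonlinearity $F_t(\bxi;\bxi^*,\beps,\z_\delta^t)=\frac{\bar\eta^{t\delta}}{\bar\kappa}\,f(\bxi,\bxi^*,\beps)\odot \z_\delta^t$. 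The update \eqref{eq:discretedynamics} then reads $\btheta_\delta^{t+1}=\btheta_\delta^t-\X^\top F_t(\X\btheta_\delta^t;\bxi^*,\beps,\z_\delta^t)-\delta\bar\eta^{t\delta}g(\btheta_\delta^t)$.

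First I would introduce an auxiliary iterate $\bxi_\delta^t=\X\btheta_\delta^t$ and rewrite \eqref{eq:discretedynamics} as a two-block long-memory AMP recursion, by adding and subtracting Onsager corrections that depend on past iterates through the empirical Jacobian averages $n^{-1}\sum_i \D_\xi F_t(\xi_{\delta,i}^s;\xi_i^*,\eps_i,\z_\delta^s)$ and $d^{-1}\sum_j \D g(\theta_{\delta,j}^s)$. The goal is to express the recursion in a canonical memory-AMP form whose nonlinearities apply row-wise to the concatenated histories $(\btheta_\delta^{[t]},\btheta^*)$ and $(\bxi_\delta^{[t]},\bxi^*,\beps,\z_\delta^{[t]})$, so that the general state-evolution result applies conditionally on the auxiliary noise $\{\z_\delta^s\}_{s \leq T}$.

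Second, I would apply the long-memory matrix-AMP state-evolution theorem (as formulated and proved in \cite{celentano2021high}, and adapted to general first-order methods in \cite{fan2025dynamical1,gerbelot2024rigorous}). Assumption \ref{asp:model}(b) provides independent isotropic entries for $\X$ with moments of every order, matching the hypotheses of that theorem. Proceeding by induction on $t$: given that the empirical distributions at steps $0,\ldots,t$ converge to the prescribed DMFT laws and that the empirical Jacobian averages concentrate on the deterministic kernels $\Gamma_\delta^s,R_{\theta,\delta}^{s,r},R_{f,\delta}^{s,r},R_{f,\delta}^{s,*}$ defined in (\ref{eq:def-dis-R-t}--\ref{eq:def-dis-Gamma}), the AMP state evolution produces at step $t+1$ jointly Gaussian row-level limits with covariance $C_{\theta,\delta},C_{f,\delta}$ driven by the independent $\{z_\delta^s\}$. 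Averaging over the law of $\{\z_\delta^s\}$ reproduces the Poisson/Gaussian distributions in (\ref{eq:def-dis-theta}--\ref{eq:def-dis-Gamma}), and pseudo-Lipschitz concentration of empirical Jacobians closes the induction. Uniform $L^p$ bounds on iterates, ensured by Assumption \ref{asp:lipschitz} and the boundedness of $f$, upgrade weak convergence to Wasserstein-$2$.

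The main obstacle is the bookkeeping needed to identify the long-memory Onsager corrections in the AMP recursion with the precise response kernels $R_{\theta,\delta}^{t,s}$ and $R_{f,\delta}^{t,s}$ defined in \eqref{eq:def-dis-deriv-t} and \eqref{eq:def-dis-deriv-xi}: these kernels themselves obey recursions involving the stochastic prefactor $\z_\delta^r$ (or $z_\delta^r$ in the limit) and the past response kernels, and one must verify that differentiating the AMP iteration with respect to past inputs produces exactly these recursions in the large-$n,d$ limit. A secondary technicality is the conditional application of state evolution in the presence of the independent noise $\{\z_\delta^t\}$: one needs uniform-in-$\z_\delta^t$ pseudo-Lipschitz bounds on $F_t$ and its derivatives, which follow from Assumption \ref{asp:lipschitz} together with standard concentration for Poisson and Gaussian variables.
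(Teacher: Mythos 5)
Your overall blueprint — reformulate the discretized dynamics as a memory-AMP iteration, invoke state evolution, and identify the Onsager corrections with the discrete DMFT response kernels — matches what the paper does. But there is a genuine gap at the point where you dispose of the sampling noise by saying the vectors $\{\z_\delta^t\}$ "may be treated as extra random inputs" to the nonlinearity and that the needed pseudo-Lipschitz control follows from "standard concentration for Poisson and Gaussian variables." In the SGD case $\z_\delta^t=\bPi\bP_\delta^t$, where $\bPi$ is the subset-incidence matrix and $\bP_\delta^t$ has i.i.d.\ Poisson entries indexed by size-$\kappa$ subsets, the coordinates $z^t_{\delta,i}$ are \emph{correlated} across $i$ whenever $\kappa\ge 2$ (each $P^t_{\delta,S}$ contributes to all $i\in S$). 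Any state-evolution theorem requires, as an input, convergence of the empirical row-distribution of the side information $\{(\eps_i,z^0_{\delta,i},\ldots,z^T_{\delta,i})\}_{i\le n}$ in Wasserstein-$p$ to the product law $\Law(\eps)\otimes\Pois(\delta\bar\kappa)^{\otimes(T+1)}$; this is not a consequence of entrywise concentration and is false without further argument. The paper proves it as a dedicated lemma (Lemma~\ref{lem:initialconditions}) using the Schudy--Sviridenko concentration inequality for polynomials of independent Poisson variables, exploiting that each coordinate of $\bP_\delta^t$ influences only $\kappa\asymp n^\alpha\ll n$ of the $z^t_{\delta,i}$. Without this (or an equivalent) verification, the induction you describe cannot get started.

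A second issue concerns which state-evolution theorem you may invoke. The references you cite (\cite{celentano2021high,fan2025dynamical1}) are stated for Gaussian design matrices, whereas Assumption~\ref{asp:model}(b) allows non-Gaussian entries with only bounded moments. The paper instead uses the universality result of \cite[Theorem~2.21]{wang2024universality}, and even then one must check that its pseudo-Lipschitz hypothesis can be relaxed to accommodate a \emph{random}, side-information-dependent Lipschitz constant: in the present setting the row-wise nonlinearity $(W^0,\ldots,W^t)\mapsto y_t(W^0,\ldots,W^t,\eps,z_\delta^0,\ldots,z_\delta^t)$ is only $\bigl(\prod_{s\le t}C\max(1,|z_\delta^s|)\bigr)$-Lipschitz, with a constant that is unbounded over the Poisson tail but has finite $(2+\iota)$-moment. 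The paper flags exactly this point in a footnote; your proposal waves at it but does not identify that the cited theorems would need to be re-examined (and Gaussian-specific ones replaced) to cover this case.
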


We proceed to prove Lemma \ref{lem:finite_dim_converge},
by re-expressing the dynamics \eqref{eq:discretedynamics} as an Approximate
Message Passing (AMP) procedure
for which rigorous state evolution results are available. 

\begin{lem}\label{lem:initialconditions}
Let $\{\theta_j^0,\theta_j^*\}$ and
$\{\eps_i,z_{\delta,i}^t\}$ denote the rows of
$\btheta^0,\btheta^*$ and $\beps,\z_\delta^t$.
Then for any fixed integer $T \geq 0$, almost surely as $n,d \to \infty$,
\[\frac{1}{d}\sum_{j=1}^d
\delta_{(\theta_j^0,\theta_j^*)} \Rightarrow \Law(\theta^0,\theta^*)
\qquad \frac{1}{n}\sum_{i=1}^n
\delta_{(\eps_i,z_{\delta,i}^0,\ldots,z_{\delta,i}^T)}
\Rightarrow \Law(\eps,z_\delta^0,\ldots,z_\delta^T)\]
weakly and in Wasserstein-$p$ for each fixed order $p \geq 1$.
\end{lem}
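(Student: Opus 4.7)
The first convergence $d^{-1}\sum_j \delta_{(\theta_j^0,\theta_j^*)} \Rightarrow \Law(\theta^0,\theta^*)$ weakly and in Wasserstein-$p$ is an immediate restatement of Assumption \ref{asp:model}(c), so the whole task is the second convergence. The plan is to combine the hypothesis $n^{-1}\sum_i \delta_{\eps_i} \Rightarrow \Law(\eps)$ with a separate analysis of the empirical distribution of the sampling-noise coordinates $\{(z_{\delta,i}^0,\ldots,z_{\delta,i}^T)\}_{i \in [n]}$. By construction of the discrete-time dynamics \eqref{eq:discretedynamics}, the vectors $\z_\delta^0,\ldots,\z_\delta^T$ are mutually independent and independent of the deterministic $\beps$, so it suffices to establish joint convergence of the empirical measure with asymptotically independent limit marginals in the two noise settings.

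In the SME setting the coordinates $\{z_{\delta,i}^t\}_{i,t}$ are i.i.d.\ $\cN(\delta\bar\kappa,\delta\bar\kappa)$ independent of $\beps$, so the a.s.\ Wasserstein-$p$ convergence of the joint empirical measure to $\Law(\eps) \otimes \cN(\delta\bar\kappa,\delta\bar\kappa)^{\otimes(T+1)}$ follows from a direct application of the Varadarajan empirical-measure theorem (working conditionally on the deterministic $\beps$) and uniform integrability of Gaussian moments. In the SGD setting, $z_{\delta,i}^t = \sum_{S \ni i} P_{\delta,S}^t$ with $P_{\delta,S}^t \sim_{\mathrm{iid}} \Pois(\delta n^{1-\alpha}/\binom{n}{\kappa})$, so $\{z_{\delta,i}^t\}_i$ are independent across $t$ but only exchangeable (not independent) in $i$ at fixed $t$. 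Marginally $z_{\delta,i}^t \sim \Pois(\delta\kappa/n^\alpha)$, which by Assumption \ref{asp:SGD} converges in distribution to $\Pois(\delta\bar\kappa)$, while the pairwise covariance
\[\mathrm{Cov}(z_{\delta,i}^t,z_{\delta,j}^t) = \binom{n-2}{\kappa-2}\,\frac{\delta n^{1-\alpha}}{\binom{n}{\kappa}} = O\prn{n^{-(1-\alpha)}} \to 0\]
quantifies the asymptotic independence for $\alpha \in [0,1)$.

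I would then run a standard first-moment/variance argument. For any bounded Lipschitz test $\phi:\R \times \R^{T+1} \to \R$, set $S_n = n^{-1}\sum_i \phi(\eps_i,z_{\delta,i}^0,\ldots,z_{\delta,i}^T)$. Exchangeability reduces $\E[S_n]$ to $\int \E[\phi(e,Z_0^{(n)},\ldots,Z_T^{(n)})]\,d\mu_n(e)$, where $\mu_n = n^{-1}\sum_i \delta_{\eps_i}$ and $(Z_t^{(n)})$ has the marginal law of $(z_{\delta,1}^0,\ldots,z_{\delta,1}^T)$; since $\mu_n \Rightarrow \Law(\eps)$ in Wasserstein-$p$ and each $Z_t^{(n)}$ converges to an independent $\Pois(\delta\bar\kappa)$, $\E[S_n] \to \E\,\phi(\eps,z_\delta^0,\ldots,z_\delta^T)$. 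The Lipschitz property together with the $O(n^{-(1-\alpha)})$ covariance bound yields $\mathrm{Var}(S_n) = O(n^{-(1-\alpha)})$, and a parallel $\E[(S_n-\E S_n)^4] = O(n^{-2(1-\alpha)})$ bound combined with Borel--Cantelli promotes this to a.s.\ convergence, hence a.s.\ weak convergence of the joint empirical measure. Upgrading to Wasserstein-$p$ requires a.s.\ uniform integrability of the $p$-th moments: for $|\eps_i|^p$ this is provided directly by Assumption \ref{asp:model}(c), and for $|z_{\delta,i}^t|^p$ it follows from the uniform boundedness of Poisson moments with bounded mean $\delta\kappa/n^\alpha$ together with the same variance/Borel--Cantelli recipe applied to $n^{-1}\sum_i |z_{\delta,i}^t|^p$. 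The principal obstacle is the exchangeable-but-not-independent dependence among the SGD sampling noise coordinates $\{z_{\delta,i}^t\}_i$ at a fixed $t$, which is cleanly controlled by the pairwise covariance bound above.
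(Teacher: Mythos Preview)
Your proposal has a genuine gap in the SGD case for batch-size exponents $\alpha \in [1/2,1)$. The Borel--Cantelli step relies on $\sum_n \P[|S_n-\E S_n|>\epsilon]<\infty$, which via the fourth moment requires $\sum_n n^{-2(1-\alpha)}<\infty$. Even granting the bound $\E[(S_n-\E S_n)^4]=O(n^{-2(1-\alpha)})$ (which is correct but itself non-trivial---it needs the observation that for four distinct indices one must have at least \emph{two} shared-subset Poisson jumps, not just one), this series diverges whenever $\alpha\geq 1/2$. Since the lemma must cover all $\alpha\in[0,1)$, the argument as written does not close.

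The paper takes a different route that works uniformly in $\alpha$: it considers mixed moments $Q=n^{-1}\sum_i \eps_i^k\, q(z_{\delta,i}^0,\ldots,z_{\delta,i}^T)$ for arbitrary monomials $q$, views $Q$ as a fixed-degree polynomial in the independent Poisson variables $\{P_{\delta,S}^t\}$, and applies the Schudy--Sviridenko concentration inequality for Poisson polynomials. The key parameter bound $\mu_r\lesssim \sqrt{\kappa/n}\asymp n^{-(1-\alpha)/2}$ yields sub-exponential tails, hence $|Q-\E Q|<n^{-\iota}$ almost surely for some $\iota>0$ and every $\alpha\in[0,1)$; convergence of all mixed moments then gives weak and Wasserstein-$p$ convergence because the limit law has a finite moment generating function. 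Your moment-method approach could in principle be repaired by passing to $2p$-th moments with $p>1/(1-\alpha)$, but establishing $\E[(S_n-\E S_n)^{2p}]=O(n^{-p(1-\alpha)})$ requires a careful combinatorial analysis of how many shared-subset jumps are needed to connect $2p$ indices---essentially re-deriving by hand what the Poisson-polynomial concentration inequality packages. As a secondary point, the passage from $\mathrm{Cov}(z_{\delta,i}^t,z_{\delta,j}^t)=O(n^{-(1-\alpha)})$ to $\mathrm{Cov}(\phi_i,\phi_j)=O(n^{-(1-\alpha)})$ for Lipschitz $\phi$ is not automatic; it does hold here via the decomposition $z_{\delta,i}^t=U_i^t+W_{ij}^t$ into independent pieces, but this should be made explicit.
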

\begin{proof}
The convergence for $(\btheta^0,\btheta^*)$ is assumed in
Assumption \ref{asp:model}(c).
In the SME setting, the convergence for
$(\beps,\z_\delta^0,\ldots,\z_\delta^T)$ is also immediate from Assumption
\ref{asp:model}(c) and the independence of entries of $\z_\delta^t$,
see e.g.\ \cite[Proposition E.1]{fan2022approximate}.

In the setting of SGD, fix any $k \geq 0$ and
monic monomial $q:\R^{T+1} \to \R$, and consider 
\[Q=\frac{1}{n}\sum_{i=1}^n \eps_i^k q(z_{\delta,i}^0,\ldots,z_{\delta,i}^T)\]
as a polynomial function of the i.i.d.\ Poisson variables
$\bP_\delta^0,\ldots,\bP_0^T$ defining
$\z_\delta^t=\bPi\bP_\delta^t$. Note that since $\bPi$ has nonnegative entries,
each term $q(z_{\delta,i}^0,\ldots,z_{\delta,i}^T)$ is then some polynomial of
the coordinates of $\bP_\delta^0,\ldots,\bP_\delta^T$ with
nonnegative coefficients. By the concentration inequality of
\cite[Theorem 1.4, Lemma 7.6]{schudy2012concentration}, for any $\lambda \geq 0$, we have
\begin{equation}\label{eq:poissonconcentration}
\P[|Q-\E Q| \geq \lambda]
\leq C\max\left\{\max_{r=1,\ldots,\deg(q)}
e^{-\frac{c\lambda^2}{\mu_0\mu_r}},
\max_{r=1,\ldots,\deg(q)}
e^{-\left(\frac{c\lambda}{\mu_r}\right)^{1/r}}\right\}
\end{equation}
for some $(k,q)$-dependent constants $C,c>0$. Here $\deg(q)$ is the degree of
$q$, and (c.f.\ \cite[Section 1.4]{schudy2012concentration})
\[\mu_0 \leq \frac{1}{n}\sum_{i=1}^n
|\eps_i|^k \E q(z_{\delta,i}^0,\ldots,z_{\delta,i}^T),
\qquad \mu_r \leq \max_{|S|=r} \E\left[\partial_S 
\frac{1}{n}\sum_{i=1}^n 
|\eps_i|^k q(z_{\delta,i}^0,\ldots,z_{\delta,i}^T)\right]\]
where this bound for $\mu_r$ denotes the maximum expected mixed partial
derivative of order $r$ in any combination of the coordinates of
$\bP_\delta^0,\ldots,\bP_\delta^T$. Since
each coordinate of $\bP_\delta^0,\ldots,\bP_\delta^T$ can influence at most
$\kappa$ samples $i=1,\ldots,n$, it is clear by Assumption \ref{asp:model}(c) that 
for all large $n,d$, each $r \geq 1$, and some $(k,q,r)$-dependent constants
$C,C'>0$, we have
\[\mu_0 \leq C,
\quad \mu_r \leq \max_{S \subset [n]:|S|=\kappa} \frac{C}{n}\sum_{i \in S}
|\eps_i|^k
\leq \max_{S \subset [n]:|S|=\kappa}
C\left(\frac{1}{n}\sum_{i=1}^n |\eps_i|^{2k}\right)^{1/2}
\left(\frac{1}{n}\sum_{i=1}^n \1_{i \in S}\right)^{1/2}
\leq C'\sqrt{\frac{\kappa}{n}}.\]
Since $\kappa \asymp n^\alpha$
and $\alpha<1$, applying these bounds to \eqref{eq:poissonconcentration} shows,
by the Borel-Cantelli lemma,
that there exists a constant $\iota>0$ for which $|Q-\E Q|<n^{-\iota}$
a.s.\ for all large $n,d$. On the other hand, letting
$\{\tilde z_\delta^t\}_{t \geq 0} \overset{iid}{\sim}
\Pois(\delta\kappa/n^\alpha)$ and 
$\{z_\delta^t\}_{t \geq 0} \overset{iid}{\sim}
\Pois(\delta\bar \kappa)$, by Assumptions \ref{asp:SGD} and \ref{asp:model}
we have
\[\E Q=\E q(\tilde z_\delta^0,\ldots,\tilde z_\delta^T)
\cdot \frac{1}{n}\sum_{i=1}^n \eps_i^k
=\E q(z_\delta^0,\ldots,z_\delta^T) \cdot \E \eps^k+o(1).\]
This shows that $Q$ converges a.s.\ to $\E q(z_\delta^0,\ldots,z_\delta^T) \cdot
\E \eps^k$, i.e.\ each mixed moment of the empirical measure
$\frac{1}{n}\sum_{i=1}^n
\delta_{(\eps_i,z_{\delta,i}^0,\ldots,z_{\delta,i}^T)}$
converges a.s.\ to that of $\Law(\eps,z_\delta^0,\ldots,z_\delta^T)$.
Since $\eps$ has finite moment generating function around 0, so
does $(\eps,z_\delta^0,\ldots,z_\delta^T)$. Then this convergence of mixed
moments implies convergence weakly and in Wasserstein-$p$
\cite[Thm 6.9]{villani2008optimal}, establishing the lemma.
\end{proof}

\begin{proof}[Proof of Lemma \ref{lem:finite_dim_converge}]
The proof relies on mapping the dynamics \eqref{eq:discretedynamics} to an
AMP algorithm: Let us set
\[\V^0=[\V_1^0,\V_2^0]:=[\btheta^0,\btheta^*] \in \R^{d \times (k+k^*)},
\quad \V^t=[\V_1^t,\V_2^t]:=[\btheta_\delta^t-\btheta_\delta^{t-1},0]
\in \R^{d \times (k+k^*)} \text{ for } t \geq 1,\]
\[\Y^t=[\Y_1^t,\Y_2^t]:=\left[{-}\frac{\bar\eta^{t\delta}}{\bar\kappa}f(\X\btheta_\delta^t,\X\btheta^*,\beps)
\odot \z_\delta^t,\,0\right] \in \R^{n
\times (k+k^*)} \text{ for } t \geq 0.\]
Then
\begin{equation}\label{eq:thetaV}
[\btheta_\delta^t,\btheta^*]=\sum_{r=0}^t \V^r.
\end{equation}
For any matrices $A^{t,q},B^{t,q} \in \R^{(k+k^*) \times
(k+k^*)}$, the dynamics \eqref{eq:discretedynamics} are then equivalent to the
iterations, for $t=0,1,2,\ldots$
\begin{align}
\W^t&=\X\V^t-\sum_{r=0}^{t-1} \Y^r (A^{t,r})^\top,\label{eq:AMPW}\\
\Y^t&=\left[{-}\frac{\bar\eta^{t\delta}}{\bar\kappa}
f\left(\sum_{q=0}^t \left(\W^q+\sum_{r=0}^{q-1} \Y^r
(A^{q,r})^\top\right),\beps\right) \odot \z_\delta^t,\;0\right],\label{eq:AMPY}\\
\U^t&=\X^\top \Y^t-\sum_{q=0}^t \V^q (B^{t,q})^\top,\label{eq:AMPU}\\
\V^{t+1}&=\U^t+\sum_{q=0}^t \V^q (B^{t,q})^\top
-\left[\delta\bar\eta^{\delta t}g\left(\sum_{q=0}^t
\V_1^q\right),\,0\right].\label{eq:AMPV}
\end{align}

We may identify \eqref{eq:AMPY} and \eqref{eq:AMPV} as two mappings
\begin{equation}\label{eq:ytvt}
\Y^t=y_t(\W^0,\ldots,\W^t,\beps,\z_\delta^0,\ldots,\z_\delta^t),
\qquad \V^{t+1}=v_{t+1}(\U^0,\ldots,\U^t,\V^0)
\end{equation}
defined recursively for $t=0,1,2,\ldots$ via
\begin{align}
y_t(W^0,\ldots,W^t,\eps,z_\delta^0,\ldots,z_\delta^t)
&=\begin{pmatrix} {-}\frac{\bar\eta^{t\delta}}{\bar\kappa}z_\delta^t 
f\left(\sum_{q=0}^t \left(W^q+\sum_{r=0}^{q-1}
A^{q,r}y_r(W^0,\ldots,W^r,\eps,z_\delta^0,\ldots,z_\delta^r)\right),\eps\right)
\\ 0 \end{pmatrix},\label{eq:yt}\\
v_{t+1}(U^0,\ldots,U^t,V^0)
&=U^t+B^{t,0}V^0+\sum_{q=1}^t B^{t,q}v_q(U^0,\ldots,U^{q-1},V^0)\notag\\
&\hspace{1in}-\begin{pmatrix} \delta\bar\eta^{\delta t}g\left(V_1^0+\sum_{q=1}^t
[v_q(U^0,\ldots,U^{q-1},V^0)]_1\right) \\ 0 \end{pmatrix},\label{eq:vt}
\end{align}
and applied row-wise to their inputs in \eqref{eq:ytvt}.
Let $\theta^0,\theta^*,\eps,\{z_\delta^t\}_{t \geq 0}$ be as defined in the
discrete-time DMFT system. We set
\[V^0=(\theta^0,\theta^*) \in \R^{k+k^*}\]
and define recursively for $t=0,1,2,\ldots$
\begin{equation}\label{eq:AMPstateevolution}
\begin{gathered}
A^{t,q}=\E\D_q V^t \text{ for all } q \in \{0,\ldots,t-1\},
\quad \Sigma^t=\E[(V^0,\ldots,V^t) \otimes (V^0,\ldots,V^t)],\\
(W^0,\ldots,W^t) \sim \cN(0,\Sigma^t) \text{ (independent of
$\eps,\{z_\delta^t\}_{t \geq 0}$)}, \quad
Y^t=y_t(W^0,\ldots,W^t,\eps,z_\delta^0,\ldots,z_\delta^t),\\
B^{t,q}=\gamma\,\E\D_q Y^t \text{ for all } s \in \{0,\ldots,t\}, \quad
\Omega^t=\gamma\,\E[(Y^0,\ldots,Y^t) \otimes (Y^0 \ldots,Y^t)],\\
(U^0,\ldots,U^t) \sim \cN(0,\Omega^t) \text{ (independent of $\theta^0,\theta^*$)},
\quad V^{t+1}=v_{t+1}(U^0,\ldots,U^t,V^0),
\end{gathered}
\end{equation}
where $\D_q V^t \equiv \D_{U^q} v_t(U^0,\ldots,U^{t-1},V^0) \in \R^{(k+k^*)
\times (k+k^*)}$ denotes the partial derivative (Jacobian) in $U^q$, and
similarly $\D_q Y^t \equiv \D_{W^q}
y_t(W^0,\ldots,W^t,\eps,z_\delta^0,\ldots,z_\delta^t) \in
\R^{(k+k^*) \times (k+k^*)}$ denotes that in $W^q$. Then, using these choices of
$\{A^{t,q},B^{t,s}\}$, (\ref{eq:AMPW}--\ref{eq:AMPV}) is a standard form of an
AMP algorithm with state evolution covariances $\{\Sigma^t,\Omega^t\}$, see
e.g.\ \cite{javanmard2013state}.

We apply the result of \cite[Theorem 2.21]{wang2024universality} for this AMP
algorithm: It is clear that
$T$ iterations of (\ref{eq:AMPW}--\ref{eq:AMPV}) may be directly mapped to
$(k+k^*)T$ vector iterations of
\cite[Eqs.\ (2.14) and (D.1--D.2)]{wang2024universality} with
side information vectors $(\beps,\z_\delta^0,\ldots,\z_\delta^T)$
and the notational identification $(1/\sqrt{\gamma})\X \leftrightarrow \W^\top$.
Each function
$y_t(\cdot)$ and $v_{t+1}(\cdot)$ recursively defined via \eqref{eq:AMPY} and
\eqref{eq:AMPV} has polynomial growth and is Lipschitz in its first $t+1$
arguments $W^0,\ldots,W^t$ and $U^0,\ldots,U^t$, by the conditions for
$f(\cdot)$ and $g(\cdot)$ in Assumption \ref{asp:lipschitz}.\footnote{Here,
$(W^0,\ldots,W^t) \mapsto y(W^0,\ldots,W^t,\eps,z_\delta^0,\ldots,z_\delta^t)$
is $(\prod_{s=0}^t C\max(1,|z_\delta^s|))$-Lipschitz where $C>0$ depends on the
Lipschitz constant of $f$. The argument of \cite[Theorem
2.21]{wang2024universality} implicitly assumes that
$(W^0,\ldots,W^t) \mapsto y(W^0,\ldots,W^t,\eps,z_\delta^0,\ldots,z_\delta^t)$
is $L$-Lipschitz for a deterministic constant $L>0$, but it is clear from the
proof that the argument holds as long as this is
$L(\eps,z_\delta^0,\ldots,z_\delta^t)$-Lipschitz where
$\E|L(\eps,z_\delta^0,\ldots,z_\delta^t)|^{2+\iota}<\infty$ for some $\iota>0$.}
The conditions of \cite[Definition 2.18]{wang2024universality} hold for
the data matrix
$(1/\sqrt{\gamma})\X$ by Assumption \ref{asp:model}, with constant 
variance profile across all entries, and \cite[Assumption
2.17]{wang2024universality} is verified by Lemma \ref{lem:initialconditions}.
Then by \cite[Theorem 2.21]{wang2024universality}, for any fixed $T \geq 1$,
almost surely as $n,d\to\infty$,
\begin{equation}\label{eq:state_evolution}
\begin{gathered}
\frac{1}{d}\sum_{j=1}^d \delta_{(U_j^0,\ldots,U_j^T,V_j^0,\ldots,V_j^T)} \Rightarrow
\Law(U^0,\ldots,U^T,V^0,\ldots,V^T),\\
\frac{1}{n}\sum_{i=1}^n
\delta_{(W_i^0,\ldots,W_i^T,Y_i^0,\ldots,Y_i^T,\eps_i,z_{\delta,i}^0,\ldots,z_{\delta,i}^T)} \Rightarrow
\Law(W^0,\ldots,W^T,Y^0,\ldots,Y^T,\eps,z_\delta^0,\ldots,z_\delta^T)
\end{gathered}
\end{equation}
weakly and in Wasserstein-2.

Finally, we show that this implies the statements of the lemma.
Writing the two components of $V^t,Y^t \in \R^{k \times k^*}$ as
$V^t=(V_1^t,V_2^t)$ and $Y^t=(Y_1^t,Y_2^t)$, note that by the definitions
\eqref{eq:yt} and \eqref{eq:vt} we have
\[V^t=(V_1^t,0) \text{ for all } t \geq 1,
\quad Y^t=(Y_1^t,0) \text{ for all } t \geq 0,\]
and hence also
\[A^{t,q}=\begin{pmatrix} (A^{t,q})_{11} & (A^{t,q})_{12} \\
0 & 0 \end{pmatrix}, \; W_2^t=0 \text{ for all } t \geq 1,\]
\[B^{t,q}=\begin{pmatrix} (B^{t,q})_{11} & (B^{t,q})_{12} \\
0 & 0 \end{pmatrix}, \; U_2^t=0 \text{ for all } t \geq 0.\]
Let us define
\begin{align}
(\theta_\delta^t,\theta^*)&:=\sum_{r=0}^t V^r
=\left(\sum_{r=0}^t V_1^r,\,V_2^0\right),\label{eq:AMPthetadef}\\
(\xi_\delta^t,w^*)&:=\sum_{q=0}^t \left(W^q+\sum_{r=0}^{q-1}
A^{q,r}Y^r\right)
=\left(\sum_{q=0}^t W_1^q+\sum_{q=1}^t \sum_{r=0}^{q-1} (A^{q,r})_{11}Y_1^r,\,
W_2^0\right).\label{eq:AMPxidef}
\end{align}
Comparing these definitions with
\eqref{eq:thetaV} and \eqref{eq:AMPW}, the convergence
\eqref{eq:state_evolution} implies that also
\begin{equation}\label{eq:AMPDMFTlimits}
\frac{1}{d}\sum_{j=1}^d \delta_{(\theta^0_{\delta,j}, \ldots,
\theta^T_{\delta,j},\theta^\ast_j)} \Rightarrow
\Law\big(\theta^0_\delta,\ldots, \theta^T_\delta,\theta^\ast\big),
\quad \frac{1}{n}\sum_{i=1}^n \delta_{(\xi^0_{\delta,i}, \ldots,
\xi^T_{\delta,i},\xi_i^*,\eps_i)}
\Rightarrow \Law\big(\xi^0_\delta,\ldots,\xi^T_\delta,w^*,\eps\big).
\end{equation}
So it remains to check that these definitions of
$\theta_\delta^t,\xi_\delta^t,w^*$ in (\ref{eq:AMPthetadef}--\ref{eq:AMPxidef})
coincide with those defined by the discrete DMFT recursions
(\ref{eq:def-dis-theta}--\ref{eq:def-dis-Gamma}).

Differentiating \eqref{eq:yt}
and \eqref{eq:vt} by the chain rule and applying $A^{q,r}=\E \D_rV^q$
and $B^{t,s}=\gamma\,\E \D_s Y^t$, observe that
\begin{align}
\D_s Y^t&:=\begin{pmatrix} (\D_s Y^t)_{11} & (\D_s Y^t)_{12} \\ 0 & 0
\end{pmatrix}\notag\\
&={-}\frac{\bar\eta^{t\delta}}{\bar\kappa}z_\delta^t \begin{pmatrix} 
\D_\xi f(\xi_\delta^t,w^*,\eps)
& \D_{w^*} f(\xi_\delta^t,w^*,\eps)\\ 0 & 0 \end{pmatrix}
\left(\Id_{k+k^*}+\sum_{q=s+1}^t \sum_{r=s}^{q-1}(\E \D_r V^q)(\D_s
Y^r)\right),\label{eq:AMPchainrule1}\\
\D_s V^{t+1}&:=\begin{pmatrix} (\D_s V^{t+1})_{11} & (\D_s V^{t+1})_{12}
\\ 0 & 0 \end{pmatrix}\notag\\
&=\Id_{k+k^*}\1_{s=t}
+\sum_{q=s+1}^t (\gamma \E \D_q Y^t)(\D_s V^q)\notag\\
&\hspace{1in}- \delta \bar\eta^{\delta t}\begin{pmatrix}\D g(\theta_\delta^t)
\sum_{q=s+1}^t (\D_s V^q)_{11} & \D g(\theta_\delta^t)
\sum_{q=s+1}^t (\D_s V^q)_{12} \\ 0 & 0 \end{pmatrix}.
\label{eq:AMPchainrule2}
\end{align}
In particular, for $s=t$ we have $(\D_t
Y^t)_{11}={-}\frac{\bar\eta^{t\delta}}{\bar\kappa}z_\delta^t \cdot \D_\xi
f(\xi_\delta^t,w^*,\eps)$. Let us define
\begin{equation}\label{eq:AMPresponsedefs}
\begin{gathered}
(\D_sY^t-\D_{s+1}Y^t)_{11}
:={-}\frac{\bar\eta^{t\delta}}{\bar\kappa}z_\delta^t r_{f,\delta}^{t,s},
\quad (\D_0Y^t)_{12}:=
{-}\frac{\bar\eta^{t\delta}}{\bar\kappa}z_\delta^t r_{f,\delta}^{t,*},
\quad R_{f,\delta}^{t,s}=\E r_{f,\delta}^{t,s},
\quad R_{f,\delta}^{t,*}=\E r_{f,\delta}^{t,*},\\
\sum_{r=s+1}^t (\D_s V^r)_{11}
=\sum_{r=s}^{t-1} (\D_s V^{r+1})_{11}:=r_{\theta,\delta}^{t,s},
\quad R_{\theta,\delta}^{t,s}=\E r_{\theta,\delta}^{t,s},
\quad \Gamma_\delta^t=\E \D_\xi f(\xi_\delta^t,w^*,\eps),
\end{gathered}
\end{equation}
and apply the identities
\begin{align}
\sum_{q=s+1}^t \sum_{r=s}^{q-1} [(\E \D_r V^q)(\D_s Y^r)]_{1i}
&=\sum_{r=s}^{t-1}\left(\sum_{q=r+1}^t \E(\D_r V^q)_{11} \right)(\D_s
Y^r)_{1i}\notag\\
&=\sum_{r=s}^{t-1} R_{\theta,\delta}^{t,r}(\D_s Y^r)_{1i}
\text{ for } i=1,2,\label{eq:sumidentity1}\\
\sum_{q=s+1}^t [(\E \D_q Y^t)(\D_s V^q)]_{11}
&=\sum_{q=s+1}^t \left(\E(\D_t Y^t)_{11}+\sum_{r=q}^{t-1}
\E(\D_r Y^t-\D_{r+1} Y^t)_{11}\right)(\D_s V^q)_{11}\notag\\
&=\E(\D_t Y^t)_{11} \sum_{q=s+1}^t (\D_s V^q)_{11}
+\sum_{r=s+1}^{t-1}
\E(\D_r Y^t-\D_{r+1} Y^t)_{11}\left(\sum_{q=s+1}^r (\D_s V^q)_{11}\right)\notag\\
&={-}\delta\bar\eta^{t\delta} \Gamma_\delta^t r_{\theta,\delta}^{t,s}
-\delta\bar \eta^{t\delta}\sum_{r=s+1}^{t-1} R_{f,\delta}^{t,r}
r_{\theta,\delta}^{r,s}.
\label{eq:sumidentity2}
\end{align}
In the last step, we have used $\E z_\delta^t=\delta\bar\kappa$ and the fact
that $z_\delta^t$ is independent of
$\D_\xi f(\xi_\delta^t,w^*,\eps)$ and $r_{f,\delta}^{t,s}$.
Then, applying \eqref{eq:AMPresponsedefs},
\eqref{eq:sumidentity1}, \eqref{eq:sumidentity2} to
(\ref{eq:AMPchainrule1}--\ref{eq:AMPchainrule2}), we obtain the recursions
\begin{align*}
r_{f,\delta}^{t,s}&=\D_\xi f(\xi_\delta^t,w^*,\eps)
\left(\sum_{r=s+1}^{t-1}
R_{\theta,\delta}^{t,r}(\D_s Y^r-\D_{s+1}Y^r)_{11}
+R_{\theta,\delta}^{t,s}(\D_s Y^s)_{11}\right)\\
&={-}\D_\xi f(\xi_\delta^t,w^*,\eps)
\left(\sum_{r=s+1}^{t-1}
\frac{\bar\eta^{r\delta}}{\bar\kappa}
R_{\theta,\delta}^{t,r}r_{f,\delta}^{r,s}z_\delta^r
+\frac{\bar\eta^{s\delta}}{\bar\kappa} R_{\theta,\delta}^{t,s} \cdot
\D_\xi f(\xi_\delta^s,w^*,\eps)z_\delta^s\right),\\
r_{f,\delta}^{t,*}&=
\D_{w^*} f(\xi_\delta^t,w^*,\eps)
+\D_\xi f(\xi_\delta^t,w^*,\eps)\sum_{r=0}^{t-1}R_{\theta,\delta}^{t,r}
(\D_0 Y^r)_{12}\\
&={-}\D_\xi f(\xi_\delta^t,w^*,\eps)
\sum_{r=0}^{t-1} \frac{\bar\eta^{r\delta}}{\bar\kappa}
R_{\theta,\delta}^{t,r}r_{f,\delta}^{r,*}z_\delta^r
+\D_{w^*} f(\xi_\delta^t,w^*,\eps),\\
r_{\theta,\delta}^{t,s}&=\sum_{r=s}^{t-1} \left(
\Id_k\1_{s=r}-\gamma \delta\bar\eta^{r\delta} \Gamma_\delta^r
r_{\theta,\delta}^{r,s}
-\gamma\delta\bar\eta^{r\delta}\sum_{q=s+1}^{r-1}
R_{f,\delta}^{r,q}r_{\theta,\delta}^{q,s}\right)
-\delta \bar\eta^{r\delta}
\D g(\theta_\delta^r)r_{\theta,\delta}^{r,s}\\
&=\Id_k-\sum_{r=s}^{t-1} \delta\bar\eta^{r\delta}
\bigg[(\gamma \Gamma_\delta^r+\D g(\theta_\delta^r)) \cdot
r_{\theta,\delta}^{r,s}
+\gamma \sum_{q=s+1}^{r-1} R_{f,\delta}^{r,q}r_{\theta,\delta}^{q,s}\bigg]
\end{align*}
which are precisely (\ref{eq:def-dis-deriv-t}--\ref{eq:def-dis-deriv-xi}).
Furthermore, let us define
\begin{equation}\label{eq:AMPuwdef}
\sum_{r=0}^t U_1^r:=\sqrt{\gamma}u_\delta^t,
\qquad \sum_{q=0}^t W_1^q:=w_\delta^t.
\end{equation}
Then by \eqref{eq:AMPthetadef} and \eqref{eq:vt},
\begin{align*}
\theta_\delta^t=V_1^0+\sum_{r=1}^t V_1^r
&=\theta^0+\sum_{r=0}^{t-1}\left(U_1^r+\sum_{q=0}^r (B^{r,q})_{11}V_1^r
+(B^{r,0})_{12}V_2^0-\delta\bar\eta^{r\delta} g(\theta_\delta^r)\right)\\
&=\theta^0+\sum_{r=0}^{t-1} \left(U_1^r+\gamma\sum_{q=0}^r
(\E\D_q Y^r)_{11}V_1^r+\gamma (\E \D_0 Y^r)_{12}V_2^0
-\delta\bar\eta^{r\delta} g(\theta_\delta^r)\right)\\
&\overset{(*)}{=}\theta^0+\sum_{r=0}^{t-1}\left(U_1^r+\gamma
\left({-}\delta\bar\eta^{r\delta}\Gamma_\delta^r \theta_\delta^r-
\delta\bar\eta^{r\delta}
\sum_{q=0}^{r-1} R_{f,\delta}^{r,q}
\theta_\delta^q -\delta\bar\eta^{r\delta} R_{f,\delta}^{t,*}\theta^*\right)
-\delta\bar\eta^{r\delta} g(\theta_\delta^r)\right)\\
&=\theta^0-\sum_{r=0}^{t-1}\delta\bar\eta^{r\delta}
\left(\gamma \Gamma_\delta^r \theta_\delta^r
+g(\theta_\delta^r)+\gamma \sum_{q=0}^{r-1}
R_{f,\delta}^{r,q}\theta_\delta^q\right)+\sqrt{\gamma} u_\delta^t,
\end{align*}
where $(*)$ applies an identity analogous to \eqref{eq:sumidentity2}. Similarly,
applying \eqref{eq:AMPxidef} and \eqref{eq:yt},
\begin{align*}
\xi_\delta^t=\sum_{q=0}^t W_1^q+\sum_{q=1}^t \sum_{r=0}^{q-1}
(A^{q,r})_{11}Y_1^r
&=\sum_{q=0}^t W_1^q+\sum_{q=1}^t \sum_{r=0}^{q-1}
(\E \D_r V^q)_{11}Y_1^r\\
&\overset{(*)}{=}
\sum_{q=0}^t W_1^q+\sum_{r=0}^{t-1} R_{\theta,\delta}^{t,r} Y_1^r
={-}\sum_{r=0}^{t-1} \frac{\bar\eta^{r\delta}}{\bar\kappa}
R_{\theta,\delta}^{t,r}f(\xi_\delta^r,w^*,\eps)z_\delta^r+w_\delta^t,
\end{align*}
where $(*)$ applies an identity analogous to \eqref{eq:sumidentity1}. These
recursions are precisely (\ref{eq:def-dis-theta}--\ref{eq:def-dis-xi}).
Finally, by \eqref{eq:AMPstateevolution},
we note that $\{u_\delta^t\}_{t \geq 0}$ and $(\{w_\delta^t\}_{t \geq 0},
w_\delta^*)$ defined via \eqref{eq:AMPxidef} and \eqref{eq:AMPuwdef} are
Gaussian with covariance
\begin{align*}
\E[u_\delta^t \otimes u_\delta^s]
&=\gamma^{-1}\E\left[\sum_{r=0}^t U_1^r \otimes \sum_{r=0}^s U_1^r\right]
=\E\left[\sum_{r=0}^t Y_1^t \otimes \sum_{r=0}^s Y_1^r\right]\\
&=\E\left[\sum_{r=0}^t \frac{\bar\eta^{r\delta}}{\bar\kappa}
f(\xi_\delta^r,w^*,\eps)z_\delta^r
\otimes \sum_{r=0}^s \frac{\bar\eta^{r\delta}}{\bar\kappa}
f(\xi_\delta^r,w^*,\eps)z_\delta^r\right],\\
\E[w_\delta^t \otimes w_\delta^s]
&=\E\left[\sum_{q=0}^t W_1^q \otimes \sum_{q=0}^s W_1^q\right]
=\E\left[\sum_{q=0}^t V_1^q \otimes \sum_{q=0}^s V_1^q\right]
=\E[\theta_\delta^t \otimes \theta_\delta^s],\\
\E[w_\delta^t \otimes w_\delta^*]
&=\E\left[\sum_{q=0}^t W_1^q \otimes W_2^0\right]
=\E\left[\sum_{q=0}^t V_1^q \otimes V_2^0\right]
=\E[\theta^t \otimes \theta^*],\\
\E[w_\delta^* \otimes w_\delta^*]
&=\E\left[W_2^0 \otimes W_2^0\right]
=\E\left[V_2^0 \otimes V_2^0\right]
=\E[\theta^* \otimes \theta^*],
\end{align*}
which is precisely (\ref{eq:def-dis-C-t}--\ref{eq:def-dis-C-h}). This verifies
that the limits in \eqref{eq:AMPDMFTlimits} coincide with the definitions from
(\ref{eq:def-dis-theta}--\ref{eq:def-dis-Gamma}), showing the lemma.
\end{proof}

\subsection{Convergence to the continuous DMFT system}\label{sec:deltaDMFTconvergence}

We now show that a continuous-time embedding of the preceding
discrete-time DMFT system converges to (the unique fixed
point of) the continuous-time DMFT system defined in Section
\ref{sec:DMFTlimits}, as the discretization step size $\delta \to 0$.

To ease notation in this section, for all times $t \in [0,T]$,
we will use the time-discretization conventions
\begin{equation}\label{eq:flrcil}
\flr{t} = \max\{i\delta: i\delta\leq t, i\in \Z_+\} \in \delta\Z_+,
\quad \cil{t} = \flr{t} + \delta \in \delta\Z_+
\end{equation}
Let $\{\theta_\delta^t\}_{t \in \Z_+}$, $\{\xi_\delta^t\}_{t \in \Z_+}$,
and the correlation and response kernels
\[C_{\theta,\delta},R_{\theta,\delta},C_{f,\delta},R_{f,\delta},R_{f,\delta}^*,\Gamma_\delta\]
be the components of the discrete-time DMFT system defined via the recurions
(\ref{eq:def-dis-theta}--\ref{eq:def-dis-Gamma}).
We define their continuous-time embeddings
$\{\bar\theta_\delta^t\}_{t \in [0,T]}$, 
$\{\bar \xi_\delta^t\}_{t \in [0,T]}$,
$\{\bar C_{\theta,\delta}^{t,s}\}_{t,s \in [0,T]}$,
$\{\bar C_{f,\delta}^{t,s}\}_{t,s \in [0,T]}$,
$\{\bar R_{f,\delta}^{t,*}\}_{t \in [0,T]}$, and
$\{\bar \Gamma_\delta^t\}_{t \in [0,T]}$ by
\begin{equation}\label{eq:embeddings}
\bar\theta_\delta^t=\theta_\delta^k,
\; \bar\xi_\delta^t=\xi_\delta^k,
\; \bar C_{\theta,\delta}^{t,s}=C_{\theta,\delta}^{k,j},
\; \bar C_{f,\delta}^{t,s}=C_{f,\delta}^{k,j},
\; \bar R_{f,\delta}^{t,*}=R_{f,\delta}^{k,*},
\; \bar \Gamma_{f,\delta}^t=\Gamma_{f,\delta}^k
\text{ if } \flr{t}=k \text{ and } \flr{s}=j.
\end{equation}
These embeddings are piecewise-constant and right-continuous with jumps at
$\delta\Z_+$ (henceforth abbreviated as $\delta$-p.c.r.c.)
We define also the $\delta$-p.c.r.c.\ embedding
$\{\bar R_{\theta,\delta}^{t,s}\}_{t,s \in [0,T]}$ by
\[\bar R_{\theta,\delta}^{t,s}=\begin{cases} 0 & \text{ if } s>t \\
\Id & \text{ if } s \leq t \text{ and } \flr{s}=\flr{t} \\
R_{\theta,\delta}^{k,j} & \text{ if } \flr{s}=j<\flr{t}=k,\end{cases}\]
and a family of linear operators $\{\bar R_{f,\delta}^t\}_{t \in [0,T]}$
with $\bar R_{f,\delta}^t:L^4([0,t],\R^k) \to \R^k$ by
\begin{equation}\label{eq:Rfembed}
\bar R_{f,\delta}^t(x)
=\int_0^{\flr{t}} \bar R_{f,\delta}^{t,s} x^s\d s,
\quad \bar R_{f,\delta}^{t,s}=\delta^{-1}R_{f,\delta}^{k,j} \text{ if }
\flr{s}=j<\flr{t}=k.
\end{equation}
We denote the tuples of these embedded correlation and response functions by
\[X^\delta=(\bar C_{f,\delta},\bar R_{f,\delta},
\bar R_{f,\delta}^*,\bar \Gamma_\delta),
\quad Y^\delta=(\bar C_{\theta,\delta},\bar R_{\theta,\delta}).\]

The main result of this section is the following lemma, where we recall
the domain $\cS(T,C_0)$ and metric $\lbddst{\cdot}{\cdot}$ defined in Section
\ref{sec:existenceuniqueness}.

\begin{lem}\label{lem:dmft_discretization_error}
\begin{enumerate}[(a)]
\item For any sufficiently large constant $C_0 \equiv C_0(T)>0$ and any
$\delta>0$,
\[(Y^\delta,X^\delta) \in \cS(T,C_0).\]
\item For any sufficiently large constant $\lambda \equiv \lambda(T,C_0)>0$,
\[\lim_{\delta \to 0} \lbddst{X^\delta}{X}=0,
\quad \lim_{\delta \to 0} \lbddst{Y^\delta}{Y}=0\]
where $(Y,X) \in \cS^\text{cont}(T,C_0)$ is the fixed point given by Theorem
\ref{thm:uniqueness-existence}.
\item For any fixed $m \geq 0$ and $0 \leq t_1 \leq \ldots \leq t_m \leq T$,
as $\delta \to 0$,
\begin{align}
\Law(\bar \theta^{t_1}_\delta,\ldots,\bar \theta^{t_m}_\delta,\theta^*)
&\Rightarrow
\Law(\theta^{t_1},\ldots,\theta^{t_m},\theta^*),\,\label{eq:discretedmftthetaconv}\\
\Law(\bar \xi_\delta^{t_1},\ldots,\bar \xi_\delta^{t_m},w^*,\eps) &\Rightarrow
\Law(\xi^{t_1},\ldots,\xi^{t_m},w^*,\eps)\label{eq:discretedmftetaconv}
\end{align}
weakly and in Wasserstein-2.
\end{enumerate}
\end{lem}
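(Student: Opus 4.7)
\textbf{Proof plan for Lemma \ref{lem:dmft_discretization_error}.} My plan is to mirror the strategy introduced in \cite{celentano2021high} and implemented in \cite{fan2025dynamical1}: establish the discrete system in a closed envelope, reduce convergence to the contractivity of $\trsfrm=\trsfrmA \circ \trsfrmB$, and then upgrade to finite-dimensional distributional convergence.

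For part (a), I would take $D=\delta\Z_+ \cap [0,T]$ in Definition \ref{def:S}, so that the piecewise-constant embeddings \eqref{eq:embeddings}--\eqref{eq:Rfembed} are trivially continuous on each maximal interval $[k\delta,(k+1)\delta)$ of $[0,T]\setminus D$, reducing all continuity conditions to pointwise moment bounds at the grid points $k\delta$. These bounds are obtained by a direct discrete mimicry of Lemmas \ref{lem:solution-in-space} and \ref{lem:solution-in-space-2}: I apply Cauchy--Schwarz and a discrete Gr\"onwall argument to \eqref{eq:def-dis-theta} and \eqref{eq:def-dis-deriv-t} (using the envelope identities \eqref{eq:PhiCthetaexplicit} and \eqref{eq:PhiRthetaexplicit}) to bound $\Tr \bar C_{\theta,\delta}^{k\delta,k\delta} \leq \pCt(k\delta)$ and $\|\bar R_{\theta,\delta}^{k\delta,j\delta}\|^2 \leq \pRt((k-j)\delta)$. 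For the $\xi$-side, I exploit that $z_\delta^t-\delta\bar\kappa$ is mean-zero with variance $\delta\bar\kappa$ in both the Poisson and Gaussian cases, giving a discrete analogue of \eqref{eq:basicdzrbound}; feeding this into \eqref{eq:def-dis-C-h}--\eqref{eq:def-dis-Gamma} yields the envelopes $C_0,\pRf,\pRfs,M_f$ for the embedded objects. The operator bound \eqref{eq:pRf-volterra} for $\bar R_{f,\delta}^t$ follows from \eqref{eq:Rfembed} and $\|R_{f,\delta}^{k,j}\|^2 \leq \delta\cdot\int_{j\delta}^{(j+1)\delta}\pRf((k-j)\delta)\,dr$, a discrete version of \eqref{eq:Erfsquaredbound}.

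For part (b), the key reduction is to observe that $\trsfrm(Y)=Y$ and, by \eqref{eq:T-contraction}, for any $\lambda$ large enough $\lbddst{\trsfrm(Y^\delta)}{\trsfrm(Y)} \leq \tfrac12 \lbddst{Y^\delta}{Y}$, whence
\[
\lbddst{Y^\delta}{Y} \;\leq\; 2\,\lbddst{Y^\delta}{\trsfrm(Y^\delta)}.
\]
So the task is to bound the ``consistency error'' $\lbddst{Y^\delta}{\trsfrm(Y^\delta)}$. I construct auxiliary processes $(\theta_*^t,\xi_*^t,r_{\theta,*}^{t,s},r_{f,*}^t,r_{f,*}^{t,*})$ on $[0,T]$ by plugging the piecewise-constant kernels $X^\delta$ into the continuous DMFT equations (\ref{eq:def-cont-theta}--\ref{eq:def-cont-resp-xi}); by definition, $\trsfrm(Y^\delta)$ is built from the second-moment statistics of these processes. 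I then estimate, at each grid point $k\delta$, the $L^4$-distance between $\theta_*^{k\delta}$ and the discrete iterate $\theta_\delta^k$, and likewise for $\xi,r_\theta,r_f,r_f^*$. Because $X^\delta$ is constant on $[j\delta,(j+1)\delta)$, integrating the continuous equations against $X^\delta$ over such an interval gives an Euler-type increment that matches the discrete recursion exactly on the kernel side; the only residual error comes from (i) freezing $\theta^r$ and $\xi^r$ at $r=j\delta$ inside $f,g,\D_\xi f$ over the interval, and (ii) the diagonal term in \eqref{eq:def-cont-resp-xi} involving $\D_\xi f(\xi^{r-},w^*,\eps)x^r\,\d z^r$ that has no direct counterpart in \eqref{eq:def-dis-deriv-xi} until reindexed via \eqref{eq:sumidentity1}--\eqref{eq:sumidentity2}. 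Using the a priori $L^4$ bounds of Lemma \ref{lem:exist-unique-theta} together with the Burkholder--Davis--Gundy-type inequality (Lemma \ref{lem:burkholder}) and H\"older, each step contributes error $O(\delta^{1+\iota/2})$ in the appropriate $L^p$ norm, and a discrete Gr\"onwall yields $O(\delta^{\iota/2})$ uniformly in $k\delta \in [0,T]$. Multiplied by $e^{-\lambda\cdot k\delta}$, this gives $\lbddst{Y^\delta}{\trsfrm(Y^\delta)} \to 0$.

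Part (c) follows directly: $\lbddst{\bar C_{\theta,\delta}}{C_\theta}\to 0$ provides a Gaussian coupling witnessing Wasserstein-$4$ convergence of $(\bar w_\delta^{t_1},\ldots,\bar w_\delta^{t_m},w^*)$ to $(w^{t_1},\ldots,w^{t_m},w^*)$ (with an analogous statement for $u_\delta \to u$); combined with $\lbddst{\bar R_{\theta,\delta}}{R_\theta}\to 0$ and $\lbddst{X^\delta}{X}\to 0$, the stability estimates in Lemmas \ref{lem:contract-Ct} and \ref{lem:contract-xi} (applied with one ``side'' being the piecewise-constant, discontinuity-set-$\delta\Z_+$ system and the other the continuous fixed point) yield the Wasserstein-2 convergence of the finite-dimensional laws \eqref{eq:discretedmftthetaconv}--\eqref{eq:discretedmftetaconv}. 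The hardest step is clearly paragraph~3: handling the diagonal jump term in $r_f^t(x^{[t]})$ and the dependence of the $L^4$ a priori bounds on $\delta$; I expect this to require a careful induction in $k$ that propagates all bounds simultaneously and that treats the Poisson and Gaussian cases on equal footing via the unified estimate \eqref{eq:basicdzrbound}.
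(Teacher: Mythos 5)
Your plan is correct in substance and tracks the paper's own three-step structure: establish envelope membership, reduce $\delta$-convergence of the fixed points to a consistency error bound via contractivity, and upgrade kernel convergence to finite-dimensional distributional convergence by constructing a Gaussian coupling. Your part~(b) decomposition $\lbddst{Y^\delta}{Y}\leq 2\,\lbddst{Y^\delta}{\trsfrm(Y^\delta)}$ is algebraically equivalent to the paper's $\lbddst{Y}{Y^\delta}\leq\lbddst{\trsfrm(Y)}{\trsfrm(Y^\delta)}+\lbddst{\trsfrm(Y^\delta)}{\trsfrmD(Y^\delta)}$ (recall $Y^\delta=\trsfrmD(Y^\delta)$), and your ``auxiliary process'' construction --- plugging the discrete kernel $X^\delta$ into the continuous equations and coupling against the discrete recursion via identical noise --- is exactly how the paper proves the intermediate bounds $\lbddst{\trsfrmA(X^\delta)}{\trsfrmDA(X^\delta)}\lesssim\delta$ and $\lbddst{\trsfrmB(Y^\delta)}{\trsfrmDB(Y^\delta)}\lesssim\delta^{1/2}$. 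Part~(c) is essentially identical to the paper's proof.

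The one place where your route genuinely differs, and where a subtle point is not addressed, is part~(a). You propose to verify the envelope bounds of Definition~\ref{def:S} by a ``direct discrete mimicry'' of Lemmas~\ref{lem:solution-in-space} and~\ref{lem:solution-in-space-2}, via a discrete Gr\"onwall argument on \eqref{eq:def-dis-theta}--\eqref{eq:def-dis-deriv-t}. This is doable but has two wrinkles you do not spell out: (i) the envelope functions $\pCt,\pRt,\pRf,\pRfs$ satisfy continuous integral/ODE relations \eqref{eq:def-pRt}--\eqref{eq:def-pCt}, so comparing a discrete-sum recursion to them requires a sum-to-integral conversion with a $\delta$-uniform constant; and (ii) the bounds for $Y^\delta=(\bar C_{\theta,\delta},\bar R_{\theta,\delta})$ require bounds for $X^\delta=(\bar C_{f,\delta},\bar R_{f,\delta},\bar R^*_{f,\delta},\bar\Gamma_\delta)$, and vice versa, so the induction must be interleaved across the two maps at each time slab. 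The paper avoids both issues by a cleaner device: it rewrites the discrete system as a system of \emph{exact} continuous-time integral equations \eqref{eq:def-dist-theta}--\eqref{eq:def-dist-Gamma} for the piecewise-constant embeddings, so that the Lemma~\ref{lem:solution-in-space}/\ref{lem:solution-in-space-2} arguments apply \emph{verbatim} to establish that $\trsfrmDA$ maps $\cD_\xi^\delta\cap\cS_\xi(T,C_0)\to\cD_\theta^\delta\cap\cS_\theta(T,C_0)$ and $\trsfrmDB$ maps the other way; it then invokes Banach on the envelope-restricted space and uses uniqueness of the recursive fixed point in $\cD_\theta^\delta$ to conclude that the Banach fixed point is $(Y^\delta,X^\delta)$. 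Your direct induction would work, but the paper's embedding-plus-fixed-point argument is both cleaner (no discrete-to-continuous conversion) and automatically resolves the $X^\delta$--$Y^\delta$ interleaving; if you stay with your route, you should make the interleaved induction explicit.
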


\begin{proof}[Proof of Lemma \ref{lem:dmft_discretization_error}(a)]
We define also the embeddings
\begin{equation}\label{eq:embeddings2}
\bar u_\delta^t=u_\delta^k,
\;\bar w_\delta^t=w_\delta^k,
\;\bar r_{f,\delta}^{t,*}=r_{f,\delta}^{k,*} \text{ if } \flr{t}=k,
\quad
\bar r_{\theta,\delta}^{t,s}=\begin{cases} 0 & \text{ if } s>t \\
\Id & \text{ if } s \leq t \text{ and } \flr{s}=\flr{t} \\
r_{\theta,\delta}^{k,j} & \text{ if } \flr{s}=k<\flr{t}=j.\end{cases}
\end{equation}
Let $D([0,T],\R^k)$ denote the space of $\delta$-p.c.r.c.\ $\R^k$-valued
processes $\{x^t\}_{t \in [0,T]}$, i.e.\ satisfying $x^t=x^{\flr{t}}$
for all $t \in [0,T]$. We embed
$r_{f,\delta}^{k,j}$ as a family of linear operators
$\bar r_{f,\delta}^t:D([0,T],\R^k) \to \R^k$ given by
\begin{equation}\label{eq:rfembed}
\bar r_{f,\delta}^t(x^{[t]})
=\sum_{j=0}^{k-1} r_{f,\delta}^{k,j}x^{j\delta} \text{ if } \flr{t}=k.
\end{equation}
Finally, we identify the variables $\{z_\delta^t\}_{t \geq 0}$ of the
discrete-time DMFT system with the increments of the continuous-time Gaussian
or Poisson process $\{z^t\}_{t \geq 0}$ defined by
\eqref{eq:poisson-z}/\eqref{eq:gaussian-z},
\[z_\delta^t=z^{(t+1)\delta}-z^{t\delta}.\]
Then the discrete-time DMFT recursions
(\ref{eq:def-dis-theta}--\ref{eq:def-dis-Gamma})
imply the following equations for these embeddings:
For any $x \in D([0,T],\R^k)$,
\begin{align}
\bar\theta^t_\delta&=\bar\theta^0_\delta-\int_0^{\flr{t}} \bar\eta^{\flr{r}} \Big({\gamma}\bar\Gamma^r_\delta
\bar\theta^r_\delta+g(\bar\theta^r_\delta)+{\gamma}\bar R_{f,\delta}^r(\bar\theta^{[r]}_\delta)+
{\gamma}\bar R_{f,\delta}^{r,*}\theta^*\Big)\d r+\sqrt{\gamma}\,\bar u^t_\delta,
\label{eq:def-dist-theta}\\
\bar\xi^t_\delta&={-}\int_0^{\flr{t}} \frac{\bar\eta^{\flr{r}}}{\bar\kappa} \bar R_{\theta,\delta}^{t,r}
f(\bar\xi^{r-}_\delta,w_\delta^*,\eps)\d z^r+\bar
w^t_\delta,\label{eq:def-dist-xi}\\
\bar r_{\theta,\delta}^{t,s}&=\Id- 
\mathbf{1}_{\cil{s} \leq \flr{t}} \int_{\cil{s}}^{\flr{t}} \bar\eta^{\flr{r}}\bigg[\Big({\gamma}\bar\Gamma^r_\delta+\D
g(\bar\theta^r_\delta)\Big)\bar r_{\theta,\delta}^{r,s}+{\gamma}
\bar R_{f,\delta}^r\Big(\bar r_{\theta,\delta}^{[r],s}\Big)\bigg]\d r
\text{ for } s \leq t,\notag\\
&\hspace{1in}r_{\theta,\delta}^{t,s}=0 \text{ for } s>t,
\label{eq:def-dist-deriv-t}\\
\bar r_{f,\delta}^{t,*}&={-}\D_\xi f(\bar\xi^t_\delta,w_\delta^*,\eps) \int_0^{\flr{t}}
\frac{\bar\eta^{\flr{r}}}{\bar\kappa}
\bar R_{\theta,\delta}^{t,r}\bar r_{f,\delta}^{r-,*}\d z^r
+\D_{w^*} f(\bar\xi^t_\delta,w_\delta^*,\eps),\label{eq:def-dist-deriv-xi-star}\\
\bar r_{f,\delta}^t(x^{[t]})&={-}\D_\xi
f(\bar\xi^t_\delta,w_\delta^*,\eps) \int_0^{\flr{t}} \frac{\bar\eta^{\flr{r}}}{\bar\kappa}
\bar R_{\theta,\delta}^{t,r}\bigg(\bar r_{f,\delta}^{r-}(x^{[r]})+\D_\xi
f(\bar\xi^{r-}_\delta,w_\delta^*,\eps)x^{r-}\bigg)\d z^r,\label{eq:def-dist-resp-xi}
\end{align}
and
\begin{align}
\bar C_{\theta,\delta}^{t,s}&=\E[\bar\theta^t_\delta \otimes \bar\theta^s_\delta] \text{ for } t,s \in [0,\infty)
\cup \{*\}\label{eq:def-dist-C-t}\\
\bar R_{\theta,\delta}^{t,s}&=\E[\bar r_{\theta,\delta}^{t,s}]
\label{eq:def-dist-R-t}\\
\bar C_{f,\delta}^{t,s}&=\E\brk{\int_0^{\flr{t}}
\frac{\bar\eta^{\flr{r}}}{\bar\kappa}f(\bar\xi^{r-}_\delta,w_\delta^*,\eps)\d z^r
\otimes \int_0^{\flr{s}}
\frac{\bar\eta^{\flr{r}}}{\bar\kappa}f(\bar\xi^{r-}_\delta,w_\delta^*,\eps)\d z^r}\label{eq:def-dist-C-h}\\
\bar R_{f,\delta}^t(x^{[t]})&=\E\big[\bar r_{f,\delta}^t(x^{[t]})\big]\label{eq:def-dist-R-h}\\
\bar R_{f,\delta}^{t,*}&=\E[\bar r_{f,\delta}^{t,*}]
\label{eq:def-dist-R-h-star}\\
\bar \Gamma^t_\delta&=\E[\D_\xi f(\bar\xi^t_\delta,w_\delta^*,\eps)].\label{eq:def-dist-Gamma}
\end{align}
For example, since $\{\bar\theta_\delta^t\}$ is $\delta$-p.c.r.c., we have
by \eqref{eq:Rfembed} for $\flr{t}=k$ that
\[\bar R_{f,\delta}^t(\bar\theta_\delta^{[t]})
=\int_0^{\flr{t}} \bar R_{f,\delta}^{t,s}\bar\theta_\delta^s\d s
=\delta\sum_{q=0}^{k-1} (\delta^{-1}R_{f,\delta}^{k,q})\theta_\delta^q
=\sum_{q=0}^{k-1} R_{f,\delta}^{k,q} \theta_\delta^q.\]
Then, since $\{\bar\theta_\delta^t,\bar u_\delta^t\}$ are both
$\delta$-p.c.r.c.\ and defined by \eqref{eq:embeddings} and
\eqref{eq:embeddings2}, we have that \eqref{eq:def-dist-theta} is equivalent to
the discrete-time equation \eqref{eq:def-dis-theta},
\[\theta_\delta^k=\theta_\delta^0
-\delta \sum_{r=0}^k
\bar\eta^{r\delta}\Big(\gamma \Gamma_\delta^r\theta_\delta^r
+g(\theta_\delta^r)+\gamma \sum_{q=0}^{r-1} R_{f,\delta}^{r,q}
\theta_\delta^q+\gamma R_{f,\delta}^{r,*}\theta^*\Big)
+\sqrt{\gamma}u_\delta^k.\]
Similar arguments show that
(\ref{eq:def-dis-xi}--\ref{eq:def-dis-deriv-xi-star}) are equivalent to
(\ref{eq:def-dist-xi}--\ref{eq:def-dist-deriv-xi-star}), and multiplying
\eqref{eq:def-dis-deriv-xi} on both sides by $x^s$ and summing over
$s=0,\ldots,t-1$ shows \eqref{eq:def-dist-resp-xi}. It is clear from the
definitions that
(\ref{eq:def-dis-C-t}--\ref{eq:def-dis-Gamma}) are equivalent to
(\ref{eq:def-dist-C-t}--\ref{eq:def-dist-Gamma}), where \eqref{eq:def-dist-R-h}
follows from \eqref{eq:Rfembed} and \eqref{eq:rfembed}.

Now let $\cD_\theta^\delta$ denote the space of all tuples
$Y^\delta=(\bar C_{\theta,\delta},\bar R_{\theta,\delta})$ where
\begin{itemize}
\item $\bar C_{\theta,\delta} \equiv \{\bar C_{\theta,\delta}^{t,s}\}_{t,s \in
[0,T]}$ is a covariance kernel on $(\R^k \otimes [0,T]) \times \R^{k^*}$
that is $\delta$-p.c.r.c.\ in $(t,s)$.
\item $\bar R_{\theta,\delta} \equiv \{\bar R_{\theta,\delta}^{t,s}\}_{t,s \in
[0,T]}$ is a $\R^{k \times k}$-valued process satisfying
$\bar R_{\theta,\delta}^{t,s}=0$ if $s>t$,
$\bar R_{\theta,\delta}^{t,s}=\Id$ if $s \leq t$ with $\flr{s}=\flr{t}$,
and $\bar R_{\theta,\delta}^{t,s}$ is $\delta$-p.c.r.c.\ in $(t,s)$.
\end{itemize}
Likewise, let $\cD_\xi^\delta$ denote the space of all tuples
$X^\delta=(\bar C_{f,\delta},\bar R_{f,\delta}, \bar R_{f,\delta}^*,
\bar \Gamma_\delta)$ where
\begin{itemize}
\item $\bar C_{f,\delta} \equiv \{\bar C_{f,\delta}^{t,s}\}_{s,t \in [0,T]}$
is a covariance kernel on $\R^k \otimes [0,T]$ that is $\delta$-p.c.r.c.\ in
$(t,s)$,
\item $\bar R_{f,\delta}^* \equiv \{\bar R_{f,\delta}^{t,*}\}_{t \in [0,T]}$
and $\bar \Gamma_\delta \equiv \{\bar \Gamma_\delta^t\}_{t \in [0,T]}$ are
$\R^{k \times k^*}$-valued and $\R^{k \times k}$-valued processes,
respectively, that are $\delta$-p.c.r.c.\ in $t$,
\item $\bar R_{f,\delta} \equiv \{\bar R_{f,\delta}^t\}_{t \in [0,T]}$ is a
family of linear operators $\bar R_{f,\delta}^t:D([0,t],\R^k) \to \R^k$ having
the forms, for some matrix-valued coefficients $R_{f,\delta}^{k,j}$,
\begin{equation}\label{eq:barRform}
\bar R_{f,\delta}^t(x^{[t]})=
\sum_{j=0}^{k-1} R_{f,\delta}^{k,j}x^{j\delta} \text{ if } \flr{t}=k.
\end{equation}
\end{itemize}
Given any $X^\delta=(\bar C_{f,\delta},\bar R_{f,\delta},
\bar R_{f,\delta}^*,\bar \Gamma_\delta) \in \cD_\xi^\delta$, note that
$\{\bar u_\delta^t\} \sim \GP(0,\bar C_{f,\delta})$ is $\delta$-p.c.r.c.\ with
probability 1. Then it may be checked inductively in $k=0,1,2,\ldots$ that 
\eqref{eq:def-dist-theta} and \eqref{eq:def-dist-deriv-t} have
unique solutions over $t,s \in [0,k\delta)$, which are $\delta$-p.c.r.c. Then
$Y^\delta=(\bar C_{\theta,\delta},\bar R_{\theta,\delta})$ defined by
(\ref{eq:def-dist-C-t}--\ref{eq:def-dist-R-t}) belongs to $\cD_\theta^\delta$,
i.e.\ \eqref{eq:def-dist-theta}, \eqref{eq:def-dist-deriv-t}, and 
(\ref{eq:def-dist-C-t}--\ref{eq:def-dist-R-t}) define a mapping
\[\trsfrmDA:\cD_\xi^\delta \to \cD_\theta^\delta.\]
Similarly, given any
$Y^\delta=(\bar C_{\theta,\delta},\bar R_{\theta,\delta})
\in \cD_\theta^\delta$, it may be checked inductively that 
\eqref{eq:def-dist-xi} and \eqref{eq:def-dist-deriv-xi-star} have unique
solutions which are $\delta$-p.c.r.c.\ in $t$, and that there exist (random)
matrix coefficients $r_{f,\delta}^{k,j}$ such that for any
$x \in D([0,T],\R^k)$, the solution to \eqref{eq:def-dist-resp-xi} takes a form
\[\bar r_{f,\delta}^t(x^{[t]})
=\sum_{j=0}^{k-1} r_{f,\delta}^{k,j}x^{j\delta}
\text{ if } \flr{t}=k.\]
Then $X^\delta=(\bar C_{f,\delta},\bar R_{f,\delta},
\bar R_{f,\delta}^*,\bar \Gamma_\delta)$ defined by
(\ref{eq:def-dist-C-h}--\ref{eq:def-dist-Gamma}) belongs to $\cD_\xi^\delta$,
i.e.\ \eqref{eq:def-dist-xi},
(\ref{eq:def-dist-deriv-xi-star}--\ref{eq:def-dist-resp-xi}), and
(\ref{eq:def-dist-C-h}--\ref{eq:def-dist-Gamma}) define a mapping
\[\trsfrmDB:\cD_\theta^\delta \to \cD_\xi^\delta.\]
These mappings may be understood as
discretized versions of $\trsfrmA$ and $\trsfrmB$ from Section
\ref{sec:existenceuniqueness}. Now denoting by
\[X^\delta=(\bar C_{f,\delta},\bar R_{f,\delta},
\bar R_{f,\delta}^*,\bar \Gamma_\delta),
\quad Y^\delta=(\bar C_{\theta,\delta},\bar R_{\theta,\delta})\]
the specific elements of $\cD_\xi^\delta$ and $\cD_\theta^\delta$ that
correspond to the previously defined embeddings of the discrete-time DMFT
recursions,
since these satisfy (\ref{eq:def-dist-theta}--\ref{eq:def-dist-Gamma}), they
are a fixed point of these mappings, i.e.
\[\trsfrmDA(X^\delta)=Y^\delta,
\quad \trsfrmDB(Y^\delta)=X^\delta.\]

Lemma \ref{lem:dmft_discretization_error}(a) then holds by the following
reasoning: On one hand, $(X^\delta,Y^\delta) \in \cD_\xi^\delta \times
\cD_\theta^\delta$ is the unique such
fixed point of these mappings, because the equations
(\ref{eq:def-dist-theta}--\ref{eq:def-dist-Gamma}) uniquely determine,
recursively in time,
\[\begin{gathered}
\{\bar C_{\theta,\delta}^{t,s},\bar R_{\theta,\delta}^{t,s}\}_{t,s:\flr{t},\flr{s}=0}
\Rightarrow
\{\bar C_{f,\delta}^{t,s},\bar
R_{f,\delta}^t,\bar R_{f,\delta}^{t,*},\bar \Gamma_\delta^t\}_{t,s:\flr{t},\flr{s}=0}
\Rightarrow\\
\{\bar C_{\theta,\delta}^{t,s},\bar
R_{\theta,\delta}^{t,s}\}_{t,s:\flr{t},\flr{s} \in \{0,\delta\}}
\Rightarrow
\{\bar C_{f,\delta}^{t,s},\bar
R_{f,\delta}^t,\bar R_{f,\delta}^{t,*},\bar \Gamma_\delta^t\}_{t,s:\flr{t},\flr{s} \in \{0,\delta\}} \Rightarrow\\
\{\bar C_{\theta,\delta}^{t,s},\bar
R_{\theta,\delta}^{t,s}\}_{t,s:\flr{t},\flr{s} \in \{0,\delta,2\delta\}}
\Rightarrow \{\bar C_{f,\delta}^{t,s},\bar
R_{f,\delta}^t,\bar R_{f,\delta}^{t,*},\bar
\Gamma_\delta^t\}_{t,s:\flr{t},\flr{s} \in \{0,\delta,2\delta\}} \Rightarrow
\ldots
\end{gathered}\]
On the other hand, let us extend each operator
$\bar R_{f,\delta}^t:D([0,t],\R^k) \to \R^k$ having the form
\eqref{eq:barRform} to an operator
$\bar R_{f,\delta}^t:L^4([0,t],\R^k) \to \R^k$ given by
\[\bar R_{f,\delta}^t(x^{[t]})=\int_0^{\flr{t}} \bar R_{f,\delta}^{t,s}x^s\d s,
\quad \bar R_{f,\delta}^{t,s}=\delta^{-1}R_{f,\delta}^{k,j} \text{ if }
\flr{s}=j<\flr{t}=k.\]
In particular, this extension of $\bar R_{f,\delta}^t:D([0,t],\R^k) \to \R^k$
defined via \eqref{eq:def-dist-R-h} and
\eqref{eq:rfembed} is precisely the embedded operator we have defined in
\eqref{eq:Rfembed}. In this way,
we may identify $\cD_\xi^\delta \cap \cS_\xi(T,C_0)$ as a subset of the
space $\cS_\xi(T,C_0)$ of Section \ref{sec:existenceuniqueness}, where the
discontinuity set $D \subset [0,T]$ of Definition \ref{def:S}
is $\delta\Z_+ \cap [0,T]$.

Then for any sufficiently large constants $C_0 \equiv C_0(T)>0$
and $\lambda \equiv \lambda(T,C_0)>0$,
the same argument as in \Cref{lem:solution-in-space} shows that 
if $X^\delta \in \cD_\xi^\delta \cap \cS_\xi(T,C_0)$, then 
$Y^\delta=\trsfrmDA(X^\delta)$ defined via
\eqref{eq:def-dist-theta}, \eqref{eq:def-dist-deriv-t}, and 
(\ref{eq:def-dist-C-t}--\ref{eq:def-dist-R-t}) satisfies
$Y^\delta \in \cD_\theta^\delta \cap \cS_\theta(T,C_0)$;
the same argument as in \Cref{lem:solution-in-space-2} shows that if
$Y^\delta \in \cD_\theta^\delta \cap \cS_\theta(T,C_0)$, then 
$X^\delta=\trsfrmDB(Y^\delta)$ defined via
\eqref{eq:def-dist-xi},
(\ref{eq:def-dist-deriv-xi-star}--\ref{eq:def-dist-resp-xi}), and
(\ref{eq:def-dist-C-h}--\ref{eq:def-dist-Gamma}) satisfy
$X^\delta \in \cD_\xi^\delta \cap \cS_\xi(T,C_0)$;
and the same arguments as in Lemmas \ref{lem:contract-Ct} and
\ref{lem:contract-fsys} show that the combined map
\[\trsfrmD \equiv \trsfrmDA \circ \trsfrmDB\]
is contractive in the metric $\lbddst{\cdot}{\cdot}$ on
$\cD_\theta^\delta \cap \cS_\theta(T,C_0)$. The conditions that
$(C_\theta,R_\theta) \in \cS_\theta(T,C_0)$ are $\delta$-p.c.r.c.\ and
hence belong to $\cD_\theta^\delta$ are closed under this metric, so
$\cD_\theta^\delta \cap \cS_\theta(T,C_0)$ is also complete 
in $\lbddst{\cdot}{\cdot}$. Then by the Banach fixed-point theorem,
there exists a fixed point of $\trsfrmD:\cD_\theta^\delta \cap \cS_\theta(T,C_0)
\to \cD_\theta^\delta \cap \cS_\theta(T,C_0)$,
which must coincide with $Y^\delta$
by the above claim of uniqueness of this fixed point in $\cD_\theta^\delta$.
This shows that $Y^\delta \in 
\cD_\theta^\delta \cap \cS_\theta(T,C_0)$, and hence also
$X^\delta \in \cD_\xi^\delta \cap \cS_\xi(T,C_0)$,
i.e.\ $(Y^\delta,X^\delta) \in \cS(T,C_0)$ as claimed.
\end{proof}

\begin{proof}[Proof of Lemma \ref{lem:dmft_discretization_error}(b)]
We compare the outputs of the continuous map $\trsfrmA$
and the discretized map $\trsfrmDA$ when both are
given the same discrete input $X^\delta \in \mathcal D_\xi^\delta \cap
\cS_\xi(T,C_0)$. Given any such input
$X^\delta=(\bar C_{f,\delta},\bar R_{f,\delta},\bar
R_{f,\delta}^*,\bar\Gamma_\delta)$, consider the continuous
process $\{\theta^t\}_{t \in [0,T]}$ and the embedded discrete process
$\{\bar\theta^t_\delta\}_{t \in [0,T]}$ coupled using the same underlying
realization of $\{\bar u_\delta^t\}_{t \in [0,T]} \sim \GP(0,\bar C_{f,\delta})$:
\begin{align*}
    \theta^t&=\theta^0-\int_0^t \bar\eta^r \Big({\gamma}\bar\Gamma^r_\delta
\theta^r+g(\theta^r)+{\gamma}\bar R_{f,\delta}^r(\theta^{[r]})+
{\gamma}\bar R_{f,\delta}^{r,*}\theta^*\Big)\d r+\sqrt{\gamma}\,\bar u^{t}_\delta\, , \\
    \bar\theta^t_\delta&=\theta^0-\int_0^{\flr{t}} \bar\eta^{\flr{r}} \Big({\gamma}\bar\Gamma^r_\delta
\bar\theta^r_\delta+g(\bar\theta^r_\delta)+{\gamma}\bar R_{f,\delta}^r(\bar\theta^{[r]}_\delta)+
{\gamma}\bar R_{f,\delta}^{r,*}\theta^*\Big)\d r+\sqrt{\gamma}\,\bar u^t_\delta \, .
\end{align*}
Then
\begin{align*}
    \bar\theta^t_\delta - \theta^t &= - \int_0^{\flr{t}} \bar\eta^r \prn{{\gamma}\bar\Gamma_\delta^{r}(\bar \theta^{r}_\delta - \theta^r)+ (g(\bar \theta^{r}_\delta)-g(\theta^r)) + {\gamma} \bar R_{f,\delta}^{r}\prn{\prn{\bar\theta_\delta - \theta}^{[r]}} } \d r \nonumber\\
    &\qquad+ \int_0^{\flr{t}} \prn{\bar\eta^r - \bar\eta^{\flr{r}}}\prn{g(\bar \theta^{r}_\delta) +  {\gamma}\prn{\bar\Gamma_\delta^{r}\bar \theta^{r}_\delta+ \bar R_{f,\delta}^{r}\prn{{\bar\theta_\delta}^{[r]}} + \bar R_{f,\delta}^{r,*}\theta^* }} \d r \nonumber\\
    &\qquad+ \int_{\flr{t}}^t \bar\eta^r \Big({\gamma}\bar\Gamma^r_\delta
\theta^r+g(\theta^r)+{\gamma}\bar R_{f,\delta}^r(\theta^{[r]})+
{\gamma}\bar R_{f,\delta}^{r,*}\theta^*\Big)\d r.
\end{align*}
As a consequence, $\E\brk{\norm{\bar\theta^t_\delta - \theta^t}^2} \leq
4((\mathrm{I})+(\mathrm{II})+(\mathrm{III})+(\mathrm{IV}))$ where
\begin{align*}
(\mathrm{I})&=\E\left\|\int_0^{\flr{t}}
\bar\eta^r \prn{{\gamma}\bar\Gamma_\delta^{r}(\bar \theta^{r}_\delta -
\theta^r)+ (g(\bar \theta^{r}_\delta)-g(\theta^r))}\d r\right\|^2,\\
(\mathrm{II})&=\E\left\|\int_0^{\flr{t}}
\bar\eta^r {{\gamma} \bar R_{f,\delta}^{r}\prn{\prn{\bar\theta_\delta -
\theta}^{[r]}} } \d r\right\|^2,\\
(\mathrm{III})&=\E\left\|\int_0^{\flr{t}}
\prn{\bar\eta^r - \bar\eta^{\flr{r}}}\prn{g(\bar \theta^{r}_\delta) +
{\gamma}\prn{\bar\Gamma_\delta^{r}\bar \theta^{r}_\delta+ \bar
R_{f,\delta}^{r}\prn{{\bar\theta_\delta}^{[r]}} + \bar
R_{f,\delta}^{r,*}\theta^* }} \d r \right\|^2,\\
(\mathrm{IV})&= \E\left\|\int_{\flr{t}}^t \bar\eta^r \Big({\gamma}\bar\Gamma^r_\delta
\theta^r+g(\theta^r)+{\gamma}\bar R_{f,\delta}^r(\theta^{[r]})+
{\gamma}\bar R_{f,\delta}^{r,*}\theta^*\Big)\d r\right\|^2.
\end{align*}
By the Lipschitz continuity of $g(\cdot)$ and bound \eqref{eq:pRf-volterra}
for $\bar R_{f,\delta}$, the first two terms can be bounded by
\[\nI+\nII \lesssim
\frac{1}{\lambda}e^{2\lambda t}\cdot\sup_{s\in[0,T]} e^{-2\lambda s}
\E\brk{\norm{\theta^s-\bar \theta^s_\delta}^2}\]
where $\lesssim$ denotes inequality up to a constant not depending on $\delta$.
For the third term, by the Lipschitz continuity of
$\{\bar \eta^t\}_{t \in [0,T]}$
in Assumption \ref{asp:SGD}, we have $\int_0^T (\bar \eta^r-\bar
\eta^{\flr{r}})^2 \d r \lesssim \delta^2$ for a constant $C>0$, and hence also
$\nIII \lesssim \delta^2$ by Cauchy-Schwarz. Finally $\nIV \lesssim \delta^2$
by the conditions for $X^\delta \in \cS_\xi(T,C_0)$. Then
choosing large enough $\lambda$ yields $\sup_{s\in[0,T]} e^{-2\lambda s}
\E\norm{\theta^s-\bar \theta^s_\delta}^2 \lesssim \delta^2$, which implies
$\lbddst{C_\theta}{\bar C_{\theta,\delta}} \lesssim \delta$.
A similar argument shows $\lbddst{R_\theta}{\bar R_{\theta,\delta}} \lesssim
\delta$, establishing
\begin{equation}\label{eq:trsfrmAclose}
\lbddst{\trsfrmA(X^\delta)}{\trsfrmDA(X^\delta)} \lesssim \delta
\text{ for all } X^\delta \in \cD_\xi^\delta \cap \cS_\xi(T,C_0).
\end{equation}

Analogously, we may compare $\trsfrmB$
and $\trsfrmDB$ given the same discrete input $Y^\delta=(\bar C_{\theta,\delta},\bar R_{\theta,\delta}) \in \cD_\theta^\delta \cap \cS_\theta(T,C_0)$. Coupling by the same realizations of $(\{\bar w_\delta^t\}_{t \in [0,T]},w^*_\delta) \sim \GP(0,\bar C_{\theta,\delta})$ an $\{z^t\}_{t \in [0,T]}$, we have
\begin{align*}
    \xi^t&={-}\int_0^{{t}} \frac{\bar\eta^{{r}}}{\bar\kappa} \bar R_{\theta,\delta}^{t,r}
f(\xi^{r-},w_\delta^*,\eps)\d z^r+\bar
w^t_\delta\, , \\
    \bar\xi^t_\delta&={-}\int_0^{\flr{t}} \frac{\bar\eta^{\flr{r}}}{\bar\kappa} \bar R_{\theta,\delta}^{t,r}
f(\bar\xi^{r-}_\delta,w_\delta^*,\eps)\d z^r+\bar
w^t_\delta \, .
\end{align*}
Then 
\begin{align*}
    \xi^t - \bar\xi^t_\delta &= - \underbrace{\int_0^{\flr{t}} \frac{\bar\eta^{{r}}}{\bar\kappa} \bar R_{\theta,\delta}^{t,r} \prn{f(\xi^{r-},w_\delta^*,\eps) - f(\bar\xi^{r-}_\delta,w_\delta^*,\eps)} \d z^r}_{(\mathrm{I})} \nonumber\\
    &\qquad- \underbrace{\int_0^{\flr{t}} \frac{\bar\eta^r - \bar\eta^{\flr{r}}}{\bar\kappa}R_{\theta,\delta}^{t,r} {f(\bar\xi^{r-}_\delta,w_\delta^*,\eps)} \d z^r}_{(\mathrm{II})} -  \underbrace{\int_{\flr{t}}^t \frac{\bar\eta^{{r}}}{\bar\kappa} \bar R_{\theta,\delta}^{t,r}
f(\xi^{r-},w_\delta^*,\eps)\d z^r}_{(\mathrm{III})}. 
\end{align*}
We can bound $\E[\norm{\xi^t - \bar\xi^t_\delta}^2] \leq 3(\E[(\mathrm{I})^2]+ \E[(\mathrm{II})^2] + \E[(\mathrm{III})^2])$.
For $\E[(\mathrm{I})^2]$, by using Lemma \ref{lem:burkholder}, boundedness of $\bar R_{\theta,\delta}$, and Lipschitz continuity of $f$, as in the above argument, we have
\[\E[(\mathrm{I})^2] \lesssim
\frac{1}{\lambda}e^{2\lambda t}\cdot\sup_{s\in[0,T]} e^{-2\lambda s}
\E\brk{\norm{\xi^s-\bar \xi^s_\delta}^2}.\]
Using Lemma \ref{lem:burkholder}, we have similarly to the proof of \eqref{eq:trsfrmAclose} that $\E[(\mathrm{II})^2] \lesssim \delta^2 $ and $\E[(\mathrm{III})^2] \lesssim \delta $ up to some constants not depending on $\delta$. Then choosing $\lambda$ large enough and rearranging shows $\sup_{s\in[0,T]} e^{-2\lambda s}
\E\norm{\xi^s-\bar \xi^s_\delta}^2 \lesssim \delta$. Then by an analogous argument,
\[
\lbddst{C_f}{\bar C_{f,\delta}}^2 \lesssim \sup_{t\in [0,T]} e^{-2\lambda t}\E \left\|\int_0^t \frac{\bar\eta^r}{\bar \kappa}f(\xi^{r-},w^*_\delta,\eps) \d z^r- \int_0^{\flr{t}} \frac{\bar\eta^{\flr{r}}}{\bar \kappa}f(\bar\xi^{r-}_\delta,w^*_\delta,\eps)\d z^r\right\|^2 \lesssim \delta\,,
\]
that is,
$\lbddst{C_f}{\bar C_{f,\delta}} \lesssim \delta^{1/2}$.
Similarly, one may analyze $R_f$ and $R_f^*$ to show that
\begin{equation}\label{eq:trsfrmBclose}
\lbddst{\trsfrmB(Y^\delta)}{\trsfrmDB(Y^\delta)} \lesssim \delta^{1/2}
\text{ for all } Y^\delta \in \cD_\theta^\delta \cap \cS_\theta(T,C_0),
\end{equation}
and we omit this verification for brevity.

Now let $Y^\delta \in \cD_\theta^\delta \cap \cS_\theta(T,C_0)$ be the fixed
point of $\trsfrmD$, and let $Y \in \cS_\theta^\text{cont}(T,C_0)$ be the fixed
point of $\trsfrm$. Then
\begin{align*}
	\lbddst{Y}{Y^\delta} & =
\lbddst{\mathcal{T}(Y)}{\mathcal{T}^\delta(Y^\delta)}  \leq
\underbrace{\lbddst{\mathcal{T}(Y)}{\mathcal{T}(Y^\delta)}}_{\mathrm{(I)}} +
\underbrace{\lbddst{\mathcal{T}(Y^\delta)}{
\mathcal{T}^\delta(Y^\delta)}}_{\mathrm{(II)}}.
\end{align*}
By the contractivity \eqref{eq:T-contraction} of $\trsfrm$, we can choose
$\lambda$ large enough such that
\begin{align}
	\mathrm{(I)} \leq \frac 1 2 \lbddst{Y}{Y^\delta} \, . \label{eq:mid-integral-differential-approximation-2}
\end{align}
To control $\mathrm{(II)}$, we have
\begin{align*}
\nII&=\lbddst{\trsfrmA \circ \trsfrmB(Y^\delta)}{\trsfrmDA \circ
\trsfrmDB(Y^\delta)}\\
&\leq \lbddst{\trsfrmA \circ \trsfrmB(Y^\delta)}{\trsfrmA \circ
\trsfrmDB(Y^\delta)}\\
&\hspace{1in}+\lbddst{\trsfrmA \circ \trsfrmDB(Y^\delta)}{\trsfrmDA \circ
\trsfrmDB(Y^\delta)} \lesssim \delta^{1/2}
\end{align*}
where the last inequality holds
by \eqref{eq:trsfrmAclose}, \eqref{eq:trsfrmBclose}, and Lemma
\ref{lem:contract-Ct}. Thus $\lbddst{Y}{Y^\delta} \lesssim \delta^{1/2}$.
Setting $X=\trsfrmB(Y)$ and $X^\delta=\trsfrmDB(Y^\delta)$, this
implies by Lemma \ref{lem:contract-fsys} that also
$\lbddst{X}{X^\delta} \lesssim \delta^{1/2}$, so
\[\lim_{\delta \to 0} \lbddst{X}{X^\delta}=0,
\qquad \lim_{\delta \to 0} \lbddst{Y}{Y^\delta}=0.\]
\end{proof}

\begin{proof}[Proof of Lemma \ref{lem:dmft_discretization_error}(c)]
The convergence $\lim_{\delta\to 0}\lbddst{C_f}{\bar C_{f,\delta}}=0$
implies that there exists a coupling of $\{u^t\}_{t \in [0,T]} \sim
\GP(0,C_f)$ and $\{\bar u^t_\delta\}_{t \in [0,T]} \sim \GP(0,\bar C_{f,\delta})$ for which
\[\lim_{\delta \to 0} \sup_{t \in [0,T]} \E\norm{u^t-\bar u^t_\delta}^4=0.\]
Then defining $\{\theta^t\}_{t \in [0,T]}$ via $(C_f,R_f,R_f^*,\Gamma)$
and $\{\bar \theta_\delta^t\}_{t \in [0,T]}$ via
$(\bar C_{f,\delta},\bar R_{f,\delta},\bar R_{f,\delta}^*,\bar
\Gamma_\delta)$ using this coupling of $\{u^t\}$ and $\{\bar u_\delta^t\}$, the same arguments as leading to (\ref{eq:theta4diffbound}) show
\[\lim_{\delta \to 0} \sup_{t \in [0,T]} e^{-4\lambda t}
\E\norm{\theta^t-\bar \theta^t_\delta}^4=0.\]
In particular,
$\lim_{\delta \to 0} \E[\norm{\theta^{t_1}-\bar
\theta^{t_1}_\delta}^2+\ldots+\norm{\theta^{t_m}-\bar \theta^{t_m}_\delta}^2]
=0$, which implies the Wasserstein-2 convergence
(\ref{eq:discretedmftthetaconv}). A similar argument shows
(\ref{eq:discretedmftetaconv}).
\end{proof}

\section{Discretization Error of SGD and SME}\label{sec:disc-sgd-sme}

In this section, we conclude the proof of Theorem \ref{thm:DMFT-limit} by
analyzing the original SGD/SME dynamics \eqref{eq:SGDrescaled} and
\eqref{eq:SME}, and
showing that they are also well-approximated by the discrete-time dynamics
\eqref{eq:discretedynamics} as $\delta \to 0$.

\subsection{Discretization of SGD}

Recall the SGD update
\[\bar\btheta^{k+1}-\bar\btheta^k=-\eta^k {\left(\frac{1}{\kappa}
\sum_{i \in S^k} \x_i \otimes
f(\x_i^\top \bar\btheta^k,\x_i^\top \btheta^*,\eps_i)
+\frac{1}{n}\,g(\bar\btheta^k)\right)}, \qquad |S^k|=\kappa,\]
where $g(\cdot)$ is applied row-wise. We will analyze a Poissonized
version of this process: Recall $\set=\{S \subset [n]:|S|=\kappa\}$, and let
\[\bN^t=\{N_S^t\}_{S \in \set}\]
be a $\R^{\binom{n}{\kappa}}$-valued process indexed by $\set$, whose
coordinates are independent homogeneous Poisson jump processes with rate
$n^{1-\alpha}/{n \choose \kappa}$. Set
\[N^t=\1^\top \bN^t=\sum_{S \in \set} N_S^t.\]
Marginally, $N^t$ is a Poisson jump process with rate $n^{1-\alpha}$. We define
a Poissonized version of the SGD dynamics $\{\bar\btheta^k\}_{k \geq 0}$ of
\eqref{eq:SGD} (without time rescaling) by
\begin{equation}\label{eq:SGDPoissonized}
\bnu^t=\bar\btheta^{N^t}, \qquad \bnu^0=\btheta^0.
\end{equation}
Then $\bnu^t$ is a continuous-time jump process given by
\begin{align*}
\bnu^t&=\bnu^0-\int_0^t \eta^{N^{s-}} \sum_{S \in \set}\prn{\frac{1}{\kappa}
\sum_{i \in S} \x_i \otimes f(\x_i^\top \bnu^{s-},\x_i^\top \btheta^*,\eps_i)
+ \frac{1}{n} g(\bnu^{s-})} \d N^s_S\\
&=\bnu^0-\int_0^t \eta^{N^{s-}} \prn{\frac{1}{\kappa}\sum_{i=1}^n
\x_i \otimes f(\x_i^\top \bnu^{s-},\x_i^\top \btheta^*,\eps_i)\sum_{S \in
\set}
\1_{i\in S} \d N^s_S+\frac{1}{n} g(\bnu^{s-}) \sum_{S \in \set}
\d N^s_S}\\
&=\bnu^0-\int_0^t \eta^{N^{s-}} \prn{\frac{1}{\kappa}\,\X^\top
\diag(\bPi\,\d\bN^s)f(\X\bnu^{s-},\X\btheta^*,\beps))
+\frac{1}{n} g(\bnu^{s-}) \d N^s}.
\end{align*} 
Here $f(\cdot)$ is also applied row-wise, and $\bPi\in\R^{n\times{n\choose
\kappa}}$ is the incidence matrix with entries $\Pi_{i,S}=\1_{i \in S}$.

Fixing a discretization step size $\delta>0$, in this section, we continue
to write $\flr{t}$ and $\cil{t}$ for the time discretization notation
previously defined in \eqref{eq:flrcil}. We then consider an approximation
$\{\bnu_\delta^t\}_{t \geq 0}$ to the above Poissonized process, defined by
\begin{equation}\label{eq:deltaSGD}
\bnu_\delta^t = \bnu^0 - \int_0^{\flr{t}}
\frac{\bar\eta^{\flr{s}}}{\bar\kappa} \X^\top \diag(\bPi\,\d\bN^s)
f(\X\bnu_\delta^{s-},\X\btheta^*,\beps)
-\int_0^{\flr{t}} \bar \eta^{\flr{s}} g(\bnu_\delta^{s}) \d s.
\end{equation}
It may be checked that $\{\bnu_\delta^t\}_{t \geq 0}$ is a
piecewise-constant embedding of the discrete-time dynamics
\eqref{eq:discretedynamics} in the SGD setting of
$\z_\delta^k=\bPi\bP_\delta^k$,
\[\bnu_\delta^t=\btheta_\delta^k \text{ if } \flr{t}=k,\]
upon identifying $\bP_\delta^k=\int_{k\delta}^{(k+1)\delta} \d\bN^s
=\bN^{(k+1)\delta}-\bN^{k\delta}$.

The main result of this section is the following comparison lemma.

\begin{lem}\label{lem:disc-sgd}
For any fixed $T>0$, $\delta \in (0,1)$,
and some constant $C \equiv C(T,\alpha)>0$ not depending on $\delta$,
almost surely as $n,d \to \infty$,
\[\limsup_{n,d\to
\infty}\sup_{t\in[0,T]}\frac{\norm{\bnu^t-\bnu_\delta^t}^2}{d} \leq
C\delta.\]
\end{lem}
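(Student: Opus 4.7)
}
The approach is a stochastic Gronwall argument applied to the coupled difference $\Delta^t := \bnu^t - \bnu_\delta^t$, where both processes are driven by the \emph{same} Poisson process $\bN^t$. The plan is to first establish \emph{a priori} high-probability bounds $\sup_{t\in[0,T]} \|\bnu^t\|^2/d, \sup_{t\in[0,T]}\|\bnu_\delta^t\|^2/d \lesssim 1$ as $n,d\to\infty$. This uses the boundedness $\|f\|\leq M_f$, the linear growth of $g$, a Burkholder--Davis--Gundy (BDG) bound on the compensated Poisson martingale, and the crude operator-norm bound $\|\X\|_{\op}\lesssim 1$ that holds almost surely under Assumption \ref{asp:model}(b) (via a moment method). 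The correct scaling check is that an individual Poisson jump changes $\bnu$ in the $\ell_2$ sense by $O(\eta/\sqrt{\kappa})=O(n^{\alpha/2})$, but the predictable quadratic variation accumulates as $O(n)$ per unit time, so after normalization by $d\asymp n$ the process remains $O(1)$.

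Next, coupling the two equations gives an explicit decomposition $\Delta^t = L^t + E^t$, where
\begin{align*}
L^t &= -\int_0^{\flr{t}} \frac{\bar\eta^{\flr{s}}}{\bar\kappa}\,\X^\top\!\diag(\bPi\,\d\bN^s)\bigl[f(\X\bnu^{s-},\bxi^*,\beps)-f(\X\bnu_\delta^{s-},\bxi^*,\beps)\bigr]
\end{align*}
is the term linear in $\Delta^{s-}$, and $E^t$ collects the inhomogeneous errors, namely: (i) the coefficient mismatch $(\eta^{N^{s-}}/\kappa - \bar\eta^{\flr{s}}/\bar\kappa)\X^\top\!\diag(\bPi\,\d\bN^s)f(\X\bnu^{s-},\cdot)$, (ii) the gradient-drift mismatch $(\eta^{N^{s-}}/n)g(\bnu^{s-})\,\d N^s - \bar\eta^{\flr{s}}g(\bnu_\delta^s)\,\d s$, and (iii) the tail increments of $\bnu^t$ on $[\flr{t},t]$. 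For the coefficient mismatch, I would apply Assumption \ref{asp:SGD} (uniform convergence $\eta^k/n^\alpha\to\bar\eta^{k/n^{1-\alpha}}$ and $\kappa/n^\alpha\to\bar\kappa$), the Lipschitz continuity of $\bar\eta^t$, and the almost-sure functional LLN $\sup_{s\in[0,T]} |N^{s-}/n^{1-\alpha}-s|\to 0$ to bound this coefficient by $O(\delta + o_n(1))$ uniformly; then BDG combined with the a priori bound controls its squared contribution by $O(\delta^2)$ in $\|\cdot\|^2/d$. For the gradient drift, I would compensate $\d N^s - n^{1-\alpha}\d s$ to split off a martingale of size $o_n(1)$ (by BDG, since the increment is $O(1/n^\alpha)$ per jump with rate $n^{1-\alpha}$) plus a Riemann-discretization error of size $O(\delta)$ controlled by Lipschitz continuity of both $s\mapsto\bar\eta^s$ and the solution path. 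The tail in (iii) is handled by the same arguments restricted to an interval of length $\delta$, contributing the target $O(\delta)$.

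Finally, for the linear part $L^t$, I would use BDG together with the operator-norm bound $\|\X^\top\diag(\bPi\,\d\bN^s)h\|^2\lesssim\|\X\|_{\op}^2\sum_{i\in S(s)}h_i^2\,\d N^s$ (plus Lipschitz continuity of $f$) to derive
\[
\E\sup_{r\in[0,t]}\|\Delta^r\|^2/d \;\leq\; C\delta + C\int_0^t \E\sup_{r\in[0,s]}\|\Delta^r\|^2/d\,\d s + o_n(1),
\]
and Gronwall yields the bound in expectation. An almost-sure $\limsup$ statement then follows by taking higher moments and Borel--Cantelli along $(n,d)$. The main obstacle, and the place where the scaling must be handled carefully, is precisely this interaction between the small batch $\kappa = n^\alpha$ (which makes each Poisson jump of $\bnu^t$ macroscopic in the $\ell_2$ sense of a single coordinate direction) and the short $\delta$-freezing window (which contains $\Theta(\delta n^{1-\alpha})$ such jumps): one must use BDG with the correct $d^{-1}$ normalization and exploit the incoherence of the data matrix $\X$ (Assumption \ref{asp:model}(b)) to show that the squared cumulative effect of these individually large jumps is $O(\delta)$ uniformly in $n,d$ rather than blowing up.
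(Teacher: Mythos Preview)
Your decomposition and identification of the error sources are essentially the same as the paper's (which introduces an intermediary process $\tilde\bnu^t$ to separate the $\eta,\kappa\to\bar\eta,\bar\kappa$ errors from the $\delta$-discretization errors, but this is only an organizational difference). The a~priori bound, the compensation of $\d N^s$, and the BDG control of the stochastic integrals are all correctly anticipated.

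There is, however, a genuine gap in your final step. You propose to run Gr\"onwall in expectation to obtain $\E\sup_{t}\|\Delta^t\|^2/d \leq C\delta + o_n(1)$, and then ``take higher moments and Borel--Cantelli'' to upgrade to an almost-sure bound. This does not close: even if you establish $\E(\sup_t\|\Delta^t\|^2/d)^p \leq C_p\delta^p + o_n(1)$ for every fixed $p$, Markov's inequality gives
\[
\P\bigl(\sup_t\|\Delta^t\|^2/d > (1+\epsilon)C\delta\bigr)\;\leq\; \frac{C_p\delta^p}{((1+\epsilon)C\delta)^p} + o_n(1),
\]
and the first term is a strictly positive constant independent of $n$---so the tail probabilities are not summable. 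The difficulty is that the target $C\delta$ is nonzero, so moment bounds on $\sup_t\|\Delta^t\|^2/d$ alone cannot separate the deterministic $C\delta$ part from the random fluctuation.

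The paper resolves this by reversing the order: it first isolates the martingale contributions (which do have mean zero), applies high-moment BDG and Borel--Cantelli to \emph{those} to conclude they are $o(n)$ almost surely, and only then runs Gr\"onwall \emph{pathwise} on the resulting almost-sure event. Concretely, for the main stochastic term $\M^t = \int_0^t \bE^s\,\d\bZ^s$ one writes $\|\M^t\|^2 = P^t + Q^t$ via It\^o's formula, where $P^t$ is the compensator (bounded pathwise by $\int_0^t\|\Delta^s\|^2\,\d s$ using Lipschitz continuity of $f$ and the explicit form of $\bPi\bPi^\top$) and $Q^t$ is a zero-mean martingale. A moment computation gives $\E\sup_t|Q^t|^{2p} \lesssim n^{(\alpha+1)p+(1-\alpha)}$, hence $Q^t/n \to 0$ a.s.\ for $p$ large. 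With $Q^t = o(n)$ in hand, the inequality $\|\Delta^t\|^2 \lesssim \int_0^t\|\Delta^s\|^2\,\d s + \delta n + o(n)$ holds pathwise, and Gr\"onwall gives the almost-sure conclusion directly. Your argument becomes correct once you move the high-moment/Borel--Cantelli step from $\|\Delta^t\|^2$ to the martingale residuals.
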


We proceed to prove Lemma \ref{lem:disc-sgd}.
The following first establishes a high-probability bound on
the norm of $\{\bnu^t\}_{t \in [0,T]}$, which
will be important for controlling error terms in the subsequent analysis.

\begin{lem}\label{lem:l2-bound-sgd}
For any fixed $T>0$ and some constant $C \equiv C(T,\alpha)>0$,
almost surely for all large $n,d$,
\[\sup_{t\in [0,T]}\norm{\bnu^t} \leq C\sqrt{n}.\]
\end{lem}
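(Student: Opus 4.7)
My plan is to apply It\^o's formula for pure-jump semimartingales to $W^t := \|\bnu^t\|^2$. The process $\bnu^t$ is piecewise constant with jumps at the events of the independent Poisson processes $\{N_S^t\}_{S \in \set}$ of common rate $\lambda := n^{1-\alpha}/\binom{n}{\kappa}$, and a jump coming from batch $S$ updates $\bnu^t$ by $\Delta_S(\bnu^{t-}) := -\eta^{N^{t-}}\phi_S(\bnu^{t-})$ where $\phi_S(\btheta) := \frac{1}{\kappa}\sum_{i \in S}\x_i \otimes f(\x_i^\top\btheta,\x_i^\top\btheta^*,\eps_i) + \frac{1}{n}g(\btheta)$. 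Summing the jump contributions to $W^t$ and splitting each Poisson integral as compensator plus martingale via $dN_S^s = \lambda\,ds + d\tilde N_S^s$, one obtains the semimartingale decomposition $W^t = W^0 + \int_0^t \mu^s\,ds + M_t$ with predictable drift
\[
\mu^s = -2\eta^{N^{s-}}n^{-\alpha}(\bnu^{s-})^\top\nabla\cR(\bnu^{s-}) + (\eta^{N^{s-}})^2\lambda\sum_{S\in\set}\|\phi_S(\bnu^{s-})\|^2,
\]
where the first term comes from the combinatorial identity $\sum_S \phi_S(\btheta) = (\binom{n}{\kappa}/n)\nabla\cR(\btheta)$.

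For the drift I will combine $\eta^t \lesssim n^\alpha$ from Assumption \ref{asp:SGD}, the bounds $\|f\|_2 \leq M_f$ and $\|g(\btheta)\| \leq C(\sqrt{d}+\|\btheta\|)$ from Assumption \ref{asp:lipschitz}, and the classical almost-sure bound $\|\X\|_\op \leq C$ valid for all large $n,d$ under Assumption \ref{asp:model}(b). The elementary estimates $|\Tr(\bnu^\top\X^\top F(\bnu))|\leq \|\X\|_\op\|\bnu\|\sqrt{n}M_f$, $|\Tr(\bnu^\top g(\bnu))| \leq \|\bnu\|^2 + C\|\bnu\|\sqrt{d}$, and $\sum_S\|\phi_S\|^2 \leq 2\binom{n}{\kappa}(\|\X\|_\op^2 M_f^2/\kappa + C(d+\|\bnu\|^2)/n^2)$ (the last via $\|\X_S^\top F_S\|_F \leq \|\X\|_\op\sqrt{\kappa}M_f$) combine to give $|\mu^s| \leq C(n + W^{s-})$ for a constant $C$ depending on $T,\alpha$ and the model constants. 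For the martingale $M_t$, I invoke the Burkholder-Davis-Gundy inequality of Lemma \ref{lem:burkholder}, yielding $\E\sup_{t\leq T}|M_t|^{2p} \leq C_p\E\langle M\rangle_T^p$. The predictable quadratic variation decomposes as $\langle M\rangle_T = \int_0^T\sum_S(2(\bnu^{s-})^\top\Delta_S + \|\Delta_S\|^2)^2\lambda\,ds$, and the same elementary bounds give $\langle M\rangle_T \leq C\int_0^T(n^2 + n W^{s-})\,ds$.

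To close the argument I localize with the stopping time $\tau_K := \inf\{t:\,W^t > Kn\}$. Gr\"onwall's inequality applied to the stopped drift-plus-martingale decomposition, together with the BDG bound, gives $\E\sup_{t\leq T\wedge\tau_K}(W^t)^p \leq C_p n^p$ for every fixed $p\geq 1$. Choosing $K$ large and $p$ large enough that the resulting Markov tail bounds are summable in $n$, the Borel-Cantelli lemma upgrades this to $\sup_{t\leq T}W^t\leq Cn$ almost surely for all sufficiently large $n,d$, which is the desired statement. The principal technical subtlety I expect is that per-jump increments of $W^t$ scale as $n^{\alpha/2}\|\bnu^{s-}\| + n^\alpha$ while the Poisson intensity scales as $n^{1-\alpha}$, and the bookkeeping must be done carefully so that the drift bound $O(n + W^{s-})$ and the quadratic variation bound $O(n^2 + nW^{s-})$ are uniform in $\alpha \in [0,1)$; the factor $\binom{n}{\kappa}/n$ coming from the combinatorial identity for $\sum_S \phi_S$ is what produces this uniformity.
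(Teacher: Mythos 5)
Your drift--martingale decomposition of $W^t=\|\bnu^t\|^2$ via It\^o's formula, the combinatorial identity $\sum_{S}\phi_S = (\tbinom{n}{\kappa}/n)\nabla\cR$, the drift bound $|\mu^s|\lesssim n+W^{s-}$, and the quadratic-variation bound $\langle M\rangle_T\lesssim\int_0^T(n^2+nW^{s-})\,\d s$ are all correct (conditional on the a.s.-eventual event $\|\X\|_\op\lesssim 1$), and this is a genuinely different route from the paper: the paper works directly with the first moment of $\|\bnu^t\|$, bounds the jump-noise contribution by the \emph{deterministic} quantity $\|\bPi\bN^T\|$ using the nonnegativity of Poisson increments and the $\|f\|_\infty$-bound, controls $\|\bPi\bN^T\|^2$ and $N^T$ a.s.\ by the Schudy--Sviridenko concentration inequality and a Poisson tail bound, and then closes with a \emph{discrete} Gr\"onwall over the jump times.

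However, your final step does not close. You obtain $\E\sup_{t\le T}(W^t)^p\lesssim C_p\,n^p$ with $C_p$ independent of $n$, and then conclude via Markov plus Borel--Cantelli. But Markov gives
\[
\P\Big(\sup_{t\le T}W^t>Cn\Big)\;\le\;\frac{\E\sup_t(W^t)^p}{(Cn)^p}\;\lesssim\;\frac{C_p}{C^p},
\]
which is small for large $C$ but \emph{does not decay with $n$}: the tail bound is independent of $n$, hence not summable, and Borel--Cantelli does not apply. Indeed this is not an artifact of your estimate: $\langle M\rangle_T\asymp n^2$ is tight (a direct calculation shows each $\Delta_S W$ has magnitude $\asymp n^{(1+\alpha)/2}$ on the event $W\lesssim n$, and there are $\asymp n^{1-\alpha}$ jumps, so the martingale genuinely has fluctuations of order $n$), so no Bernstein/Freedman-type exponential martingale bound yields a tail decaying in $n$ either. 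The paper sidesteps this by never needing concentration of the endogenous quantity $\sup_t W^t$: it first establishes a.s.\ bounds on the \emph{exogenous} Poisson statistics $\|\bPi\bN^T\|^2$ and $N^T$ (which concentrate at scale $o(n)$ with exponentially summable tails), and then proves $\sup_t\|\bnu^t\|\lesssim\sqrt{n}$ as a \emph{deterministic} consequence on that event via a discrete Gr\"onwall. To salvage your approach you would need to carry out a similar conditioning step, in which case you effectively recover the paper's argument. (One smaller imprecision: your invocation of BDG as $\E\sup_t|M_t|^{2p}\le C_p\E\langle M\rangle_T^p$ is not the right form for jump martingales; the Kunita/Rosenthal inequality used in the paper --- Lemma \ref{lem:burkholder} or \cite[Theorem 2.11]{kunita2004stochastic} --- has an additional $p$-th-moment jump term. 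That extra term does not break your moment bounds, but it must be tracked.)
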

\begin{proof}
We continue to write $\|\cdot\|$ for the vector $\ell_2$-norm and matrix
Frobenius norm, denote $\|\M\|_\infty=\max_{i,j} |M_{ij}|$, and write $\lesssim$
for inequality up to a constant possibly depending on $T,\alpha$.

Using the boundedness of $f(\cdot)$ and Lipschitz continuity of $g(\cdot)$ in
Assumption \ref{asp:lipschitz} and the scalings $\eta^t \lesssim n^\alpha$
and $\kappa \asymp n^\alpha$ in Assumption \ref{asp:SGD}, we have
\begin{align*}
\|\bnu^t\| &\lesssim \|\bnu^0\|+\|\X\|_\op\left\|\int_0^t \diag(\bPi\,\d\bN^s)
f(\X\bnu_\delta^{s-},\X\btheta^*,\beps)\right\|
+n^{\alpha-1}\left\|\int_0^t g(\bnu^{s-})\1^\top \d\bN^s\right\|\\
&\lesssim \|\bnu^0\|+\|\X\|_\op\left\|\int_0^t \bPi\,\d\bN^s\right\|
\|f(\X\bnu_\delta^{s-},\X\btheta^*,\beps)\|_\infty
+n^{\alpha-1}\int_0^t \|g(\bnu^{s-})\|\1^\top \d\bN^s\\
&\lesssim \|\bnu^0\|+\|\X\|_\op\|\bPi\,\bN^T\|
+n^{\alpha-1}\int_0^t (\|\bnu^{s-}\|+\sqrt{n}) \d N^s.
\end{align*}
Here, $\bN^T$ has i.i.d.\ coordinates with law
$\Pois(Tn^{1-\alpha}/\binom{n}{\kappa})$. Then
applying the concentration inequality \eqref{eq:poissonconcentration} to
$Q=\|\bPi\,\bN^T\|^2$, where
\[\E Q=\left(\frac{Tn^{1-\alpha}}{\binom{n}{\kappa}}\right)^2
\|\bPi \1\|^2
+\frac{Tn^{1-\alpha}}{\binom{n}{\kappa}}
\Tr \bPi^\top \bPi
=\left(\frac{Tn^{1-\alpha}}{\binom{n}{\kappa}}\right)^2
\cdot n\binom{n-1}{\kappa-1}^2
+\frac{Tn^{1-\alpha}}{\binom{n}{\kappa}} \binom{n}{\kappa}\kappa
\lesssim n,\]
$\mu_0 \lesssim \E Q \lesssim n$, and $\mu_2 \lesssim \|\bPi^\top \bPi\|_\infty
\leq \kappa \lesssim n^\alpha$, we have $Q \lesssim n$ a.s.\ for all large
$n,d$. Since $N^T=\1^\top \bN^T \sim \Pois(n^{1-\alpha})$ where
$\alpha<1$, by a standard Poisson tail bound, also
$N^T \lesssim n^{1-\alpha}$ a.s.\ for all large $n,d$.
Under Assumption \ref{asp:model}, 
$\|\X\|_\op \lesssim 1$ a.s.\ for all large $n,d$, and
$\|\bnu^0\|=\|\btheta^0\| \lesssim \sqrt{n}$. Then on the
intersection $\mathcal{E}$ of these events, which holds a.s.\ for all large $n,d$,
\[\|\bnu^t\| \lesssim \sqrt{n}+n^{\alpha-1}\int_0^t \|\bnu^{s-}\| \d N^s.\]
Letting $t_1,t_2,\ldots$ be the jumps of $\{N^t\}_{t \in [0,T]}$, this means
that for a constant $C \equiv C(T,\alpha)>0$,
\[\|\bnu^{t_{k+1}}\| \leq
C\sqrt{n}+Cn^{\alpha-1}(\|\bnu^{t_1}\|+\ldots+\|\bnu^{t_k}\|).\]
A discrete Gr\"onwall inequality then shows that
$\|\bnu^{t_k}\| \leq C\sqrt{n} e^{Ckn^{\alpha-1}}$ for all $k=1,\ldots,N^T$.
Since the number of jumps $N^T$
satisfies $N^T \lesssim n^{1-\alpha}$ on $\mathcal{E}$, this shows the lemma.
\end{proof}

\begin{proof}[Proof of Lemma \ref{lem:disc-sgd}]
Let
\[\bZ^t=\bN^t-t\frac{n^{1-\alpha}}{{n\choose \kappa}}\mathbf{1}_{n\choose
\kappa}, \qquad Z^t=\1^\top\bZ^t\]
be the centered versions of $\bN^t$ and $N^t=\1^\top \bN^t$, where $\1_{n \choose \kappa}$ denotes the
all-1's vector in $\R^{n \choose \kappa}$. Then
\begin{equation}\label{eq:PiN}
\bPi\,\d\bN^s
=\bPi\,\d\bZ^s+n^{1-\alpha}\frac{\binom{n-1}{\kappa-1}}{\binom{n}{\kappa}}
\1_{n \choose \kappa} \d s
=\bPi\,\d\bZ^s+\kappa n^{-\alpha}\1_{n \choose \kappa} \d s
\end{equation}
where we have used
$\bPi\cdot \mathbf{1}_{n\choose \kappa} = {n-1 \choose \kappa-1}
\mathbf{1}_{n\choose \kappa}$. We introduce an intermediary process
\begin{equation}\label{eq:tildenu}
\tilde\bnu^t = \bnu^0 - \int_0^t \frac{\bar\eta^{s}}{\bar\kappa} \X^\top 
\diag(\bPi\,\d\bN^s)f(\X\tilde\bnu^{s-},\X\btheta^*,\beps) - \int_0^t
{\bar\eta^{s}} g(\tilde\bnu^{s}) \d s
\end{equation}
and proceed to first compare $\{\bnu^t\}_{t \in [0,T]}$ to $\{\tilde\bnu^t\}_{t
\in [0,T]}$, and then compare $\{\tilde\bnu^t\}_{t \in [0,T]}$ to the final
discretized process $\{\bnu_\delta^t\}_{t \in [0,T]}$.\\

\paragraph{Step 1: Bounding $\norm{\bnu^t - \tilde \bnu^t}$}

We decompose the difference into four terms
\begin{align*}
\bnu^t - \tilde \bnu^t&=\underbrace{\int_0^t
\frac{\bar\eta^{s}}{\bar\kappa} \X^\top
\diag(\bPi\,\d\bN^s)f(\X\tilde\bnu^{s-},\X\btheta^*,\beps) - \int_0^t
\frac{\eta^{N^{s-}}}{\kappa} \X^\top
\diag(\bPi\,\d\bN^s)f(\X\bnu^{s-},\X\btheta^*,\beps)}_{\mathrm{(I)} + \mathrm{(II)}}\\
    &\qquad+\underbrace{\int_0^t \bar\eta^{s} g(\tilde\bnu^{s}) \d s - \int_0^t
\eta^{N^{s-}} \frac{1}{n} g(\bnu^{s-}) \d N^s}_{\mathrm{(III)} + \mathrm{(IV)}}\,,
\end{align*}
applying \eqref{eq:PiN} and $\d N^s=\d Z^s+n^{1-\alpha}\,\d s$
to separate $\nI+\nII$ and $\nIII+\nIV$ into their drift and martingale
components:
\begin{align*}
    \mathrm{(I)} &= \int_0^t \frac{\bar\eta^{s}}{\bar\kappa} \X^\top
(f(\X\tilde\bnu^s,\X\btheta^*,\beps)-f(\X\bnu^s,\X\btheta^*,\beps)) \kappa
n^{-\alpha} \d s\\
&\hspace{1in} - \int_0^t \prn{\frac{\eta^{N^s}}{\kappa} -
\frac{\bar\eta^{s}}{\bar\kappa}} \X^\top f(\X\bnu^s,\X\btheta^*,\beps) \kappa
n^{-\alpha}\d s\, ,\\
    \mathrm{(II)} &= \underbrace{\int_0^t \frac{\bar\eta^{s}}{\bar\kappa} \X^\top
\diag(\bPi\,\d\bZ^s)(f(\X\tilde\bnu^{s-},\X\btheta^*,\beps)-f(\X\bnu^{s-},\X\btheta^*,\beps))}_{\mathrm{(II.a)}}\\
&\hspace{1in} - \underbrace{\int_0^t \prn{\frac{\eta^{N^{s-}}}{\kappa} -
\frac{\bar\eta^{s}}{\bar\kappa}} \X^\top
\diag(\bPi\,\d\bZ^s)f(\X\bnu^{s-},\X\btheta^*,\beps)}_{\mathrm{(II.b)}},\\
    \mathrm{(III)} & = \int_0^t \bar\eta^{s} g(\tilde\bnu^{s}) \d s - \int_0^t
{\frac{\eta^{N^{s-}}}{n^\alpha} g(\bnu^{s-})  } \d s\, ,\\
\mathrm{(IV)} &= - \int_0^t \eta^{N^{s-}} \frac{1}{n} g(\bnu^{s-})\d Z^s.
\end{align*}

By Doob's $L^p$-maximal inequality and a standard bound for the central moments
of a Poisson random variable, for any fixed $p>1$,
$\E \sup_{t \in [0,T]} |\frac{N^t}{n^{1-\alpha}}-t|^p
\lesssim \E |\frac{N^T}{n^{1-\alpha}}-T|^p \lesssim n^{-(1-\alpha)p/2}$. Thus,
fixing any sufficiently small constant $\iota>0$,
Markov's inequality and the Borel-Cantelli lemma imply
that a.s.\ for all large $n,d$,
\begin{equation}\label{eq:Poissontimeapprox}
\sup_{t \in [0,T]} \left|\frac{Z^t}{n^{1-\alpha}}\right|
=\sup_{t \in [0,T]} \left|\frac{N^t}{n^{1-\alpha}}-t\right| \leq
n^{-\frac{1-\alpha}{2}+\iota}<n^{-\iota}.
\end{equation}
By the scaling and continuity assumptions for
$\kappa,\eta$ in Assumption \ref{asp:SGD}, we then
have for any fixed $p \geq 1$ that
\begin{equation}\label{eq:Poissonprocapprox}
\int_0^T \abs{\frac{\eta^{N^s} }{\kappa} -  \frac{\bar\eta^{s}}{\bar\kappa}}^p
\d s=\int_0^T \abs{\frac{\bar\eta^{N^s/n^{1-\alpha}}}{\bar \kappa} -
\frac{\bar\eta^{s}}{\bar\kappa}}^p \d s+o(1)=o(1).
\end{equation}
Thus a.s.\ for all large $n,d$, by the boundedness and Lipschitz continuity of $f$,
\begin{align}
    \norm{\mathrm{(I)}} &\lesssim  \int_0^t \norm{\X}^2 \norm{\tilde\bnu^{s}
- \bnu^{s}} \d s + \norm{\X} \sqrt{n} \int_0^t \abs{\frac{\eta^{N^s} }{\kappa}
-  \frac{\bar\eta^{s}}{\bar\kappa}} \d s\notag\\
    &\lesssim \int_0^t \norm{\tilde\bnu^{s} - \bnu^{s}}  \d s
+o(\sqrt{n}).\label{eq:SGDIbound}
\end{align}
Since $g$ is Lipschitz and $\sup_{t \in [0,T]} \norm{\bnu^t} \lesssim \sqrt{n}$
a.s.\ for all large $n,d$ from \Cref{lem:l2-bound-sgd},
we can similarly bound
\begin{equation}\label{eq:SGDIIIbound}
\norm{\nIII} \lesssim \int_0^t \norm{\tilde\bnu^{s} - \bnu^{s}}  \d s +
o(\sqrt{n}).
\end{equation}
Also $\sup_{t \in [0,T]} \eta^t \lesssim n^\alpha$ by Assumption \ref{asp:SGD},
and $\sup_{t \in [0,T]} \norm{g(\bnu^t)} \lesssim \sqrt{n}$ a.s.\ for all
large $n,d$ by \Cref{lem:l2-bound-sgd} and the Lipschitz continuity of $g$, so
\begin{equation}\label{eq:SGDIVbound}
\norm{\nIV} \lesssim n^{\alpha-\frac{1}{2}} \int_0^t \d Z^s
\lesssim n^{\alpha-\frac{1}{2}} \sup_{t \in [0,T]} |Z^t| \lesssim 
n^{\alpha-\frac{1}{2}} \cdot n^{\iota+\frac{1-\alpha}{2}}
\lesssim n^{\frac{\alpha}{2}+\iota}=o(\sqrt{n}),
\end{equation}
the last bound holding for $\iota>0$ small enough.

For the term $\nII$, let $\{\bE^t\}_{t \geq 0}$ be any $\R^{m \times
\binom{n}{\kappa}}$-valued predictable process, and define
\[\M^t=\int_0^t \bE^s \d \bZ^s.\]
By It\^o's formula, for any continuously-differentiable function $F:\R^m \to
\R$,
\begin{equation}\label{eq:Ito-jump}
F(\M^t)-F(\mathbf{0})=\int_0^t \nabla F(\M^{s-})^\top \d \M^s + \sum_{0<s\leq t}
(F(\M^s)-F(\M^{s-})-\nabla F(\M^{s-})^\top \Delta \M^s)
\end{equation}
where $\Delta\M^s=\M^s-\M^{s-}$. In particular,
\[
\norm{\M^t}^2 = 2\int_0^t {\M^{s-}}^{\top}\bE^s\d\bZ^s + \sum_{0<s\leq
t}\norm{\Delta \M^s}^2\, .
\]
Here, each jump of $\M^s$ occurs at a jump of $N_S$ for some $S \in \set$, in
which case $\Delta \M^s = \bE^s \e_S$ where $\e_S$ is the standard basis
vector corresponding to the coordinate $S \in \set$. Hence
\begin{align*}
\sum_{0<s\leq t}\norm{\Delta \M^s}^2 &= \sum_{S \in \set} \int_0^t
\norm{\bE^s \e_S}^2 \d {N}^s_S = \sum_{S \in \set} \int_0^t \norm{\bE^s
\e_S}^2 \d Z^s_S + \frac{n^{1-\alpha}}{{n\choose \kappa}}\sum_{S \in \set}
\int_0^t \norm{\bE^s \e_S}^2 \d s\nonumber\\
&=\int_0^t \Tr \bE^s \diag(\d \bZ^s){\bE^s}^\top + \frac{n^{1-\alpha}}{{n\choose
\kappa}}\int_0^t \Tr \bE^s{\bE^s}^\top \d s\,.
\end{align*}
Thus
\[\|\M^t\|^2=\underbrace{\frac{n^{1-\alpha}}{{n\choose
\kappa}}\int_0^t \Tr \bE^s{\bE^s}^\top \d s}_{:=P^t}
+\underbrace{2\int_0^t {\M^{s-}}^{\top}\bE^s\d\bZ^s 
+\int_0^t \Tr \bE^s \diag(\d \bZ^s){\bE^s}^\top}_{:=Q^t}.\]

We apply this to the first component $\mathrm{(II.a)}$ of $\mathrm{(II)}$,
identifying $\R^{d \times k} \equiv \R^m$ with $m=dk$ and noting that
\[\mathrm{(II.a)}=\int_0^t \frac{\bar\eta^{s}}{\bar\kappa} \X^\top
\diag(\bPi\,\d\bZ^s)(f(\X\tilde\bnu^{s-},\X\btheta^*,\beps)-f(\X\bnu^{s-},\X\btheta^*,\beps))=\int_0^t
\bE^s\d\bZ^s\]
where
\[\bE^s=\frac{\bar\eta^{s}}{\bar\kappa} \A^s\bPi,
\qquad \A^s=[\a_1^s,\ldots,\a_n^s] \in \R^{dk \times n},\]
\[\a_i^s=\vec(\x_i \otimes [f(\x_i^\top \tilde
\bnu^{s-},\X\btheta^*,\beps) - f(\x_i^\top \bnu^{s-},\X\btheta^*,\beps)]).\]
This gives the decomposition
\begin{equation}\label{eq:IIabound}
\|\mathrm{(II.a)}\|^2=P^t+Q^t.
\end{equation}
For this choice of $\bE^s$, we may bound
$P^t$ using $\bPi \bPi^\top = ({n-1 \choose \kappa-1} - {n-2 \choose
\kappa-2}) \Id_n + {n-2 \choose \kappa-2}
\mathbf{1}_n\mathbf{1}_n^\top$,
where ${n-1 \choose \kappa-1} - {n-2 \choose
\kappa-2}=(1-\frac{\kappa-1}{n-1})\binom{n-1}{\kappa-1} \lesssim
\frac{\kappa}{n}\binom{n}{\kappa} \lesssim n^{\alpha-1}\binom{n}{\kappa}$ and
$\binom{n-2}{\kappa-2} \lesssim \frac{\kappa^2}{n^2}\binom{n}{\kappa}
\lesssim n^{2\alpha-2}\binom{n}{\kappa}$. Then
\begin{align*}
\Tr \bE^s{\bE^s}^\top & \leq \prn{\frac{\bar\eta^{t}}{\bar\kappa}}^2
\left(\binom{n-1}{\kappa-1}-\binom{n-2}{\kappa-2}\right)
\Tr \A^s{\A^s}^\top
+\prn{\frac{\bar\eta^{t}}{\bar\kappa}}^2
\binom{n-2}{\kappa-2}\|\A^s\1_n\|^2\\
&\lesssim n^{\alpha-1}\binom{n}{\kappa}
\sum_{i=1}^n \|\x_i\|^2\|f(\x_i^\top \tilde
\bnu^{s-},\X\btheta^*,\beps) - f(\x_i^\top \bnu^{s-},\X\btheta^*,\beps)\|^2\\
&\hspace{1in}+n^{2\alpha-2}\binom{n}{\kappa}\|\X^\top(f(\X\tilde
\bnu^{s-},\X\btheta^*,\beps) - f(\X\bnu^{s-},\X\btheta^*,\beps))\|^2\\
&\lesssim n^{\alpha-1}\binom{n}{\kappa}\|f(\X\tilde
\bnu^{s-},\X\btheta^*,\beps) - f(\X\bnu^{s-},\X\btheta^*,\beps)\|^2
\lesssim n^{\alpha-1}\binom{n}{\kappa}\|\tilde \bnu^{s-}-\bnu^{s-}\|^2
\end{align*}
where the last two inequalities hold on a $\X$-dependent event a.s.\ for all
large $n$. So
\begin{equation}\label{eq:Ptbound}
P_t \lesssim \int_0^t \norm{\tilde\bnu^s - \bnu^s}^2\d s.
\end{equation}

For the martingale part $Q^t$, we may write
\[Q^t=\int_0^t {\H^s}^\top \d\bZ^s,
\quad \H^s=(H_S^s)_{S \in \set},
\quad H_S^s=2{\M^{s-}}^\top \bE^s\e_S+\|\bE^s\e_S\|^2.\]
Then by the Burkholder-Davis-Gundy type inequality of
\cite[Theorem 2.11]{kunita2004stochastic}, 
identifying $\bZ^s$ as a compensated Poisson random measure on $(0,T] \times
\set$, for any $p \geq 1$ and some constant $C_p>0$,
\begin{equation}\label{eq:Qtmomentbound}
\E \sup_{t \in [0,T]} (Q^t)^{2p}
\leq C_p\left(\E\left(\int_0^T \|\H^s\|^2\,
\frac{n^{1-\alpha}}{\binom{n}{\kappa}}\d s\right)^p
+\E\int_0^T \sum_{S \in \set} |H_S^s|^{2p}\,
\frac{n^{1-\alpha}}{\binom{n}{\kappa}}\d s\right).
\end{equation}
To bound $\H^s$, note that
\begin{align*}
\norm{\bE^s\e_S} &\lesssim \|\A^s\|_\op \cdot \|\bPi\e_S\|
\lesssim \normop{\X}\norm{f}_\infty \cdot \sqrt{\kappa}  \lesssim n^{\alpha/2},
\end{align*}
and the operator norm of this matrix can be bounded similarly by 
\begin{align*}
    \norm{\bE^s}_\op &\lesssim \normop{\X}\norm{f}_\infty \cdot \norm{\bPi}_\op \lesssim
\sqrt{\|\bPi\|_{\ell_1 \to \ell_1} \|\bPi\|_{\ell_\infty \to \ell_\infty}}
\lesssim \sqrt{\kappa \binom{n-1}{\kappa-1}}
\lesssim
{\kappa}\sqrt{\frac{{n\choose \kappa}}{n}} \lesssim
n^{\alpha}\sqrt{\frac{{n\choose \kappa}}{n}}.
\end{align*}
Thus $|H_S^s| \lesssim \norm{\M^{s-}}\,n^{\alpha/2} + n^{\alpha}$, and
\begin{align*}
\|\H^s\|^2 &\lesssim \sum_{S \in \set} \brk{\prn{{\M^{s-}}^\top \bE^s \e_S}^2
+ n^{2\alpha}} \lesssim \prn{{\M^{s-}}^\top \bE^s{\bE^s}^\top \M^{s-}
+ n^{2\alpha}{n\choose \kappa}}\\
&\lesssim \prn{\norm{\M^{s-}}^2 n^{2\alpha-1}{n\choose \kappa} + n^{2\alpha}{n\choose \kappa}}\,.
\end{align*}
As a consequence, applying these bounds and H\"older's inequality to
\eqref{eq:Qtmomentbound},
\begin{align}
    \E\sup_{t\in[0,T]} (Q^t)^{2p}
    &\lesssim \E\prn{n^{\alpha}\int_0^T \norm{\M^{s-}}^2 \d s +
n^{\alpha+1}}^{p} + \frac{n^{1-\alpha}}{{n\choose\kappa}}\E\int_0^T \sum_{S \in
\set} \prn{\norm{\M^{s-}}n^{\alpha /2} + n^{\alpha }}^{2p} \d s\nonumber\\
    &\lesssim n^{\alpha p}\,\E\int_0^T \norm{\M^s}^{2p} \d s +
n^{p(\alpha+1)} + n^{1-\alpha}\left(n^{\alpha p}\,\E \int_0^T
\norm{\M^s}^{2p} \d s+n^{2\alpha p}\right)\nonumber\\
    &\lesssim n^{\alpha p + 1 - \alpha}\int_0^T \E \norm{\M^s}^{2p} \d s +
n^{\alpha p + p}\label{eq:kunita-BDG}
\end{align}
for $p \geq 1$. To bound $\E \|\M^s\|^{2p}$, observe that taking expectations
in It\^o's formula \eqref{eq:Ito-jump} and recalling that each jump of
$\M^s$ is given by $\nabla \M^s=\bE^s\e_S$, corresponding to the jump of
some coordinate $N_S$ which has rate $\frac{n^{1-\alpha}}{\binom{n}{\kappa}}$,
\begin{align}
    \frac{\d}{\d t} \E F(\M^{t}) &=
\frac{n^{1-\alpha}}{{n\choose\kappa}}\E\sum_{S \in \set}
\prn{F(\M^{t-}+\bE^t\e_S) - F(\M^{t-}) - \nabla F(\M^{t-})^\top \bE^t
\e_S} \nonumber\\
    &= \frac{n^{1-\alpha}}{{n\choose\kappa}}\E\sum_{S \in \set} \int_0^1
(1-h)\,\e_S^\top {\bE^t}^\top \cdot \nabla^2 F(\M^{t-}+h\bE^t \e_S) \cdot \bE^t \e_S \d h\nonumber\\
    &\lesssim \frac{n}{{n\choose\kappa}}\E\sum_{S \in \set} \int_0^1
(1-h)\norm{\nabla^2 F(\M^{t-}+h\bE^t \e_S)}\d h\, .
\end{align}
Applying this with $F(\x)=\norm{\x}^{2p}$, we know $\norm{\nabla^2 F(\x)}
\lesssim \norm{\x}^{2p-2}$. Thus
\begin{align*}
    \frac{\d}{\d t} \E \norm{\M^{t}}^{2p} &\lesssim
\frac{n}{{n\choose\kappa}}\E\sum_{S \in \set} \int_0^1
(1-h)\norm{\M^{t-}+h\bE^t \e_S}^{2p-2}\d h
    \lesssim \frac{n}{{n\choose\kappa}}\E\sum_{S \in \set}
\prn{\norm{\M^{t-}}^{2p-2}+\norm{\bE^t \e_S}^{2p-2}} \nonumber\\
    &\lesssim n  \prn{\E \norm{\M^{t-}}^{2p-2}+n^{\alpha (p-1)}}
\lesssim n (\E \norm{\M^{t-}}^{2p})^{\frac{p-1}{p}}+n^{\alpha(p-1)+1}
   \lesssim \E\norm{\M^{t-}}^{2p}+n^p.
\end{align*}
Equivalently, in integral form,
$\E \norm{\M^{t}}^{2p} \lesssim \int_0^t \prn{\E\norm{\M^s}^{2p}+n^p}\d s$.
Then by Gr\"onwall's inequality, $\E\norm{\M^{t}}^{2p} \lesssim n^p$.
Applying this to \eqref{eq:kunita-BDG} shows $\E \sup_{t\in[0,T]} (Q^t)^{2p} \lesssim
n^{(\alpha + 1)p + (1 - \alpha)}$. For any fixed and sufficiently small $\iota>0$, choosing $p \geq 1$
large enough and applying Markov's inequality and the Borel-Cantelli lemma, this
implies that a.s.\ for all large $n,d$,
\begin{equation}\label{eq:Qtbound}
    \sup_{t\in[0,T]} |Q^t|<n^{\frac{\alpha+1}{2}+\iota}=o(n).
\end{equation}
Applying \eqref{eq:Ptbound} and \eqref{eq:Qtbound} to \eqref{eq:IIabound},
this shows
\begin{align*}
\|\mathrm{(II.a)}\|^2
\lesssim \int_0^t \norm{\bnu^s-\tilde\bnu^s}^2 \d s + o(n).
\end{align*}

For the second term $\mathrm{(II.b)}$, we have analogously
\[\mathrm{(II.b)}
=\int_0^t \bE^s\,\d\bZ^s\]
where now
\[\bE^s=\left(\frac{\eta^{N^{s-}}}{\kappa}-\frac{\bar \eta^s}{\bar\kappa}
\right)\A^s \bPi, \qquad \A^s=[\a_1^s,\ldots,\a_n^s] \in \R^{dk \times n}, \qquad
\a_i^s=\vec(\x_i \otimes f(\x_i^\top \bnu^{s-},\x_i^\top \btheta^*,\beps)).\]
Then we have an analogous decomposition $\|\mathrm{(II.b)}\|^2=P^t+Q^t$. For
$P^t$, similar arguments as above show
\[P^t \lesssim \frac{n^{1-\alpha}}{\binom{n}{\kappa}}
\cdot n^{\alpha-1}\binom{n}{\kappa}
\int_0^t \left(\frac{\eta^{N^{s-}}}{\kappa}-\frac{\bar \eta^s}{\bar\kappa}
\right)^2 \|f(\X\bnu^{s-},\X\btheta^*,\beps)\|^2\,\d s
=o(n),\]
the last statement applying boundedness of $f$ and
\eqref{eq:Poissonprocapprox}. For $Q^t$, applying
$|\frac{\eta^{N^{s-}}}{\kappa}-\frac{\bar \eta^s}{\bar\kappa}| \lesssim 1$
in the bounds for $\bE^s$, arguments identical to the above show
\[\sup_{t \in [0,T]} |Q^t|<n^{\frac{\alpha+1}{2}+\iota}=o(n).\]
Combining these bounds for $\mathrm{(II.a)}$ and $\mathrm{(II.b)}$ shows, on an
event holding a.s.\ for all large $n,d$,
\begin{equation}\label{eq:SGDIIbound}
    \norm{\mathrm{(II)}}^2 \lesssim \int_0^t \norm{\bnu^s-\tilde\bnu^s}^2 \d s
+o(n).
\end{equation}
Then combining \eqref{eq:SGDIbound}, \eqref{eq:SGDIIbound}, \eqref{eq:SGDIIIbound},
and \eqref{eq:SGDIVbound}, we have $\norm{\bnu^t - \tilde\bnu^t}^2
\lesssim \int_0^t \norm{\bnu^s-\tilde\bnu^s}^2 \d s + o(n)$, and hence
by Gr\"onwall's inequality,
\begin{equation}\label{eq:disc-sgd1}
\sup_{t \in [0,T]} \|\bnu^t-\tilde\bnu^t\|^2 \lesssim o(n).
\end{equation}

\paragraph{Step 2: Bounding $\norm{\tilde\bnu^t - \bnu_\delta^t}$}
Next, we compare $\tilde\bnu^t$ to the discretized process $\bnu_\delta^t$: 
\begin{align*}
    \tilde\bnu^t &= \bnu^0 - \int_0^t \frac{\bar\eta^{s}}{\bar\kappa} \X^\top
\diag(\bPi\,\d\bN^s) f(\X\tilde\bnu^{s-},\X\btheta^*,\beps) - \int_0^t
{\bar\eta^{s}} g(\tilde\bnu^{s}) \d s,\\
    \bnu_\delta^t &= \bnu^0 - \int_0^{\flr{t}}
\frac{\bar\eta^{\flr{s}}}{\bar\kappa} \X^\top
\diag(\bPi\,\d\bN^s)f(\X\bnu_\delta^{s-},\X\btheta^*,\beps) - \int_0^{\flr{t}}
\bar\eta^{\flr{s}} g(\bnu_\delta^{s}) \d s.
\end{align*}
We may decompose 
    $\tilde \bnu^t -  \bnu_\delta^t=\mathrm{(I)} + \mathrm{(II)} +
\mathrm{(III)} + \mathrm{(IV)} + \mathrm{(V)}  + \mathrm{(VI)}$
where
\begin{align*}
    \mathrm{(I)} &= - \int_0^{\lfloor t\rfloor} \frac{\bar\eta^{s}}{\bar\kappa}
\X^\top
\diag(\bPi\,\d\bN^s)\big(f(\X\tilde\bnu^{s-},\X\btheta^*,\beps)-f(\X\bnu_\delta^{s-},\X\btheta^*,\beps)\big), \\
    \mathrm{(II)} &= -\int_0^{\lfloor t\rfloor} \frac{\bar\eta^{s} -
\bar\eta^{\lfloor s\rfloor}}{\bar\kappa} \X^\top
\diag(\bPi\,\d\bN^s)f(\X\bnu_\delta^{s-},\X\btheta^*,\beps), \\ \mathrm{(III)}
&= - \int_0^{\lfloor t\rfloor} \bar\eta^{s}\big(g(\tilde\bnu^{s}) -
g(\bnu_\delta^{s})\big)\d s,\\
\mathrm{(IV)}&= - \int_0^{\lfloor t\rfloor} \big(\bar\eta^{s} -
\bar\eta^{\lfloor s\rfloor}\big) g(\bnu_\delta^{s})\d s,\\
\mathrm{(V)} &= - \int_{\lfloor t\rfloor}^{t} \frac{\bar\eta^{s}}{\bar\kappa}
\X^\top \diag(\bPi\,\d\bN^s)f(\X\tilde\bnu^{s-},\X\btheta^*,\beps),\\
\mathrm{(VI)}&=- \int_{\lfloor t\rfloor}^{t} \bar\eta^{s} g(\tilde\bnu^{s})\d s.
\end{align*}
Note that $t-\flr{t} \lesssim \delta$, and
by the Lipschitz continuity of $\bar\eta$ in Assumption \ref{asp:SGD},
for any $p \geq 1$ and $\delta \in (0,1)$,
\begin{equation}\label{eq:etadiscrapprox}
\int_0^t |\bar \eta^s-\bar\eta^{\flr{s}}|^p \d s \lesssim \delta.
\end{equation}
Then on the event $\sup_{t \in [0,T]} \|g(\tilde \bnu^t)\|
\leq C\sqrt{n}$ which holds a.s.\ for all large $n,d$, we have
\begin{align*}
    \norm{\mathrm{(III)}} \lesssim \int_0^t
\norm{\tilde\bnu^{s} - \bnu_\delta^{s}}\d s,
\qquad
    \norm{\mathrm{(IV)}}^2\lesssim \delta\left(\int_0^t
\norm{\bnu_\delta^s - \tilde\bnu^s }^2\d s + n\right),
\qquad
\norm{\mathrm{(VI)}}\lesssim \delta\sqrt{n}.
\end{align*}
For the remaining stochastic integration terms, applying
\eqref{eq:etadiscrapprox} and similar arguments as in Step 1 (omitted here for
brevity), on an event holding a.s.\ for all large $n,d$,
\begin{align*}
\norm{\mathrm{(I)}}^2\lesssim \int_0^t \norm{\tilde\bnu^s-\bnu_\delta^s}^2
\d s + o(n),
\qquad \norm{\mathrm{(II)}}^2 \lesssim  \delta n + o(n),
\qquad \norm{\mathrm{(V)}}^2\lesssim \delta n + o(n).
\end{align*}
Combining this gives
$\norm{\bnu_\delta^t - \tilde\bnu^t}^2\lesssim \int_0^t
\norm{\bnu_\delta^s-\tilde\bnu^s}^2 \d s + \delta n+o(n)$,
so Gr\"onwall's inequality implies
\begin{equation}\label{eq:disc-sgd2}
\sup_{t \in [0,T]}
\norm{\bnu_\delta^t - \tilde\bnu^t}^2 \lesssim \delta n+o(n).
\end{equation}
The lemma follows from combining these bounds \eqref{eq:disc-sgd1} and
\eqref{eq:disc-sgd2}.
\end{proof}

\subsection{Discretization of SME}
We now apply a similar analysis to the Stochastic Modified Equation. The
SME process \eqref{eq:SME} may be written concisely as
\begin{align*}
\btheta^t &= \btheta^0-\int_0^t \bar\eta^s \prn{\X^\top
f(\X\btheta^s,\X\btheta^*,\beps) + g(\btheta^s)} \d s + \int_0^t
\frac{\bar\eta^s}{\sqrt{\bar \kappa}} \X^\top
\diag(\d\B^s)f(\X\btheta^{s},\X\btheta^*,\beps)
\end{align*}
where $\B^s$ is a standard $n$-dimensional Brownian motion.
We compare this to the discretized process $\{\bnu_\delta^t\}_{t \geq 0}$
defined by
\begin{equation}\label{eq:deltaSME}
\bnu_\delta^t = \btheta^0-\int_0^{\flr{t}} \bar\eta^{\flr{s}}
\prn{\X^\top f(\X\bnu_\delta^s,\X\btheta^*,\beps) +
g(\bnu_\delta^s)} \d s + \int_0^{\flr{t}}
\frac{\bar\eta^{\flr{s}}}{\sqrt{\bar \kappa}} \X^\top \diag(\d\B^s)
f(\X\bnu_\delta^s,\X\btheta^*,\beps).
\end{equation}
One may again check that $\{\bnu_\delta^t\}_{t \geq 0}$ is precisely a
piecewise-constant embedding of the discrete-time dynamics
\eqref{eq:discretedynamics} in the SME setting of $\z_\delta^k \sim
\cN(\delta\bar\kappa\,\1_n,\delta\bar\kappa\,\Id_n)$,
\[\bnu_\delta^t=\btheta_\delta^k \text{ if } \flr{t}=k,\]
upon identifying $\z_\delta^k=\int_{k\delta}^{(k+1)\delta}
(\bar\kappa \1_n \d t+\sqrt{\bar\kappa}\,\d\B^t)$.

The following lemma is analogous to Lemma \ref{lem:disc-sgd}.

\begin{lem}\label{lem:disc-sme}
For any fixed $T>0$, $\delta \in (0,1)$, and some constant $C \equiv
C(T)>0$ not depending on $\delta$, almost surely as $n,d \to \infty$,
\begin{equation}
\limsup_{n,d\to \infty} \sup_{t\in[0,T]}
\frac{\norm{\btheta^t-\bnu_\delta^t}^2}{d} \leq C\delta.
\end{equation}
\end{lem}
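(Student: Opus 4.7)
The plan is to adapt Step 2 of the proof of Lemma \ref{lem:disc-sgd}. The analysis is actually simpler for SME than for SGD: both $\{\btheta^t\}$ and $\{\bnu_\delta^t\}$ are driven by the same Brownian motion $\{\B^s\}$, so no Poissonization intermediary $\tilde\bnu^t$ is required and we may directly compare the two processes via a single Gr\"onwall-type argument.

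First, I would establish an a priori bound $\sup_{t \in [0,T]} \norm{\btheta^t} \lesssim \sqrt{n}$ almost surely for all large $n,d$, analogous to \Cref{lem:l2-bound-sgd}. Using the boundedness of $f$ and $\normop{\X} \lesssim 1$ a.s., the Brownian stochastic integral $\int_0^t (\bar\eta^s/\sqrt{\bar\kappa})\X^\top \diag(\d\B^s) f(\X\btheta^s,\X\btheta^*,\beps)$ has quadratic variation bounded by $O(n)$ (since $\normop{\X}^2 \cdot n\norm{f}_\infty^2 \lesssim n$), so a high-moment Burkholder-Davis-Gundy bound together with Markov's inequality and Borel-Cantelli yield the desired a.s.\ control. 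The drift in $g$ has at most linear growth, and Gr\"onwall's inequality closes the loop. The same reasoning gives $\sup_{t \in [0,T]} \norm{\bnu_\delta^t} \lesssim \sqrt{n}$.

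Next, I would decompose $\btheta^t - \bnu_\delta^t$ into six terms: drift differences $\nI$ and $\nIII$ comparing $f(\X\btheta^s,\cdot)$ vs.\ $f(\X\bnu_\delta^s,\cdot)$ and $g(\btheta^s)$ vs.\ $g(\bnu_\delta^s)$; learning-rate discretization terms $\nII$ and $\nIV$ involving $\bar\eta^s - \bar\eta^{\flr{s}}$; endpoint terms $\nV$ and $\nVI$ integrated over $[\flr{t},t]$ of length $\leq \delta$; together with an analogous three-way decomposition of the Brownian stochastic integral. By Lipschitz continuity of $f$ and $g$ and $\normop{\X} \lesssim 1$, the terms of type $\nI$ and $\nIII$ contribute $\lesssim \int_0^t \norm{\btheta^s - \bnu_\delta^s}^2 \d s$. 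The terms of type $\nII$ and $\nIV$, and the stochastic-integral analogue involving $\bar\eta^s - \bar\eta^{\flr{s}}$, contribute $\lesssim \delta n + o(n)$ via Assumption \ref{asp:SGD} (which gives $\int_0^T |\bar\eta^s - \bar\eta^{\flr{s}}|^p \d s \lesssim \delta$), BDG for the martingale part, and the a priori bound $\norm{\bnu_\delta^s} \lesssim \sqrt{n}$. The endpoint contributions $\nV$, $\nVI$ and their stochastic analogue over $[\flr{t},t]$ are also $\lesssim \delta n$, the stochastic one again by BDG together with the a priori bound.

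The main obstacle will be the stochastic integral $\int_0^{\flr{t}} (\bar\eta^s/\sqrt{\bar\kappa})\X^\top \diag(\d\B^s)(f(\X\btheta^s,\X\btheta^*,\beps) - f(\X\bnu_\delta^s,\X\btheta^*,\beps))$: applying BDG gives $\E\sup_{t \in [0,T]} \norm{\cdot}^2 \lesssim \int_0^T \E\norm{\btheta^s - \bnu_\delta^s}^2 \d s$, but extracting an almost-sure bound uniform in $t$ (rather than only in expectation) requires a high-moment BDG estimate combined with Markov and Borel-Cantelli, paralleling the treatment of $Q^t$ in Step 1 of \Cref{lem:disc-sgd}. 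Assembling all bounds yields $\sup_{r \in [0,t]} \norm{\btheta^r - \bnu_\delta^r}^2 \lesssim \int_0^t \norm{\btheta^s - \bnu_\delta^s}^2 \d s + \delta n + o(n)$, and Gr\"onwall's inequality gives $\sup_{t \in [0,T]} \norm{\btheta^t - \bnu_\delta^t}^2/d \leq C\delta$ a.s., completing the proof.
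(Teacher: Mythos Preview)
Your proposal is correct and follows essentially the same approach as the paper: decompose $\btheta^t - \bnu_\delta^t$ into six pieces (drift-difference, learning-rate discretization, and endpoint terms, each for the drift and the Brownian integral), establish the a priori bound $\sup_{t\in[0,T]}\|\btheta^t\|\lesssim \sqrt{n}$ a.s., and handle the key martingale term via It\^o's formula $\|\M^t\|^2=P^t+Q^t$ with a high-moment BDG bound on $Q^t$ followed by Markov and Borel--Cantelli, exactly paralleling the treatment of $Q^t$ in Step~1 of Lemma~\ref{lem:disc-sgd}. The paper proves the a priori bound inline rather than as a separate lemma, but otherwise the structure is the same.
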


\begin{proof}
We decompose the difference $\btheta^t - \bnu_\delta^t$ into six terms:
\begin{align*}
\btheta^t - \bnu_\delta^t
&=-\underbrace{\int_0^{\flr{t}}\bar\eta^{\flr{s}} \prn{\X^\top
(f(\X\btheta^s,\X\btheta^*,\beps)-f(\X\bnu_\delta^s,\X\btheta^*,\beps)) +
g(\btheta^s)-g(\bnu_\delta^s)} \d s}_{(\mathrm{I})}\\
&\qquad- \underbrace{\int_{\flr{t}}^t \bar\eta^{\flr{s}} \prn{\X^\top
f(\X\btheta^s,\X\btheta^*,\beps) + g(\btheta^s)} \d s}_{(\mathrm{II})}  \nonumber\\
&\qquad+ \underbrace{\int_0^{\flr{t}}\frac{\bar\eta^{\flr{s}}}{\sqrt{\bar \kappa}} \X^\top
\diag(\d\B^s)\big(f(\X\btheta^s,\X\btheta^*,\beps)-f(\X\bnu_\delta^s,\X\btheta^*,\beps)\big)}_{(\mathrm{III})}\\
&\qquad+ \underbrace{\int_{\flr{t}}^t\frac{\bar\eta^{\flr{s}}}{\sqrt{\bar \kappa}} \X^\top
\diag(\d\B^s)f(\X\btheta^s,\X\btheta^*,\beps)}_{(\mathrm{IV})}\\
&\qquad- \underbrace{\int_0^t \prn{\bar\eta^{{s}} - \bar\eta^{\flr{s}}} \prn{\X^\top
f(\X\btheta^s,\X\btheta^*,\beps) + g(\btheta^s)} \d s}_{\mathrm{(V)}}\\
&\qquad+ \underbrace{\int_0^t
\frac{\bar\eta^{{s}} - \bar\eta^{\flr{s}}}{\sqrt{\bar \kappa}} \X^\top
\diag(\d\B^s) f(\X\btheta^s,\X\btheta^*,\beps)}_{\mathrm{(VI)}}\, ,
\end{align*}

\paragraph{Bounding the martingale terms (III) and (IV)}

Identify $\R^{d \times k} \equiv \R^m$ for $m=dk$, and denote
\[\M^t=\int_0^t \X^\top \diag(\d\B^s)\bD^s \in \R^{dk},
\qquad \bD^s=\frac{\bar\eta^{\flr{s}}}{\sqrt{\bar\kappa}}
\big(f(\X\btheta^s,\X\btheta^*,\beps)
-f(\X\bnu_\delta^s,\X\btheta^*,\beps)\big).\]
Note that this may be written as
\[\d\M^t=\A^t \d\B^t,\qquad
\A^t=[\a_1^t,\ldots,\a_n^t] \in \R^{dk \times n},\]
\[\a_i^t=\frac{\bar\eta^{\flr{t}}}{\sqrt{\bar\kappa}}
\vec(\x_i \otimes [f(\x_i^\top \btheta^s,\x_i^\top \btheta^*,\beps)
-f(\x_i^\top \bnu_\delta^s,\x_i^\top \btheta^*,\beps)]).\]
By It\^o's formula,
\begin{align*}
&\norm{\M^t}^2=\int_0^t \|\A^s\|^2 \d s
+\int_0^t 2{\M^s}^\top \A^s \d\B^s\\
&=\underbrace{\int_0^t \frac{(\bar\eta^{\flr{s}})^2}{\bar\kappa}
\left(\sum_{i=1}^n  \|\x_i\|^2 \cdot \|f(\x_i^\top \btheta^s,\x_i^\top
\btheta^*,\beps)-f(\x_i^\top \bnu_\delta^s,\x_i^\top
\btheta^*,\beps)\|^2\right)\d s}_{:=P^t}
+\underbrace{\int_0^s 2{\M^s}^\top \X^\top \diag(\d\B^s)\bD_s}_{:=Q^t}.
\end{align*}
By the Lipschitz continuity of $f$, on an event holding a.s.\ for all large
$n,d$, we have
\[P^t \lesssim \int_0^t \|f(\X \btheta^s,\X \btheta^*,\beps)-f(\X
\bnu_\delta^s,\X \btheta^*,\beps)\|^2 \d s
\lesssim \int_0^t \|\btheta^s-\bnu_\delta^s\|^2 \d s.\]
By the Burkholder-Davis-Gundy inequality
\cite[Theorem 2.11]{kunita2004stochastic}, for any $p \geq 1$,
\begin{align}
    \E \sup_{t\in[0,T]} (Q^t)^{2p} \leq C_p\,
\E\left(\int_0^t \|{\M^s}^\top \A^s\|^2 \d s\right)^p.
\end{align}
Applying $\|\A^s\|_\op \lesssim \|\X\|_\op\|f\|_\infty \lesssim 1$, this implies
\[\E\sup_{t\in[0,T]} (Q^t)^{2p} \lesssim
\int_0^t \E \norm{\M^s}^{2p} \d s.\]
Then applying It\^o's formula to $F(\x)=\norm{\x}^{2p}$, since $\M^t$ has no drift
coefficient and
$\nabla^2 F(\x)=2p\norm{\x}^{2p-2}\Id+4p(p-1)\norm{\x}^{2p-4}\x\x^\top$,
we have
\begin{align}
\frac{\d}{\d t} \E\norm{\M^t}^{2p} &\lesssim
\E\brk{\norm{\M^t}^{2p-2}\|\A^t\|_\sF^2} +
\E\brk{\norm{\M^t}^{2p-4}\norm{{\A^t}\M^t}^2}\nonumber\\
    &\lesssim n\,\E\norm{\M^t}^{2p-2} \lesssim
n(\E\norm{\M^t}^{2p})^{\frac{p-1}{p}}
\lesssim \E\norm{\M^t}^{2p}+n^p.
\end{align}
Thus by Gr\"onwall's inequality,
$\E\norm{\M^t}^{2p} \lesssim n^p$, implying that
$\E \sup_{t\in[0,T]} (Q^t)^{2p} \lesssim n^p$.
Then Markov's inequality shows that
for any fixed and sufficiently small $\iota>0$, a.s.\ for all large $n,d$,
\begin{align}
\sup_{t\in[0,T]} |Q^t|<n^{\frac{1}{2}+\iota}=o(n).
\end{align}
Applying these bounds gives
\[\|\M^t\|^2 \lesssim \int_0^t \|\btheta_s-\bnu_\delta^s\|^2 \d s+o(n).\]
Then
\begin{equation}\label{eq:SMEIIIbound}
\norm{(\mathrm{III})}^2=\norm{\M^{\flr{t}}}^2 \lesssim \int_0^t
\norm{\btheta^s-\bnu_\delta^s}^2 \d s + o(n).
\end{equation}
Similarly, a.s.\ for all large $n,d$,
\begin{equation}\label{eq:SMEIVbound}
\norm{(\mathrm{IV})}^2 \lesssim \delta n + o(n).
\end{equation}\\

\paragraph{Bounding the drift terms (I) and (II)}

Write
\begin{align*}
\btheta^t &=\btheta^0-\int_0^t \bar\eta^s \prn{\X^\top
f(\X\btheta^s,\X\btheta^*,\beps) + g(\btheta^s)} \d s + 
\M^t,\\
\M^t&=\int_0^t \frac{\bar\eta^s}{\sqrt{\bar \kappa}} \X^\top \diag(\d\B^s)
f(\X\btheta^{s},\X\btheta^*,\beps).
\end{align*}
The same argument as above shows, on an event holding a.s.\ for all large $n,d$,
for every $t \in [0,T]$,
\begin{align*}
\norm{\M^t}^2 \lesssim \int_0^t
\norm{f(\X\btheta^s,\X\btheta^*,\beps)}^2 \d s + o(n) \lesssim n.
\end{align*}
Then by the boundedness of $f(\cdot)$ and Lipschitz continuity of $g(\cdot)$, also
$\|\btheta^t\|^2 \lesssim \int_0^t \|\btheta^s\|^2 \d s+n$, so Gr\"onwall's
inequality gives
\[\sup_{t \in [0,T]} \|\btheta^t\|^2 \lesssim n.\]
This implies, by the Lipschitz continuity of $f(\cdot)$ and $g(\cdot)$, on an event
holding a.s.\ for all large $n,d$,
\begin{equation}\label{eq:SMEIbound}
\norm{(\mathrm{I})} \lesssim \int_0^t \norm{\btheta^s-\bnu_\delta^s}
\d s, \qquad \norm{(\mathrm{II})} \lesssim \delta\sqrt{n}.
\end{equation}

\paragraph{Bounding the discretization terms (V) and (VI)}

Applying \eqref{eq:etadiscrapprox} and similar arguments as for
the preceding bounds for $\mathrm{(II)}$ and $\mathrm{(IV)}$, on an event
holding a.s.\ for all large $n,d$, we have
\begin{equation}\label{eq:SMEVbound}
    \norm{\mathrm{(V)}}^2 \lesssim \delta n,
\qquad
    \norm{\mathrm{(VI)}}^2 \lesssim \delta n+o(n).
\end{equation}

Combining \eqref{eq:SMEIIIbound}, \eqref{eq:SMEIVbound},
\eqref{eq:SMEIbound}, and \eqref{eq:SMEVbound},
\begin{align*}
    \norm{\btheta^t - \bnu_\delta^t}^2&\lesssim 
    \int_0^t \norm{\btheta^s-\bnu_\delta^s}^2 \d s + \delta n + o(n),
\end{align*}
and thus Gr\"onwall's inequality yields $\sup_{t \in [0,T]}
\norm{\btheta^t - \bnu_\delta^t}^2 \lesssim \delta n+o(n)$, implying the lemma.
\end{proof}

\begin{proof}[Proof of Theorem \ref{thm:DMFT-limit}(a)]
Let $\btheta^t=\bar\btheta^{\lfloor tn^{1-\alpha} \rfloor}$ be the time-rescaled
SGD process in \eqref{eq:SGDrescaled} (where here $\flr{\cdot}$ denotes the
usual integer part), and let $\bnu^t=\bar\btheta^{N^t}$ be
the Poissonized SGD process in \eqref{eq:SGDPoissonized}. For any $\delta>0$,
let $\bnu_\delta^t$ be the discretized approximation in \eqref{eq:deltaSGD}.
Applying an induction over discrete time increments of $\delta$ to
\eqref{eq:deltaSGD}, we see that $\{\bnu_\delta^t\}_{t \geq 0}$
is piecewise-constant with jumps at $\delta\Z_+$, taking values
\begin{equation}\label{eq:nudeltaembedding}
\bnu_\delta^t=\btheta_\delta^k \text{ for all } t \in [k\delta,(k+1)\delta)
\end{equation}
where $\btheta_\delta^k$ are the iterations of the discrete-time dynamics
\eqref{eq:discretedynamics} defined using the Poisson variables
$\z_\delta^k=\bPi\bP_\delta^k$
and $\bP_\delta^k=\int_{k\delta}^{(k+1)\delta} \d\bN^s$.

Fix any $T>0$. For each $t \in [0,T]$, set
$\tau(t)=\inf\{\tau \geq 0:N^\tau \geq \lfloor tn^{1-\alpha} \rfloor\}$,
so that
\[\btheta^t=\bnu^{\tau(t)}.\]
Note that the bound \eqref{eq:Poissontimeapprox} implies that for some
$\iota>0$, on an event holding a.s.\ for all large $n,d$,
\begin{equation}\label{eq:tauapprox}
\sup_{t \in [0,T]} |\tau(t)-t| \lesssim n^{-\iota}.
\end{equation}
Fix any $0 \leq t_1 \leq \ldots \leq t_m \leq T$, and fix any
$\delta \in (0,1)$ such that none of $t_1,\ldots,t_m$ is an integer multiple of
$\delta$. Then a.s.\ for all large $n,d$, the piecewise-constant nature of
$\{\bnu_\delta^t\}_{t \geq 0}$ and \eqref{eq:tauapprox} imply that
$\bnu_\delta^{\tau(t_i)}=\bnu_\delta^{t_i}$ for each $i=1,\ldots,m$. Then
by Lemma \ref{lem:disc-sgd}, a.s.\ for all large $n,d$,
\[\sum_{i=1}^m \frac{\|\btheta^{t_i}-\bnu_\delta^{t_i}\|^2}{d}
=\sum_{i=1}^m \frac{\|\bnu^{\tau(t_i)}-\bnu_\delta^{\tau(t_i)}\|^2}{d}
\lesssim \delta.\]
This implies that
\begin{equation}\label{eq:W2compare1}
\limsup_{n,d \to \infty} W_2\left(\frac{1}{d}\sum_{j=1}^d
\delta_{(\theta_j^{t_1},\ldots,\theta_j^{t_m},\theta_j^*)},\,
\frac{1}{d}\sum_{j=1}^d
\delta_{(\nu_{\delta,j}^{t_1},\ldots,\nu_{\delta,j}^{t_m},\theta_j^*)}\right)^2
\lesssim \delta.
\end{equation}
Let $k_1,\ldots,k_m$ be the integers for which $t_i \in
[k_i\delta,(k_i+1)\delta)$ for each $i=1,\ldots,m$. 
Then by
\eqref{eq:nudeltaembedding} and Lemma \ref{lem:finite_dim_converge},
\[\limsup_{n,d \to \infty} W_2\left(\frac{1}{d}\sum_{j=1}^d
\delta_{(\nu_{\delta,j}^{t_1},\ldots,\nu_{\delta,j}^{t_m},\theta_j^*)},
\,\Law(\theta_\delta^{k_1},\ldots,\theta_\delta^{k_m},\theta^*)\right)=0\]
where the right side denotes the joint law under the discrete DMFT system of
Section \ref{sec:discreteDMFT}. Recalling the embedding 
\eqref{eq:embeddings} of this discrete system, where
$\bar\theta_\delta^{t_i}=\theta_\delta^{k_i}$ for each $i=1,\ldots,m$, this is
equivalently written as
\begin{equation}\label{eq:W2compare2}
\limsup_{n,d \to \infty} W_2\left(\frac{1}{d}\sum_{j=1}^d
\delta_{(\nu_{\delta,j}^{t_1},\ldots,\nu_{\delta,j}^{t_m},\theta_j^*)},
\,\Law(\bar\theta_\delta^{t_1},\ldots,\bar\theta_\delta^{t_m},\theta^*)\right)
=0.
\end{equation}
Finally, by Lemma \ref{lem:dmft_discretization_error}(c),
\begin{equation}\label{eq:W2compare3}
\lim_{\delta \to 0} W_2\Big(
\Law(\bar\theta_\delta^{t_1},\ldots,\bar\theta_\delta^{t_m},\theta^*),\,
\Law(\theta^{t_1},\ldots,\theta^{t_m},\theta^*)\Big)=0
\end{equation}
where the right side denotes the joint law under the continuous DMFT system
defined via the fixed point of
Theorem \ref{thm:DMFT-limit}. Combining \eqref{eq:W2compare1},
\eqref{eq:W2compare2}, and \eqref{eq:W2compare3} and taking the limit $n,d \to
\infty$ followed by $\delta \to 0$, we obtain \eqref{eq:DMFT-limit-theta}.
The argument for \eqref{eq:DMFT-limit-xi} is the same.
\end{proof}

\begin{proof}[Proof of Theorem \ref{thm:DMFT-limit}(b)]
Let $\{\btheta^t\}_{t \geq 0}$ be the SME process \eqref{eq:SME}, and let
$\bnu_\delta^t$ be its discretized approximation \eqref{eq:deltaSME}.
Again by induction over discrete time increments of $\delta$, we see that
$\{\bnu_\delta^t\}_{t \geq 0}$
is piecewise-constant with jumps at $\delta\Z_+$, taking values
\begin{equation}\label{eq:nudeltaembedding-SME}
\bnu_\delta^t=\btheta_\delta^k \text{ for all } t \in [k\delta,(k+1)\delta)
\end{equation}
where $\btheta_\delta^k$ are the iterations of the discrete-time SME dynamics
\eqref{eq:discretedynamics} defined using the Gaussian variables
$\z_\delta^k=\int_{k\delta}^{(k+1)\delta} (\bar\kappa\1_n\d t
+\sqrt{\bar\kappa}\,\d\B^t)$. 
By Lemma \ref{lem:disc-sme}, a.s.\ for all large $n,d$,
\[\sum_{i=1}^m \frac{\|\btheta^{t_i}-\bnu_\delta^{t_i}\|^2}{d} \lesssim
\delta.\]
The proof is then completed as above, using 
Lemmas \ref{lem:finite_dim_converge} and
\ref{lem:dmft_discretization_error}(c) for the SME.
\end{proof}

\appendix

\section{Burkholder-Davis-Gundy inequality}

\begin{lem}\label{lem:burkholder}
Let $\{\mathcal{F}_t\}_{t \geq 0}$ be a complete right-continuous filtration on
the probability space $(\Omega,\cF,\P)$. Let
$\{X^t\}_{t \in [0,T]}$ be a $\cF_t$-predictable process taking
values in any Euclidean space $\R^m$. If $\{z^t\}_{t \in [0,T]}$
is the $\cF_t$-adapted diffusion process \eqref{eq:gaussian-z},
then for any $p \geq 1$,
there exists a constant $C_p>0$ such that for all $t \geq 0$,
\[\E \sup_{s \in [0,t]} \left\|\int_0^s X^r\,\d z^r\right\|_2^p \leq
C_p\,\E\left(\bar\kappa \int_0^t \|X^r\|_2\,\d r\right)^p
+C_p\,\E\left(\bar\kappa \int_0^t \|X^r\|_2^2\,\d r\right)^{p/2}.\]
If $\{z^t\}_{t \in [0,T]}$ is the $\cF_t$-adapted Poisson process 
\eqref{eq:poisson-z}, then for any $p \geq 1$,
there exists a constant $C_p>0$ such that for all $t \geq 0$,
\[\E \sup_{s \in [0,t]} \left\|\int_0^s X^r\,\d z^r\right\|_2^p \leq
C_p\,\E\left(\bar\kappa \int_0^t \|X^r\|_2\,\d r\right)^p
+C_p\,\E\left(\bar\kappa \int_0^t \|X^r\|_2^2\,\d r\right)^{p/2}
+C_p\bar\kappa\,\E\int_0^t \|X^r\|_2^p\d r.\]
\end{lem}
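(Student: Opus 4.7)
My plan is to decompose $z^r$ into its drift and martingale parts and bound each contribution separately. In both settings, write
\[z^t = \bar\kappa\,t + M^t, \qquad \text{where } M^t = z^t - \bar\kappa\,t \text{ is an $\cF_t$-martingale.}\]
In the Gaussian case, $M^t = \sqrt{\bar\kappa}\,b^t$ is a rescaled Brownian motion, while in the Poisson case, $M^t$ is the compensated Poisson process with predictable quadratic variation $\langle M\rangle^t = \bar\kappa\,t$ and jumps of size $1$. Then
\[\int_0^s X^r\,\d z^r = \bar\kappa\int_0^s X^r\,\d r + \int_0^s X^r\,\d M^r,\]
and by the triangle inequality together with $(a+b)^p \leq 2^{p-1}(a^p+b^p)$, it suffices to bound the supremum of the $p$-th moment of each term.

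The drift contribution is easy: since $s \mapsto \int_0^s X^r\,\d r$ has bounded variation,
\[\sup_{s\in[0,t]} \Big\|\bar\kappa\int_0^s X^r\,\d r\Big\|_2^p \leq \Big(\bar\kappa\int_0^t \|X^r\|_2\,\d r\Big)^p,\]
yielding the first term on the right side of both claimed bounds after taking expectations.

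The martingale contribution is handled by the Burkholder--Davis--Gundy inequality. In the Gaussian setting, the standard BDG inequality for It\^o integrals against Brownian motion gives
\[\E\sup_{s\in[0,t]} \Big\|\int_0^s X^r\,\d M^r\Big\|_2^p \leq C_p\,\E\Big(\int_0^t \|X^r\|_2^2\,\d\langle M\rangle^r\Big)^{p/2} = C_p\,\E\Big(\bar\kappa\int_0^t \|X^r\|_2^2\,\d r\Big)^{p/2},\]
completing the proof in this case. In the Poisson setting, the analogous bound for stochastic integrals against the compensated Poisson process is Kunita's first inequality (see, e.g., \cite[Theorem 2.11]{kunita2004stochastic}): for $p \geq 2$,
\[\E\sup_{s\in[0,t]} \Big\|\int_0^s X^r\,\d M^r\Big\|_2^p \leq C_p\,\E\Big(\bar\kappa\int_0^t \|X^r\|_2^2\,\d r\Big)^{p/2} + C_p\,\bar\kappa\,\E\int_0^t \|X^r\|_2^p\,\d r,\]
while for $1 \leq p < 2$ one may either invoke the same inequality (which still holds, with the second term being absorbed into the first up to a constant via H\"older) or apply Jensen's inequality in $p$ to reduce to the $p = 2$ case. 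Combining this with the drift bound yields the claim.

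The main (minor) obstacle is merely the invocation of the correct jump BDG inequality: for $p \geq 2$, the second term $C_p\bar\kappa\,\E\int_0^t \|X^r\|_2^p\,\d r$ is genuinely needed because the pure-jump character of $M^t$ prevents the quadratic variation $[M,M]^t$ from being controlled by $\langle M\rangle^t$ in $L^{p/2}$ without accounting for the contribution of individual jumps of size $\|X^r\|_2$. The statement as written is precisely what emerges from Kunita's inequality once one identifies $M$ as a compensated Poisson random measure with L\'evy measure of unit mass at $1$.
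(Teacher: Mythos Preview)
Your proposal is correct and essentially identical to the paper's own proof: decompose $\d z^r = (\d z^r - \bar\kappa\,\d r) + \bar\kappa\,\d r$, bound the drift integral by $(\bar\kappa\int_0^t \|X^r\|_2\,\d r)^p$, and apply \cite[Theorem 2.11]{kunita2004stochastic} to the martingale part (which is $\sqrt{\bar\kappa}\,\d b^r$ in the Gaussian case and a compensated Poisson process in the jump case). The paper's proof is a one-line invocation of exactly this decomposition and reference.
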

\begin{proof}
This follows from \cite[Thm 2.11]{kunita2004stochastic} applied to
$\int_0^s X^r \d z^r
=\int_0^s X^r (\d z^r-\bar \kappa\,\d r)+\int_0^s X^r \bar\kappa\,\d r$,
where $\d z^r-\bar \kappa\,\d r$ is either $\sqrt{\bar\kappa}\,\d b^r$ for a
standard Brownian motion $\{b^t\}_{t \geq 0}$, or a compensated Poisson 
process with rate $\bar\kappa$.
\end{proof}

\section*{Acknowledgments}

We would like to thank Sinho Chewi, Alex Damian, Bruno Loureiro, and Theodor Misiakiewicz for helpful discussions. This research was supported in part by NSF DMS-2142476 and a Sloan Research Fellowship.

\bibliographystyle{alpha}
\bibliography{references}

\end{document}